\documentclass[11pt]{article}
\usepackage[utf8]{inputenc} 
\usepackage[T1]{fontenc}    

\usepackage{amsmath,amssymb,amsthm,mathrsfs,pgf,tikz,caption,subcaption,mathtools,mathabx}
\usepackage[margin=1in]{geometry}
\usepackage[ruled,vlined,linesnumbered]{algorithm2e}
\usepackage{natbib}
\usepackage[pagebackref,colorlinks=true,urlcolor=blue,linkcolor=blue,citecolor=blue,pdfstartview=FitH]{hyperref}
\usepackage{cleveref}
\usepackage{url}            
\usepackage{booktabs}       
\usepackage{amsfonts}       
\usepackage{nicefrac}       
\usepackage{microtype}      
\usepackage{times}
\usepackage{bbm}
\usepackage{enumitem}
\usepackage{xcolor}
\usepackage{mdframed}

\usepackage{makecell}

\usepackage{thmtools}
\usepackage{thm-restate}

\newtheorem{theorem}{Theorem}[section]
\newtheorem{corollary}[theorem]{Corollary}
\newtheorem{proposition}[theorem]{Proposition}
\newtheorem{lemma}[theorem]{Lemma}
\newtheorem{definition}[theorem]{Definition}

\newtheorem{claim}[theorem]{Claim}

\newcommand{\R}{\mathbb{R}}

\newcommand{\N}{\mathbb{N}}

\newcommand{\distribution}{\mathcal{D}}
\newcommand{\domain}{\mathcal{X}}
\newcommand{\range}{\mathcal{Y}}
\newcommand{\hypotheses}{\mathcal{H}}

\newcommand{\calM}{\mathcal{M}}

\newcommand{\sign}{{\rm sign}}
\newcommand{\poly}{{\rm poly}}
\newcommand{\polylog}{{\rm polylog}}

\newcommand{\ind}{\mathbbm{1}}

\newcommand\norm[1]{\left\lVert#1\right\rVert}

\newcommand{\Acal}{\mathcal{A}}

\newcommand{\var}{\text{Var}}
\newcommand{\eps}{\varepsilon}
\newcommand{\E}{\mathbb{E}}

\newcommand{\dx}{{\rm d}x}

\newcommand{\err}{\mathrm{err}}
\newcommand{\errD}{\mathrm{err}_{\distribution}}
\newcommand{\errS}{\mathrm{err}_{S}}
\newcommand{\acc}{\mathrm{acc}}

\newcommand{\opt}{\mathrm{OPT}}
\newcommand{\tvd}[1]{\mathrm{tvd}(#1)}

\newcommand{\bigO}[1]{O\left( #1 \right)}
\newcommand{\bigOm}[1]{\Omega\left( #1 \right)}
\newcommand{\tO}[1]{\tilde{O}\left( #1 \right)}
\newcommand{\bigtO}[1]{\tilde{O}\left( #1 \right)}
\newcommand{\tOm}[1]{\tilde{\Omega}( #1 )}
\newcommand{\bigtOm}[1]{\tilde{\Omega}\left( #1 \right)}
\newcommand{\litO}[1]{o\left( #1 \right)}

\newcommand{\dist}{\mathrm{dist}}
\newcommand{\distD}{\dist_{\distribution}}
\newcommand{\cand}{\mathrm{cand}}
\newcommand{\given}{\textrm{ s.t. }}
\newcommand{\andT}{\textrm{ and }}

\newcommand{\set}[1]{\{#1\}}

\newcommand{\algT}[1]{\textsc{\bfseries \footnotesize #1}}

\newcommand{\innerAlg}{\Acal}

\newcommand{\iid}{i.i.d.\ }
\newcommand{\wrt}{w.r.t.\ }
\newcommand{\whp}{w.h.p.\ }

\newcommand{\BinomD}{{\rm Binom}}
\newcommand{\UnifD}{{\rm Unif}}

\newcommand{\expMechScoreScale}{t}
\newcommand{\priv}{{\rm priv}}
\newcommand{\pub}{{\rm pub}}

\newcommand{\LDim}{{\rm LDim}}

\newcommand{\prHeavyHitters}{\algT{PointwiseReplicableHeavyHitters}}
\newcommand{\hhLabel}{\algT{HeavyHitterLabels}}
\newcommand{\basicNonUniformRep}{\algT{PointwiseReplicability}}
\newcommand{\rLearnerFinite}{\algT{PointwiseReplicabilityFiniteRandomness}}
\newcommand{\nonUniformtoBias}{\algT{BiasEstimatorPointwise}}

\newcommand{\hypothesisSelection}{\algT{HypothesisSelection}}

\newcommand{\apxReplRealizable}{\algT{RealizableApproximateReplicability}}
\newcommand{\thresholdLearner}{\algT{ThresholdLearner}}
\newcommand{\privSelection}{\algT{PrivateSelection}}
\newcommand{\semiRepltoSemiPriv}{\algT{SemiPrivateReduction}}
\newcommand{\semiPrivDimRed}{\algT{SemiPrivateHardnessAmplification}}
\newcommand{\apxReplDimRed}{\algT{ApxReplicabilityHardnessAmplification}}
\newcommand{\corrSamp}{\algT{CorrSamp}}

\newcommand{\game}{\mathcal{G}}

\newcommand{\leftT}{{\rm left}}
\newcommand{\rightT}{{\rm right}}
\newcommand{\up}{{\rm up}}
\newcommand{\down}{{\rm down}}

\newcommand{\Rad}{{\rm Rad}}
\title{Approximate Replicability in Learning}
\author{Max Hopkins\thanks{Institute for Advanced Study, Princeton. nmhopkin@ias.edu. Supported by NSF Award DMS-2424441}, Russell Impagliazzo\thanks{University of California, San Diego. rimpagliazzo@ucsd.edu. Supported by NSF Award AF: Medium 2212136},
Christopher Ye\thanks{University of California, San Diego. czye@ucsd.edu. Supported by NSF Award AF: Medium 2212136 and HDR TRIPODS Phase II grant 2217058 (EnCORE Institute).}}

\begin{document}

\pagenumbering{gobble}
\maketitle
\begin{abstract}
    Replicability, introduced by (Impagliazzo et al. STOC '22), is the notion that algorithms should remain stable under a resampling of their inputs (given access to shared randomness).
While a strong and interesting notion of stability, the cost of replicability can be prohibitive: there is no replicable algorithm, for instance, for tasks as simple as threshold learning (Bun et al. STOC '23).
Given such strong impossibility results we ask: under what approximate notions of replicability is learning possible?

In this work, we propose three natural relaxations of replicability in the context of PAC learning:
\begin{enumerate}
    \item \textbf{Pointwise:} the learner must be consistent on any \textit{fixed} input, but not across all inputs simultaneously.
    \item \textbf{Approximate:} the learner must output hypotheses that classify most of the distribution consistently.
    \item \textbf{Semi:} the algorithm is fully replicable, but may additionally use shared unlabeled samples.
\end{enumerate}
In all three cases, for constant replicability we obtain close to sample-optimal agnostic PAC learners: 1) and 2) are achievable using $O(d/\alpha^2 + 1/\alpha^{4})$ samples, while 3) requires $\Theta(d^2/\alpha^2)$ labeled samples.
\end{abstract}

\newpage
\tableofcontents
\newpage

\pagenumbering{arabic}
\section{Introduction}


\textit{Replicability} is the notion that statistical methods should remain stable under fresh samples from the same population. 
Recently, \cite{ImpLPS22} introduced a formal framework of replicability in learning, stating that for (most) random seeds, running a \emph{replicable} algorithm over fresh samples should produce the same result:
\[
\forall \distribution: \Pr_{S,S'\sim \distribution^m, r\in R}[A(S;r)=A(S';r)] \geq 1-\rho.
\]
Unfortunately, \cite{ImpLPS22}'s replicability, while powerful, is highly restrictive. \cite{bun2023stability}, for instance, showed the notion is quantitatively equivalent to approximate differential privacy, ruling out a broad family of tasks as basic as learning 1D-Thresholds. Moreover, even for tasks where replicability is possible (such as mean estimation), achieving constant replicability often comes with quadratic overhead \cite{bun2023stability,hopkins2024replicability}, rendering the notion infeasible in high dimensional settings.

In this work, we introduce three weakened notions of replicability, \textit{pointwise} replicability, \textit{approximate} replicability, and \textit{semi}-replicability, that remain powerful enough to be of potential use in application, yet apply to a substantially broader domain of problems. In particular, we show in these settings that any standard (non-replicable) learning algorithm can be `boosted' to one satisfying weak replicability, along with corresponding lower bounds on the overhead required to do so.

\paragraph{Pointwise Replicability}
Our first notion of study is a basic relaxed variant of replicability we call \textit{pointwise} replicability,
which asks that the behavior of our algorithm is replicable on any \textit{fixed} point in the domain (rather than over all such points simultaneously):
\begin{definition}[Pointwise Replicability]
    \label{def:non-uniform-repl}
    We say an algorithm $\Acal$ is $\rho$-pointwise replicable if:
    \[
    \forall \distribution, \forall x \in X: \Pr_{S,S' \sim \distribution^m, r \sim R}[\Acal(S;r)(x) \neq \Acal(S';r)(x)] < \rho
    \]
\end{definition}


This definition is inspired by the notion of private prediction of Dwork and Feldman \cite{dwork2018privacy}, where privacy is only required over revealing the label of a single queried point in the domain (rather than revealing the full model). 
Similarly, one can think of our notion as \textit{replicable prediction}: given a test point $x$, such an algorithm will almost always give the same predicted label. 
We give a generic algorithm that turns any (possibly non-replicable) learner into a pointwise replicable predictor at the cost of running the original algorithm $O(\rho^{-2})$ times, and show that this is optimal up to polylogarithmic factors.\footnote{Formally, here and in the following approximate setting, we also incur an extra additive $O(\frac{1}{\alpha^4\rho^2})$ cost from separately learning heavy elements in the underlying distribution. This cost is comparatively negligible unless $\alpha$ is very small.} 

\paragraph{Approximate Replicability}
Our second notion, approximate replicability, asks that the outputs remain the same on ``most'' (rather than all) points in the output domain simultaneously.

\begin{definition}[Approximate Replicability]
    We say an algorithm $\Acal$ is $(\rho, \gamma)$-approximately replicable if:
    \[
    \forall \distribution: \Pr_{S,S' \sim \distribution^{m}, r \sim R}\left[ \Pr_{x \sim \distribution_{\domain}}[\Acal(S;r)(x) \neq \Acal(S';r)(x)] > \gamma\right] < \rho
    \]
    where $\distribution_{\domain}$ is the marginal distribution over unlabeled data.
    We say an algorithm is $\rho$-approximately replicable if it is $(\rho, \rho)$-approximately replicable.
\end{definition}

This definition is closest to other approximate notions proposed in the literature, and, in fact, was explicitly raised in \cite{chase2023replicability} as a potential direction of study. This notion can easily be generalized beyond PAC learning settings by equipping the output space with an appropriate metric (e.g. the same metric used to evaluate correctness). 
For example, if the task is $\ell_{2}$ mean estimation, an algorithm is $(\rho, \gamma)$-approximately replicable if it outputs two estimates within $\gamma$ in $\ell_{2}$-norm.
In this particular instance (and others with similarly unique solutions), it is clear one can achieve $(\rho, \alpha)$-approximate replicability essentially for ``free'' since any two estimates within $\alpha$ of the true mean are also within $2 \alpha$ of each other, yielding a large class of problems where the natural algorithm is immediately approximately replicable.

Approximate replicability becomes more interesting for complex tasks like PAC learning where two optimal hypotheses can disagree on much of the domain.\footnote{In fact, in this case even in the realizable PAC setting where all optimal hypotheses are close, approximate replicability is still not immediate from accuracy since we'd like to ensure replicability over \textit{all} input distributions (including non-realizable ones).} 
Nevertheless, building on our pointwise replicable algorithm, we show any learner can be transformed into a $\rho$-approximately replicable one by running it $\tO{\rho^{-2}}$ times.\footnote{We use $\tilde{O}$ to suppress polylogarithmic factors.}
Similarly to pointwise replicability, we show that our sample complexity is optimal up to polylogarithmic factors.

\paragraph{Semi-Replicable Learning}

Our final relaxation of replicability, the semi-replicable model, diverges from the above in still requiring identical outputs, but gives the algorithm access to an additional pool of \emph{shared, unlabeled} samples (in addition to shared randomness).

\begin{definition}[Semi-Replicability]
We say an algorithm is $\rho$-semi replicable if:
    \[
    \forall \distribution: \Pr_{S_U \sim \distribution_{\domain}^{m}, S,S' \sim \distribution^{m}, r\in R}[A(S; S_{U}, r)=A(S'; S_{U}, r)] \geq 1-\rho 
    \]
where $\distribution_{\domain}$ is the marginal distribution over unlabeled data.
\end{definition}

The semi-replicable model is inspired by the well-studied notion of \textit{semi-privacy} \cite{beimel2013private} and has several interesting motivations. First, the notion is reasonably practical; public unlabeled data is plentiful and can easily be shared between different research groups. For instance, fine-tuning machine learning models to a specific application (where we require algorithmic stability) can begin from a large, shared public model.


The model can also be thought of as a natural interpolation between \textit{distribution-dependent} PAC learning where the unlabeled data distribution is known (and every learnable class is easily shown to be replicably learnable by seminal results of \cite{benedek1991learnability} and \cite{ImpLPS22,bun2023stability}) and the standard \textit{distribution-free} PAC model where many learnable classes are not replicably learnable at all. We observe one does not need to know the entire underlying marginal --- just a few public unlabeled samples (shared data about the marginal distribution) suffices to allow replicable learning for every learnable class (albeit with a necessary quadratic blowup).


\subsection{Our Contributions}


We now give a more detailed exposition of our main contributions. We start with some brief background, and refer the reader to \Cref{sec:prelims} for further details. See \Cref{table:results} for a quick informal summary of our results.

We primarily study (binary) classification in the context of PAC learning \cite{valiant1984theory}.
Given sample access to a labeled distribution $\distribution$ over $(x, y) \in \domain \times \set{\pm 1}$, we are asked to produce a hypothesis $h: \domain \rightarrow \set{\pm 1}$.
The classification error of $h$, denoted $\errD(h)$, is the probability a point is mislabeled, i.e. $\Pr_{\distribution}(h(x) \neq y)$.
An algorithm is an (agnostic) $(\alpha, \beta)$-learner for a hypothesis class $\hypotheses$ if for every distribution $\distribution$, $\Pr(\errD(\innerAlg(S)) > \opt + \alpha) < \beta$ where $\opt := \inf_{h \in \hypotheses} \errD(h)$ is the optimal error of the class.
In the realizable setting, we are promised that $\distribution$ is supported in $(x, h(x))$ for some $h \in \hypotheses$.
A proper learner outputs $\hat{h} \in \hypotheses$, while an improper learner may output an arbitrary function $\hat{h}: \domain \rightarrow \set{\pm 1}$.
In this standard setting, learnability of a class is tightly parameterized by its VC Dimension.
In particular, any class $\hypotheses$ with VC Dimension $d$ has an agnostic $(\alpha, \beta)$-learner with $\bigO{(d + \log(1/\beta))\alpha^{-2}}$ samples \cite{vapnik1974theory, blumer1989learnability} and an (improper) realizable $(\alpha, \beta)$-learner with $\bigO{(d + \log(1/\beta))\alpha^{-1}}$ samples \cite{hanneke2016optimal, larsen2023bagging}.
Throughout the remainder of the introduction, we omit $\polylog((\alpha \beta \rho \gamma)^{-1})$ terms for simplicity.


\begin{table}[ht]
\renewcommand*{\arraystretch}{2.5}
\centering
\begin{tabular}{|c|c|c|c|}
\hline
 & \textbf{Upper Bound} & \textbf{Lower Bound} & \textbf{Notes} \\
\hline
\textbf{Pointwise} &
$\frac{d}{\rho^2 \alpha^2} + \frac{1}{\rho^2 \alpha^{4}}$ [\ref{thm:predict}] &
$\frac{d}{\rho^2 \alpha^2}$ [\ref{thm:non-uniform-repl-lb}] &
Agnostic \\[5pt]
\cline{2-4}
& 
$\frac{d}{\rho^2 \alpha} + \frac{1}{\rho^2 \alpha^2}$ [\ref{thm:predict}] & 
$\frac{d}{\rho^2 \alpha}$ [\ref{thm:non-uniform-repl-lb}] & 
\makecell{Realizable} \\[5pt]
\hline
\textbf{Approximate} & 
$\frac{d}{\gamma^2 \alpha^2} + \frac{1}{\poly(\rho \gamma \alpha)}$ [\ref{thm:apx-repl-informal}] & 
$\frac{d}{(\rho + \gamma)^2 \alpha^2}$ [\ref{thm:approx-repl-d-lb}] &
Agnostic
\\[10pt]
\cline{2-4}
& 
$\frac{d}{\rho^2 \min(\alpha, \gamma)}$ [\ref{thm:apx-repl-informal}] & 
& 
\makecell{Proper, Realizable} \\[5pt]
\cline{2-4}
& 
$\frac{1}{\rho^2 \alpha^2} + \frac{1}{\gamma^2}$ [\ref{prop:thresholds-proper-informal}] & $\frac{1}{(\rho + \gamma)^2 \alpha^2}$ [\ref{thm:approx-repl-d-lb}] 
& 
\makecell{Proper, Agnostic \\
Threshold Learner} \\[5pt]
\hline
\textbf{Semi-Replicable} & 
 \makecell{$\frac{d^2}{\rho^2 \alpha^2}$ labeled samples \\[5pt] $\frac{d}{\alpha}$ shared samples [\ref{thm:semi-supervised-repl-ub}]} & 
 \makecell{ $\frac{d^2}{\rho^2 \alpha^2}$ labeled samples [\ref{thm:semi-supervised-repl-lb}] \\[5pt]
 $\frac{d}{\alpha}$ shared samples [\ref{thm:semi-supervised-repl-shared-lb}]} &
Proper, Agnostic \\[10pt] 
\hline
\end{tabular}
\caption{\small{Table of our results. Polylogarithmic factors and dependence on $\beta$ omitted for simplicity. All algorithms incur an additional $\polylog((\alpha \beta \rho  \gamma)^{-1})$-factor in sample complexity. 
}}
\label{table:results}
\end{table}

\paragraph{Pointwise Replicability}

Our first main result is a generic procedure for turning any standard learner into a pointwise replicable one.\footnote{For measure theoretic reasons, \Cref{thm:predict} assumes that the underlying domain is countable. In particular, one step of our algorithm requires applying a Chernoff/Hoeffding bound over the domain.}

\begin{restatable}[Informal \Cref{thm:predict-formal}]{theorem}{ReplPrediction}
    \label{thm:predict}
    Let $\Acal$ be an agnostic $(\alpha, \beta)$-learner with $m(\alpha, \beta)$ samples.
    There exists an agnostic $\rho$-pointwise replicable $(\alpha, \beta)$-learner over countable domains with sample complexity $\tO{m(\alpha, \rho^2 \beta)\rho^{-2} + \rho^{-2} \alpha^{-4})}$.
    Our algorithm runs in time linear in sample complexity with $\bigO{\rho^{-2}}$ oracle calls to $\Acal$.
    
    Furthermore, in the realizable setting, there is a $\rho$-pointwise replicable $(\alpha, \beta)$-learner with sample complexity $\tO{m(\alpha, \rho^2 \beta) \rho^{-2} + \rho^{-2} \alpha^{-2})}$
    where $m(\alpha, \beta)$ is the sample complexity of a realizable $(\alpha, \beta)$-learner.
\end{restatable}

In the realizable setting, we obtain an algorithm with sample complexity $\tO{d \rho^{-2} \alpha^{-1} + \rho^{-2} \alpha^{-2}}$.
In the agnostic setting, we obtain an algorithm with sample complexity $\tO{d \rho^{-2} \alpha^{-2} + \rho^{-2} \alpha^{-4})}$ \cite{vapnik1974theory, blumer1989learnability}.
The following lower bound shows that both bounds are essentially optimal when the VC dimension $d$ is large and the accuracy parameter $\alpha$ is high (e.g.\ $d \gg 1/\alpha^2$).
In particular, our algorithm is sample optimal when the accuracy parameter $\alpha$ is constant.
We remark that our lower bound holds for any VC class with dimension $d$.






\begin{restatable}{theorem}{ReplPredictionLB}
    \label{thm:non-uniform-repl-lb}
    Let $\hypotheses$ be any class with VC dimension $d$.
    Any $\rho$-pointwise replicable $(0.01 \alpha, 0.0001)$-learner requires $\tOm{d \rho^{-2} \alpha^{-2}}$ samples in the agnostic setting and $\tOm{d \rho^{-2} \alpha^{-1}}$ samples in the realizable setting.
    Furthermore, any pointwise-replicable learner requires shared randomness.
\end{restatable}

It remains an interesting open question to determine whether the additive $\rho^{-2} \alpha^{-O(1)}$ term is necessary in the sample complexity, or whether it is possible to obtain a $\rho$-pointwise replicable learner with $O(m(\alpha, \beta) \rho^{-2})$ samples.


\paragraph{Approximate Replicability}

Building on our pointwise replicable procedure, we present a transformation turning any standard learner into an approximately replicable one: 

\begin{restatable}[Informal \Cref{thm:apx-repl-chernoff} and \Cref{thm:realizable-apx-repl}]{theorem}{ApxRepl}
    \label{thm:apx-repl-informal}
    Let $\innerAlg$ be an (agnostic) $(\alpha, \beta)$-learner on $m(\alpha, \beta)$ samples.
    There exists an (agnostic) $\rho$-approximately replicable $(\alpha, \beta)$-learner with sample complexity $\tO{m(\alpha, \beta)\rho^{-2} + \rho^{-5} + \rho^{-2} \alpha^{-4} + \rho^{-4} \alpha^{-2}}$.
    Our algorithm runs in time linear in sample complexity with $O(\rho^{-2})$ oracle calls to $\innerAlg$.

    Furthermore, there is a proper $\rho$-approximately replicable realizable $(\alpha, \beta)$-learner with sample complexity $\tO{d \rho^{-2} \alpha^{-1} + d \rho^{-3}}$.
\end{restatable}




When $\rho$ is constant, our proper realizable learner obtains the sample complexity $\tO{d \alpha^{-1}}$.
In particular, we obtain a $0.01$-approximately replicable realizable $(\alpha, \beta)$-learner with identical sample complexity as a generic realizable learner (both in the proper and improper cases).

Furthermore, we present a lower bound showing that our approximately replicable agnostic learner is sample-optimal whenever the VC dimension is sufficiently large (e.g.\ $d \gg \rho^{-3} \alpha^{-2}$).
As is the case with pointwise replicability, our lower bound holds for any VC class with dimension $d$.

\begin{restatable}{theorem}{ApproxReplicabilityLB}
    \label{thm:approx-repl-d-lb}
    Let $\rho < 0.001$ and $\hypotheses$ be an arbitrary class with VC dimension $d$.
    Any $(\rho, \gamma)$-approximately replicable$(\alpha, 0.001)$-learner for $\hypotheses$ requires sample complexity $\bigtOm{d \alpha^{-2} (\rho + \gamma)^{-2}}$.
    In particular, any $\rho$-approximately replicable $(\alpha, 0.001)$-learner requires sample complexity $\bigtOm{d \alpha^{-2} \rho^{-2}}$.
\end{restatable}

Determining the exact sample complexity of approximately replicable PAC learning remains an intriguing open question.
In particular, when the underlying hypothesis class is simple (e.g. $d = O(1)$), is there a $\rho$-approximately replicable learner with sample complexity $\tO{\rho^{-2} \alpha^{-2}}$?
Toward resolving this question, we give a simple direct algorithm for the fundamental class of $1$-dimensional thresholds on $\R$, that is hypotheses of the form $h(x) \mapsto \ind[x > a]$. 
Thresholds pose a fundamental obstacle to strict replicability: there is no replicable algorithm for learning thresholds \cite{bun2023stability, alon2019private}. Nevertheless, we give a proper, approximately replicable learner for the class with optimal sample complexity.



\begin{proposition}[Informal \Cref{prop:thresholds-proper-formal}]
    \label{prop:thresholds-proper-informal}
    Let $\rho, \gamma, \alpha > 0$ and $0 < \beta < \rho$.
    There is an agnostic proper $(\rho, \gamma)$-approximately replicable $(\alpha, \beta)$-learner for thresholds with sample complexity $\tO{\gamma^{-2} + \rho^{-2} \alpha^{-2}}$.
    In particular, there is a $\rho$-approximately replicable $(\alpha, \beta)$-learner with $\tO{\rho^{-2} \alpha^{-2}}$ samples.
\end{proposition}



\paragraph{Semi-Replicable Learning}

In our final model, semi-replicability, we give upper and lower bounds that are optimal (up to polylogarithmic factors) in both the shared and labeled sample complexities.
\begin{theorem}[Informal \Cref{thm:semi-supervised-repl-ub-formal}]
    \label{thm:semi-supervised-repl-ub}
    There is a $\rho$-semi-replicable $(\alpha, \beta)$-learner for with shared (unlabeled) sample complexity $\tO{d \alpha^{-1}}$ and labeled sample complexity $\tO{d^2 \rho^{-2} \alpha^{-2}}$.
\end{theorem}

We give a lower bound for the number of labeled samples (shared or otherwise) showing \Cref{thm:semi-supervised-repl-ub} is tight up to log factors regardless of the number of unlabeled samples.

\begin{restatable}{theorem}{SemiReplicableLB}
    \label{thm:semi-supervised-repl-lb}
    Any $\rho$-semi-replicable $(\alpha, 0.0001)$-learner for any class $\hypotheses$ with VC dimension $d$ requires sample complexity $\tOm{d^{2} \rho^{-2} \alpha^{-2}}$.
    This bound holds regardless of unlabeled sample complexity.
\end{restatable}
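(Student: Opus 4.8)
The plan is to prove the lower bound in three moves: reduce an arbitrary VC-dimension-$d$ class to a canonical ``fully shattered'' instance, observe that for that instance the shared unlabeled pool carries no information (so $\rho$-semi-replicability collapses to ordinary $\rho$-replicability), and then establish a $\bigtOm{d^2/(\rho^2\alpha^2)}$ lower bound for ordinary $\rho$-replicable agnostic learning by amplifying a one-dimensional replicable coin-style lower bound.

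\emph{Normalizing the instance.} Since $\hypotheses$ has VC dimension $d$ it shatters some set $\{x_1,\dots,x_d\}$, and $\restrict{\hypotheses}{\{x_1,\dots,x_d\}}$ is all of $\set{\pm1}^{[d]}$. I would restrict attention to label distributions supported on $\{x_1,\dots,x_d\}$ whose $\domain$-marginal is one \emph{fixed} distribution (say uniform on the $d$ points): on such distributions, agnostic learning of $\hypotheses$ is exactly agnostic learning of the finite class $\set{\pm1}^{[d]}$, and $\opt$ is determined by the coordinatewise label biases. Because the marginal is now the same known distribution across the entire hard family, the shared pool $S_U$ is simulable from public randomness, so any $\rho$-semi-replicable learner for this family yields — by folding $S_U$ into the shared seed — an ordinary $\rho$-replicable learner with the same \emph{labeled} sample complexity. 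This is precisely why the bound is immune to the number of unlabeled samples: the hard instances differ only in their labels.

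\emph{The core lower bound.} It remains to show that ordinary $\rho$-replicable agnostic $(\alpha,0.0001)$-learning of $\set{\pm1}^{[d]}$ requires $\bigtOm{d^2/(\rho^2\alpha^2)}$ samples. The matching upper bound of \Cref{thm:semi-supervised-repl-ub} arises because replicably learning such a class reduces to replicably \emph{selecting} a near-optimal hypothesis from a Benedek--Itai $\alpha$-cover of size $N = 2^{\tilde\Theta(d)}$, and replicable selection among $N$ near-optimal candidates costs $\tilde\Theta(\log^2 N/(\rho^2\alpha^2))$ — the quadratic-in-$\log N$ being the key separation from differential privacy, whose selection primitives pay only a single log. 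To lower bound this I would use hardness amplification from the $d=1$ case, where the claim is the standard replicable coin bound: for a coin of bias $\tfrac12+\eta$, an agnostic $\alpha$-learner is forced to a specific output whenever $|\eta|>\alpha$, so its decision rule — as a function of the empirical bias and shared seed — must switch outputs inside an $O(\alpha)$-wide window, and placing $\eta$ there makes two resampled empirical biases (differing by $\Theta(1/\sqrt m)$) straddle the switch with probability $\Omega(1/(\alpha\sqrt m))$, forcing $m=\Omega(1/(\rho^2\alpha^2))$. The amplification then embeds a structure in which $2^{\tilde\Theta(d)}$ hypotheses are simultaneously within $O(\alpha)$ of optimal and pairwise far, so an $(\alpha,0.0001)$-learner must commit to one of them, and one argues that a $\rho$-replicable algorithm making such a commitment cannot avoid the $\log^2 N$ (i.e.\ $d^2$) blowup; the $\log^6 d$ loss absorbs the calibration slack (per-coordinate accuracy and confidence budgets, any boosting, and the gap between the $\ell_\infty$ accuracy constraint and the Euclidean fluctuation geometry).

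\emph{Main obstacle.} The delicate point is establishing the \emph{tight} $d^2$, because both the naive upper and lower estimates miss it. A crude per-coordinate reduction gives only $\Omega(1/(\rho^2\alpha^2))$ — joint $\rho$-replicability merely implies $\rho$-replicability per coordinate, not $\rho/d$ — so no factor of $d$ at all; on the other hand, the obvious ``each coordinate needs $\rho/d$-replicability, hence $d^2/(\rho^2\alpha^2)$ per coordinate'' argument overshoots, since under the uniform marginal each coordinate sees only $m/d$ samples, and, worse, the true optimum is \emph{not} attained by $d$ independent one-dimensional roundings at all but by rounding the whole empirical-statistics vector against a single correlated (cubical-foam-style) partition of $\R^d$, which beats the product of one-dimensional grids. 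The lower bound therefore cannot be a per-coordinate union bound; it must be a genuinely multi-dimensional anti-concentration/isoperimetric estimate that holds against \emph{arbitrarily correlated} shared randomness and pins the label gaps at exactly the scale where selection among near-optimal hypotheses is maximally unstable. Getting this argument to yield $\log^2 N$ rather than $\log N$ or $\log^3 N$ — equivalently, matching \Cref{thm:semi-supervised-repl-ub} up to the $\log^6 d$ — is the technical heart of the proof.
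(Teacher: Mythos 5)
Your first step matches the paper: fix the uniform marginal on a shattered set $\{x_1,\dots,x_d\}$ so that the unlabeled pool carries no information and can be simulated, collapsing $\rho$-semi-replicability to ordinary $\rho$-replicability of the labeled-sample part. The gap is in what you call the ``core lower bound'' and the ``technical heart'': you never actually prove that replicable agnostic learning of the shattered cube needs $\bigtOm{d^2/(\rho^2\alpha^2)}$ labeled samples. Your own discussion of the main obstacle concedes that the per-coordinate reduction gives no factor of $d$, and the multi-dimensional anti-concentration/isoperimetric estimate you invoke to recover $\log^2 N$ rather than $\log N$ is left entirely unconstructed, so the quantitative content of the theorem is missing from the proposal.

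The paper closes exactly this gap differently: it does not prove a new multi-dimensional replicability lower bound inside this theorem at all. It reduces to the $d$-dimensional sign-one-way marginals problem and cites the known bound (\Cref{thm:sign-one-way-marginals-lb}, from \cite{hopkins2025generative}) that any $\rho$-replicable $(\alpha,0.001)$-accurate algorithm needs $\bigOm{\frac{d}{\rho^2\alpha^2\log^6 d}}$ samples, where each sample is a \emph{full} $d$-coordinate vector. The extra factor of $d$ then comes from sample accounting in the reduction, not from a stronger replicability argument: under the uniform marginal, each labeled example shown to the learner reveals only one coordinate of one fresh vector sample, so $m$ labeled examples consume only $O(m/d)$ vector samples (with a Chernoff bound ensuring no point is hit more than $O(m/d)$ times). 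Combining gives $m = \bigOm{d^2/(\rho^2\alpha^2\log^6 d)}$. If you are not permitted to cite the sign-one-way marginals lower bound as a black box, then the hard isoperimetric-style argument you gesture at would indeed have to be carried out, and your sketch does not supply it; if you are, the theorem follows by the simple reduction above and your proposed machinery is unnecessary.
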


Finally, we give a lower bound for the shared sample complexity that holds regardless of the overall sample complexity.

\begin{restatable}{theorem}{SemiReplicableSharedLB}
    \label{thm:semi-supervised-repl-shared-lb}
    There exists a class $\hypotheses$ such that any $0.0001$-semi-replicable $(\alpha, 0.0001)$-learner for $\hypotheses$ requires $\tOm{d \alpha^{-1}}$ shared samples.
    The lower bound holds even if the shared samples are labeled.
\end{restatable}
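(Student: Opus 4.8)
The plan is to first reduce to the fundamental case of one‑dimensional thresholds, and then to prove the resulting $\bigOm{1/\alpha}$ shared‑sample bound directly, by showing that against a suitably \emph{adversarial} marginal a semi‑replicable learner must commit --- as a function of the shared data and shared randomness alone --- to a threshold lying in an $\alpha$‑net of $\distribution_{\domain}$, and every such net has size $\bigOm{1/\alpha}$.

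\paragraph{Reduction to thresholds.}
Take $\Hcal$ to be the $d$‑fold direct sum of $1$D thresholds: $d$ disjoint copies of $\R$ with $\Hcal=\{(c_{\theta_1},\dots,c_{\theta_d})\}$, which has VC dimension $d$. The hard distribution puts mass $1/d$ on each copy, and inside copy $i$ runs the hard threshold instance below with an independently drawn marginal and target. If the learner uses $k\le c\,d/\alpha$ shared samples then, by pigeonhole and averaging, a constant fraction of copies each receive $o(1/\alpha)$ shared samples; on any such ``shared‑poor'' copy the restricted algorithm is still $\rho$‑semi‑replicable (projection preserves replicability) with $o(1/\alpha)$ shared samples, so by the core claim it is not $(C\alpha,\beta')$‑accurate on that copy for a suitable constant $C=C(\beta,\beta')$. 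A reverse‑Markov / pile‑up argument over the shared‑poor copies then forces total error $>\alpha$ with probability $>\beta$, contradicting the accuracy guarantee. It therefore suffices to prove: any $\rho$‑semi‑replicable $(\alpha,\beta)$‑learner for $1$D thresholds ($\rho,\beta$ small constants) needs $\bigOm{1/\alpha}$ shared samples, even when those samples are labeled and the number of private labeled samples is unbounded.

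\paragraph{Core claim for thresholds.}
Fix such a learner $A$ with $k$ shared and $m$ (arbitrary) private samples. Two structural facts drive the argument: (i) any set of thresholds that $\alpha$‑covers a marginal $\mu$ --- every threshold is within $\mu$‑mass $\alpha$ of one in the set --- has $\bigOm{1/\alpha}$ elements; (ii) by the canonical‑value argument, for most fixings of the shared sample $S_U$ and randomness $r$ the output of $A$ is, on any target distribution, a single hypothesis emitted with probability $\ge 1-O(\sqrt{\rho})$ over the private sample, so accuracy forces that canonical output to track the target threshold to within $\mu$‑mass $2\alpha$. Hence $A$ is effectively committing, as a function of $(S_U,r)$ only, to an $\alpha$‑cover of $\mu$ by thresholds. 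The remaining point is that $r$ is useless for building such a cover against an adversarial $\mu$: I would take $\mu$ to be a \emph{random, multi‑scale} (self‑similar) measure, supported on a randomly placed nested family of shrinking intervals with a sufficiently small shrinkage factor $\nu=\nu(A,\alpha)$, so that with high probability over $\mu$ (and over $r$) no point determined by $r$ alone lies within $\mu$‑mass $\alpha$ of the target, and no combination of $r$ with fewer than $1/\alpha$ of the shared samples does either (labeled or not, $k$ shared samples only pin the target down to $\mu$‑mass $\Theta(1/k)$). Consequently the committed $\alpha$‑cover must draw $\bigOm{1/\alpha}$ of its points from the shared samples, giving $k=\bigOm{1/\alpha}$.

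\paragraph{Main obstacle.}
The delicate point is that $A$ may use \emph{arbitrarily many} private labeled samples, which defeats the usual ``nearby distributions are statistically indistinguishable'' template (and any hybrid/chain argument over $\Theta(\alpha m)$ intermediate distributions): with enough private data $A$ essentially knows $\mu$ and $\theta$ exactly. One must therefore rule out every ``escape'' in which $A$ estimates $\mu$ locally from its private sample, builds a fine $r$‑randomized grid at the right scale, and rounds --- a strategy that genuinely works for benign (e.g.\ uniform) marginals. This is exactly why $\mu$ must be unpredictable at \emph{every} scale down to $\sim\alpha$: inside any interval that $k<1/\alpha$ shared samples can isolate, the conditional marginal still hides a concentrated sub‑interval whose location is independent of $r$, so the only reference points that are stable across a resampling of the private data are the $\le k$ shared samples themselves ($r$ being too coarse and misplaced), which forces a rounding error $\gtrsim 1/k$. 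Making this ``no replicable rounding without a shared net'' principle precise for the random self‑similar $\mu$, and checking robustness to labeled shared samples, is the technical heart of the proof; an alternative would be to transport the known $\bigOm{1/\alpha}$ public‑unlabeled‑sample lower bound for semi‑private threshold learning \cite{beimel2013private} along the replicability/privacy correspondence used throughout this area.
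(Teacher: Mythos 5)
Your high-level structure (a hide-the-relevant-copy amplification on top of a one-dimensional $\bigOm{1/\alpha}$ shared-sample bound for thresholds) matches the shape of the paper's argument, but the proposal has a genuine gap exactly where the theorem's content lies: the core claim for thresholds is asserted, not proved. At $d=1$ this claim \emph{is} the theorem (even with constant parameters), and you explicitly defer its proof ("the technical heart"). The sketched direct attack does not close it. Because the private sample is unbounded, the canonical output associated to a fixed $(S_U,r)$ is \emph{not} determined by $(S_U,r)$ alone --- it may depend on the target $\theta$ (and on $\mu$), which the algorithm can learn to arbitrary precision from private data; so the family of canonical outputs is not a small $\alpha$-cover indexed by the shared information, and the argument must instead rule out \emph{every} exact-agreement rounding scheme built from $r$ together with $o(1/\alpha)$ shared points. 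That is essentially the same difficulty as the known impossibility results for privately/replicably learning thresholds, whose proofs require heavy machinery ($\log^* $-type lower bounds); note also that when the marginal is \emph{known}, thresholds are replicably learnable, so any "adversarial multi-scale measure" construction has to beat all algorithms that adaptively estimate $\mu$ from private data and then round --- the random self-similar measure is a plausible heuristic, but as written it is not a proof and there is no indication of how to make it one without recapitulating that machinery. There are also quantifier issues in the canonical-value step (the shared sample $S_U$ is drawn from $\distribution_\domain$, which varies with the adversarial $\mu$, so "fix $(S_U,r)$, vary the distribution" needs a prior/minimax formulation).

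Your fallback sentence --- transport a known semi-private public-sample lower bound through the replicability/privacy correspondence --- is in fact the paper's route, but it is not a one-liner and your citation is off: the $\bigOm{1/\alpha}$ public-sample bound is from \cite{alon2019limits} (via their public-data-reduction lemma combined with the $\log^* n$ private lower bound for thresholds of \cite{alon2019private}), not \cite{beimel2013private}, and invoking it requires a reduction from semi-replicable to semi-private learning that respects the shared/private split (the paper proves this as \Cref{lem:semi-private-reduction}, adapting the replicable-to-DP transformation with private selection), which you only gesture at. For the amplification from $\bigOm{1/\alpha}$ to $\bigOm{d/\alpha}$, your direct-sum-plus-pigeonhole argument is a reasonable alternative to the paper's version (the paper hides a random relevant coordinate inside conjunctions of $d$ thresholds and performs the amplification at the semi-private level, \Cref{thm:semi-private-dim-reduction}), and the "projection preserves replicability" and error-averaging steps you outline look repairable; but since both branches of your plan bottom out in the unproven one-dimensional claim, the proposal as it stands does not establish the theorem.
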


On our way to proving this result, we also show that a similar lower bound holds for semi-private learners.
In particular, we exhibit a class where even semi-private learners requires $\bigOm{d \alpha^{-1}}$ public samples \cite{alon2019limits}.
It remains an interesting open question to determine whether \emph{every} class with infinite Littlestone Dimension and VC dimension $d$ requires $\bigOm{d \alpha^{-1}}$ shared (resp. public) samples or if there are some classes that can be learned with fewer shared (resp. public) samples.


\paragraph{Shared Randomness}

It is well known that replicability requires shared randomness: for tasks as simple as bias estimation it is not possible to achieve high probability replication without shared random bits \cite{ImpLPS22}, and this impossibility result is now known to extend to a broad class of problems \cite{chase2023replicability}. While our algorithms critically use shared randomness, a priori it is not clear this is necessary in our relaxed settings. Our final result is to show that all three models indeed require at least one shared random bit.

For pointwise and semi-replicability, the necessity of shared randomness is fairly straightforward, and follows already from the reductions we give to prove sample lower bounds, e.g., by encoding a (strictly replicable) bias estimation instance onto the label of a single element. For approximate replicability, we show directly using the Poincare-Miranda theorem that even an algorithm that is consistent on a $\frac{1}{2}$-fraction of the domain requires shared randomness.

\begin{restatable}{theorem}{SharedRandomnessApxRepl}
    \label{thm:shared-randomness-apx-repl-lb}
    There is no deterministic $(0.01, 0.5)$-approximately replicable $(0.05, 0.01)$-learner for any class with VC dimension $d \geq 2$.
\end{restatable}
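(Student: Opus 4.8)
The plan is to argue by contradiction. Suppose $A$ is a deterministic $(0.01,0.5)$-approximately replicable $(0.05,0.01)$-learner for a class $\Hcal$ with $\VC(\Hcal)=d\geq 2$, and let $m$ be its sample complexity. Fix two points $x_0,x_1\in\domain$ shattered by $\Hcal$, and for $(p,q)\in[0,1]^2$ let $\distribution_{p,q}$ be the distribution whose marginal is uniform on $\{x_0,x_1\}$ and which labels $x_0$ by $+1$ with probability $p$ and $x_1$ by $+1$ with probability $q$. Since $A$ is deterministic, $A(S)$ is a fixed hypothesis for each sample $S$, and the probability of drawing any particular $m$-sample is a product of per-example probabilities, each linear in $p$ or $q$; summing over the finitely many samples shows that
\[
F_0(p,q):=\Pr_{S\sim\distribution_{p,q}^m}[A(S)(x_0)=+1], \qquad F_1(p,q):=\Pr_{S\sim\distribution_{p,q}^m}[A(S)(x_1)=+1]
\]
are polynomials, hence continuous, on $[0,1]^2$.

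Next I would extract sign conditions on the boundary of $[0,1]^2$ from accuracy. On the face $p=1$ the label of $x_0$ is deterministically $+1$, and because $\{x_0,x_1\}$ is shattered there is an $h\in\Hcal$ with zero error on $x_0$ and optimal sign on $x_1$, so $\opt(\distribution_{1,q})=\tfrac12\min(q,1-q)\leq\tfrac14$; meanwhile every $h$ with $h(x_0)=-1$ has error at least $\tfrac12>\opt+0.05$. Hence on the $1-\beta=0.99$ fraction of samples where $A$ is accurate we must have $A(S)(x_0)=+1$, giving $F_0(1,q)\geq 0.99>\tfrac12$; symmetrically $F_0(0,q)\leq 0.01<\tfrac12$, and likewise $F_1(p,1)\geq 0.99>\tfrac12>0.01\geq F_1(p,0)$. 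Applying the Poincaré--Miranda theorem to the continuous map $\big(F_0-\tfrac12,\ F_1-\tfrac12\big):[0,1]^2\to\R^2$ --- whose first coordinate is $\leq 0$ on $\{p=0\}$ and $\geq 0$ on $\{p=1\}$, and whose second coordinate is $\leq 0$ on $\{q=0\}$ and $\geq 0$ on $\{q=1\}$ --- produces a point $(p^*,q^*)$ with $F_0(p^*,q^*)=F_1(p^*,q^*)=\tfrac12$.

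Finally I would show $\distribution^*:=\distribution_{p^*,q^*}$ violates approximate replicability. Writing $p_{\sigma_0\sigma_1}:=\Pr_{S\sim(\distribution^*)^m}[A(S)(x_0)=\sigma_0,\ A(S)(x_1)=\sigma_1]$, the constraints $F_0=F_1=\tfrac12$ force $p_{+-}=p_{-+}=\tfrac12-p_{++}$ and $p_{--}=p_{++}$; setting $a:=p_{++}\in[0,\tfrac12]$, the probability that two independent runs $A(S),A(S')$ produce antipodal restrictions to $\{x_0,x_1\}$ (disagreement on \emph{both} points) is $2(p_{++}p_{--}+p_{+-}p_{-+})=2\big(a^2+(\tfrac12-a)^2\big)\geq\tfrac14$, the minimum occurring at $a=\tfrac14$. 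Since the marginal of $\distribution^*$ is uniform on $\{x_0,x_1\}$, disagreeing on both points means $\Pr_{x\sim\distribution^*_\domain}[A(S)(x)\neq A(S')(x)]=1>\tfrac12=\gamma$, so
\[
\Pr_{S,S'\sim(\distribution^*)^m}\Big[\Pr_{x\sim\distribution^*_\domain}[A(S)(x)\neq A(S')(x)]>\gamma\Big]\ \geq\ \tfrac14\ >\ 0.01=\rho,
\]
contradicting $(0.01,0.5)$-approximate replicability and completing the proof.

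I expect the main obstacle to be the boundary analysis rather than the topology: one must use that $\{x_0,x_1\}$ is shattered so that $\opt$ equals the unconstrained optimum (otherwise accuracy need not pin down the learner's sign on $x_0$, since a hypothesis with the wrong sign on $x_0$ can have error only $\tfrac12$), and one must check that the particular constants $(\alpha,\beta,\gamma,\rho)=(0.05,0.01,\tfrac12,0.01)$ interact correctly --- in particular that forcing the correct sign on each boundary face leaves genuine sign conditions for Poincaré--Miranda, and that the forced antipodal probability ($\geq\tfrac14$) comfortably exceeds $\rho$. Verifying continuity of $F_0,F_1$ and the elementary optimization are routine.
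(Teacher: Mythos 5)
Your proof is correct, and it reaches the paper's conclusion by a route that differs in its implementation even though the topological core (Poincar\'e--Miranda over the square of biases, with boundary sign conditions extracted from accuracy on two shattered points) is the same. The paper first proves a standalone impossibility result for the $2$-dimensional sign-one-way marginals problem: it defines, for each output $y\in\set{\pm 1}^2$, the set $B_y$ of biases where $y$ is output with probability at least $0.1$, shows these are closed, applies Poincar\'e--Miranda to a distance-based map built from unions of the $B_y$, and concludes that at some bias two opposite-corner outputs each have probability $\geq 0.1$, violating replicability by a margin of $2\cdot 0.1\cdot 0.1 = 0.02 > 0.01$; it then transfers this to PAC learning via a separate deterministic reduction over two shattered points. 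You instead work directly with the learner's marginal acceptance probabilities $F_0,F_1$, observe they are polynomials in $(p,q)$, and apply Poincar\'e--Miranda to $(F_0-\tfrac12, F_1-\tfrac12)$; at the resulting balance point the elementary algebra on the joint output distribution forces antipodal outputs across two independent runs with probability at least $\tfrac14$. This buys you two things: you avoid the $B_y$ machinery (closedness, the distance map, and the case analysis at $p^*$) and you get a quantitatively stronger replicability violation ($\tfrac14$ versus $0.02$), so the same argument would rule out deterministic learners for considerably larger $\rho$. What the paper's modularization buys in exchange is a reusable impossibility statement for approximately replicable sign-one-way marginals itself (its Theorem 4.21), independent of the PAC reduction. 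Your boundary analysis is also sound as stated: shattering is exactly what guarantees $\opt\leq\tfrac14$ on each face so that accuracy pins down the sign on the deterministic coordinate, and the constants $(0.05,0.01,\tfrac12,0.01)$ leave ample slack throughout.
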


Finally, we remark that our previous algorithms for pointwise replicable and approximately replicable learning use independent shared random strings for every element $x \in \domain$ in the underlying data domain.
For infinite domains, this leads to a potentially infinite amount of shared randomness (although the randomness is only required for the points $x$ which we wish to label, so any algorithm that only queries the output hypothesis on finitely many points will still use finite randomness).
The following theorem gives an algorithm that achieves both point-wise and approximate replicability using finite randomness over arbitrary (even uncountable) domains, at the cost of weaker dependence on $\alpha, \rho, \gamma$ parameters.
We remark that we maintain linear dependence on the VC dimension $d$.

\begin{theorem}[Informal \Cref{thm:apx-repl-finite-r}]
    \label{thm:apx-repl-finite-r-informal}
    There exists an (agnostic) $\rho$-pointwise replicable $(\alpha, \beta)$-learner with sample complexity $\tO{d \rho^{-4} \alpha^{-7}}$ and an (agnostic) $(\rho, \gamma)$-approximately replicable $(\alpha, \beta)$-learner with sample complexity $\tO{d \rho^{-2} \gamma^{-2} \alpha^{-5} + d \rho^{-4} \alpha^{-7}}$.
    Furthermore, both algorithms run in time linear in sample complexity with oracle calls to a (possibly non-replicable) $(\alpha, \beta)$-learner.
\end{theorem}


\subsection{Technical Overview}

We give an overview of our techniques.

\subsubsection{Replicable Prediction}

For simplicity of exposition, we will assume the underlying domain $\domain$ is finite.\footnote{Note technically any class $(\mathcal X, H)$ over a finite domain is fully replicably learnable in $\log^2(|H|)$ samples \cite{bun2023stability}, but many VC classes as simple as thresholds require a number of samples scaling with the size of the domain \cite{bun2020equivalence,bun2023stability}. We will obtain sample complexity linear in VC Dimension, establishing a significant separation even in the finite setting.}




\paragraph{A First Attempt: Randomized Averaging}

We begin by describing a simple (incorrect) attempt to turn any learning algorithm $\innerAlg$ into a replicable predictor we then modify to give a legimate such algorithm.
For any $x$, let $p(x) := \E_{S, \innerAlg}[\innerAlg(S)(x)]$ denote the expected label of $x$ under a random hypothesis learned by $\innerAlg$ (over both the randomness of the samples and the algorithm).
Our algorithm runs $\innerAlg$ on $T \gg 1/\rho^{2}$ fresh independent sample sets to obtain hypotheses $h_{1}, \dotsc, h_{T}$.
Now, given any $x$, let $\hat{p}(x) := \frac{1}{T} \sum_{t = 1}^{T} h_{t}(x)$ denote an empirical estimate of $p(x)$.
Our algorithm then draws an independent random threshold $r(x) \sim \UnifD[-1, 1]$ for each point and outputs the hypothesis 
\begin{equation}
    \label{eq:average-estimate-intro}
    g: x \mapsto \begin{cases}
        1 & \hat{p}(x) \geq r(x) \\
        -1 & \hat{p}(x) \leq r(x)
    \end{cases}
\end{equation}
Since $r(x)$ is drawn uniformly from $[-1, 1]$, we immediately get that the expected error of our algorithm $g$ on any $x$ over the choice of random threshold $r(x)$ is the average error of $\set{h_t}$ on $x$ (\Cref{lem:exp-error-ub-pred}).


First, let us confirm this procedure is indeed pointwise replicable. 
Fix an $x \in \domain$.
Since we run $\innerAlg$ on $T \gg 1/\rho^2$ fresh datasets, we can show that $\var(\hat{p}(x)) \leq \rho^2$.
Since $g$ fails to be replicable if and only if $r(x)$ falls between two independent estimates $\hat{p}(x)$, the probability that this occurs (over the randomness of $\hat{p}(x), r(x)$) is at most $\rho$ (\Cref{lem:p-estimate}).

Unfortunately, this approach runs into an issue with accuracy. 
While above we have argued the error of our hypothesis $g$ matches the average error of the $T$ hypotheses output by $\innerAlg$ in \emph{expectation}, we need to output an accurate hypothesis with high probability (at least $1 - \beta$). 
To prove this, it is natural to try to use the fact that the error of $g$ on each domain element $x$ is independent since each $r(x)$ is sampled independently. 
Thus, if the marginal distribution $\distribution$ is uniformly spread over many elements (each with small mass), we can apply Hoeffding's inequality to show the error of $g$ is concentrated around its expectation with high probability, and in particular at most $\opt + \alpha$, as desired.
We note that to obtain the final bound, we condition on the event that $\errD(h_{t}) \leq \opt + \alpha$ for all $t$.

Unfortunately, we have no control over $\distribution$, and it may well be the case that the distribution is not well spread out (i.e., it may have many heavy hitters, elements with large mass). In this case, Hoeffding will not imply any non-trivial concentration, and for good reason: the error of $g$ on a small set of heavy hitters may indeed fail to be concentrated, and we need another approach in this case.\footnote{A previous version of this paper incorrectly applied Markov's inequality to bound the error. An anonymous reviewer gave the following counterexample: $\domain = \set{1, 2}$, $\distribution = \UnifD(\domain) \times \set{1}$, and $H = \set{\delta_{1}, \delta_{2}}$ where $\delta_x(z) = -1$ if $x = z$, $1$ otherwise. Our algorithm will learn $\hat{p}(x) \sim 0$ and with $\Omega(1)$ probability map both elements to $-1$, incurring error $1/2$ beyond $\opt = 1/2$.}

\paragraph{Handling Heavy Hitters}

Thankfully, there is a (morally) simple fix to this problem: since any element $x$ with large mass can be sampled many times, we can just directly learn an $\alpha$-optimal label for $x$!
Formally, set a threshold $\nu$, and say $x$ is $\nu$-heavy if $\distribution(x) \geq \nu$ under the marginal distribution and $\nu$-light otherwise.
In order to learn heavy hitters and their labels, we introduce two useful tools: (1) a pointwise replicable algorithm for identifying heavy hitters, and (2) a pointwise replicable algorithm for learning the labels of heavy hitters. Once we have identified and learned labels of the set of heavy hitters, the rest of the distribution $\distribution$ is ``well-spread'', so Hoeffding will indeed give us the desired concentration on the remainder of the points.

Let's make this strategy slightly more formal. Our pointwise replicable algorithm for heavy-hitters (\Cref{prop:finding-heavy}) takes in $1/(\rho^2 \nu)$ samples, and outputs a set $S$ such that any $10 \nu$-heavy element is in $S$ and no $0.1\nu$-light element is in $S$.
Furthermore, over two runs, any $x \in \domain$ is either included in $S$ or excluded from $S$ in both runs with probability $1 - \rho$.
We remark that in contrast fully replicable algorithms for identifying heavy hitters require $1/(\rho^2 \nu^3)$ samples, while we crucially exploit the weaker notion of pointwise replicability to obtain improved sample complexity.
In particular, we use $1/\nu$ samples to identify a set of candidate heavy hitters, and then for every candidate element $x$, we can estimate its mass $\distribution(x)$ up to multiplicative error $\hat{\distribution}(x) \in (1 \pm \rho) \distribution(x)$ using $1/(\rho^2 \nu)$ samples.

Our next task is to replicably decide a labeling for these points. Our pointwise replicable labeling algorithm (\Cref{prop:learn-hh-label}) takes a set $S$ where each element $x \in S$ is guaranteed to be $\nu$-heavy and outputs $\ell(x)$ for all $x \in S$ such that $\ell(x) = \sign(y(x))$ whenever the label distribution is sufficiently biased (formally $|y(x)| \geq \alpha$ where $y(x) = \E[y|x]$ is the expected label of $x$ under the input distribution). This can be done via a simple replicable bias estimation procedure \cite{ImpLPS22} that replicably tests the bias of a single Rademacher distribution up to error $\alpha$ with $1/(\rho^2 \alpha^2)$ samples.
Since a sample of size $m$ contains $m \nu$ samples of a $\nu$-heavy element on average, it suffices to take $m \gg 1/(\rho^2 \alpha^2 \nu)$ samples.

\paragraph{The Final Algorithm}

Combining the tools above, we define our final algorithm to first (pointwise replicably) identify a set of $\nu$-heavy elements $S$ and output
\begin{equation*}
    h: x \mapsto \begin{cases}
        \ell(x) & x \in S \\
        g(x) & x \not\in S
    \end{cases} 
\end{equation*}
where $\ell$ is the label obtained from \Cref{prop:learn-hh-label} and we recall $g$ is the original labeling obtained by thresholding the averaging $T \gg \rho^{-2}$ hypotheses in \eqref{eq:average-estimate-intro}.

We argue that $h$ is still pointwise replicable.
For any $x$, its membership in $S$ is consistent since we identify heavy hitters in a pointwise replicable manner.
Given that membership in $S$ is consistent across two runs, it is enough to prove that $\ell$ and $g$ are themselves pointwise replicable. For $\ell$, this is guaranteed since $S$ contains only $\Omega(\nu)$-heavy elements for which the above bias estimation procedure is replicable, while $g$ is pointwise replicable by our initial discussion.

It is left to argue that $h$ produces $\alpha$-accurate hypotheses with high probability (for the appropriate setting of the threshold $\nu$). 
Let $S^*$ denote the set of $\nu$-heavy elements and $\distribution(S^*)$ denote their collective mass.
We consider two cases.
If $\distribution(S^*) \geq 1 - \alpha$, the error of $h$ on $\domain \setminus S^*$ is at most $\alpha$ since $\distribution(\domain \setminus S^*) \leq \alpha$.
Furthermore, since $S^* \subset S$, our labeling algorithm ensures that $\ell(x)$ learns the optimal label whenever $|y(x)| \geq \alpha$, $\ell(x)$ has error at most $\alpha$ in excess of any function on $S^*$. 
We thus conclude that $h$ is an $O(\alpha)$-accurate hypothesis.

Now, consider the second case, i.e. $\distribution(\domain \setminus S^*) \geq \alpha$. 
Our goal in this case is to ensure that the error of $h$ on $\domain \setminus S^*$ concentrates around its mean.
As a sum over independent random variables bounded by $\distribution(x)$ for every $x \in \domain \setminus S^*$, a standard application of Hoeffding's inequality allows us to bound the probability that the error exceeds its mean by $\alpha$ as
\begin{equation*}
    \exp\left( - \bigOm{\frac{\alpha^2}{\sum_{x \not\in S} \distribution(x)^2}} \right) < \exp\left( - \bigOm{\frac{\alpha^2}{\nu \sum_{x \not\in S} \distribution(x)}} \right) < \exp\left( - \bigOm{\frac{\alpha^2}{\nu}} \right)
\end{equation*}
where we have used any $x \not\in S$ is $\nu$-light as $S^* \subset S$.
Thus, we set $\nu \ll \frac{\log(1/\beta)}{\alpha^2}$ to ensure that the error is well concentrated around its mean (and therefore at most $\opt + \alpha$).

\paragraph{Lower Bound}

To obtain a lower bound, we reduce from replicable bias estimation: given sample access to a Rademacher distribution $\distribution^*$ (supported on $\set{\pm 1}$), determine if the mean is greater than $\alpha$ or less than $- \alpha$. \cite{ImpLPS22} show any $\rho$-replicable algorithm for $\alpha$-bias estimation (even with constant error probability) requires $\bigOm{\rho^{-2} \alpha^{-2}}$ samples. We will build a reduction that `hides' one such instance into a set of $d$ shattered points, improving the lower bound to the tight $\bigOm{d \rho^{-2} \alpha^{-2}}$.

Fix any class $\hypotheses$ with VC Dimension $d$ and let $\set{x_{1}, \dotsc, x_{d}}$ be a set of shattered elements.
Without loss of generality, let $d$ be odd.
We reduce bias estimation to PAC learning as follows.
Let $\distribution$ be a uniform distribution over $\set{x_{1}, \dotsc, x_{d}}$ that labels $x_{i}$ according to $\Rad(\alpha)$ for a randomly selected subset of $\frac{d - 1}{2}$ elements, $\Rad(-\alpha)$ for the remaining $\frac{d - 1}{2}$, and $\distribution^*$ for the remaining element $x^*$.
Given sample access to $\distribution^*$, we can easily create a dataset of $m$ samples from $\distribution$, by drawing $m$ elements from $\set{x_{1}, \dotsc, x_{d}}$ independently and uniformly, and labeling each one according to the element's corresponding Rademacher distribution.
Note that we only need (in expectation) $\frac{m}{d}$ samples from $\distribution^*$ to simulate $m$ samples from $\distribution$.
Let $\innerAlg$ be a $\rho$-pointwise replicable $(\alpha, \beta)$-learner for $\hypotheses$.
We construct $S$ as described above, and then given $h \gets \innerAlg(S)$, we return $h(x^*)$.

Since for any fixed $x$, and in particular $x^*$, our algorithm is consistent with probability $\rho$, we immediately obtain a $\rho$-replicable algorithm.
Thus, it suffices to argue correctness.
Assume without loss of generality $\distribution^* \sim \Rad(\alpha)$.
Suppose we obtain an $0.01\alpha$-accurate hypothesis $h$ on this distribution.
There are $\frac{d + 1}{2}$ (randomly chosen) elements in the domain whose labels are distributed according to $\Rad(\alpha)$.
For every element labeled $-1$, $h$ incurs an excess error of $\frac{\alpha}{d}$ over the optimal hypothesis.
In particular, if $h$ is $0.01\alpha$-accurate, $h$ can label at most $\frac{0.01\alpha}{\alpha/d} = 0.01d$ elements $-1$.
Since the elements appear identical to $\innerAlg$ and the position of $x^*$ is randomly selected, we argue this implies the probability that $h(x^*) = -1$ is at most $\frac{0.01d}{(d+1)/2} \leq 0.02$.
Thus, we obtain an algorithm that solves bias estimation with error probability at most $0.02$ and takes (with high probability) at most $O(m/d)$ samples from $\distribution^*$, violating \cite{ImpLPS22} if $m = o(d \rho^{-2} \alpha^{-2})$.


Finally, note that this reduction also immediately implies that pointwise replicability requires shared randomness, since replicable bias estimation is also well known to require shared randomness \cite{ImpLPS22,chase2023replicability,dixon2023list}.
In particular, we can take a single element $x \in \domain$ and use samples from a bias estimation instance to label $x$.
 
\subsubsection{Approximate Replicability}

At a high level, our approximately replicable learner follows the same framework as replicable prediction: for elements with high probability mass, we learn their labels directly; for elements with low probability mass, we average several independently learned hypotheses and argue that the error is well concentrated. As before, given a threshold $\nu$, we require procedures to (1) identify $\nu$-heavy elements and (2) learn their labels. Unlike the pointwise case where it sufficed to consider replicability of each heavy hitter individually, to ensure approximate replicability we now need to ensure  a large fraction of the identified heavy hitters and their labels are consistent simultaneously.

Our goal is to obtain a $(\rho, \gamma_0)$-approximately replicable learner.
Let $\gamma \gets \gamma_{0}/C$ for some sufficiently large $C$.
Our approximately replicable heavy hitter identification algorithm (\Cref{prop:apx-repl-heavy}) returns two sets: $S_{\gamma}$ containing all $2 \gamma$-heavy elements and no $\gamma/2$-light elements, and $S$ containing all $\min(10 \nu, 2\gamma)$-heavy elements and no $\min(0.1 \nu, \gamma/2)$-light elements. Without loss of generality, we may assume $S_{\gamma} \subseteq S$. Recall that for replicable prediction we only returned a single set $S$. Our two-level system here allows us to ensure $S_{\gamma}$, which contains all $\Omega(\gamma)$-heavy elements, is \textit{fully} replicable. This is necessary since \textit{any} disagreement on an element with mass $\Omega(\gamma)$ immediately rules out $\gamma$-closeness of the output hypothesis. On top of this, we require at most a $\gamma$-fraction of elements to placed in the larger set $S$ inconsistently (i.e., $\distribution(S^{(1)} \Delta S^{(2)}) < \gamma$ where $S^{(1)}, S^{(2)}$ are outputs over two runs).\footnote{We use $\Delta$ to denote the symmetric difference.}

As before, our approximately replicable label learning algorithm outputs $\ell(x)$ for all $x \in S$ such that $\ell(x) = \sign(y(x))$ for $|y(x)| \geq \alpha$. 
The replicability guarantee is as follows: we learn the labels of $S_{\gamma}$ fully replicably (again, since any disagreement on an element with mass $\Omega(\gamma)$ immediately rules out the output hypotheses being $\gamma$-close) and ensure that elements an all but $\gamma$-fraction of elements that are identified as $\nu$-heavy in both iterations (i.e. $x \in S^{(1)} \cap S^{(2)}$) are labeled consistently (i.e. $\ell^{(1)}(x) = \ell^{(2)}(x)$).

Finally, we label $h(x) = \ell(x)$ if $x \in S$ and $h(x) = g(x)$ otherwise (see \eqref{eq:average-estimate-intro}) as before, where $g$ is now obtained by averaging $T := O(\gamma^{-2})$ hypotheses (in contrast to $O(\rho^{-2})$) for some large $C$.
We remark the the accuracy arguments follow essentially identically to the replicable prediction algorithm, so we focus only on approximate replicability.
From our algorithm above, we see that an element is inconsistently labeled (i.e. $\tilde{h}^{(1)}(x) \neq \tilde{h}^{(2)}(x)$) if and only if one of the following occur:
\begin{enumerate}
    \item $x$ is classified as $\nu$-heavy inconsistently: $x \in S^{(1)} \Delta S^{(2)}$.
    \item $x$ is $\nu$-heavy but labeled inconsistently: $x \in S^{(1)} \cap S^{(2)}$ and $\ell^{(1)}(x) \neq \ell^{(2)}(x)$.
    \item $x$ is not $\nu$-heavy but labeled inconsistently: $x \not\in S^{(1)} \cup S^{(2)}$ and $g^{(1)}(x) \neq g^{(2)}(x)$.
\end{enumerate}
Our heavy hitter identification (\Cref{prop:apx-repl-heavy}) and labeling (\Cref{prop:apx-repl-hh-label}) algorithms ensure that the first two events occur only on a $O(\gamma)$-fraction of elements.
Consider the last event.
Following similar arguments to replicable prediction, over two runs of the algorithm, $g$ is pointwise replicable with probability $\ll \gamma$.
In particular, since $S$ contains $S_{\gamma}$ which contains all $2 \gamma$-heavy elements, all $x \not\in S$ are $2 \gamma$-light. Since we sample each $r(x)$ independently, the mass of elements labeled inconsistently under $g$ is a sum of independent random variables in $[0, 2 \gamma]$ whose sum has expectation $\ll \gamma$, and a standard Chernoff bound ensures that the mass exceeds $\gamma_0 \gg C \gamma$ with probability at most $\exp(-\Omega(C \gamma/\gamma)) \ll 1$.
In particular, setting $C = O(\log(1/\rho))$ suffices to ensure that the output hypotheses are $O(\gamma_0)$-close with probability at least $1 - \rho$.

\paragraph{Sample Lower Bound}

Like in the pointwise replicable setting, we'd like to prove a sample lower bound for approximate replicability via reduction from bias estimation, but doing so is no longer as straightforward as `planting' a single randomized domain point. In particular,
while pointwise replicability guaranteed our answer is replicable on \emph{every} $x \in \domain$ (and therefore also on whichever element we embed the bias estimation instance), in approximate replicability the algorithm only needs to be consistent on a $\gamma$-fraction of the domain, so it is entirely possible the algorithm is never consistent on the embedded instance.
In particular, under the pointwise guarantee, we only needed indistinguishability to guarantee correctness (and thus only when the label of $x^*$ is sufficiently biased), while here we need to ensure $x^*$ remains indistinguishable for \emph{every} input distribution to the bias estimation problem.

To do this, we will critically rely on the fact that replicable bias estimation is hard even in the following \textit{average-case} setting: even when the adversary selects $\Rad(p)$ with $p \in [-\alpha, \alpha]$ chosen uniformly at random (and this distribution is known to the learner), replicable bias estimation \textit{still} requires $\bigOm{{\rho^{-2} \alpha^{-2}}}$ samples.
Thus, assuming that our input distribution to the bias estimation is drawn from this meta-distribution,\footnote{Here we mean the adversary's distribution over distributions.} we can draw $d - 1$ other dummy input distributions from the same meta-distribution, thereby hiding the true distribution from the algorithm.

More formally, consider the following reduction.
Let $\distribution$ denote an (unknown) distribution over $\set{\pm 1}$ to which we have sample access.
Let $\set{x_{1}, \dotsc, x_{d}}$ denote a set of $d$ shattered elements, and consider a uniform distribution over the $d$ elements.
We define the label distribution as follows.
Randomly select an index $r \in [d]$ and generate labels according to $\distribution$.
For every other index, generate labels according to a (shared) distribution sampled from the hard meta-distribution.
Note that this process defines a meta-distribution over input distributions to our learning algorithm, and this meta-distribution is a $d$-wise product of the hard meta-distribution for bias estimation.

Suppose we have an $(\rho, \gamma)$-approximately replicable $(0.01 \alpha, \beta)$-learner taking $m$ samples.
If we have a randomized algorithm that is correct and replicable on every distribution, a standard minimax-style argument shows there is also a deterministic algorithm that is correct and replicable on a random distribution (drawn from the product meta-distribution).
In particular, even if every element's label (including the one embedding the bias estimation problem) is drawn from the hard meta-distribution, we have an algorithm that is correct and replicable with respect to ``most" of the label distributions.

Now, consider the random variable $\err_{i}$ denoting the error of the hypothesis on $x_{i}$.
Over the product meta-distribution, $\err_{i}$ are distributed identically, and therefore at most $\alpha$ on average.
In particular, $\err_{i} = O(\alpha)$ with high constant probability.
Furthermore, since (over the meta-distribution) all elements of the domain are indistinguishable, the approximately replicable learner is consistent on the relevant label with probability $1 - O(\rho + \gamma)$, by union bounding over the event that the algorithm produces a $\gamma$-consistent hypothesis, and that the relevant element lies in the consistent region of the hypotheses.
Thus, we obtain a $O(\alpha)$-accuracy algorithm for bias estimation with $O(\rho + \gamma)$-replicability.
Since our algorithm only takes $O(m/d)$ samples from the relevant element with high probability, we conclude $m = \bigOm{d(\rho + \gamma)^{-2} \alpha^{-2}}$.

\paragraph{Shared Randomness} Because the above reduction itself uses shared randomness, we cannot directly argue as before this implies necessity of shared randomness for approximate replicability. We will argue this directly using the Poincare-Miranda Theorem.
It suffices to consider an algorithm that accomplishes the following task which we can embed into any class with VC dimension $\geq 2$: 

\begin{center}
    Given sample access to $\Rad(p_{1}) \times \Rad(p_{2})$, output $v \in \set{\pm 1}^{2}$ such that 
    
    (1) $\max_{i} \{|p_{i}|\ind[v_{i} \neq \sign(p_{i})]\}  \leq \alpha$ and (2) $\Pr(\norm{v^{(1)} - v^{(2)}}_{0} > 1) < 0.01$.
\end{center}

Above, (1) corresponds to correctly deciding the bias of each distribution, while (2) corresponds to an $(0.01, 0.5)$-approximately replicable algorithm. The idea is now as follows. Suppose there is an approximately replicable deterministic algorithm $\innerAlg$.
We may define for every $y \in \set{\pm 1}^{2}$
the sets 
\begin{equation*}
    B_{y} := \set{p = (p_1, p_2) \in [0, 1]^{2} \given \Pr_{S \sim \Rad(p)^{m}} \left( \innerAlg(S) = y \right) \geq 0.1} \text{.}
\end{equation*}
That is, $B_{y}$ contain the set of distributions where $\innerAlg$ outputs $y$ with reasonable probability.
Since there is at least one output with probability $\geq \frac{1}{4}$, the collection of $B_{y}$ cover the cube $[-1,1]^{2}$.
Observe that if $B_{x} \cap B_{y}$ for $\norm{x - y}_{0} = 2$ then $\innerAlg$ is not approximately replicable for the distribution parameterized by $p \in B_{x} \cap B_{y}$, as with probability at least $0.02$ the algorithm produces $x, y$ on two distinct runs.
Thus, it suffices to show such an intersection exists.

Suppose we wish to cover the unit square with $4$ sets, one corresponding to each corner.
Our goal is to show that at least one pair of opposite sets must intersect.
\Cref{fig:poincare-miranda} gives an illustration.

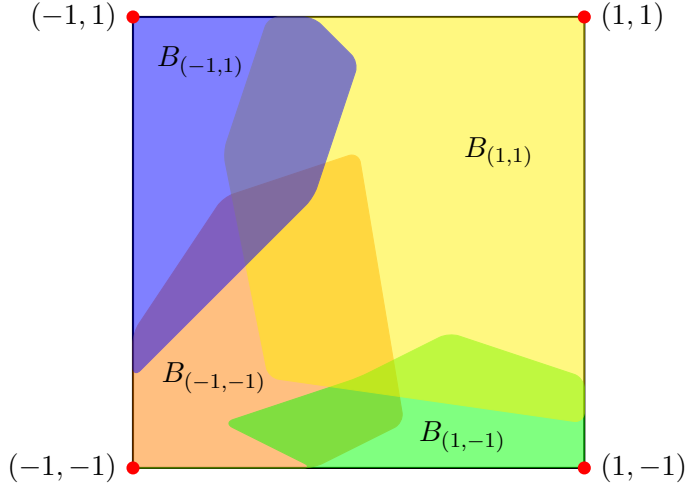
\begin{figure}
    \centering
    \usetikzlibrary{decorations.pathmorphing}

\begin{tikzpicture}[scale=6]

  \draw[line width=0.8pt] (0,0) rectangle (1,1);

  \draw[fill, rounded corners, opacity=0.5, orange] (0, 0) -- (0, 0.3) -- (0.2, 0.6) -- (0.5, 0.7) -- (0.6, 0.1) -- (0.4, 0) -- (0, 0);

  \draw[fill, rounded corners, opacity=0.5, green] (1, 0) -- (1, 0.2) -- (0.7, 0.3) -- (0.5, 0.2) -- (0.2, 0.1) -- (0.4, 0) -- (0, 0);

  \draw[fill, rounded corners, opacity=0.5, yellow] (1, 1) -- (1, 0.1) -- (0.3, 0.2) -- (0.2, 0.7) -- (0.3, 1) -- (1, 1);

  \draw[fill, rounded corners, opacity=0.5, blue] (0, 1) -- (0, 0.2) -- (0.4, 0.6) -- (0.5, 0.9) -- (0.4, 1) -- (0, 1);

  \node[circle, fill, draw, red, thick, scale=0.4, label={left:$(-1, -1)$}] (gc4) at (0, 0) {};

  \node[circle, fill, draw, red, thick, scale=0.4, label={right:$(1, 1)$}] (gc4) at (1, 1) {};

  \node[circle, fill, draw, red, thick, scale=0.4, label={left:$(-1, 1)$}] (gc4) at (0, 1) {};

  \node[circle, fill, draw, red, thick, scale=0.4, label={right:$(1, -1)$}] (gc4) at (1, 0) {};

  \node[scale=0.5, label={right:$B_{(-1, -1)}$}] (gc4) at (0.03, 0.2) {};
  \node[scale=0.5, label={right:$B_{(-1, 1)}$}] (gc4) at (0.02, 0.9) {};
  \node[scale=0.5, label={right:$B_{(1, -1)}$}] (gc4) at (0.6, 0.07) {};
  \node[scale=0.5, label={right:$B_{(1, 1)}$}] (gc4) at (0.7, 0.7) {};

\end{tikzpicture}
    \caption{An example of sets $B_{y}$ covering $[-1, 1]^{2}$. The four regions must cover the whole square and correctness constraints require $B_{\leftT}$ (which consists of the blue $B_{(-1, 1)}$ and orange $B_{(-1, -1)}$ regions) to contain the full left border. Similar constraints hold on $B_{\rightT}, B_{\up}, B_{\down}$.
    Then, regardless of how the square is colored, at least one pair of opposing regions (either yellow and orange or blue and green) must intersect. In the given example, yellow $B_{(1, 1)}$ and orange $B_{(-1, -1)}$ intersect.}
    \label{fig:poincare-miranda}
\end{figure}

Formally, consider the sets
\begin{align*}
    B_{\leftT} &= B_{(-1, -1)} \cup B_{(-1, 1)} \\
    B_{\rightT} &= B_{(1, -1)} \cup B_{(1, 1)} \\
    B_{\up} &= B_{(-1, 1)} \cup B_{(1, 1)} \\
    B_{\down} &= B_{(-1, -1)} \cup B_{(1, -1)}
\end{align*}
and the map
\begin{equation*}
    F: p \mapsto \left( \dist(p, B_{\leftT}) - \dist(p, B_{\rightT}), \dist(p, B_{\up}) - \dist(p, B_{\down}) \right) \text{.}
\end{equation*}
By correctness, we know $p \in B_{\leftT}$ on the left border of $[-1, 1]^{2}$, $p \in B_{\rightT}$ on the right border, $p \in B_{\up}$ on the up border, and $p \in B_{\down}$ on the down border.
Furthermore, as $F$ is continuous, the Poincare-Miranda Theorem states that there exists $p^*$ with $F(p^*) = (0, 0)$.
Without loss of generality, we can assume $p^* \in B_{(1, 1)}$ and therefore $p^* \not\in B_{(-1, -1)}$ (else we are done). 
Since $p^* \in B_{\rightT} \cap B_{\up}$ and $F(p^*)=(0,0)$, we must have $p^* \in B_{\leftT} \cap B_{\down}$. Finally, since $p^* \notin B_{(-1,-1)}$ by assumption, this can only be the case if $p^* \in B_{(-1, 1)} \cap B_{(1, -1)}$, and in particular the intersection $B_{(-1, 1)} \cap B_{(1, -1)}$ is non-empty so we are done.

\subsubsection{Semi-Replicable Learning}

Our algorithm for semi-replicable learning follows the same basic two-step framework introduced for semi-private learning \cite{alon2019limits,hopkins2022realizable}: first we use $O(d/\alpha)$ shared unlabeled samples to compute a small set of roughly $(d/\alpha)^{O(d)}$ hypotheses $C \subset \hypotheses$ that almost certainly contains a good classifier, and, second, we use a replicable learner for finite classes from \cite{bun2023stability} to learn $C$. 
The latter requires roughly $\log^2|C|$ labeled samples, which gives the desired bound. Thus perhaps the more interesting result lies in showing this simple procedure is sample-optimal, both in terms of the shared and unshared sample complexities.


\paragraph{Shared Sample Complexity}

Our first lower bound, on the shared sample complexity of semi-replicable learning, is actually proved via a reduction to the so-called \textit{semi-private} learning model, along with a new lower bound for this setting built on \cite{alon2019limits}. Intuitively, a semi-private algorithm has access to public dataset $D_{\pub}$ and private dataset $D_{\priv}$ and must adhere to privacy constraints only on the private dataset (the exact notion of approximate DP used below is standard, but not important for the proof overview so we omit the exact definition, see \Cref{def:private}):
\begin{center}
    An algorithm $\innerAlg$ is $(\varepsilon, \delta)$-semi-private if given any public dataset $D_{\pub}$, $\innerAlg(D_{\pub}, \cdot)$ is $(\varepsilon, \delta)$-private.
\end{center}

Following existing tools \cite{GhaziKM21,bun2023stability}, it is easy to transform any semi-replicable algorithm into a `semi-private' one with roughly matching sample complexity. Thus, if we can show that no semi-private algorithm exists using $o(d/\alpha)$-public samples, it will imply a corresponding lower bound for semi-replicable algorithms on $o(d/\alpha)$-shared samples. 
Toward this end, we will rely heavily on a result of \cite{alon2019limits} who prove any semi-private algorithm for a class with infinite Littlestone dimension must use $\bigOm{\alpha^{-1}}$ public samples. 
Our main goal is to improve this bound to $\bigOm{d/\alpha}$ for a specific class of infinite Littlestone Dimension and VC Dimension $d$: the class of conjunctions of $d$ thresholds, i.e. $x \mapsto \bigwedge_{i = 1}^{d} h_{i}(x_{i})$.


As in approximate replicability, the main idea is to hide one ``relevant'' threshold among $d$ dummy thresholds, forcing the algorithm to use an additional multiplicative factor of $d$ samples. 
In particular, given one hard distribution over datasets $(D_{\pub}, D_{\priv})$, we construct an instance that consists of a uniform distribution over $d$ distinct thresholds: thus any accurate hypothesis must be accurate on a large fraction of the $d$ thresholds.
Since the thresholds are drawn from the same hard distribution, the error of our algorithm on an average threshold (including our single ``relevant'' one) is $O(\alpha)$.
However, any semi-private algorithm for thresholds requires $\bigOm{\frac{1}{\alpha}}$ public samples, so that our original algorithm requires $\bigOm{\frac{d}{\alpha}}$ public samples to be semi-private and accurate on $\bigOm{d}$ thresholds.

\paragraph{Labeled Sample Complexity}

To obtain a lower bound on the labeled sample complexity, we take inspiration for the lower bound for replicably learning VC classes in \cite{bun2023stability} and reduce from the $d$-dimensional sign-one-way marginals problem.
Let $\distribution_{p}$ denote a Rademacher product distribution (supported on $\set{\pm 1}^{d}$ with mean $p$).
A solution $v \in \set{\pm 1}^{d}$ is $\alpha$-accurate if $\frac{1}{d} \sum_{i} \ind[v_{i} \neq \sign(p_{i})] |p_{i}| \leq \alpha$.
A standard reduction shows that $\hat{v} = (h(x_{1}), \dotsc, h(x_{d}))$ is $\alpha$-accurate if and only if $\errD(h) \leq \alpha$ \cite{bun2023stability, hopkins2025generative}.
Since our learning algorithm is replicable, we obtain a replicable algorithm for $d$-dimensional sign-one-way marginals, which \cite{hopkins2025generative} recently showed requires $\bigOm{d^{2} \rho^{-2} \alpha^{-2}}$ samples, giving the desired bound.

\subsection{Conclusion and Open Questions}

Replicability is a highly desirable property of learning algorithms, but comes at a major cost due to its extreme requirements (making many classes at least quadratically harder to learn, and others like 1-D thresholds simply impossible). In this work, we have shown that several natural relaxations of replicability are feasible for \textit{any} learnable class, often at very little cost for constant replicability parameters.

Our work leaves open several interesting questions, most obviously settling the sample complexity of pointwise and approximate replicability, but also whether either notion can be achieved \textit{properly} for any class. Below, we discuss a few challenges and potential directions toward resolving the former of these questions.

For replicable prediction, we obtain a $\rho$-pointwise replicable $(\alpha, \beta)$-learner on $\bigtO{d \rho^{-2} \alpha^{-2} + \rho^{-2} \alpha^{-4}}$ samples.
A careful inspection of our algorithm reveals that the additive term is required only in the step to learn the labels of $\nu \sim \alpha^{-2}$-heavy elements up to accuracy $\alpha$. 
It would be interesting to either remove this term or prove it is in fact necessary.
A similar question holds for approximate replicability.


Furthermore, the sample complexity of $(0.01, \gamma)$-approximately replicable learning remains unsettled (i.e.\ when $\rho \gg \gamma)$) .
In this regime, even a simpler variant remains unsolved: approximately replicable bias estimation.

\begin{center}
    Given samples to a binary product distribution with mean $p \in [-1, 1]^{n}$, return $v \in \set{\pm 1}^{n}$ satisfying,

    (1) $\max_{i} \ind[v_{i} \neq \sign(p_{i})] |p_{i}| \leq \alpha$ and (2) $\Pr(\norm{v^{(1)} - v^{(2)}}_{0} > \gamma) < 0.01$.
\end{center}

In other words, the algorithm must output hypotheses that disagree on at most a $\gamma$-fraction of coordinates.
For constant $\alpha$, \cite{hopkins2024replicability} show that $\bigOm{d\gamma^{-1}}$ samples are necessary, while the best algorithms for this problem need $\bigO{\min\left(d^2, d \gamma^{-2} \right)}$ samples, leaving a substantial gap in sample complexity.

\subsection{Further Related Work}

\paragraph{Replicable Learning}
Replicable learning was introduced by \cite{GhaziKM21, ImpLPS22} and has played a central role in recent years in the study of various notions of stability \cite{bun2023stability,kalavasis2023statistical,moran2023bayesian}, notably including differential privacy \cite{bun2023stability}. 
Replicable algorithms are now known for a variety of classical problems, including mean estimation \cite{ImpLPS22, hopkins2024replicability,vander2024replicability}, distribution testing \cite{liu2024replicable, diakonikolas2025replicable, aamand2025structure}, clustering \cite{esfandiari2023replicable}, PAC Learning \cite{bun2023stability,kalavasis2024replicable, larsen2025improved}, and reinforcement learning \cite{eaton2023replicable, karbasi2023replicability, hopkins2025generative, eaton2025replicable}. 
\cite{ImpLPS22, kalavasis2024computational} study replicable quantile estimation over finite domains, while our threshold algorithm requires approximately replicable quantile estimation over arbitrary domains.
Conversely, various works have also shown strong lower bounds implying replicability is either more costly than standard learning \cite{hopkins2024replicability,hopkins2025generative} or impossible (e.g., for learning infinite Littlestone classes) \cite{bun2023stability}).

Finally, related to our lower bounds on shared randomness, several recent works have studied the number of random bits needed to achieve replicability \cite{chase2023replicability, dixon2023list,vander2024replicability,chase2024local, hopkins2025role}. \cite{chase2023replicability} in particular also use Poincare-Miranda for this purpose to study an equivalent formulation called list-replicability \cite{dixon2023list}. List replicability is strictly stronger than approximate replicability (and, in particular, is statistically equivalent to \cite{ImpLPS22}'s original notion), and to the best of our knowledge the necessity of shared randomness for approximate replicability cannot be derived from their result.

\paragraph{Approximate Replicable Learning}

Several relaxations of replicability have been studied or proposed in the literature \cite{karbasi2023replicability, chase2023replicability, kalavasis2023statistical, hopkins2024replicability}. 
Our notion of approximate replicability, for instance, was proposed as an open direction in \cite{chase2023replicability}.
Concretely, approximately replicable reinforcement learning was studied in \cite{karbasi2023replicability} and bias estimation in \cite{hopkins2024replicability}.

Our work also relates more broadly to the study of notions of approximate stability in learning, starting with the seminal work of \cite{bousquet2002stability} who require output hypotheses to have similar error under small perturbations to the training set. These notions are typically incomparable to the ones we study --- they are noticeably weaker in the sense they only require similar loss rather than similar output hypotheses, but do not require shared randomness as we show our models must.






\paragraph{Private Prediction} Private Prediction was introduced by Dwork and Feldman \cite{dwork2018privacy} where they give private predictors for PAC learning and convex regression.
Subsequent works \cite{bassily2018model, dagan2020pac} improve the sample complexity of agnostic PAC learning and \cite{van2020trade} gave an experimental study of private prediction algorithms. Using known connections between differential privacy and replicability, it is possible to translate any private predictor into a replicable one at roughly quadratic overhead. Thus such bounds would scale quadratically with the VC dimension, while we achieve linear dependence directly.

The literature has also covered interesting extensions of the private prediction model such as `everlasting' prediction, which allows an infinite stream of queries preserving privacy \cite{naor2023private, stemmer2024private}. It would be very interesting to show an analogous result in the replicable setting that decays slower than the trivial union bound argument against multiple such prediction queries.

\paragraph{Semi-Private Learning}

Semi-private learning, introduced by \cite{beimel2013private}, studies how access to public samples may be helpful for private learning. Semi-privacy has been studied in various settings, including estimation and distribution learning \cite{bie2022private, ben2023private}, query release \cite{liu2021leveraging, pmlr-v119-bassily20a}, feature learning \cite{krichene2024private}, stochastic optimization \cite{ullah2024public}, and PAC learning \cite{bassily2018model, block2024oracle}. \cite{alon2019limits,hopkins2022realizable} settle the public sample complexity of semi-private learning for constant VC classes.

\section{Preliminaries}
\label{sec:prelims}

Let $[n] = \set{1, \dotsc, n}$.
We use $\wedge$ to denote logical AND and $\vee$ to define logical OR.
We let $A \Delta B$ denote the symmetric difference of sets $A, B$.
For any distribution $\distribution$, let $\distribution^{m}$ denote the product of $\distribution$ taken $m$ times (i.e. taking $m$ \iid samples from $\distribution$).
Let $\calM$ denote a meta-distribution, i.e. a distribution over distributions.
Let $\Rad(p)$ denote the Rademacher distribution (i.e.\ a distribution supported on $\set{\pm 1}$) with mean $p$ i.e. $\Pr(X = 1) = \frac{1 + p}{2}$.
If $p$ is a vector, $\Rad(p)$ denotes a product of Rademacher distributions where each coordinate has mean $p_{i}$.

\paragraph{Statistical Learning}

Let $f, g$ be functions $\domain \rightarrow \set{\pm 1}$.
We typically use $x$ to denote samples from the domain $\domain$ and $y$ to denote labels i.e. $\set{\pm 1}$.
For any distribution $\distribution$ over $\mathcal{X} \times \{\pm 1\}$, we let $\distribution(x, y)$ denote the probability of the sample $(x, y)$.
For $x \in \domain$, we let $\distribution(x)$ denote the mass of $x$ under the marginal distribution of $\distribution$.
For $x \in \domain$, we let $y(x) = \E_{(x', y') \sim \distribution}[y'|x' = x]$ be the expected label of $x$ under $\distribution$.

The distribution error of a function $f$ is $\errD(f) := \Pr_{(x, y) \sim \distribution}(f(x) \neq y)$.
The classification distance of $f, g$ is $\dist_{\distribution}(f, g) := \Pr_{x \sim \distribution}(f(x) \neq g(x))$.
For any sample $S \in (\domain \times \set{\pm 1})^{m}$, the empirical error of $f$ is $\errS(f) := \frac{1}{m} \sum_{i = 1}^{m} \ind[f(x_i) \neq y_i]$.
The empirical classification distance of $f, g$ is $\dist_{S}(f, g) := \frac{1}{m} \sum_{i = 1}^{m} \ind[f(x_{i}) \neq g(x_{i})]$.

\begin{definition}[VC Dimension]
    \label{def:vc-dimension}
    A hypothesis class $\hypotheses \subset \domain \rightarrow \set{\pm 1}$ has VC-dimension $d$ if $d$ is the largest integer such that there exists a set of $d$ samples $\set{x_{1}, \dotsc, x_{d}}$ for which any labeling of the $d$ samples can be realized by a hypothesis $h \in \hypotheses$. 
    That is, $|{(h(x_{1}), \dotsc , h(x_{d})) \given h \in \hypotheses}|= 2^{d}$.
    We say such a set is shattered.
\end{definition}

\begin{theorem}[Uniform Convergence, e.g. \cite{blumer1989learnability}]
    \label{thm:uniform-convergence}
    Let $\hypotheses$ be a binary class of functions with domain $\domain$ with VC Dimension $d$.
    Then, for any distribution $\distribution$ over $\domain$ and all $m > 0$,
    \begin{equation*}
        \Pr_{x_1, \dotsc, x_{m} \sim \distribution} \left( \sup_{h \in \hypotheses} \left| \frac{1}{m} \sum_{i = 1}^{m} \ind[h(x_i) = 1] - \Pr_{z \sim \distribution}\left(h(z) = 1 \right) \right| \geq \gamma \right) \leq 4 (2m)^{d} e^{-\gamma^2 m/8} \text{.}
    \end{equation*}
\end{theorem}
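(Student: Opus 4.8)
The plan is the classical Vapnik--Chervonenkis symmetrization argument, carried out in three moves: (i) replace the true probability $p(h):=\Pr_{z\sim\distribution}(h(z)=1)$ by an empirical probability on an independent ``ghost'' sample; (ii) exploit the exchangeability of the combined $2m$ points under coordinate-wise swaps to pass to a random-sign average; and (iii) union-bound over the finitely many labelings of those $2m$ points realizable by $\hypotheses$ (bounded via Sauer--Shelah), applying Hoeffding's inequality to each fixed labeling.

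I would write $\hat p_S(h):=\frac1m\sum_{i=1}^m\ind[h(x_i)=1]$ for $S=(x_1,\dots,x_m)$, so that the goal is to bound $q:=\Pr_{S\sim\distribution^m}(\sup_{h\in\hypotheses}|\hat p_S(h)-p(h)|\ge\gamma)$. If $m\gamma^2<2$ then $4(2m)^d e^{-\gamma^2 m/8}\ge 4e^{-1/4}>1$ and the bound is vacuous, so I assume $m\gamma^2\ge 2$. Draw an independent $S'=(x_1',\dots,x_m')\sim\distribution^m$; applying Chebyshev to $\hat p_{S'}(h^*)$ for the (random) $h^*$ witnessing a $\ge\gamma$ deviation on $S$, and using $\mathrm{Var}(\hat p_{S'}(h^*)\mid S)\le\frac1{4m}$, one gets that with probability at least $\frac12$ the ghost mean is within $\gamma/2$ of $p(h^*)$, whence
\[
q\le 2\,\Pr_{S,S'}\Big(\sup_{h\in\hypotheses}\big|\hat p_S(h)-\hat p_{S'}(h)\big|\ge\gamma/2\Big).
\]

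To bound the right-hand side I condition on the multiset $T$ of the $2m$ points $\{x_i\}\cup\{x_i'\}$. By symmetry of $\distribution^{2m}$ under independently swapping each pair $x_i\leftrightarrow x_i'$, the random variable $\sup_h|\hat p_S(h)-\hat p_{S'}(h)|$ has the same distribution as $\sup_h\big|\frac1m\sum_{i=1}^m\sigma_i(\ind[h(x_i)=1]-\ind[h(x_i')=1])\big|$, with $\sigma_1,\dots,\sigma_m$ i.i.d.\ uniform on $\{\pm1\}$. Since a hypothesis only matters through its values on $T$, Sauer--Shelah bounds the number of distinct patterns by $(2m)^d$; for each fixed pattern the inner quantity is an average of $m$ independent mean-zero terms in $[-1,1]$, so Hoeffding gives $\Pr(\ge\gamma/2)\le 2e^{-m\gamma^2/8}$. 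A union bound over the patterns followed by taking expectation over $T$ yields $\Pr_{S,S'}(\sup_h|\hat p_S(h)-\hat p_{S'}(h)|\ge\gamma/2)\le 2(2m)^d e^{-\gamma^2 m/8}$, and multiplying by the factor $2$ from symmetrization gives the claimed $4(2m)^d e^{-\gamma^2 m/8}$.

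I expect the only genuinely delicate part to be the symmetrization step (i): one must verify the ghost sample lands within $\gamma/2$ of the truth with probability $\ge\frac12$, which is exactly where the assumption $m\gamma^2\ge 2$ (and hence the separate, trivial treatment of small $m$) is used, and one must track constants carefully so that the halving of $\gamma$, Hoeffding's constant for range-$2$ summands, and the Sauer--Shelah estimate reproduce precisely the $4(2m)^d e^{-\gamma^2 m/8}$ in the statement — Hoeffding does its part since $\exp(-m(\gamma/2)^2/2)=\exp(-\gamma^2 m/8)$. Steps (ii) and (iii) — the swap symmetry, Sauer--Shelah, and the per-pattern tail bound — are entirely routine.
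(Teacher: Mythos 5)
This theorem is not proved in the paper at all — it is stated as a classical uniform-convergence result with a pointer to the cited reference — so the only benchmark is the standard VC-type proof in the literature. Your argument is precisely that classical proof (ghost-sample symmetrization via Chebyshev under the harmless assumption $m\gamma^2\ge 2$, swap/Rademacher symmetrization conditioned on the double sample, then Sauer--Shelah and Hoeffding with a union bound over realized patterns), and it is correct, with the constants combining as you compute to give $4(2m)^d e^{-\gamma^2 m/8}$.
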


\begin{definition}[PAC Learning \cite{vapnik1971,valiant1984theory,haussler1992decision}]
    \label{def:pac}
    An algorithm $\innerAlg$ is an agnostic $(\alpha, \beta)$-learner for $\hypotheses$ if for any distribution $\distribution$ over $\domain \times \set{\pm 1}$, $\Pr_{S \sim \distribution^{m}}(\errD(\innerAlg(S)) > \opt + \alpha) < \beta$ where $\opt := \inf_{h \in \hypotheses} \errD(h)$ and $m$ is the sample complexity of $\innerAlg$. 

    We say that $\innerAlg$ is a realizable $(\alpha, \beta)$-learner for $\hypotheses$ if for any $\distribution$ over pairs $(x, h(x))$ where $x \sim \distribution_{\domain}$ and $h \in \hypotheses$, $\Pr_{S \sim \distribution^{m}}(\errD(\innerAlg(S)) > \alpha) < \beta$.
\end{definition}

We note that any class of VC dimension $d$ has an agnostic $(\alpha, \beta)$-learner with $\bigO{\frac{d+\log \frac{1}{\beta}}{\alpha^{2}}}$ samples \cite{haussler1992decision} and an (improper) realizable $(\alpha, \beta)$-learner with $\bigO{\frac{d + \log(1/\beta)}{\alpha}}$ samples \cite{hanneke2016optimal, larsen2023bagging}.

\paragraph{Replicability}
We use $\innerAlg$ to denote (possibly randomized) algorithms, while $\innerAlg(S; r)$ denotes executing $\innerAlg$ on samples $S$ with internal randomness $r$.

\begin{definition}[Replicability \cite{ImpLPS22}]
    \label{def:repl}
    A randomized algorithm is $\rho$-replicable if for every distribution $\distribution$,
    \begin{equation*}
        \Pr_{S_1, S_2 \sim \distribution^{m}, r} \left( \innerAlg(S_1; r) \neq \innerAlg(S_2; r) \right) < \rho \text{.}
    \end{equation*}
\end{definition}

\begin{lemma}[Correlated Sampling, Lemma 7.5 of \cite{rao2020communication}]
    \label{lemma:correlated-sampling}
    Let $\domain$ be a finite domain.
    There is a randomized algorithm $\corrSamp$ (with internal randomness $\xi \sim \distribution_{R}$) such that given any distribution over $\domain$ outputs a random variable over $\domain$ satisfying the following:
    \begin{enumerate}
        \item (Marginal Correctness) For all distributions $\distribution$ over $\domain$, 
        \begin{equation*}
            \Pr_{\xi \sim \distribution_{R}} (\corrSamp(\distribution, \xi) = x) = \Pr_{X \sim \distribution}(X = x) .
        \end{equation*}
        \item (Error Guarantee) For all distributions $\distribution, \distribution'$ over $\domain$,
        \begin{equation*}
            \Pr_{\xi \sim \distribution_{R}} \left(\corrSamp(\distribution, \xi) \neq \corrSamp(\distribution', \xi)\right) \leq 2 \cdot \tvd{\distribution, \distribution'} .
        \end{equation*}
    \end{enumerate}

    Furthermore, the algorithm runs in expected time $\tO{|\domain|}$.
\end{lemma}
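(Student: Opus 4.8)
The plan is to reproduce the classical rejection-sampling-under-the-histogram construction (Holenstein's correlated sampling, which is exactly Lemma~7.5 of \cite{rao2020communication}). Identify each distribution $\distribution$ on $\domain$ with its probability mass function $p(\cdot)$, and associate to it the region $A_{\distribution} := \set{(x, t) : x \in \domain,\ 0 \le t \le p(x)} \subseteq \domain \times [0, 1]$. The shared randomness $\xi \sim \distribution_R$ will be an infinite sequence of \iid pairs $(x_1, t_1), (x_2, t_2), \dotsc$, with each $x_i$ uniform on $\domain$ and each $t_i$ uniform on $[0, 1]$, all independent. On input $\distribution$, the algorithm $\corrSamp(\distribution, \xi)$ scans $i = 1, 2, \dotsc$ and outputs $x_i$ for the first index $i$ with $(x_i, t_i) \in A_{\distribution}$, i.e.\ with $t_i \le p(x_i)$. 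Each pair independently lands in $A_{\distribution}$ with probability $\frac{1}{|\domain|}\sum_x p(x) = \frac{1}{|\domain|}$, so the scan terminates almost surely, with the number of pairs read distributed as a geometric random variable of mean $|\domain|$; each step draws $O(\log|\domain|)$ random bits and performs one comparison against a point mass of $\distribution$, so the expected running time is $\tO{|\domain|}$. (If a genuinely finite random tape is required, truncate the stream after $\poly(|\domain|)\log(1/\eps)$ steps and fall back to fresh private direct sampling on the resulting negligible failure event; this perturbs item~(2) by at most $\eps$ while keeping item~(1) exact.)

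For marginal correctness, note that conditioned on a single pair landing in $A_{\distribution}$, its first coordinate equals $x$ with probability $\frac{p(x)/|\domain|}{1/|\domain|} = p(x)$. Since the scan returns the first coordinate of the first accepted pair and the pairs are \iid, the output is distributed exactly as $p$, which is item~(1).

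For the error guarantee, fix $\distribution, \distribution'$ with mass functions $p, p'$, and let $j^*$ be the index of the first pair lying in $A_{\distribution} \cup A_{\distribution'}$. If $(x_{j^*}, t_{j^*}) \in A_{\distribution} \cap A_{\distribution'}$, then every earlier pair is rejected by both runs and pair $j^*$ is accepted by both, so $\corrSamp(\distribution, \xi) = \corrSamp(\distribution', \xi) = x_{j^*}$; hence the two runs can disagree only on the event that $(x_{j^*}, t_{j^*})$ lies in the symmetric difference $A_{\distribution} \triangle A_{\distribution'}$. Conditioning on landing in $A_{\distribution} \cup A_{\distribution'}$, this probability equals
\[
\frac{\sum_x |p(x) - p'(x)| / |\domain|}{\sum_x \max(p(x), p'(x)) / |\domain|} \;\le\; \sum_x |p(x) - p'(x)| \;=\; 2\,\tvd{\distribution, \distribution'},
\]
where we used $\sum_x \max(p(x), p'(x)) \ge \sum_x p(x) = 1$ and $\tvd{\distribution, \distribution'} = \tfrac{1}{2}\sum_x |p(x) - p'(x)|$. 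This is item~(2).

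The argument is short and self-contained, so I do not anticipate a genuine obstacle. The only points requiring mild care are: (i) the computational model for $\distribution_R$ --- the clean description uses an unbounded but almost-surely-finite \iid stream, which one either adopts directly or patches with the truncation-and-restart fix above to obtain a finite tape while preserving exact marginal correctness; and (ii) the standing assumption that point masses of $\distribution$ can be evaluated and $O(\log|\domain|)$-bit uniform samples drawn in $\tO{1}$ time, which is what makes the $\tO{|\domain|}$ running-time bound meaningful.
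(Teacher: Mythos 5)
Your proof is correct: the rejection-sampling construction under the histogram, the conditional analysis giving exact marginals, and the bound $\frac{\sum_x |p(x)-p'(x)|}{\sum_x \max(p(x),p'(x))} \le 2\,\tvd{\distribution,\distribution'}$ are all sound, and your caveats about the randomness model and unit-cost access to point masses are the right ones. Note that the paper does not prove this lemma at all — it imports it directly as Lemma 7.5 of \cite{rao2020communication} — and your argument is essentially the standard proof given in that reference, so there is nothing to reconcile beyond the observation that a self-contained proof was not needed here.
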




The learnability of a class using replicable algorithms is characterized by what is known as Littlestone Dimension.
The Littlestone dimension is a combinatorial parameter that characterizes regret bounds in Online Learning \cite{littlestone1988learning, ben2009agnostic}.

\begin{definition}[Littlestone Dimension]
    \label{def:littlestone}
    Let $\hypotheses$ be a hypothesis class over $\domain$.
    A mistake tree is a binary decision tree whose internal nodes are labeled by elements of $\domain$.
    Any root-to-leaf path in a mistake tree can be described as a sequence of examples $(x_{1}, y_{1}), \dotsc, (x_{d}, y_{d})$ where $x_{i}$ is the label of the $i$-th node in the path and $y_{i} = 1$ if the $(i + 1)$-th node in the path is the right child of the $i$-th node and otherwise $y_{i} = -1$.
    We say that a tree $T$ is shattered by $\hypotheses$ if for any in $T$, there is $h \in \hypotheses$ such that $h(x_{i}) = y_{i}$, for all $i \leq d$.
    The Littlestone dimension of $\hypotheses$, denoted by $\LDim(\hypotheses)$, is the depth of largest complete tree that is shattered by $\hypotheses$.
\end{definition}

\paragraph{Differential Privacy}

We define differential privacy.

\begin{definition}
    \label{def:private}
    Let $\innerAlg: \domain^{m} \rightarrow \range$ be an algorithm.
    Two datasets $S, S' \in \domain^{m}$ are neighboring if are identical in all but one sample.
    $\innerAlg$ is $(\varepsilon, \delta)$-private if for every pair of neighboring datasets $S ,S'$ and every set of outputs $Y \subset \range$, 
    \begin{equation*}
        \Pr(\innerAlg(S) \in Y) \leq e^{\eps} \Pr(\innerAlg(S') \in Y) + \delta \text{.}
    \end{equation*}
\end{definition}

\paragraph{Distribution Divergence and Distance}

In the following, let $X, Y$ be random variables.
Unless otherwise specified, the random variables are over domain $\domain$.

\begin{definition}
    \label{def:tvd}
    The \emph{total variation distance} between $X$ and $Y$ is
    \begin{equation*}
        \tvd{X, Y} = \frac{1}{2} \sum_{x \in \domain} \left| \Pr(X = x) - \Pr(Y = x) \right| \text{.}
    \end{equation*}
\end{definition}

\section{Replicable Prediction}

In this section, we examine the special case of pointwise replicability in binary classification (over $\set{\pm 1}$), which we call \textit{replicable prediction}. 
In particular, we show that any standard PAC-learner can be amplified to a (pointwise) replicable predictor with only polynomial overhead in $\rho$.

\begin{theorem}[Formal \Cref{thm:predict}]
    \label{thm:predict-formal}
    Let $\Acal$ be an (agnostic) $(\alpha, \beta)$-learner on $m(\alpha, \beta)$ samples over a countable domain.
    There exists an (agnostic) $\rho$-pointwise replicable $(\alpha, \beta)$-learner with sample complexity
    \[
    \bigtO{\frac{m(\alpha, \rho^2 \beta)}{\rho^2} + \frac{\log^2(1/\beta)}{\rho^2 \alpha^2} + \frac{1}{\rho^2 \alpha^{4}} + \frac{\log^2(1/\min(\beta, \rho))}{\alpha^4}} \text{.}
    \]
    Furthermore, our algorithm runs in time linear in sample complexity with $\bigO{\frac{1}{\rho^2}}$ oracle calls to $\Acal$.
    In the realizable setting, the required sample complexity is \[
    \bigtO{\frac{m(\alpha, \rho^2 \beta)}{\rho^2} + \frac{\log^2(1/\beta)}{\rho^2 \alpha^2}}\text{.}
    \]
\end{theorem}

Our algorithm requires several new sub-routines.
First, we give a pointwise replicable procedure for selecting heavy hitters (i.e. elements in the domain with large mass under the input distribution), which has been previously studied in the fully replicable setting \cite{ImpLPS22, kalavasis2023statistical, hopkins2024replicability}.

\begin{restatable}[Pointwise Replicable Heavy Hitters]{proposition}{prHeavyHittersProp}
    \label{prop:finding-heavy}
    There is an algorithm $\Acal$ that given $\nu, \beta, \rho > 0$ and sample access to $\distribution$ over $\domain$, returns $S \subset \domain$ satisfying the following:
    \begin{enumerate}
        \item (Pointwise Replicability) Let $S, T$ denote the output of two runs of $\Acal$. 
        For every $x \in \domain$, 
        \begin{equation*}
            \Pr\left(\ind[x \in S] + \ind[x \in T] = 1 \right) < \rho \text{.}
        \end{equation*}
        \item (Completeness) With probability $1 - \beta/2$, $S$ contains all $x$ with $\distribution(x) > 10 \nu$.
        \item (Soundness) With probability $1 - \beta/2$, $S$ does not contain any $x$ with $\distribution(x) < \nu/10$.
    \end{enumerate}
    Furthermore, the algorithm has sample complexity $\bigO{\frac{\log(1/\min(\beta, \rho)v)}{\nu \rho^2}} = \bigtO{\frac{\log(1/\beta)}{\nu \rho^2}}$.
\end{restatable}

We note that our algorithm is similar to previous algorithm for replicably computing heavy hitters.
However, we crucially use the weaker condition of pointwise replicability to obtain an algorithm with $v^{-1}$ sample complexity (in contrast to the $v^{-3}$ sample complexity required by known fully replicable algorithms).
Next, we show that we can pointwise replicably learn accurate labels for heavy hitters.

\begin{restatable}[Labeling Heavy Hitters]{proposition}{LabelHeavyHitters}
    \label{prop:learn-hh-label}
    There is an algorithm $\Acal$ that given $S \subset \domain$ and $\nu, \alpha, \beta, \rho > 0$ satisfying $\distribution(x) > \nu$ for all $x \in S$, outputs $\ell(x)$ for all $x \in S$ satisfying the following:
    \begin{enumerate}
        \item (Pointwise Replicability) For fixed $x \in S$, let $\ell_1(x), \ell_2(x)$ denote the label of $x$ over two runs of $\Acal$. 
        Then, for each $x$, $\Pr(\ell_1(x) \neq \ell_2(x)) < \rho$.

        \item (Accuracy) With probability $1 - \beta$, $\ell(x) = \sign(y(x))$ whenever $|y(x)| \geq \alpha$.
        Here, $y(x) := \E_{(x', y') \sim \distribution}[y'|x' = x]$.
    \end{enumerate}
    Furthermore, the algorithm has sample complexity $\bigO{\frac{1}{\alpha^2 \rho^2 \nu} + \frac{\log(|S|/\min(\beta, \rho))}{\alpha^2 \nu}}$.
\end{restatable}

We defer the proofs of \Cref{prop:finding-heavy} and \Cref{prop:learn-hh-label} to \Cref{sec:heavy-hitters}.
Here, we show how to use these as sub-routines to obtain an algorithm for replicable prediction. We first present our algorithm:
    
    \IncMargin{1em}
    \begin{algorithm}[!ht]
    
    \SetKwInOut{Input}{Input}\SetKwInOut{Output}{Output}\SetKwInOut{Parameters}{Parameters}
    \Input{PAC-learner $\Acal$ with sample complexity $m(\alpha, \beta)$.}
    \Parameters{$\alpha$ accuracy, $\beta$ error probability, and $\rho$ replicability}
    \Output{$\rho$-pointwise replicable $(\alpha, \beta)$-accurate PAC-learner $\tilde{\Acal}$.}
    
    \caption{$\basicNonUniformRep(\Acal, \alpha, \beta, \rho)$}
    \label{alg:chernoff-pred}

    Let $\beta \leq \alpha, T \gets \frac{C}{\rho^2}, \nu \gets \frac{\alpha^2}{C \log(1/\beta)}$ where $C$ is a sufficiently large constant.

    Let $S \gets \prHeavyHitters(0.1 \nu, \beta, \rho)$ from \Cref{prop:finding-heavy}.

    Let $\set{\ell(x)}_{x \in S} \gets \hhLabel(S, 0.01 \nu, \beta, \rho)$ from \Cref{prop:learn-hh-label}.
    
    Run $\Acal$ on $T$ fresh samples of size $m(\alpha, \beta/T)$, obtaining hypotheses $h_{t}: \domain \rightarrow \set{\pm 1}$ for $t \in [T]$.

    For every $x \in \domain$, define $\hat{p}(x) = \frac{1}{T} \sum_{t = 1}^{T} h_{t}(x)$ and draw $r(x) \sim \UnifD[-1, 1]$.

    \Return hypothesis $\tilde{h}(x) = \begin{cases}
        \ell(x) & x \in S \\
        1 & x \not\in S \andT \hat{p}(x) \geq r(x) \\
        -1 & x \not\in S \andT \hat{p}(x) < r(x)
    \end{cases}$
    
    \end{algorithm}
    \DecMargin{1em}

\begin{proof}[Proof of \Cref{thm:predict-formal}]

    We begin by arguing that our algorithm is $2 \rho$-pointwise replicable.
    Increasing the sample complexity by a constant factor allows us to obtain $\rho$-pointwise replicability.
    For $i \in \set{1, 2}$, let $S^{(i)}, \ell^{(i)}, \hat{p}^{(i)}, \tilde{h}^{(i)}$ the corresponding values computed in two separate runs of our algorithm.
    Let $H = \set{h_{1}, \dots, h_{T}}$ denote the hypotheses obtained from running $\Acal$ on $T$ fresh samples and $H^{(1)}, H^{(2)}$ the corresponding set of hypotheses computed in two separate runs of our algorithm.
    Define $p(x) := \E_{h \sim \Acal}[h(x)]$ to be the expected value of $h(x)$ over the randomness of the samples given to $\Acal$ and the internal randomness of the algorithm.
    
    Fix a point $x \in \domain$.
    By \Cref{prop:finding-heavy}, we have with probability $1 - \rho$, either $x$ is in both $S^{(1)}, S^{(2)}$ or neither.
    If $x \in S^{(1)} \cap S^{(2)}$, \Cref{prop:learn-hh-label} guarantees that $\ell(x)$ is consistent with probability $1 - \rho$, thus ensuring $\tilde{h}^{(1)}(x) = \tilde{h}^{(2)}(x)$ with probability $1 - 2 \rho$ via a union bound.
    
    On the other hand, if $x$ is in neither $S^{(1)}, S^{(2)}$, we have that $\tilde{h}(x)$ is a function of $\hat{p}^{(1)}(x), \hat{p}^{(2)}(x)$ respectively.
    We bound the probability that $r(x)$ lies between $\hat{p}^{(1)}(x), \hat{p}^{(2)}(x)$.

    \begin{lemma}
        \label{lem:p-estimate}
        For any fixed $x \in \domain$ and $r \sim \UnifD[-1, 1]$,
        \begin{equation*}
            \Pr_{H^{(1)}, H^{(2)}, r} \left( \min(\hat{p}^{(1)}(x), \hat{p}^{(2)}(x)) \leq r \leq \max(\hat{p}^{(1)}(x), \hat{p}^{(2)}(x)) \right) < \rho \text{.}
        \end{equation*}
    \end{lemma}

    \begin{proof}
        For any fixed $x$, and observe that for large enough $C$, $\var(\hat{p}(x)) \leq \frac{\rho^{2}}{200}$ since $\hat{p}(x)$ is a sum of $T$ independent Rademacher random variables with mean $p(x)$.
        Then, 
        \begin{align*}
            \Pr_{H^{(1)}, H^{(2)}, r} \left( \min(\hat{p}^{(1)}(x), \hat{p}^{(2)}(x)) \leq r \leq \max(\hat{p}^{(1)}(x), \hat{p}^{(2)}(x)) \right)^{2} &\leq \E_{H^{(1)}, H^{(2)}} \left[ \left| \hat{p}^{(1)}(x) - \hat{p}^{(2)}(x) \right| \right]^{2} \\
            &\leq \E_{H^{(1)}, H^{(2)}} \left[ \left( \hat{p}^{(1)}(x) - \hat{p}^{(2)}(x) \right)^{2} \right] \\
            &= 2 \var \left[ \hat{p}(x) \right] \leq \frac{\rho^2}{100} \text{.}
        \end{align*}
       The first inequality follows from the fact that $r$ is distributed uniformly in $[-1, 1]$.
       In the second inequality, we have used Jensen's inequality, and the final equality is the definition of variance.
       We conclude by taking the square root on both sides.
    \end{proof}

    Since $\tilde{h}(x)$ depends only on $\ind[\hat{p}(x) > r(x)]$, we conclude that $\tilde{h}^{(1)}(x) = \tilde{h}^{(2)}(x)$ with probability $1 - \rho$ since $r$ does not lie between $\hat{p}^{(1)}(x), \hat{p}^{(2)}(x)$.
    Again, we obtain $2 \rho$-pointwise replicability via a union bound.

    In the remainder of the proof, we argue our algorithm produces an accurate hypothesis.
    We begin with some notation.
    Let $S^* := \set{x \in S \given \distribution(x) \geq \nu}$ denote the set of heavy hitters in the marginal distribution.
    Note that $|S^*| \leq 1/\nu$.
    Let $\distribution(S^*) = \sum_{x \in S^*} \distribution(x)$ denote the mass of all the heavy hitters.
    For a given $x$, recall that $y(x) := \E_{(x', y') \sim \distribution}[y'|x' = x]$ is the expected label of $x$ under the distribution $\distribution$.
    Let $h^*$ denote the optimal hypothesis in $\hypotheses$.

    \paragraph{Case 1: The heavy hitters are dominant.}
    In the first case, we assume that a significant fraction of the mass lies on the heavy hitters i.e.\ $\distribution(S^*) \geq 1 - \alpha$.
    For any hypothesis $h$, we write its error as
    \begin{equation*}
        \errD(h) = \Pr_{(x, y) \sim \distribution} (h(x) \neq y, x \in S^*) + \Pr_{(x, y) \sim \distribution}(h(x) \neq y, x \not\in S^*) \text{.}
    \end{equation*}
    In particular, under the assumption $\distribution(S^*) \geq 1 - \alpha$, we have 
    \begin{equation*}
        \Pr_{(x, y) \sim \distribution}(\tilde{h}(x) \neq y, x \not\in S^*)  \leq \alpha \text{.}
    \end{equation*}
    
    We now bound the first term.
    Here, we will argue that we explicitly learn the optimal labels of the heavy hitters up to error $\alpha$.
    Fix $x_0 \in S^*$.
    Any function $h: \domain \rightarrow \set{\pm 1}$ (including $h^*$) has error on $x_0$ at least
    \begin{align*}
        \Pr_{(x, y) \sim \distribution}(h(x) \neq y | x = x_0) &= \Pr_{(x, y) \sim \distribution}(h(x) = -1, y = 1 | x = x_0) + \Pr_{(x, y) \sim \distribution}(h(x) = 1, y = -1|x = x_0) \\
        &= \ind[h(x_0) = -1] \frac{y(x_0) + 1}{2} + \ind[h(x_0) = 1] \frac{1 - y(x_0)}{2} \\
        &= \frac{1}{2} \begin{cases}
            1 - |y(x_0)| & h(x_0) \neq \sign(y(x_0)) \\
            1 + |y(x_0)| & h(x_0) \neq \sign(y(x_0)) 
        \end{cases}\text{.}
    \end{align*}

    We now upper bound the error of $\tilde{h}(x)$ on $S^*$.  
    By the completeness property of \Cref{prop:finding-heavy}, with probability at least $1 - \beta$, we have $S^* \subseteq S$.
    On the other hand, the soundness property of \Cref{prop:finding-heavy} ensures that all $x \in S$ have $\distribution(x) \geq 0.01\nu$, so that \Cref{prop:learn-hh-label} guarantees that with probability at least $1 - \beta$, we have $\tilde{h}(x) = \ell(x) = \sign(y(x))$ for all $x \in S$ with $|y(x)| \geq \alpha$.
    Thus, we have
    \begin{equation*}
        \Pr_{(x, y) \sim \distribution}(\tilde{h}(x) \neq y \mid x = x_{0}) \leq \frac{1}{2} \begin{cases}
            1 - |y(x_0)| & |y(x_0)| \geq \alpha \\
            1 + |y(x_0)| & |y(x_0)| < \alpha
        \end{cases}
    \end{equation*}
    In particular, for any function $h$ and $x_{0} \in S^*$, we have
    \begin{equation*}
        \Pr_{(x, y) \sim \distribution}(\tilde{h}(x) \neq y \mid x = x_{0}) - \Pr_{(x, y) \sim \distribution}(h(x) \neq y \mid x = x_{0}) \leq \alpha 
    \end{equation*}
    since the two probabilities are equal whenever $|y(x_0)| > \alpha$ and otherwise the difference is at most $|y(x_0)| \leq \alpha$.
    Then, for any function $h$ (not necessarily in $\hypotheses$) we may bound the first term as
    \begin{align*}
        \Pr_{(x, y) \sim \distribution} (\tilde{h}(x) \neq y, x \in S^*) &= \sum_{x_0 \in S^*} \Pr_{(x, y) \sim \distribution}(\tilde{h}(x) \neq y, x = x_0) \\
        &= \sum_{x_0 \in S^*} \distribution(x_0) \Pr_{(x, y) \sim \distribution}(\tilde{h}(x) \neq y \mid x = x_0) \\
        &\leq \sum_{x_0 \in S^*} \distribution(x_0) \left( \Pr_{(x, y) \sim \distribution}(h(x) \neq y \mid  x = x_0) + \alpha \right) \\
        &\leq \Pr_{(x, y) \sim \distribution}(h(x) \neq y, x \in S^*) + \alpha \text{.}
    \end{align*}
    In particular, we obtain a hypothesis $\tilde{h}$ such that $\errD(\tilde{h}) \leq \errD(h^*) + 2 \alpha$ with probability $1 - 2 \beta$, since $\errD(h^*)$ is at least the error of $h^*$ on $S^*$, and above we upper bounded the error of $\tilde{h}$ with respect to any function $h$.
    As before, by scaling the sample complexity by a constant factor, we can obtain an $(\alpha, \beta)$-learner.
    
    \paragraph{Case 2: The heavy hitters are not dominant.}
    In the second case, we have that a non-trivial fraction of the distribution is not concentrated on heavy hitters, i.e., that $\distribution(S^*) < 1 - \alpha$.
    As before, since $S^* \subseteq S$ and every element $x \in S$ has $\distribution(x) \geq 0.01 \nu$, our above argument in fact yields 
    \begin{equation*}
        \Pr_{(x, y) \sim \distribution} (\tilde{h}(x) \neq y, x \in S) \leq \Pr_{(x, y) \sim \distribution}(f(x) \neq y, x \in S) + \alpha 
    \end{equation*}
    for \textit{any} function $f: S \rightarrow \set{\pm 1}$.
    We are thus interested in bounding the quantity
    \begin{equation*}
        \Pr_{(x, y) \sim \distribution}(\tilde{h}(x) \neq y, x \not\in S) \text{.}
    \end{equation*}
    Previously, we naively upper bounded this quantity by $\Pr(x \not\in S) \leq \Pr(x \not\in S^*) \leq \alpha$, which is no longer possible.
    Instead, we use concentration bounds to argue that with high probability the quantity is not much larger than $\Pr(h^*(x) \neq y, x \not\in S)$, for an optimal hypothesis $h^* \in \hypotheses$.
    
    Toward this end, we observe the expected error of our algorithm is the average of error of $h_1, \dots, h_T$.
    



    \begin{lemma}
        \label{lem:exp-error-ub-pred}
        \begin{equation*}
            \E_{r} \left[ \Pr_{(x, y) \sim \distribution} (\tilde{h}(x) \neq y, x \not\in S) \right] = \E_{t} \left[ \Pr_{(x, y) \sim \distribution} \Pr(h_t(x) \neq y, x \not\in S) \right] \text{.}
        \end{equation*}
    \end{lemma}

    \begin{proof}
        Recall that for any $x_0$ and any function $h$, $\Pr_{(x, y) \sim \distribution} (h(x) \neq y \mid x = x_0) = \frac{1 - h(x_0) y(x_0)}{2}$.
        Then we can write the expected error of $\tilde{h}$ in $\domain \setminus S$ over the random choice of $r$ as
        \begin{align*}
            \E_{r} \left[ \Pr_{(x, y) \sim \distribution} (\tilde{h}(x) \neq y, x \not\in S) \right] = \frac{1}{2} \E_{r} \left[ \sum_{x \not\in S} \distribution(x) (1 - \tilde{h}(x) y(x)) \right] \text{.}
        \end{align*}
        In the above equation, note that only $\tilde{h}(x)$ depends on $r$, and for $x \not\in S$, $\E_{r}[\tilde{h}(x)] = \hat{p}(x)$ since
        \begin{equation*}
            \E_{r}[\tilde{h}(x)] = \Pr_{r}(\tilde{h}(x) = 1) - \Pr_{r}(\tilde{h}(x) = - 1) = 2 \Pr(\tilde{h}(x) = 1) - 1 = 2 \frac{1 + \hat{p}(x)}{2} - 1 = \hat{p}(x) 
        \end{equation*}
        where in the third equality, we have $\tilde{h}(x) = 1$ if $r(x) \leq \hat{p}(x)$ for $r(x) \sim \UnifD[-1, 1]$.
        Thus, recalling $\hat{p}(x) = \frac{1}{T} \sum_{t} h_t(x)$,
        \begin{align*}
            \E_{r} \left[ \Pr_{(x, y) \sim \distribution} (\tilde{h}(x) \neq y, x \not\in S) \right] &= \frac{1}{2} \sum_{x \not\in S} \distribution(x) (1 - \hat{p}(x)) y(x) \\
            &= \frac{1}{2} \sum_{x \not\in S} \distribution(x) \left(1  - \frac{1}{T} \sum_{t = 1}^{T} h_t (x)\right) y(x) \\
            &= \frac{1}{2T} \sum_{x \not\in S} \distribution(x) \left(\sum_{t = 1}^{T} (1 - h_t (x)) \right) y(x) \\
            &= \frac{1}{2T} \sum_{t = 1}^{T} \sum_{x \not\in S}  \distribution(x) ( 1 - h_t (x)) y(x) \\
            &= \frac{1}{T} \sum_{t = 1}^{T} \frac{1}{2} \sum_{x \not\in S} \distribution(x) (1 - h_t(x)) y(x) \text{.}
        \end{align*}
        Finally, we observe that each summand is exactly the error of $h_t$ on $x \not\in S$, so that we may conclude
        \begin{align*}
            \E_{r} \left[ \Pr_{(x, y) \sim \distribution} (\tilde{h}(x) \neq y, x \not\in S) \right] &= \frac{1}{T} \sum_{t = 1}^{T} \Pr_{(x, y) \sim \distribution} \Pr(h_t(x) \neq y, x \not\in S) \\
            &= \E_{t} \left[ \Pr_{(x, y) \sim \distribution} \Pr(h_t(x) \neq y, x \not\in S) \right] \text{.} 
        \end{align*}
    \end{proof}
    Now, we argue that that for any \textit{fixed} choice of $h_1,\ldots,h_T$, the random variable 
    \[
    X := \Pr_{(x, y) \sim \distribution}\left(\tilde{h}(x) \neq y, x \not\in S\right)
    \]
    is well concentrated around its mean over the choice of $r$.
    Since we assume that the domain is countable, we have
    \begin{align*}
        X &= \sum_{x_0 \not\in S} \Pr_{(x, y) \sim \distribution} (\tilde{h}(x) \neq y, x = x_0) \text{.}
    \end{align*}
    Define $J(x_0) = \Pr_{(x, y) \sim \distribution} (\tilde{h}(x) \neq y, x = x_0)$ if $x_0 \not\in S$ and $0$ otherwise so that $X = \sum J(x_0)$.
    We argue $X$ is well concentrated around its mean using a variant of Hoeffding's inequality.
    
    \begin{restatable}{theorem}{CountableTail}
        \label{thm:countable-tail-bound}
        Let $S = \sum_{i = 1}^{\infty} X_{i}$ where $X_{i} \in [0, b_i]$ are independent.
        Suppose $\sum_{i = 1}^{\infty} b_i \leq 1$ and $\max_{i} b_{i} \leq \nu$.
        Let $\mu = \E[S]$.
        Then, for any $t > 0$,
        \begin{equation*}
            \Pr(|S - \mu| > t) < \exp\left( -\Omega(t^2/\nu) \right) \text{.}
        \end{equation*}
    \end{restatable}
    
    We defer the proof of \Cref{thm:countable-tail-bound} to \Cref{app:concentration}.
    By \Cref{prop:finding-heavy}, with probability $1 - \beta$, every $x_0 \not\in S$ satisfies $\distribution(x_0) < \nu$.
    Thus, each $J(x_0)$ is a random variable bounded between $[0, \distribution(x_0)]$.
    Furthermore, note that each $J(x_0)$ depends only on $\hat p(x_0)$ (which is fixed), and an independent choice of $r(x_0)$. In particular, conditioned on fixed $\hat{p}$, the $J(x_0)$ are independent and satisfy the assumptions of \Cref{thm:countable-tail-bound}. We may conclude that
    \begin{equation*}
        \Pr_{r}\left(\left|X - \E_{r}[X]\right| > \alpha\right) < \exp(-\Omega(\alpha^2/\nu)) < \beta \text{.}
    \end{equation*}
    whenever $\nu = \Theta(\alpha^2 /\log(1/\beta))$ for a sufficiently small constant.
    Thus, we have with probability $1 - 2 \beta$, the $|X - \E[X]| \leq \alpha$.
    Combined with \Cref{lem:exp-error-ub-pred}, we have
    \begin{equation*}
        X \leq \E_{r}[X] + \alpha \leq \E_{t} \left[\Pr_{(x, y) \sim \distribution} \left( h_{t} (x) \neq y, x \not\in S \right)\right] + \alpha \text{.}
    \end{equation*}
    We now condition on the event that for every $t \in [T]$, $\errD(h_t) \leq \opt+\alpha$, which occurs with probability $ 1- \beta$.
    Then, union bounding over the failure of the above event, \Cref{prop:finding-heavy}, \Cref{prop:learn-hh-label}, and \Cref{thm:countable-tail-bound}, we have with probability $1 - 4 \beta$ that
    \begin{align*}
        \errD(\tilde{h}) &\leq \left(\Pr_{(x, y) \sim \distribution} (f(x) \neq y, x \in S) + \alpha \right) + \left( \E_{t} \left[\Pr_{(x, y) \sim \distribution} \left( h_{t}(x) \neq y, x \not\in S \right)\right] + \alpha \right) \\
        &\leq \E_{t} \left[\Pr_{(x, y) \sim \distribution} \left( h_{t}(x) \neq y \right)\right] + 2 \alpha \\
        &\leq \opt + 3 \alpha \text{.}
    \end{align*}
    In the first inequality, we combined our bounds on error of $\tilde{h}$ from the two cases.
    In the second inequality, we observe that $f$ is an arbitrary function on $S \rightarrow \set{\pm 1}$ and must have error less than any hypothesis $h_{t}$ and thus also less than any average of such hypotheses.
    In the final inequality, we recall that we condition on the event that the error of any $h_{t}$ (and therefore the average error over $t$) is at most $\opt + \alpha$.
    Thus, we obtain a $(3 \alpha, 4 \beta)$-learner.
    As before, increasing the sample complexity by a constant factor completes the proof.

    We conclude by bounding the sample complexity.
    In the agnostic setting, we have
    \begin{align*}
        m(\alpha, \beta, \rho) &= \bigtO{Tm(\alpha, \beta/T) + \frac{\log(1/\beta)}{\nu \rho^2} + \frac{1}{\alpha^2 \rho^2 \nu} + \frac{\log(1/\min(\beta, \rho))}{\alpha^2 \nu}} \\
        &= \bigtO{\frac{m(\alpha, \beta/T)}{\rho^2} + \frac{\log^2(1/\beta)}{\rho^2 \alpha^2} + \frac{1}{\rho^2 \alpha^{4}} + \frac{\log^2(1/\min(\beta, \rho))}{\alpha^4}} \text{.}
    \end{align*}
    In the realizable setting, we apply \Cref{prop:learn-hh-label-realizable} instead of \Cref{prop:learn-hh-label} to obtain
    \begin{align*}
        m(\alpha, \beta, \rho) &= \bigtO{Tm(\alpha, \beta/T) + \frac{\log(1/\beta)}{\nu \rho^2} + \frac{1}{\rho^2 \nu} + \frac{\log(1/\min(\beta, \rho))}{\nu}} \\
        &= \bigtO{\frac{m(\alpha, \beta/T)}{\rho^2} + \frac{\log^2(1/\beta)}{\rho^2 \alpha^2}} \text{.}
    \end{align*}
\end{proof}

\subsection{Learning and Labeling Heavy Hitters}
\label{sec:heavy-hitters}

In this section, we give our algorithms for finding heavy hitters and learning their labels.

\prHeavyHittersProp*

\begin{proof}
    We give the following algorithm.
    \IncMargin{1em}
    \begin{algorithm}
    
    \SetKwInOut{Input}{Input}\SetKwInOut{Output}{Output}\SetKwInOut{Parameters}{Parameters}
    \Input{Sample access to $\distribution$.}
    \Parameters{Threshold $\nu$, error $\beta$, and replicability $\rho$.}

    Let $C$ be a sufficiently large constant. 
    
    Draw $m_1 = \frac{C \log(1/(\min(\beta, \rho) v))}{v}$ samples from $\distribution$ and let $S_{\cand}$ denote all unique elements in sample.

    Draw $m_2 = \frac{\log(m_1/\min(\beta, \rho))}{v \rho^2}$ samples and for all $x \in S_{\cand}$, let $\hat{p}(x)$ denote the frequency of $x$ in $m_2$ samples.

    Draw $r \sim \UnifD[v/2, 2v]$.

    \Return $S^* = \set{x \in S_{\cand} \given \hat{p}(x) > r}$.
    
    \caption{$\prHeavyHitters(\distribution, \nu, \beta, \rho)$}
    \label{alg:finding-heavy}
    
    \end{algorithm}
    \DecMargin{1em}

    We argue the correctness and replicability of our algorithm conditioned on the following events. 
    Let $S^*$ denote the output set of the algorithm.
    \begin{enumerate}
        \item With probability $1 - \beta$: for all $\distribution(x) \geq \nu$, $x \in S_{\cand}$.
        \item With probability $1 - \beta$: for all $x \in S_{\cand}$, either 
        \begin{enumerate}
            \item $\distribution(x) < 0.1\nu$ and $\hat{p}(x) \leq 0.2\nu$, or
            \item $\distribution(x) > 10\nu$ and $\hat{p}(x) \geq 5\nu$, or
            \item $0.1\nu \leq \distribution(x) \leq 10\nu$ and $(1 - 0.01 \rho) \distribution(x) \leq \hat{p}(x) \leq (1 + 0.01 \rho) \distribution(x)$
        \end{enumerate}
    \end{enumerate}

    For the first event, we observe the probability $x$ fails to appear is at most $(1 - \nu)^{m_1} < \exp(-\nu m_1) < \beta \nu$ by choosing $m_1 = \log(1/(\beta \nu))/\nu$.
    A union bound over (at most) $1/\nu$ elements with $\distribution(x) \geq \nu$ allows us to conclude that the first event holds with probability at least $1 - \beta$.

    For the second event, fix $x \in S_{\cand}$.
    Note that $\hat{p}(x) \sim \BinomD(m_2, \distribution(x))$.
    We now consider three cases.

    \paragraph{Case 1: $\distribution(x) < 0.1\nu$.}
    Suppose $\distribution(x) \leq 0.1\nu$.
    Then, $\hat{p}(x) \sim \BinomD(m_2, \distribution(x))$ is stochastically dominated by $Z \sim \BinomD(m_2, 0.1\nu)$.
    In particular, a Chernoff bound yields $\hat{p}(x) > 0.2\nu$ with probability at most $\Pr(Z > 0.2\nu) < \exp(-\Omega(\nu m_2)) < \frac{\beta}{m_1}$ by choosing $m_2 = \Theta(\nu^{-1}\log(m_1/\beta))$ for a sufficiently large constant. 

    \paragraph{Case 2: $\distribution(x) > 10\nu$.}
    Suppose $\distribution(x) > 10\nu$.
    Let $Z \sim \BinomD(m_2, 10\nu)$ so that a similar argument to above yields
    \begin{equation*}
        \Pr(\hat{p}(x) < 5\nu) < \Pr(Z < 5\nu) < \exp(-\Omega(\nu m_2)) < \frac{\beta}{m_1}
    \end{equation*}
    for sufficiently large $m_2 = \Theta(\nu^{-1}\log(m_1/\beta))$.

    \paragraph{Case 3: $0.1\nu \leq \distribution(x) \leq 10\nu$.}
    Suppose $\distribution(x) \in [0.1\nu, 10\nu]$.
    Then, a Chernoff bound yields 
    \begin{equation*}
        \Pr\left(\hat{p}(x) \not\in \left(1 \pm 0.01 \rho \right) \distribution(x) \right) < \exp(-\Omega(\rho^2 m_2 \nu)) < \frac{\beta}{m_1}
    \end{equation*}
    for sufficiently large $m_2 = \Theta(\nu^{-1} \rho^{-2} \log(m_1/\beta))$.
    Finally, we conclude with a union bound over all $|S_{\cand}| \leq m_1$, and observe that the second event holds with probability $1 - \beta$.

    We now prove the correctness of our algorithm, assuming that the above two events hold.
    Note that our sample complexity allows us to assume $\beta \leq \rho$, as we replace $\beta$ with $\min(\beta, \rho)$.
    We will obtain an algorithm with error $2 \beta$ and pointwise replicability $3 \rho$.
    By increasing the sample complexity by a constant factor, we can obtain an algorithm with the desired parameters.
    
    We begin with pointwise replicability.
    Fix a single $x$.
    First, if $\distribution(x) < 0.1\nu$, the second event ensures that $\hat{p}(x) < 0.2\nu < r$ so $x \not\in S^*$ with probability $1 - \beta$.
    In particular, the probability that $x \in S^*$ in either run of the algorithm is at most $2 \beta \leq 2 \rho$.
    Similarly, if $\distribution(x) > 10 \nu$, the probability $x \not\in S^*$ in either run is at most $2 \beta$.
    Finally, suppose $\distribution(x) \in [0.1\nu, 10\nu]$.
    The second event ensures that $\hat{p}(x)$ lies in an interval of width at most $0.02 \rho \distribution(x) \leq 0.2 \rho v$ with probability $1 - \beta$.
    Over two runs of the algorithm, both $\hat{p}(x)$ lie within this interval with probability $1 - 2 \beta$.
    Since $r \sim \UnifD[v/2, 2v]$, the probability that $r$ lies between the two estimates is at most the probability $r$ lies in this interval, which we can upper bound by $\frac{0.2 \rho v}{1.5 v} \leq \rho$.
    Observe that the membership of $x \in S^*$ is determined directly by $\ind[\hat{p}(x) > r]$, so the above argument implies that our decision to include $x \in S^*$ is consistent over two runs.
    A union bound then bounds the probability that $x$ is included in only one $S^*$ is at most $3 \rho$, concluding the proof of the first property.

    We proceed with completeness.
    Let $\distribution(x) \geq 10\nu$, so that the first event ensures $x \in S_{\cand}$.
    The second event then ensures that $\hat{p}(x) > 5 \nu > r$ as desired.
    A union bound ensures that both events hold with probability $2 \beta$.
    Finally, note that soundness follows immediately from the second event.
\end{proof}

Next, we give an algorithm for learning the labels of heavy hitters of the marginal distribution.

\LabelHeavyHitters*

We require the replicable bias testing algorithm of \cite{aamand2025structure}, which is optimal in all parameters up to constant factors.

\begin{lemma}[Theorem 6.15 of \cite{aamand2025structure}]
    \label{lem:replicable-coin-testing}
    For any $0 \leq \beta \leq \rho \leq 1$ and $-1 \leq p_0 < q_0 \leq 1$, there is a $\rho$-replicable algorithm that with sample access to $\Rad(p)$, satisfies the following with probability at least $1 - \min(\beta, \exp(-1/\rho))$:
    \begin{enumerate}
        \item If $p \geq q_0$, then output $1$.
        \item If $p \leq p_0$, then output $-1$.
    \end{enumerate}
    Furthermore, the algorithm uses samples $\bigO{\frac{q_0'}{\alpha^2 \rho^2} + \frac{q_0' \log(1/\beta)}{\alpha^2}}$ where $\alpha = q_0 - p_0$ and $q_0' = \frac{q_0 + 1}{2}$.
\end{lemma}

We now present our algorithm for labeling heavy hitters.
    \IncMargin{1em}
    \begin{algorithm}[h!]
    
    \SetKwInOut{Input}{Input}\SetKwInOut{Output}{Output}\SetKwInOut{Parameters}{Parameters}
    \Input{Sample access to $\distribution$, heavy hitter set $S$.}
    \Parameters{Density $\nu$, accuracy $\alpha$, error $\beta$, and replicability $\rho$.}

    Let $C$ be a sufficiently large constant. 
    
    Draw $m = \frac{C}{\alpha^2 \rho^2 \nu} + \frac{C \log(|S|/\min(\beta, \rho))}{\alpha^2 \nu}$ samples from $\distribution$, denoted $T = \set{(x_i, y_i)}_{i = 1}^{m}$.

    \For{$x \in S$}{
        Let $\ell(x)$ be the output of \Cref{lem:replicable-coin-testing} on $T(x) \gets \set{i \in [m] \given x_i = x}$ with $p_0 = - \alpha, q_0 = + \alpha$, replicability $\rho$, and error $\beta/|S|$.
    }

    \Return $\ell(x)$ for all $x \in S$.
    
    \caption{$\hhLabel(\distribution, \nu, \alpha, \beta, \rho)$}
    \label{alg:learn-hh-label}
    
    \end{algorithm}
    \DecMargin{1em}
\begin{proof}

    We begin by arguing that for all $x \in S$, $T(x)$ is not too small and therefore $\hat{y}(x)$ is a good estimate of $y(x) := \E_{(x', y') \sim \distribution}[y'|x' = x]$.
    Note $|T(x)| \sim \BinomD(m, \distribution(x))$.
    Since $\distribution(x) \geq \nu$ for all $x \in S$, we have $\E[|T(x)|] \geq m \nu$ and a Chernoff bound ensures
    \begin{equation*}
        \Pr(|T(x)| < m \nu /2) < \exp(-\Omega(m \nu)) < \frac{\beta}{|S|}
    \end{equation*}
    as long as $m = \Theta(\log(|S|/\beta)/\nu)$ for some sufficiently large constant.
    A union bound ensures that with probability $1 - \beta$, $|T(x)| \geq m_0 := m \nu / 2$ for all $x \in S$.

    Then, since $T(x)$ consists of \iid samples from $\distribution$, we have at least $m_0$ \iid samples from the distribution $(x', y') \sim \distribution$ conditioned on $x' = x$.
    In particular, we have $m_0$ \iid samples from the distribution $\Rad(y(x))$.
    By \Cref{lem:replicable-coin-testing}, we need $m_0 = \bigO{\frac{1}{\alpha^2 \rho^2} + \frac{\log(|S|/\min(\beta, \rho))}{\alpha^2}}$.
    In particular, the overall sample complexity is $\bigO{\frac{1}{\alpha^2 \rho^2 \nu} + \frac{\log(|S|/\min(\beta, \rho))}{\alpha^2 \nu}}$

    We now prove the correctness of our algorithm.
    As before, our sample complexity allows us to assume $\delta \leq \rho$ by setting $\delta \gets \min(\delta, \rho)$.
    Towards pointwise replicability, fix an $x \in S$.
    With probability $1 - 2 \beta$, we guarantee that over two runs of the algorithm, both sets $T(x)^{(1)}, T(x)^{(2)}$ are sufficiently large, i.e., greater than $m_0$.
    Then, conditioned on this event, \Cref{lem:replicable-coin-testing} ensures that $\ell(x)$ is $\rho$-replicable over two runs of the algorithm.
    A union bound ensures that the same label $\ell(x)$ is output over two runs with probability at most $3 \rho$.

    Towards accuracy, we again condition on the event that $|T(x)| \geq m_0$.
    Then, with a union bound over all $|S|$, with probability $1 - \beta$, we have $\ell(x) = \sign(y(x))$ whenever $|y(x)| \geq \alpha$, as desired.
    Thus, we have obtained an algorithm that is $\alpha$-accurate with error $2 \beta$, and $3 \rho$-pointwise replicable.
    As before, we may increase the sample complexity by a constant factor to complete the proof of the proposition.
\end{proof}

In the realizable setting, we only need to obtain the accuracy guarantee when $y(x) \in \set{\pm 1}$. 
In this case, we obtain a slightly stronger sample complexity bound.

\begin{proposition}[Labelling Realizable Heavy Hitters]
    \label{prop:learn-hh-label-realizable}
    There is an algorithm $\Acal$ that given $S \subset \domain$ and $\nu, \alpha, \beta, \rho > 0$ satisfying $\distribution(x) > \nu$ for all $x \in S$, outputs $\ell(x)$ for all $x \in S$ satisfying the following:
    \begin{enumerate}
        \item (Pointwise Replicability) For fixed $x \in S$, let $\ell_1(x), \ell_2(x)$ denote the label of $x$ over two runs of $\Acal$. 
        Then, for each $x$, $\Pr(\ell_1(x) \neq \ell_2(x)) < \rho$.

        \item (Accuracy) With probability $1 - \beta$, $\ell(x) = \sign(y(x))$ whenever $y(x) \in \set{\pm 1}$.
        Here, $y(x) := \E_{(x', y') \sim \distribution}[y'|x' = x]$.
    \end{enumerate}
    Furthermore, the algorithm has sample complexity $\bigO{\frac{1}{\rho^2 \nu} + \frac{\log(|S|/\min(\beta, \rho))}{\nu}}$.
\end{proposition}

\begin{proof}
    The proof closely follows that of \Cref{prop:learn-hh-label-realizable}.
    In fact, we use an identical algorithm, with the following two modifications:
    \begin{enumerate}
        \item Set $m = \frac{C}{\rho^2 \nu} + \frac{C \log(|S|/\beta)}{\nu}$.
        \item Run \Cref{lem:replicable-coin-testing} with $p_0 = -1, q_0 = 1$, replicability $\rho$ and error $\beta/|S|$.
    \end{enumerate}
    The proof of correctness and replicability follows identically. 
    Notice however that we now only require $m_0 = \bigO{\frac{1}{\rho^2} + \log(|S|/\min(\beta, \rho))}$ and thus $m = \bigO{\frac{1}{\rho^2 \nu} + \frac{\log(|S|/\min(\beta, \rho))}{\nu}}$ as desired.
\end{proof}

\subsection{Lower Bounds for Replicable Prediction}

In this section, we give lower bounds for replicable prediction showing that pointwise replicability has at least quadratic overhead in the replicability parameter $\rho$ over standard learning, similar to the overhead seen in classical bias estimation \cite{ImpLPS22}. Our lower bounds apply to any VC class.

\ReplPredictionLB*

We begin with the agnostic lower bound. 
See \Cref{thm:repl-prediction-real-lb} for the realizable lower bound.
Our lower bound follows by reduction to bias estimation, also known as one-way marginals.

\begin{definition}[One-Way Marginals]
    \label{def:one-way-marginals}
    Let $\distribution_{p}$ be a product of $d$ Rademacher distributions with expectations $p = (p_1, \dotsc, p_{d})$.
    A vector $v \in \set{\pm 1}^{d}$ is an $\alpha$-accurate solution to the one-way marginals problem for $\distribution_{p}$ if $\max_{i = 1}^{d} (\sign(p_{i}) - v_{i}) p_i \leq 2 \alpha$.

    An algorithm $(\alpha, \beta)$-accurately solves the sign-one-way marginals problem with sample complexity $m$ if given any product of $d$ Rademacher distributions $\distribution$ and $m$ \iid samples from $\distribution$, with probability at least $1 - \beta$ the algorithm outputs an $\alpha$-accurate solution to the sign-one-way marginals problem for $\distribution$.
\end{definition}

We will use bias estimation to refer to the $1$-dimensional one-way marginals problem i.e. given sample access to $\Rad(p)$, return $1$ if $p \geq \alpha$ and $-1$ if $p \leq - \alpha$.

\begin{proof}
    Since $\hypotheses$ is has VC dimension $d$, there is a set of $d$ elements shattered by $\hypotheses$.
    Denote them $\set{x_{1}, \dotsc, x_{d}}$.
    We will use the following family of input distributions, parameterized by mean vectors $p \in [-1, +1]^{d}$.
    Fix a mean vector $p \in [-1, +1]^{d}$.
    First, a domain element $x_{i}$ is sampled uniformly.
    Then, if the domain element $x_{i}$ is sampled, the label is distributed according to $\Rad(p_{i})$.
    Suppose there is a pointwise replicable PAC-learner $\innerAlg$ for $\hypotheses$ using $m$ samples.
    Assume without loss of generality that $m \geq d$.
    In the proof, we in fact assume $\innerAlg$ learns a class with VC dimension $2d + 1$.
    
    Given $\innerAlg$, we design a $\rho$-replicable algorithm for bias estimation.
    To this end, suppose we are given sample access to a Rademacher distribution with mean $p \in [-1, +1]$ and are required to output $1$ if $p \leq - \alpha$ and $-1$ if $p \geq \alpha$.
    From \cite{ImpLPS22}, it is known that any algorithm for this task (that succeeds with error probability $\beta \leq 0.1$) requires $\bigOm{\frac{1}{\rho^2 \alpha^2}}$ samples and at least 1 bit of shared randomness, even when it is known that $p \in [-\alpha, \alpha]$.
    We give an algorithm for bias estimation in \Cref{alg:non-uniform-bias-estimator}.

    \IncMargin{1em}
    \begin{algorithm}
    
    \SetKwInOut{Input}{Input}\SetKwInOut{Output}{Output}\SetKwInOut{Parameters}{Parameters}
    \Input{$\rho$-pointwise replicable $(\alpha, 0.0001)$-accurate PAC-learner $\innerAlg$. Sample access to unknown Rademacher distribution $\distribution$ with mean $p \in [-\alpha, \alpha]$.}
    \Parameters{$\alpha$ accuracy and $\rho$ replicability}
    \Output{$2\rho$-replicable algorithm for bias estimation.}
    
    \caption{$\nonUniformtoBias(\innerAlg, \alpha, \rho)$}
    \label{alg:non-uniform-bias-estimator}

    Initiate $S$ as an empty multi-set.

    Initiate counter $c \gets 0$.

    Sample $r \in [2d + 1]$ uniformly.
    \label{alg:non-uniform:plant-elements}
    
    Split $[2d + 1] \setminus \set{r}$ into two random equal subsets $B_{+}, B_{-}$.
    \label{alg:non-uniform:split-elements}

    \For{$j \in [m]$}{
        Sample $x_{j} \in \domain$ uniformly.
        \label{alg:non-uniform:domain-sample}

        \If{$x_{j} = x_{r}$}{
            Draw a sample $y_{j} \sim \distribution$ and add $(x_{j}, y_{j})$ to $S$.

            $c \gets c + 1$.

            \If{$c > 200 \frac{m}{d} \log(1/\rho)$}{
                \label{alg:non-uniform:sample-limit}
                
                \Return $1$.
            }
        }\Else{
            \If{$x_{j} \in B_{+}$}{
                Draw $y_{j} \sim \Rad(\alpha)$.
            }
            \If{$x_{j} \in B_{-}$}{
                Draw $y_{j} \sim \Rad(-\alpha)$.
            }
            Add $(x_{j}, y_{j})$ to $S$.
        }
    }

    \Return $\innerAlg(S)(x_{r})$.
    
    \end{algorithm}
    \DecMargin{1em}

    We claim our algorithm is $\rho$-replicable.
    We condition on the event that the sample limit is not reached.
    Let $m_{r}$ denote the number of times $x_{r}$ is sampled so that $\mu := \E[m_{r}] = \frac{m}{2d + 1}$.
    By a standard Chernoff bound,
    \begin{equation*}
        \Pr\left( m_{r} > \frac{m}{d} \left(1 +  100 \log(1/\rho) \right) \right) < \rho \text{.}
    \end{equation*}
    Since $B_{-}, B_{+}$ are sampled with shared randomness, in both runs of the algorithm $\innerAlg$ is given sample access to the same distribution over $(\set{x_{1}, \dotsc, x_{d}} \times \set{\pm 1})^{m}$ where labels are parameterized according to some mean vector $\overline{p} \in [-1, 1]^{2d + 1}$.
    In particular, the label of $x_{r}$ is then identical with probability at least $1 - \rho$ by pointwise replicability, so algorithm is $2\rho$-replicable in total via a union bound.

    Next, we claim our algorithm is accurate.
    Suppose without loss of generality that $p = \alpha$. 
    (A similar argument holds when $p = - \alpha$ and when $p \in (-\alpha, \alpha)$ there is no correctness constraint).
    We analyze the error of any hypothesis $h: \domain \rightarrow \set{\pm 1}$.
    \begin{align*}
        \errD(h) &= \frac{1}{2d + 1} \sum_{i = 1}^{2d + 1} \Pr_{(x, y)}(y \neq h(x) | x = x_{i}) \\
        &= \frac{1}{2d + 1} \sum_{i = 1}^{2d + 1} \ind[h(x_{i}) = -1] \cdot \frac{1 + p_{i}}{2} + \ind[h(x_{i}) = 1] \cdot \frac{1 - p_{i}}{2} 
    \end{align*}
    where $p_{i}$ is the mean of the labels of $x_{i}$.
    Thus, the excess error of $h$ against the optimal hypothesis $h^*$ is
    \begin{align*}
        \errD(h) - \errD(h^*) &= \frac{\alpha}{2d + 1} \left( \sum_{i \in B_{+} \cup \set{r}} \ind[h(x_{i}) = -1] +\sum_{i \in B_{-}} \ind[h(x_{i}) = 1] \right) \text{.}
    \end{align*}
    We condition on the event that $h \gets \innerAlg(S)$ is $0.01 \alpha$-accurate and the sample limit is not reached.
    The first occurs with probability $0.9999$ by the correctness of $\innerAlg$, and by the above we then have
    \begin{equation*}
        \frac{1}{2d + 1} \sum_{i \in B_{+} \cup \set{r}} \ind[h(x_{i}) = -1] \leq 0.01 \text{.}
    \end{equation*}
    In particular, out of the $d + 1$ elements in $B_{+} \cup \set{r}$, at most a $0.02$-fraction of them have $h(x) = -1$.
    Now, for any fixed $B_{-}$, over the random choice of $r, B_{+}$, the samples are identically distributed so that $\innerAlg$ cannot distinguish different choices of $r$ in $B_{+} \cup \set{r}$.
    In particular, define $E_{+} := \set{i \in B_{+} \cup \set{r} \given h(x_{i}) = -1}$ so that $\frac{|E_{+}|}{d + 1} \leq 0.02$.
    Then, over the random choices of our algorithm and the samples observed from the unknown Rademacher distribution $\distribution$, we have
    \begin{equation*}
        \Pr(r \in E_{+}) \leq \max_{B_{-}} \Pr_{r, B_{+}} (r \in E_{+} | B_{-}) \leq 0.02 \text{.}
    \end{equation*}
    Thus, conditioned on the event that the output hypothesis $h$ is $0.01\alpha$-accurate, the probability that $h(x_{r}) = -1$ is at most $0.02$.
    By a union bound, we output $-1$ with probability at most $0.03$.

    Finally, the sample complexity of our algorithm is immediate.
    In particular, we have $\frac{m}{d} \log(1/\rho) = \bigOm{\rho^{-2} \alpha^{-2}}$ which implies the desired lower bound.

    To see that the algorithm requires shared randomness, note that the only stage \Cref{alg:non-uniform-bias-estimator} uses shared randomness is in sampling $B_{-}, B_{+}$.
    However, to learn $\hypotheses$ with respect to a distribution which is supported over a single element, there is no need to sample these sets, so the reduction uses no shared randomness.
    Therefore, $\innerAlg$ must use shared randomness to learn $\hypotheses$.
\end{proof}

Above, we have shown a lower bound for pointwise replicable algorithms that learn in the agnostic setting.
Below, we give a lower bound that applies in the realizable setting.

\begin{theorem}
    \label{thm:repl-prediction-real-lb}
    Any $\rho$-pointwise replicable $(\alpha, 0.0001)$-learner in the realizable setting requires sample complexity
    \begin{equation*}
        \bigOm{\frac{d}{\rho^2 \alpha \log(1/\rho)}} \text{.}
    \end{equation*}
\end{theorem}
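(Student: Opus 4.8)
The plan is to mirror the agnostic reduction of \Cref{thm:non-uniform-repl-lb}: plant $d$ copies of a one‑dimensional hard problem onto a set of shattered points, use the same Chernoff‑based sample‑cap device of \Cref{alg:non-uniform-bias-estimator} to control how many samples the reduction draws from the planted instance, and read the answer off the learner's output on the ``real'' copy. The one change that matters is that the planted copies can no longer be coordinates carrying a randomly biased label --- that would make the global distribution non‑realizable, and a realizable learner then has no guarantee. Instead I would dress up a bias‑estimation instance as a \emph{realizable} threshold‑localization problem on a small interval $I_r$: the threshold $\theta_r$ is a fixed point whose position inside $I_r$ encodes a bias $p_r$, each sample drawn from $I_r$ reveals (via which side of $\theta_r$ it lands) a $\BernD\!\left(\tfrac{1+p_r}{2}\right)$ coin, and every label is a deterministic function of position, so the instance is realizable by a threshold. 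The one‑dimensional fact I need is that $\rho$‑replicable (hence $\rho$‑pointwise‑replicable) realizable learning of such an instance to accuracy $\alpha$ costs $\Omega(1/(\rho^2\alpha))$ samples --- the realizable analog of the $\Omega(1/(\rho^2\alpha^2))$ bias‑estimation bound of \cite{ImpLPS22}, with one factor $1/\alpha$ saved because in the realizable dressing the learner is only forced to resolve a coin of bias $\Theta(\sqrt{\alpha})$ rather than $\Theta(\alpha)$.

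Concretely I would fix $\Theta(d)$ disjoint shattered intervals (taking as $\hypotheses$ the class of thresholds restricted to each, of VC dimension $\Theta(d)$), designate $\Theta(d)$ of them as ``filler'' intervals whose encoded biases are fixed by shared randomness (half at $+\sqrt{\alpha}$, half at $-\sqrt{\alpha}$) and one uniformly random interval $I_r$ as the carrier of the hard coin. A dataset for the candidate $\rho$‑pointwise‑replicable realizable $(0.01\alpha,0.0001)$‑learner $\innerAlg$ is built by drawing interval indices uniformly and, whenever index $r$ is chosen, drawing a coin sample from the bias‑estimation instance and placing a corresponding point in $I_r$; once more than $O\!\left(\tfrac{m}{d}\sqrt{\log(1/\rho)}\right)$ samples have hit $I_r$ we abort with a fixed default answer (a Chernoff bound caps the $I_r$‑count and costs only $O(\rho)$ in failure probability, exactly as in \Cref{alg:non-uniform-bias-estimator}), and otherwise we output $\sign(\hat\theta_r - \mathrm{mid}(I_r))$ where $\hat\theta_r$ is the transition point of $\innerAlg(S)$ on $I_r$. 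Replicability of this coin solver follows from pointwise replicability of $\innerAlg$ on the finitely many relevant points of $I_r$, together with shared randomness synchronizing the fillers; correctness follows from the averaging argument of \Cref{thm:non-uniform-repl-lb}: a $0.01\alpha$‑accurate hypothesis has average per‑interval excess error $O(\alpha)$, so on most intervals its encoded bias is within $0.1\alpha$ of the true one and hence has the correct sign, and since $I_r$ is information‑theoretically indistinguishable from the fillers sharing its bias, $\sign(\hat\theta_r - \mathrm{mid}(I_r))$ equals $\sign(p_r)$ with high constant probability. Combining with the one‑dimensional bound gives $\tfrac{m}{d}\sqrt{\log(1/\rho)} = \Omega(1/(\rho^2\alpha))$, i.e.\ $m = \Omega\!\left(d/(\rho^2\alpha\sqrt{\log(1/\rho)})\right)$; the shared‑randomness claim falls out just as before, since when the input distribution is supported on a single point the reduction uses no shared randomness.

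The main obstacle is getting the realizable dressing exactly right. In the agnostic case the learner is forced to estimate the planted coin because the optimal labeling of that coordinate is genuinely uncertain; a realizable learner could instead try to ``output the realizing hypothesis,'' so the construction must be arranged so that (i) the realizing hypothesis is a real‑valued threshold that no finite sample pins down to accuracy $\alpha$, while the distribution stays realizable; (ii) accuracy $0.01\alpha$ truly forces the recovered bias on $I_r$ to have the right sign when $|p_r| = \Theta(\sqrt{\alpha})$ (this requires $\alpha$ small relative to $d$, which we may assume, and is the step that determines why the bound carries $\sqrt{\alpha}$ rather than $\alpha$ in the coin); and (iii) the dataset fed to $\innerAlg$ is simulable from the raw $\BernD$ samples alone, i.e.\ the ``below/above'' points inside $I_r$ can be populated without the reduction knowing $\theta_r$ --- the genuinely delicate point, which I expect to resolve by taking $I_r$ itself to be a tiny fixed sub‑interval whose interior label pattern (rather than a single fixed threshold revealed for free by the data) is what the coin samples fill in. Establishing this one‑dimensional realizable hardness is the technical heart of the argument; the planting, the Chernoff sample cap, the indistinguishability/averaging step, and the ``shared randomness is necessary'' corollary all transfer from the proof of \Cref{thm:non-uniform-repl-lb} essentially verbatim.
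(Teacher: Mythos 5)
Your skeleton matches the paper's: plant a one-dimensional hard instance at a randomly chosen shattered point among indistinguishable fillers synchronized by shared randomness, cap the number of samples drawn from the planted instance via a Chernoff bound (costing the $\sqrt{\log(1/\rho)}$ factor), and read the answer off the learner's prediction there. You also correctly identify the crux: the agnostic construction's noisy labels destroy realizability. But the ingredient you leave open --- the realizable one-dimensional gadget --- is exactly the technical heart, and the route you sketch for it does not work as described. First, simulability: if the reduction only sees $\BernD\bigl(\tfrac{1+p_r}{2}\bigr)$ coin flips and does not know $\theta_r$, it must place each ``above/below'' sample at some concrete position in $I_r$; any such placement it can compute is supported on finitely many points it chose itself, and a realizable learner can then achieve error $0$ on the empirical support without ever resolving $\theta_r$ to accuracy $\alpha$, so no bias estimation is forced. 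Second, your quantitative accounting is off: with a \emph{uniform} marginal over $d$ intervals, global accuracy $0.01\alpha$ only bounds the \emph{average conditional} error per interval by $0.01\alpha d$, which does not force the recovered bias on $I_r$ to be within $0.1\alpha$ of a $\Theta(\sqrt{\alpha})$-bias unless $d=O(1)$; and the claimed $\Omega(1/(\rho^2\alpha))$ hardness for a single realizable threshold is asserted, not proved.

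The paper's resolution is simpler and avoids both problems. It keeps all labels \emph{deterministic} (so realizability is trivial: $p_i\in\{\pm 1\}$, the hidden instance is $\{\pm 1\}$-bias estimation, i.e.\ distinguishing the all-$(+1)$ coin from the all-$(-1)$ coin), and it makes the planted points \emph{rare}: the marginal puts mass $1-100\alpha$ on a dummy point $x_0$ with a known label and mass $\tfrac{100\alpha}{d}$ on each of the $d$ planted points. Then $0.01\alpha$-accuracy forces the learner to label all but a $0.02$-fraction of the elements of $B_+\cup\{r\}$ correctly (each mistake costs $\tfrac{100\alpha}{d}$), the indistinguishability argument goes through verbatim, and the planted point receives only $\Theta\bigl(\tfrac{\alpha m}{d}\bigr)$ samples. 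The $1/\alpha$ in the final bound comes entirely from this rarity, combined with the purely replicability-driven lower bound $\Omega(1/\rho^2)$ for $\{\pm 1\}$-bias estimation (\Cref{thm:0-1-bias-estimation}) --- there is no $\sqrt{\alpha}$-biased coin anywhere. To repair your argument you would need to replace your threshold-localization gadget with something of this flavor; as written, the proposal has a genuine gap at its central step.
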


We give a reduction from a simpler version of bias estimation, where correctness is only required on the two constant distributions over $\set{\pm 1}$.

\begin{definition}
    \label{def:0-1-bias-estimation}
    An algorithm solves the $\set{\pm 1}$-bias estimation problem if given sample access to $\Rad(p)$, the algorithm returns $1$ with probability $2/3$ when $p = 1$ and returns $-1$ with probability $2/3$ when $p = -1$.
\end{definition}

Without replicability, there is a simple $1$-sample algorithm for $\set{\pm 1}$-bias estimation.
First, we note the following lower bound for replicable algorithms solving this problem.

\begin{theorem}
    \label{thm:0-1-bias-estimation}
    Any $\rho$-replicable algorithm for $\set{\pm 1}$-bias estimation requires $\bigOm{\frac{1}{\rho^2}}$ samples.
\end{theorem}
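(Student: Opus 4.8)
The plan is to reduce from the known hardness of \emph{replicable} bias estimation with a constant bias gap, rather than to argue directly. A self-contained argument bounding how fast the acceptance probability $\Pr_{S\sim\Rad(p)^{m},r}[\innerAlg(S;r)=1]$ can vary in $p$ — using that $\Rad(p)^m$ has Fisher information $\tfrac{m}{1-p^2}$, together with $\rho$-replicability forcing the $r$-conditional acceptance probability to be near $\{0,1\}$ — only yields an $\bigOm{1/\rho}$ lower bound, losing a factor $\sqrt{\rho}$ (the Cauchy--Schwarz step needed to pass from $\sqrt{q(1-q)}$ to $q(1-q)$ is not tight at sharp transitions). So instead I recall from \cite{ImpLPS22} that for a constant $c>0$, any $\rho$-replicable algorithm that, given samples from $\Rad(p)$ with the promise $p\in[-c,c]$, outputs $\sign(p)$ on $p\in\{-c,c\}$ with error at most $0.1$, requires $\bigOm{1/(\rho^{2}c^{2})}=\bigOm{1/\rho^{2}}$ samples; and I show that a $\rho$-replicable $\set{\pm1}$-bias estimator using $m$ samples can be turned into such an algorithm using $\tilde O(m)$ samples.

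There are two gaps to bridge. First, a $\set{\pm1}$-bias estimator is only required to succeed with probability $2/3$; running it $O(1)$ times on fresh samples, each with its own independent block of shared randomness, and taking the majority vote boosts the success probability to $0.95$ at the cost of only constant factors in both sample complexity and the replicability parameter (two executions disagree only if some one of the $O(1)$ sub-runs does, since the sub-runs use disjoint shared-randomness blocks). Second, and more importantly, a $\set{\pm1}$-bias estimator is only guaranteed \emph{correct} on the two extreme distributions $\Rad(1)$ and $\Rad(-1)$, yet must be \emph{replicable} on every $\Rad(q)$; I exploit exactly this asymmetry. Given samples from $\Rad(p)$ with $p\in\{-c,c\}$, partition $mk$ fresh samples into $m$ blocks of size $k=\Theta(\log m)$ and replace each block by its majority bit. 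Since $c$ is a constant, a Chernoff bound gives that each majority bit equals $\sign(p)$ except with probability $\eps\le m^{-10}$, so the resulting length-$m$ string is distributed as $\Rad(q)^m$ with $q=\sign(p)\cdot(1-2\eps)$. Feed this string to the (correctness-boosted) estimator with shared randomness $r$ and output its answer.

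For correctness, $\tvd{\Rad(q)^m,\Rad(\sign(p))^m}\le m\eps\le m^{-9}$, so the estimator's behavior on the amplified string is within $o(1)$ total variation of its behavior on genuine $\Rad(\sign(p))^m$ samples, where it succeeds with probability at least $0.95$; hence the reduction outputs $\sign(p)$ with probability at least $0.9$. For replicability, two runs of the reduction feed the estimator independently sampled length-$m$ strings from the \emph{same} distribution $\Rad(q)^m$ using the same shared randomness, so the composed algorithm is $O(\rho)$-replicable by the hypothesis on $\innerAlg$ (replicability of $\innerAlg$ is assumed for every distribution, which is precisely what we use since $\Rad(q)$ is not extreme). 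The reduction uses $O(mk)=O(m\log m)$ samples, so the $\bigOm{1/\rho^{2}}$ lower bound for the constant-gap problem forces $m=\bigOm{1/(\rho^{2}\log(1/\rho))}$; the clean bound $m=\bigOm{1/\rho^{2}}$ follows if one instead invokes the \cite{ImpLPS22} lower bound directly at bias gap $1$, which needs only the correctness boost and not the amplification step, since $\set{\pm1}$-bias estimation is exactly bias estimation with the promise $p\in[-1,1]$.

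The main obstacle is conceptual rather than computational: one must avoid the tempting ``acceptance probability is slowly varying'' / anticoncentration argument, which is off by a quadratic factor in $\rho$, and route through the genuinely harder \cite{ImpLPS22} instance. The only technical care needed is to choose the block size $k$ large enough that the amplified string is $o(1/\mathrm{poly}(m))$-close in total variation to an extreme distribution while keeping the overall blow-up $\tilde O(1)$, and to check that the majority vote used for the correctness boost interacts correctly with replicability (it does, as noted, because the sub-runs draw on disjoint blocks of shared randomness).
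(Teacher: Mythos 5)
Your reduction from the constant-gap problem is sound as far as it goes: block-majority amplification turns samples from $\Rad(p)$, $p\in\{-c,c\}$, into i.i.d.\ bits from $\Rad(q)$ with $|q|\geq 1-2\eps$, replicability of the composed algorithm follows because the inner estimator is assumed replicable on \emph{every} $\Rad(q')$, and correctness transfers by the total-variation bound $m\eps$. But this forces $\eps\ll 1/m$, hence block size $k=\Theta(\log m)$, and the conclusion is only $m=\bigOm{\frac{1}{\rho^{2}\log(1/\rho)}}$ — strictly weaker than the stated $\bigOm{\frac{1}{\rho^{2}}}$. The step you use to recover the clean bound — "invoke the \cite{ImpLPS22} lower bound directly at bias gap $1$" — is the genuine gap: bias estimation at gap $1$ with promise $p\in[-1,1]$ \emph{is} the $\set{\pm 1}$-bias estimation problem, so this is circular unless the cited theorem actually covers $\alpha=1$. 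It does not: the known proofs (in \cite{ImpLPS22} and \cite{hopkins2024replicability}) place the hard meta-distribution on a grid of biases with spacing $\Theta(1/\sqrt{m})$ and rely on neighboring biases being statistically indistinguishable from $m$ samples, which requires the biases to be bounded away from $\pm 1$; at the boundary, $\Rad(1)$ and $\Rad(1-\delta)$ are already distinguishable at $\delta\approx 1/m$, so the uniform grid argument collapses exactly in the regime where the transition point of a $\set{\pm1}$-estimator may sit.

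That boundary issue is precisely what the paper's own proof (Appendix, proof of \Cref{thm:0-1-bias-estimation}) handles: after fixing a good shared string $r$ and locating $p^*$ with $\acc(p^*)=\tfrac12$ by continuity, it builds a \emph{non-uniform} grid — spacing $c\sqrt{(1+p)/m}$ in the interior, degrading to $c/m$ at the endpoints (the separate case $p^*=-1$, using \Cref{clm:sample-mi-ub} with the $\min(1+a,1+b,1-a,1-b)$ denominator) — which still has only $O(\sqrt{m})$ points, so the uniform meta-distribution over it certifies $\rho=\bigOm{1/\sqrt{m}}$ with no log loss. So either you must carry out this boundary analysis yourself (at which point the reduction scaffolding is unnecessary), or you should state the weaker $\bigOm{\frac{1}{\rho^{2}\log(1/\rho)}}$ bound that your reduction actually proves. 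As written, the proposal does not establish the theorem.
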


We defer the proof to \Cref{sec:omitted}, as it is a standard modification of known replicable bias estimation lower bounds \cite{ImpLPS22, hopkins2024replicability}.

\begin{proof}[Proof of \Cref{thm:repl-prediction-real-lb}]
    We reduce realizable learning to $\set{\pm 1}$-bias estimation.
    Assume without loss of generality that $\hypotheses$ has VC Dimension $d + 1$ and $d$ is odd.
    As before, since $\hypotheses$ has VC dimension $d + 1$, let $\set{x_{0}, x_{1}, \dots, x_{d}}$ denote $d + 1$ shattered elements.
    In contrast to the agnostic setting, we use a family of input distribution, parameterized by mean vectors $p \in \set{\pm 1}^{d + 1}$.
    First, we describe the domain distribution $\distribution_{\domain}$ which gives $\distribution_{\domain}(x_{0}) = 1 - 100 \alpha$ and $\distribution_{\domain}(x_{i}) = \frac{100 \alpha}{d}$ for $i > 0$.
    As before, when $x_{i}$ is sampled, the label is distributed according to $\Rad(p_{i})$.
    Suppose there is a pointwise replicable realizable PAC-learner $\innerAlg$ for $\hypotheses$ using $m$ samples.
    Assume without loss of generality that $m \geq d$.

    Given $\innerAlg$, we describe our algorithm for $\rho$-replicable $\set{\pm 1}$-bias estimation.
    Our algorithm is analogous to \Cref{alg:non-uniform-bias-estimator} except in Line \ref{alg:non-uniform:plant-elements}, we sample $r \in [d]$ uniformly; in Line \ref{alg:non-uniform:split-elements}, we split $[d] \setminus r$ into random equal subsets; in \Cref{alg:non-uniform:domain-sample} we sample from $\distribution_{\domain}$ given above instead of uniformly; in Line \ref{alg:non-uniform:sample-limit}, we instead set the sample limit to $200 \frac{\alpha m}{d} \log(1/\rho)$.
    Finally, we consistently assign the label $-1$ to element $x_{0}$, and assume the algorithm is aware of this (i.e. without loss of generality, assume the algorithm outputs $h(x_{0}) = -1$).

    We claim our algorithm is replicable, by again conditioning on the event that the sample limit is not reached.
    As before if $m_{r}$ is the number of times $x_{r}$ is sampled, $\E[m_{r}] = \frac{100 \alpha m}{d}$ so that 
    \begin{equation*}
        \Pr\left(m_{r} > \frac{10 \alpha m}{d} \left(1 + 100 \log(1/\rho) \right) \right) < \rho \text{,}
    \end{equation*}
    so that the sample limit is not reached.
    Then, since $\innerAlg$ is pointwise-replicable and both samples are sampled from the same distribution (recall $B_{+}, B_{-}$ are sampled with shared randomness), we ensure $\innerAlg(S)(x_{r})$ is consistent with probability $1 - \rho$, thus guaranteeing $2\rho$-replicability.

    Accuracy follows from a similar argument. 
    Without loss of generality, assume we have sample access to $\distribution \sim \Rad(1)$.
    As before, conditioned on any choice of $B_{-}$, the choice of $r$ in $B_{+} \cup \set{r}$ is indistinguishable to $\innerAlg$, so that the probability that $r \in E_{+} := \set{i \in B_{+} \cup \set{r} \given h(x_{i}) = -1}$ is at most $\frac{|E_{+}|}{(d - 1)/2}$.
    If we condition on $\innerAlg$ outputting an accurate hypothesis, we have $\frac{100 \alpha}{d} |E_{+}| < \alpha$ or $|E_{+}| \leq \frac{d}{100}$.
    In particular, the probability that $x_{r}$ is classified incorrectly is at most $0.03$.
    Thus, we obtain a $2 \rho$-replicable algorithm that solves the $\set{\pm 1}$-bias estimation problem with error probability $0.03$ and $\bigO{\frac{m \alpha \log(1/\rho)}{d}}$
    By \Cref{thm:0-1-bias-estimation}, this implies 
    \begin{equation*}
        m = \bigOm{\frac{d}{\alpha \rho^2 \log(1/\rho)}} \text{.}
    \end{equation*}
\end{proof}

\section{Approximate Replicability}

In this section, we give algorithms and lower bounds for approximately replicable PAC learning.

\subsection{From Pointwise to Approximate}

We show that any $\gamma$-pointwise replicable learner is automatically $(\rho, \frac{\gamma}{\rho})$-approximately replicable.

\begin{proposition}
    \label{prop:non-uniform-to-approx}
    Suppose $\innerAlg$ is a $\gamma$-pointwise replicable learner.
    Then, $\innerAlg$ is also $(\rho, \frac{\gamma}{\rho})$-approximately replicable.
\end{proposition}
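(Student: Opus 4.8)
The statement is an immediate consequence of linearity of expectation (Fubini) followed by Markov's inequality. The idea is to view the inner disagreement probability as a nonnegative random variable whose expectation is controlled by the pointwise guarantee, and then apply Markov.

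\textbf{Step 1: Set up the random variable.} Fix a distribution $\distribution$ with unlabeled marginal $\distribution_{\domain}$. For samples $S, S' \sim \distribution^m$ and shared randomness $r \sim R$, define
\begin{equation*}
    Z(S, S', r) := \Pr_{x \sim \distribution_{\domain}}\big[\innerAlg(S; r)(x) \neq \innerAlg(S'; r)(x)\big] = \E_{x \sim \distribution_{\domain}}\big[\mathbf{1}[\innerAlg(S; r)(x) \neq \innerAlg(S'; r)(x)]\big].
\end{equation*}
This is exactly the quantity appearing inside the outer probability in the definition of $(\rho, \gamma/\rho)$-approximate replicability, and it is a nonnegative random variable bounded by $1$.

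\textbf{Step 2: Bound its expectation via the pointwise guarantee.} Swapping the order of the expectation over $x$ with the expectation over $(S, S', r)$ (justified since the integrand is a bounded indicator, so Fubini/Tonelli applies), we get
\begin{equation*}
    \E_{S, S', r}[Z(S, S', r)] = \E_{x \sim \distribution_{\domain}}\Big[\Pr_{S, S', r}\big[\innerAlg(S; r)(x) \neq \innerAlg(S'; r)(x)\big]\Big] < \E_{x \sim \distribution_{\domain}}[\gamma] = \gamma,
\end{equation*}
where the inequality is the defining property of $\gamma$-pointwise replicability applied to each fixed $x$ in the support of $\distribution_{\domain}$ (note it holds for \emph{every} $x \in \domain$, so in particular in expectation over $x \sim \distribution_{\domain}$).

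\textbf{Step 3: Apply Markov.} Since $Z \geq 0$, Markov's inequality gives
\begin{equation*}
    \Pr_{S, S', r}\Big[Z(S, S', r) > \tfrac{\gamma}{\rho}\Big] \leq \frac{\E[Z]}{\gamma / \rho} < \frac{\gamma}{\gamma / \rho} = \rho.
\end{equation*}
Since $\distribution$ was arbitrary, this is exactly the statement that $\innerAlg$ is $(\rho, \gamma/\rho)$-approximately replicable.

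\textbf{Main obstacle.} There is essentially no obstacle: the only point requiring a word of care is the interchange of expectations in Step 2, which is routine because the integrand is a bounded (indeed $\{0,1\}$-valued) measurable function. The proof is three lines; the content is entirely in recognizing that pointwise replicability controls the \emph{expected} domain-disagreement, and that Markov then upgrades this to the high-probability statement with the $\gamma/\rho$ blowup.
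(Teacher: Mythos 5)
Your proof is correct and follows exactly the same route as the paper's: swap the expectation over $x$ with the expectation over $(S,S',r)$ via Fubini, bound the resulting expected disagreement by $\gamma$ using the pointwise guarantee, and finish with Markov's inequality. No gaps.
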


\begin{proof}
    Suppose $\innerAlg$ is $\gamma$-pointwise replicable and fix an arbitrary distribution $\distribution$.
    Let $n$ denote the sample complexity of $\innerAlg$.
    By $\gamma$-pointwise replicability, we have
    \begin{align*}
        \E_{S, S' \sim \distribution^{n}, r \sim R} \left[ \Pr_{x \sim \distribution} \left[ A(S; r)(x) \neq A(S'; r)(x) \right] \right] &= \E_{S, S' \sim \distribution^{n}, r \sim R} \left[ \E_{x \sim \distribution} \left[ \ind \left( A(S; r)(x) \neq A(S'; r)(x) \right) \right] \right] \\
        &= \E_{x \sim \distribution} \left[ \E_{S, S' \sim \distribution^{n}, r \sim R} \left[ \ind \left( A(S; r)(x) \neq A(S'; r)(x) \right) \right] \right] \\
        &= \E_{x \sim \distribution} \left[ \Pr_{S, S' \sim \distribution^{n}, r \sim R} \left( A(S; r)(x) \neq A(S'; r)(x) \right) \right] \\
        &\leq \gamma \text{.}
    \end{align*}
    Thus, by Markov's inequality,
    \begin{equation*}
         \Pr_{S, S' \sim \distribution^{n}, r \sim R} \left( \Pr_{x \sim D} \left( A(S; r)(x) \neq A(S'; r)(x) \right) > \frac{\gamma}{\rho} \right) < \rho \text{.}
    \end{equation*}
\end{proof}

Combining \Cref{thm:predict-formal} and \Cref{prop:non-uniform-to-approx}, we immediately obtain an approximately replicable PAC-learner. 
In particular, we take a $\rho \gamma$-pointwise replicable learner to obtain a $(\rho, \gamma)$-approximately replicable learner.

\begin{corollary}
    \label{cor:approx-repl}
    Let $\innerAlg$ be an agnostic $(\alpha, \beta)$-learner on $m(\alpha, \beta)$ samples.
    There is an agnostic $(\rho, \gamma)$-approximately replicable $(\alpha, \beta)$--learner using 
    \[ 
    \bigtO{\frac{m(\alpha, \beta)}{\rho^2 \gamma^2} + \frac{\log^2(1/\beta)}{\rho^2 \gamma^2 \alpha^2} + \frac{1}{\rho^2 \gamma^2 \alpha^{4}} + \frac{\log^2(1/\min(\beta, \rho, \gamma))}{\alpha^4}}
    \] 
    samples.
    
    Furthermore, there is a realizable $(\rho, \gamma)$-approximately replicable $(\alpha, \beta)$--learner using 
    \[ 
    \bigtO{\frac{m(\alpha, \beta)}{\rho^2 \gamma^{2}} + \frac{\log^2(1/\beta)}{\rho^2 \gamma^2 \alpha^{2}}}
    \] 
    samples, where $m(\alpha, \beta)$ is the sample complexity of a realizable $(\alpha, \beta)$-learner.
\end{corollary}

\subsection{Improved Approximate Replicable Learning}

The above reduction, while simple, does not obtain optimal parameters. 
Below, we show that a more careful modification of our algorithm for pointwise replicable learning allows us to obtain a sample-optimal learner when the VC dimension of the target class is large.

\begin{theorem}[Formal \Cref{thm:apx-repl-informal}]
    \label{thm:apx-repl-chernoff}
    Let $\innerAlg$ be an (agnostic) $(\alpha, \beta)$-learner on $m(\alpha, \beta)$ samples.
    There exists an (agnostic) $(\rho, \gamma)$-approximately replicable $(\alpha, \beta)$-learner with sample complexity
    \begin{equation*}
        \bigtO{\frac{m(\alpha, \gamma^2 \beta)}{\gamma^2} + \frac{\log(1/\beta)}{\rho^2 \gamma^3} + \frac{\log^3(1/(\rho \beta \gamma))}{\alpha^{4} \gamma^2} + \frac{\log^{3}(1/(\rho \beta))}{\alpha^{2} \rho^{2}  \gamma^2}} \text{.}
    \end{equation*}
    Furthermore, our algorithm runs in time linear in sample complexity with $O(1/\gamma^2)$ oracle calls to $\innerAlg$.
\end{theorem}

Note here we've removed the assumption that $\innerAlg$ learns over a countable domain: we justify this in \Cref{app:domain-red}.
Our algorithm for agnostic learning will largely follow the framework of \Cref{thm:predict-formal}.
As a result, we begin by describing how to modify our algorithms for finding heavy hitters (\Cref{prop:finding-heavy}) and learning their labels (\Cref{prop:learn-hh-label}) to the approximately replicable setting.

\paragraph{Approximately Replicable Heavy Hitters}

We begin with an algorithm for heavy hitters. 
To ensure approximate replicability, we require that sufficiently heavy elements are identified \emph{fully replicably}, while light elements can be identified pointwise replicably. 
By sampling random strings independently for all domain elements, we can prove that the mass of light elements that are classified inconsistently concentrates around its mean with high probability.

\begin{proposition}[Approximately Replicable Heavy Hitters]
    \label{prop:apx-repl-heavy}
    Let $\nu, \beta, \rho, \gamma > 0$.
    There is an algorithm that given sample access to $\distribution$ over $\domain$, returns $S_{\gamma}, S, \gamma'$ with $\gamma' = \Theta(\gamma/\log(1/\rho))$ and $S_{\gamma} \subseteq S \subseteq \domain$ satisfying the following:
    \begin{enumerate}
        \item (Replicability) Let $S_{\gamma}^{(1)}, S^{(1)}, \gamma'^{(1)}$ and $S_{\gamma}^{(2)}, S^{(2)}, \gamma'^{(2)}$ denote the output of the algorithm over two runs with independent samples.
        With probability at least $1 - \rho$, $S_{\gamma}^{(1)} = S_{\gamma}^{(2)}$, $\gamma'^{(1)} = \gamma'^{(2)}$ and $\distribution(S^{(1)} \Delta S^{(2)}) < \gamma$.
        \item (Completeness) With probability $1 - \beta/2$, $S_{\gamma}$ contains all $x$ with $\distribution(x) > 2 \gamma'$ and $S$ contains all $x$ with $\distribution(x) > \min(2 \gamma', 10 \nu)$.
        \item (Soundness) With probability $1 - \beta/2$, $S_{\gamma}$ does not contain any $x$ with $\distribution(x) < \gamma'/2$ and $S$ does not contain any $x$ with $\distribution(x) < \min(\gamma'/2, \nu/10)$.
    \end{enumerate}
    Furthermore, the algorithm takes $\bigtO{\frac{\log(1/\beta)}{\rho^2 \gamma^3} + \frac{\log(1/\beta)}{\nu \gamma^2}}$ samples.
\end{proposition}

\begin{proof}
    Our algorithm proceeds in two stages.
    First, we invoke the fully replicable heavy hitters identification algorithm of \cite{kalavasis2023statistical, hopkins2024replicability}.

    \begin{lemma}[Replicable Heavy Hitters]
        \label{lem:r-heavy-hitters}
        Let $\nu, \beta, \rho > 0$. 
        There is a $\rho$-replicable algorithm that given sample access to $\distribution$ over $\domain$, returns $S \subset \domain$ satisfying the following with probability $1 - \beta$:
        \begin{enumerate}
            \item (Completeness) $S$ contains all $x$ with $\distribution(x) > 2 \nu$.
            \item (Soundness) $S$ does not contains any $x$ with $\distribution(x) < \nu/2$
        \end{enumerate}
        Furthermore, the algorithm has sample complexity $\bigtO{\frac{\log(1/\beta)}{\rho^2 \nu^3}}$.
    \end{lemma}

    Let $C$ be some sufficiently large constant.
    We invoke the above algorithm with parameters $\nu \gets \gamma' :=  \gamma/ C \log(1/\rho)$, $\beta \gets \beta$, $\rho \gets \rho$ and replicably obtain an output set $S_{\gamma}$.
    Next, we run \Cref{alg:finding-heavy} of \Cref{prop:finding-heavy} with one modification: instead of drawing a single random string $r$, we draw independent random strings $r(x) \sim \UnifD[\nu/2, 2\nu]$ for every $x \in S_{\cand} \setminus S_{\gamma}$.
    Then, we return $S_{\nu} = \set{x \in S_{\cand} \setminus S_{\gamma} \given \hat{p}(x) > r(x)}$.
    We invoke this algorithm with parameters $\nu \gets \nu$, $\beta \gets \beta$, $\rho \gets \gamma/C$.
    Finally, we output $\tilde{S} \gets S_{\gamma} \cup S_{\nu}$.
    
    We can bound the sample complexity as
    \begin{equation*}
        \bigtO{\frac{\log(1/\beta)}{\rho^{2} \gamma^{3}} + \frac{\log(1/\beta)}{\nu \gamma^{2}}} \text{.}
    \end{equation*}
    
    We prove that our algorithm achieves the desired guarantees.
    First, by \Cref{lem:r-heavy-hitters} and \Cref{prop:finding-heavy}, a union bound implies that with probability $1 - 2 \beta$, our output set $S_{\gamma} \cup S_{\nu}$ contains all elements with $\distribution(x) > \min(10 \nu, 2 \gamma')$ and no elements with $\distribution(x) < \min(\nu/10, \gamma'/2)$, proving the completeness and soundness guarantees.

    We now proceed to replicability.
    It is clear that $\gamma'$ is replicable, since it does not depend on any samples.
    Consider two runs of the algorithm on independent samples with outputs $\tilde{S}^{(1)} = S_{\cand}^{(1)} \cup S_{\nu}^{(1)}$ and $\tilde{S}^{(2)} = S_{\cand}^{(2)} \cup S_{\nu}^{(2)}$.
    Recall that for $i \in \set{1, 2}$, 
    \begin{equation*}
        S_{\nu}^{(i)} = \set{x \given \hat{p}^{(i)}(x) \geq r(x)}
    \end{equation*}
    is the set of elements whose empirical frequencies exceed shared random string $r(x)$.
    We condition on the following events:
    \begin{enumerate}
        \item $S_{\gamma}^{(1)} = S_{\gamma}^{(2)}$ contains all $x$ with $\distribution(x) > 2 \gamma'$ and no element with $\distribution(x) < \gamma'/2$.
        \item For all $x$ with $\distribution(x) < \nu/10$, $\hat{p}^{(1)}(x), \hat{p}^{(2)}(x) < \nu/5$.
        \item For all $x$ with $\distribution(x) > 10 \nu$, $\hat{p}^{(1)}(x), \hat{p}^{(2)}(x) > 5 \nu$.
        \item For all $x$ with $\distribution(x) \in [\nu/10, 10\nu]$, $(1 - 0.01 \gamma/C) \distribution(x) < \hat{p}^{(1)}(x), \hat{p}^{(2)}(x) < (1 + 0.01 \gamma/C) \distribution(x)$.
    \end{enumerate}
    The first event holds with probability $1 - \rho - \beta$ by the correctness and replicability of \Cref{lem:r-heavy-hitters}. 
    The remaining events hold with probability $1 - \beta$ from the proof of \Cref{prop:finding-heavy} (see Cases 1, 2, and 3).
    Thus, by a union bound, all of the above events hold with probability $1 - 2 \beta - \rho$.
    
    Following the first event, we have
    \begin{equation*}
        \tilde{S}^{(1)} \Delta \tilde{S}^{(2)} = (S_{\nu}^{(1)} \Delta S_{\nu}^{(2)}) \setminus S_{\gamma} \text{.}
    \end{equation*}
    By the completeness condition, we have $\distribution(x) < \min(10 \nu, 2 \gamma') \leq 2 \gamma'$ for all $x \not\in S_{\gamma}$.

    Conditioned on the above events, we bound $\distribution(\tilde{S}^{(1)} \Delta \tilde{S}^{(2)})$.
    Fix $x \not\in S_{\gamma}$.
    From the second and third events, we guarantee that any $x$ with $\distribution(x) > 10 \nu$ will be in both $S^{(1)}_{\nu}, S^{(2)}_{\nu}$ and any $x$ with $\distribution(x) < \nu/10$ will be in neither.
    From the fourth event, we have that for any $x \not\in S_{\gamma}$ and $\distribution(x) \in [0.1 \nu, 10 \nu]$,
    \begin{align*}
        \Pr \left( x \in S^{(1)}_{\nu} \Delta S^{(2)}_{\nu}\right) &\leq \Pr \left( r(x) \in (\min(\hat{p}^{(1)}(x), \hat{p}^{(2)}(x)), \max(\hat{p}^{(1)}(x), \hat{p}^{(2)}(x))) \right) \\
        &\leq \frac{0.02 \gamma \distribution(x)}{9 C \nu} \leq \frac{0.2 \gamma \nu}{9 C \nu} \leq \frac{\gamma}{C} \text{.}
    \end{align*}
    Furthermore, $x \in S^{(1)}_{\nu} \Delta S^{(2)}_{\nu} \setminus S_{\gamma}$ is independent for all $x$ as $r(x)$ are all sampled independently.
    In particular,
    \begin{equation*}
        \distribution(S^{(1)}_{\nu} \Delta S^{(2)}_{\nu} \setminus S_{\gamma}) = \sum_{x \not\in S_{\gamma}} \distribution(x) \cdot \ind\left[x \in S^{(1)}_{\nu} \Delta S^{(2)}_{\nu} \right]
    \end{equation*}
    is a sum of independent random variables with expectation $\leq \gamma/C$ each in the range $[0, 2 \gamma']$, so that a Chernoff bound yields
    \begin{align*}
        \Pr_{r} \left( \distribution(S^{(1)}_{\nu} \Delta S^{(2)}_{\nu}) \setminus S_{\gamma} > \gamma \right) &= \Pr_{r} \left( \distribution(S^{(1)}_{\nu} \Delta S^{(2)}_{\nu} \setminus S_{\gamma}) > C \mu \right) \\
        &< \exp \left( - \Omega\left(\frac{C \mu}{\gamma'} \right)\right) \ll \rho
    \end{align*}
    where $\mu := \E[\distribution(S^{(1)}_{\nu} \Delta S^{(2)}_{\nu} \setminus S_{\gamma})] \leq \gamma/C$ and $\gamma' \leq \gamma/C \log(1/\rho)$ for sufficiently large constant $C$.
    
    Thus, a union bound allows us to conclude that with probability at least $1 - 2(\beta + \rho) \geq 1 - 4 \rho$, $\distribution(S^{(1)}_{\nu} \Delta S^{(2)}_{\nu} \setminus S_{\gamma}) < \gamma$, as desired.
    By setting $\beta \gets \min(\beta, \rho)$ and increasing the sample complexity by a constant factor, we conclude the proof.
\end{proof}

\paragraph{Approximately Replicable Label Learning}

We now give an approximately replicable algorithm for learning the labels of heavy hitters.
As in \Cref{prop:apx-repl-heavy}, we require that all labels of $\gamma$-heavy elements are learned fully replicably, while lighter elements can be learned pointwise replicably. 
As before, we will sample independent random strings to ensure that the mass of light elements concentrates around its mean.

\begin{proposition}
    \label{prop:apx-repl-hh-label}
    There is an algorithm $\innerAlg$ that given $S_{\gamma} \subset S \subset \domain$ and $\nu, \alpha, \beta, \rho, \gamma > 0$ satisfying (1) $\distribution(x) > \gamma$ for all $x \in S_{\gamma}$, (2) $\distribution(x) < 2 \gamma$ for all $x \not\in S_{\gamma}$, and (3) $\distribution(x) > \nu$ for all $x \in S$ satisfying the following:
    \begin{enumerate}
        \item (Replicability)
        Let $S_{\gamma}^{(1)}, S_{\gamma}^{(2)}$ and $S^{(1)}, S^{(2)}$ denote the the inputs and $\ell^{(1)}, \ell^{(2)}$ denote the labels output over two runs of $\innerAlg$.
        Assume $S_{\gamma} := S_{\gamma}^{(1)} = S_{\gamma}^{(2)}$.
        Then, with probability $1 - \rho$, $\ell^{(1)}(x) = \ell^{(2)}(x)$ for all $x \in S_{\gamma}$ and $\distribution(\set{x \in S^{(1)} \cap S^{(2)} \given \ell^{(1)}(x) \neq \ell^{(2)}(x)}) < \gamma$.
        
        \item (Accuracy) With probability $1 - \beta$, $\ell(x) = \sign(y(x))$ whenever $|y(x)| \geq \alpha$.
    \end{enumerate}
    Furthermore, the algorithm has sample complexity 
    \begin{equation*}
        \bigtO{\frac{\log^2(1/\rho)}{\alpha^2 \gamma^2 \nu} + \frac{\log^3(1/\beta)}{\alpha^2 \rho^2 \gamma^2} + \frac{\log (\log(1/\rho)/(\beta \gamma))}{\alpha^2 \nu}} \text{.}
    \end{equation*}
\end{proposition}

\begin{proof}
    Our algorithm proceeds in two stages.
    We label $S_{\gamma}$ fully replicably and $S \setminus S_{\gamma}$ approximately replicably.

    \paragraph{Labeling $S_{\gamma}$.}
    First, we would like to invoke the fully replicable labeling algorithm of \cite{hopkins2024replicability} on $S_{\gamma}$.

    \begin{lemma}
        \label{lem:r-linf-extimate}
        There is a $\rho$-replicable algorithm, given sample access to $\distribution \sim \Rad(p)$ for $p \in [-1, 1]^{d}$, with probability $1 - \beta$ outputs $v$ such that $v_i = \sign(p_i)$ for all $i$ with $|p_i| \geq \alpha$.
        Furthermore, the algorithm requires $\bigtO{\frac{d\log^3(1/\beta)}{\rho^2 \alpha^2}}$ samples from $\Rad(p)$.
    \end{lemma}

    To do so, we apply a similar algorithm and analysis as \Cref{prop:learn-hh-label}.
    \begin{enumerate}
        \item Let $m_0 := \bigtO{\frac{|S_{\gamma}| \log^3(1/\beta)}{\rho^2 \alpha^2}}$ be the sample complexity of \Cref{lem:r-linf-extimate}.
        \item Draw $m = \frac{C \log(|S_{\gamma}|)}{\gamma}$ samples where $C$ is a sufficiently large constant, denoted $\set{(x', y')}$.
        \item For each $x \in S_{\gamma}$, recall that $T(x)$ denote the set of samples where $x' = x$.
        \item Return $\ell(x)$ computed by \Cref{lem:r-linf-extimate} using $\set{T(x)}_{x \in S_{\gamma}}$ for all $x \in S_{\gamma}$.
    \end{enumerate}
    We argue that this procedure satisfies the following guarantees:
    \begin{enumerate}
        \item With probability $1 - 2 \beta$, $\ell(x) = \sign(y(x))$ for all $x \in S_{\gamma}$ with $|y(x)| \geq \alpha$.
        \item With probability $1 - \beta - \rho$, the labels $\ell^{(1)}(x) = \ell^{(2)}(x)$ for all $x \in S_{\gamma}$ over two runs of the algorithm. 
    \end{enumerate}
    Since each element $x \in S_{\gamma}$ is assumed to satisfy $\distribution(x) \geq \gamma/2$, we have $\E[|T(x)|] \geq m \gamma$ and a Chernoff bound similar to \Cref{prop:learn-hh-label} ensures that as long $m = \Theta(\log(|S_{\gamma}|/\beta) / \gamma)$ for a sufficiently large constant, we have $|T(x)| \geq m \gamma / 4$ for all $x \in S_{\gamma}$ with probability $1 - \beta$.
    In particular, we fix $m = \frac{C m_0}{\gamma}$ for a sufficiently large constant to ensure that $|T(x)| \geq m_0$ for all $x \in S_{\gamma}$.
    By \Cref{lem:r-linf-extimate}, we obtain a replicable algorithm for computing labels for all $S_{\gamma}$.
    By a union bound, we have $\ell(x) = \sign(y(x))$ for all $|y(x)| \geq \alpha$ with probability $1 - 2 \beta$ and the labels are replicable with probability $1 - \beta - \rho$.

    \paragraph{Labeling $S \setminus S_{\gamma}$.}
    Next, to label $x \in S \setminus S_{\gamma}$, we will run \Cref{alg:learn-hh-label} with parameters $\nu \gets \nu$, $\alpha \gets \alpha$, $\beta \gets \beta$, $\rho \gets \gamma/C$ for a sufficiently large $C$ and output the label $\ell(x)$ obtained for all $x \in S \setminus S_{\gamma}$.
    When executing \Cref{alg:learn-hh-label}, we ensure that we use independent shared randomness for every $x$ when running \Cref{lem:replicable-coin-testing}. 
    
    We claim that our algorithm satisfies the required guarantees.
    We begin with correctness. 
    A union bound implies that with probability $1 - 3 \beta$, $\ell(x) = \sign(y(x))$.
    We now proceed with replicability.
    Conditioned on the event that $\ell^{(1)}(x) = \ell^{(2)}(x)$ for all $x \in S_{\gamma}$, we have 
    \begin{equation*}
        \distribution\left(\set{x \in S^{(1)} \cap S^{(2)} \given \ell^{(1)}(x) \neq \ell^{(2)}(x)} \right) = \distribution\left(\set{x \in S^{(1)} \cap S^{(2)} \setminus S_{\gamma} \given \ell^{(1)}(x) \neq \ell^{(2)}(x)} \right) \text{.}
    \end{equation*}
    Fix $x \in S^{(1)} \cap S^{(2)} \setminus S_{\gamma}$.
    By \Cref{prop:learn-hh-label}, we have that $\Pr(\ell^{(1)}(x) \neq \ell^{(2)}(x)) \leq \gamma/C$.
    Then,
    \begin{equation*}
        Z := \distribution\left(\set{x \in S^{(1)} \cap S^{(2)} \setminus S_{\gamma} \given \ell^{(1)}(x) \neq \ell^{(2)}(x)} \right) = \sum_{x \in S^{(1)} \cap S^{(2)} \setminus S_{\gamma}} \distribution(x) \cdot \ind\left[ \ell^{(1)}(x) \neq \ell^{(2)}(x) \right]
    \end{equation*}
    is a sum of random variables in the range $[0, 2 \gamma]$ by the second input assumption and expectation at most $\rho$.
    By a Chernoff bound (\Cref{thm:chernoff}), we have 
    \begin{equation}
        \label{eq:chernoff-application-apx-repl}
        \Pr \left( Z > 6 \log(1/\rho) \gamma \right) < \exp \left( - \frac{6 C \log(1/\rho) (\gamma/C)}{3 (2 \gamma)} \right) < \rho \text{.}
    \end{equation}
    Then, by a union bound, we have with probability at least $1 - \beta - 2 \rho \geq 1 - 3 \rho$ that
    \begin{equation*}
        \distribution\left(\set{x \in S^{(1)} \cap S^{(2)} \given \ell^{(1)}(x) \neq \ell^{(2)}(x)} \right) \leq 6 \log(1/\rho) \gamma \text{.}
    \end{equation*}
    Setting $\beta \leq \rho$ and $\gamma \leq \gamma/\log(1/\rho)$ and increasing our sample complexity by a constant factor, we conclude the proof of the algorithm.
    
    Finally, we bound the sample complexity.
    Beginning with labeling $S_{\gamma}$, we use
    \begin{equation*}
        \bigtO{\frac{|S_{\gamma}| \log^3(1/\beta)}{\rho^2 \alpha^2 \gamma}} = \bigtO{\frac{\log^3(1/\beta)}{\rho^2 \alpha^2 \gamma^2}}
    \end{equation*}
    where we observe $|S_{\gamma}| = O(1/\gamma)$.
    Next, to label $S \setminus S_{\gamma}$ we use \Cref{prop:learn-hh-label} to bound the sample complexity as
    \begin{equation*}
        \bigO{\frac{1}{\alpha^2 \rho^2 \nu} + \frac{\log (|S|/(\beta \rho))}{\alpha^2 \nu}} = \bigtO{\frac{\log^2(1/\rho)}{\alpha^2 \gamma^2 \nu} + \frac{\log (\log(1/\rho)/(\beta \gamma))}{\alpha^2 \nu}}
    \end{equation*}
    where we have used $|S| \leq O(1/\nu)$.
    Summing the above two bounds proves the desired sample complexity.
\end{proof}

\paragraph{Sample-Efficient Approximately Replicable PAC Learning}

We are now ready to present our approximately replicable (agnostic) PAC learner with improved sample complexity. 

\begin{proof}[Proof of \Cref{thm:apx-repl-chernoff}]
    It suffices to prove that our algorithm is an approximately replicable PAC learner over finitely supported distributions from the following lemma, whose proof we defer to \Cref{app:domain-red}.

    \begin{proposition}
        \label{prop:domain-reduction-apx-repl}
        Let $\domain_0$ be an arbitrary domain.
        Let $\Acal$ be a $(\rho, \gamma)$-approximately replicable $(\alpha, \beta)$-learner on every finitely supported distribution $\distribution$ over $\domain_0$ with sample complexity $m(\alpha, \beta, \rho, \gamma)$ independent of the distribution.
        
        Then, there is an algorithm that is a $(3 \rho, 2 \gamma)$-approximately replicable $(4 \alpha, 4 \beta)$-learner on arbitrary distributions over $\domain_{0}$ with $m(\alpha, \beta, \rho, \gamma)$ samples.
    \end{proposition}

    Our algorithm will be similar to \Cref{alg:chernoff-pred}, except that we make the following modifications.
    \begin{enumerate}
        \item Set $T \gets \frac{C}{\gamma^2}$ (instead of $T \gets \frac{C}{\rho^2}$).
        \item Compute $S, S_{\gamma}, \gamma'$ with \Cref{prop:apx-repl-heavy} (instead of \Cref{prop:finding-heavy}) with parameters $\nu \gets 0.1 \nu$, $\beta \gets \beta, \rho \gets \rho, \gamma \gets \gamma$.
        \item Compute $\set{\ell(x)}_{x \in S}$ with \Cref{prop:apx-repl-hh-label} (instead of \Cref{prop:learn-hh-label}) with parameters $\nu \gets 0.01 \nu$, $\beta \gets \beta$, $\rho \gets \rho$, $\gamma \gets \gamma'$.
    \end{enumerate}
    Following \Cref{thm:predict-formal}, we will argue that the resulting algorithm is a $(3 \rho, 3\gamma)$-approximately replicable $(3 \alpha, 4 \beta)$-learner in the agnostic setting.
    Increasing the sample complexity by a constant factor concludes the proof of the theorem. 

    We begin with the more interesting property: approximate replicability.
    Let $\tilde{h}^{(1)}, \tilde{h}^{(2)}$ denote hypotheses output by our algorithm over two independent runs.
    Similarly, denote $S_{\gamma}^{(i)}, S^{(i)}, \ell^{(i)}$ the outputs of \Cref{prop:apx-repl-heavy} and \Cref{prop:apx-repl-hh-label} over two runs of the algorithm.
    We condition on the following events:
    \begin{enumerate}
        \item $S_{\gamma}^{(1)} = S_{\gamma}^{(2)}$ and denote $S_{\gamma} := S_{\gamma}^{(1)} = S_{\gamma}^{(2)}$ and $\ell^{(1)}(x) = \ell^{(2)}(x)$ for all $x \in S_{\gamma}$.
        \item $\distribution\left(S^{(1)} \Delta S^{(2)}\right) < \gamma$.
        \item $\distribution\left(\set{x \in S^{(1)} \cap S^{(2)} \given \ell^{(1)}(x) \neq \ell^{(2)}(x)}\right) < \gamma$.
        \item $\distribution\left(\set{x \not\in S^{(1)} \cup S^{(2)} \given \tilde{h}^{(1)}(x) \neq \tilde{h}^{(2)}(x)}\right) < \gamma$.
    \end{enumerate}
    First, let us see that our algorithm produces approximately replicable hypotheses under the desired events.
    In particular,
    \begin{align*}
        \distD\left(\tilde{h}^{(1)}, \tilde{h}^{(2)}\right) &= \distribution\left( \set{x \given \tilde{h}^{(1)}(x) \neq \tilde{h}^{(2)}(x)} \right) \\
        &= \distribution\left( \set{x \given \tilde{h}^{(1)}(x) \neq \tilde{h}^{(2)}(x), x \notin S_{\gamma}} \right) \\
        &= \distribution\left( \set{x \given \tilde{h}^{(1)}(x) \neq \tilde{h}^{(2)}(x), x \in S^{(1)} \cap S^{(2)}} \right) \\
        &\quad+ \distribution\left( \set{x \given \tilde{h}^{(1)}(x) \neq \tilde{h}^{(2)}(x), x \in S^{(1)} \Delta S^{(2)}} \right)\\
        &\quad+ \distribution\left( \set{x \given \tilde{h}^{(1)}(x) \neq \tilde{h}^{(2)}(x), x \not\in S^{(1)} \cup S^{(2)}} \right) \\
        &< 3 \gamma \text{.}
    \end{align*}
    The first equality is the definition of $\distD$.
    The second equality applies the first event.
    The third equality we observe that $x \not\in S_{\gamma}$ is either in both $S^{(1)}, S^{(2)}$, just one, or neither.
    To bound the first term, we apply the third event and observe $\tilde{h}(x) = \ell(x)$; to bound the second term, we apply the second event; to bound the third term, we apply the fourth event.

    We now bound the probability that all four events hold.
    The first event follows from the replicability of \Cref{prop:apx-repl-heavy} and \Cref{prop:apx-repl-hh-label} (note that if $S_{\gamma}$ is replicable, this satisfies the input assumption of \Cref{prop:apx-repl-hh-label}). 
    The second event follows from the replicability of \Cref{prop:apx-repl-heavy}. 
    The third event follows from the replicability of \Cref{prop:apx-repl-hh-label}. 
    We now analyze the fourth event.
    Fix $x \not\in S^{(1)} \cup S^{(2)}$ so that $\tilde{h}^{(i)}(x) \gets \ind[\hat{p}^{(i)}(x) \geq r(x)]$ in \Cref{alg:chernoff-pred}.
    Since the distribution has finite support, we define $Z$ to be the mass of elements not in $S^{(1)} \cup S^{(2)}$ where $\tilde{h}^{(1)}, \tilde{h}^{(2)}$ disagree:
    \begin{align*}
        Z &:= \distribution\left( \set{x \given \tilde{h}^{(1)}(x) \neq \tilde{h}^{(2)}(x), x \not\in S^{(1)} \cup S^{(2)}} \right) \\
        &= \sum_{x \not\in S^{(1)} \cup S^{(2)}} \distribution(x) \cdot \ind\left[ \tilde{h}^{(1)}(x) \neq \tilde{h}^{(2)}(x) \right] \\
        &= \sum_{x \not\in S^{(1)} \cup S^{(2)}} \distribution(x) \cdot \ind\left[ r(x) \in (\min(\hat{p}^{(1)}(x), \hat{p}^{(2)}(x)), \max(\hat{p}^{(1)}(x), \hat{p}^{(2)}(x))) \right] \text{.}
    \end{align*}
    By the completeness guarantee of \Cref{prop:apx-repl-heavy}, we have that $\distribution(x) \leq 2 \gamma'$.
    By \Cref{lem:p-estimate}, $\E[Z] \leq \frac{\gamma}{C}$ (due to our setting of $T$).
    Then, applying a Chernoff bound (\Cref{thm:chernoff}) as in \eqref{eq:chernoff-application-apx-repl} we obtain the desired bound.
    Union bounding over all events, we have that all four events hold with probability $1 - 3 \rho$.

    We now argue for correctness.
    Recall that \Cref{alg:chernoff-pred} is a $(3 \alpha + \beta, 3 \beta)$-learner in the agnostic setting.
    Furthermore, the proof of accuracy relies only on the completeness and soundness property of finding heavy hitters (\Cref{prop:finding-heavy}), the accuracy property of labeling heavy hitters (\Cref{prop:learn-hh-label}), \Cref{lem:exp-error-ub-pred}, and \Cref{thm:countable-tail-bound}.
    Note that we can apply \Cref{thm:countable-tail-bound} as in \Cref{thm:predict-formal} by the assumption that the data distribution is finitely supported.
    All of these properties hold identically in the approximately replicable setting, with the exception of the completeness and soundness property of \Cref{prop:finding-heavy}. 
    Thus, in this section we will only describe how to modify the argument to handle this change.

    \paragraph{Accuracy on Heavy Hitters.}
    Recall that we define $S^* = \set{x \given \distribution(x) \geq \nu}$.
    As before, by the completeness property of \Cref{prop:apx-repl-heavy}, we have $S^* \subseteq S$.
    The soundness property of \Cref{prop:apx-repl-heavy} again ensures that $\distribution(x) \geq 0.01 \nu$ for all $x \in S$.
    The remaining proof follows identically.

    \paragraph{Accuracy on Non-Heavy Hitters.}
    As before, the completeness property of \Cref{prop:apx-repl-heavy} ensures that every $x \not\in S$ satisfies $\distribution(x) \leq \nu$.
    The remaining proof follows identically.

    We conclude by bounding the sample complexity.
    Combining \Cref{prop:apx-repl-heavy}, \Cref{prop:apx-repl-hh-label}, and our setting of $T$, we obtain
    \begin{align*}
        \bigtO{\frac{m(\alpha,\gamma^2 \beta)}{\gamma^2} + \frac{\log(1/\beta)}{\rho^2 \gamma^3} + \frac{\log^3(1/(\rho \beta))}{\alpha^{4} \gamma^2} + \frac{\log^3(1/(\rho \beta))}{\alpha^{2} \rho^{2}  \gamma^2} + \frac{\log^{2}(1/(\rho \beta \gamma)}{\alpha^{4}}} \text{.}
    \end{align*}
    Simplifying, we bound the sample complexity as
    \begin{equation*}
        m(\alpha, \beta, \rho, \gamma) = \bigtO{\frac{m(\alpha, \gamma^2 \beta)}{\gamma^2} + \frac{\log(1/\beta)}{\rho^2 \gamma^3} + \frac{\log^3(1/(\rho \beta \gamma))}{\alpha^{4} \gamma^2} + \frac{\log^{3}(1/(\rho \beta))}{\alpha^{2} \rho^{2}  \gamma^2}} \text{.}
    \end{equation*}
    This concludes the proof of \Cref{thm:apx-repl-chernoff}.
\end{proof}

\subsection{Lower Bounds for Approximate Replicability}\label{sec:apx-lower}

In this section, we argue approximate replicable learning of VC classes can be reduced to the bias estimation problem, a useful distribution estimation task used in a number of prior replicability lower bounds. 
As in replicable prediction, our goal is to ``plant" one instance of bias estimation at a single randomized point. However, we now need to ensure the instance remains indistinguishable for \emph{every} input distribution to the bias estimation problem.

\ApproxReplicabilityLB*

As discussed in the technical overview, we will take advantage of the fact that replicable bias estimation is still hard in the average-case setting. 
In particular, even when the adversary selects $\Rad(p)$ with $p \in [- \alpha, \alpha]$ uniformly at random (and the algorithm knows this distribution), the sample complexity lower bound $\bigOm{\frac{1}{\rho^2 \alpha^2}}$ still holds.
We will use this to obtain a lower bound on approximately replicable PAC learning.

Before proceeding to the proof, we define some relevant terms.
Let $\set{x_{1}, \dotsc, x_{d}}$ denote a set of $d$ shattered elements.
Given any meta-distribution $\calM$ over Rademacher distributions (i.e. a distribution over distributions over $\set{\pm 1}$), define $\calM^{\otimes d}$ to be the meta-distribution that samples distributions as follows:

\begin{enumerate}
    \item Sample $d$ distributions $\distribution_{1}, \dotsc, \distribution_{d} \sim \calM$ independently.
    \item Define $\distribution^{\otimes d}$ to be the following distribution:
    \begin{enumerate}
        \item Sample $i \in [d]$ uniformly.
        \item Sample $y \sim \distribution_{i}$ and return $(x_{i}, y)$.
    \end{enumerate}
\end{enumerate}

For the rest of the proof, the reader can assume $\calM$ is the meta-distribution given by sampling $p \in [-\alpha, \alpha]$ uniformly and providing samples from $\Rad(p)$, although our reduction will hold for any meta-distribution $\calM$.
We will prove lower bounds for algorithms that are correct and replicable against a meta-distribution. Let's first be careful in defining what this means.

\begin{definition}[Distributional Accuracy]
    \label{def:dist-acc}
    Let $\calM$ consist of a distribution over input labeled distributions.
    An algorithm $\innerAlg$ is $(\alpha, \beta)$-accurate with respect to $\calM$ if
    \begin{equation*}
        \Pr_{\distribution \sim \calM, S \sim \distribution^{m}} \left( \errD(\innerAlg(S)) > \alpha \right) < \beta \text{.}
    \end{equation*}
\end{definition}

While the above notion is defined for PAC learners, they naturally extend to distributions over $\set{\pm 1}$ as follows.

\begin{definition}[Distributional Accuracy]
    \label{def:dist-acc-bias}
    Let $\calM$ consist of a distribution over distributions over $\set{\pm 1}$.
    An algorithm $\innerAlg: \set{\pm 1}^{m} \rightarrow \set{\pm 1}$ is $(\alpha, \beta)$-accurate with respect to $\calM$ if
    \begin{equation*}
        \Pr_{\distribution \sim \calM, S \sim \distribution^{m}} \left( (v - \sign(p)) p > 2 \alpha \right) < \beta \text{.}
    \end{equation*}
    where $\distribution \sim \Rad(p)$ and $v = \innerAlg(S)$ is the output of $\innerAlg$.
\end{definition}

Next, we define distributional replicability, and for PAC learners, distributional approximate replicability.

\begin{definition}[Distributional (Approximate) Replicability]
    \label{def:dist-repl}
    Let $\calM$ consist of a distribution over input labeled distributions.
    An algorithm $\innerAlg$ is $\rho$-replicable with respect to $\calM$ if
    \begin{equation*}
        \Pr_{\distribution \sim \calM, S_1, S_2 \sim \distribution^{m}} \left( \innerAlg(S_1) \neq \innerAlg(S_2) \right) < \rho \text{.}
    \end{equation*}
    An algorithm is $(\rho, \gamma)$-approximately replicable with respect to $\calM$ if
     \begin{equation*}
        \Pr_{\distribution \sim \calM, S_1, S_2 \sim \distribution^{m}} \left( \distD(\innerAlg(S_1), \innerAlg(S_2)) > \gamma \right) < \rho \text{.}
    \end{equation*}
\end{definition}


Our goal is now to relate distributional replicability and accuracy of algorithms to replicability and accuracy via the MiniMax theorem.
Consider the following games, defined $\game_{m, \gamma}, \game_{m, \alpha}, \game_{m, \alpha, \gamma}$.
\begin{enumerate}
    \item The algorithm player (randomly) chooses an algorithm: $\innerAlg: (\domain \times \set{\pm 1})^{m} \rightarrow \set{\domain \rightarrow \set{\pm 1}}$.
    \item The adversary player (randomly) chooses a distribution $\distribution$ over $\domain \times \set{\pm 1}$.
    \item Dataset $T, S_1, S_2$ of size $m_{u}, m_{s}, m_{s}$ are drawn \iid from $\distribution_{\domain}$.
    \item The algorithm player wins $\game_{m, \gamma}$ if $\distD(\innerAlg(T, S_1), \innerAlg(T, S_2)) \leq \gamma$, $\game_{m, \alpha}$ if $\errD(\innerAlg(T, S_1)) \leq \alpha$, and $\game_{m, \alpha, \gamma}$ if both occur.
    Otherwise, in each game, the adversary player wins.
    The value of the game, denoted $\game_{m, \alpha, \gamma}(\innerAlg, \distribution)$, is $1$ when the algorithm player wins and $0$ when the adversary player wins.
\end{enumerate}
When the sample complexity of the algorithm $m$ and parameters $\alpha, \gamma$ are clear, we simply write $\game$.
Note that any randomized algorithm can be viewed as a randomized strategy for the algorithm player while a meta-distribution can be viewed as a randomized strategy for the adversary player.
We are now ready to prove \Cref{thm:approx-repl-d-lb}.

\begin{proof}[Proof of \Cref{thm:approx-repl-d-lb}]
    Our overall proof will follow two steps: (1) from an approximately replicable learner for $\hypotheses$, we obtain a distributionally approximately replicable learner for $\hypotheses$, and (2) from the distributionally approximately replicable learner we obtain a distributionally replicable algorithm for bias estimation, which requires $\bigOm{\frac{1}{\rho^2 \alpha^2}}$ samples.
    We begin with step (1).

    Suppose we have an $(\rho, \gamma)$-approximately replicable $(\alpha, \beta)$-learner for $\hypotheses$ with sample complexity $m$.
    Then, from the easy direction of the MiniMax Theorem, we have
    \begin{equation*}
        1 - (\rho + \beta) \leq \max_{\innerAlg} \min_{\distribution} \E_{\innerAlg}[\game(\innerAlg, \distribution)] \leq \min_{\calM} \max_{A_{0}} \E_{\distribution \sim \calM}[\game(\innerAlg_{0}, \distribution)] \leq \min_{\calM^{\otimes d}} \max_{\innerAlg_{0}} \E_{\distribution \sim \calM}[\game(\innerAlg_{0}, \distribution)] \text{.}
    \end{equation*}
    Note that the final inequality follows since restricting the choice of the adversary to meta-distributions of the form $\calM^{\otimes d}$ can only help the algorithm player.
    Thus, we claim that for any meta-distribution $\calM^{\otimes d}$, we have a deterministic algorithm $\innerAlg_{0}$ that is $(\alpha, \beta)$-accurate and $(\rho, \gamma)$-approximately replicable with respect to $\calM^{\otimes d}$.
    This follows as 
    \begin{align*}
        1 - (\rho + \beta) &\leq \min_{\calM^{\otimes d}} \max_{\innerAlg_{0}} \E_{\distribution \sim \calM^{\otimes d}} \left[ \game(\innerAlg_{0}, \distribution) \right]\\
        &= \min_{\calM^{\otimes d}} \max_{\innerAlg_{0}} \\
        &\quad \Pr_{\distribution \sim \calM^{\otimes d}, S_1, S_2 \sim \distribution^{m}} \left( \left( \max_{i} \errD(\innerAlg_{0}(T, S_{i})) < \alpha \right) \wedge \left( \distD(\innerAlg_{0}(T, S_1), \innerAlg_{0}(T, S_2)) < \gamma \right) \right) \text{.}
    \end{align*}
    In the above, we take the maximum over strategies for the algorithm player, denoted $\innerAlg_{0}$, which are is the set of all deterministic algorithms.
    This completes step (1).
    We now proceed with step (2) and argue that from $\innerAlg_0$
    we can obtain an algorithm that is accurate and replicable with respect to $\calM$.
    
    \begin{lemma}
        \label{lem:dim-reduction-semi-replicable}
        Let $\calM$ be any meta-distribution over Rademacher distributions.
        Suppose $\innerAlg_{0}$ is a $(\rho, \gamma)$-approximately replicable $(\alpha, \beta)$-accurate learner with respect to $\calM^{\otimes d}$ with sample complexity $m$.
        Then, there is a $(2\rho + \gamma)$-replicable $(100 \alpha, 0.1)$-accurate algorithm $\innerAlg_{1}$ with respect to $\calM$ with sample complexity $\bigO{\frac{m}{d}\log(1/\rho)}$.
    \end{lemma}

    In the above lemma, we assume that $\calM$ is explicitly known. We reiterate that for replicable bias estimation, the hard meta-distribution $\calM$ is explicitly known, i.e. sample from $\Rad(p)$ where $p \in [-\alpha, \alpha]$ is chosen uniformly.

    
    \begin{proof}[Proof of \Cref{lem:dim-reduction-semi-replicable}]
        We describe our algorithm in \Cref{alg:apx-repl-dim-reduction}.
    
        \IncMargin{1em}
        \begin{algorithm}
        
        \SetKwInOut{Input}{Input}\SetKwInOut{Output}{Output}\SetKwInOut{Parameters}{Parameters}
        \Input{Approximately replicable PAC learner $\innerAlg_{0}$. Sample access to $\calM$ and $\distribution$.}
        \Parameters{$\rho$ replicability, $\alpha$ accuracy, $\beta$ error probability.}
        \Output{$\rho$-replicable $(\alpha, \beta)$-accurate algorithm with respect to $\calM$.}
    
        Sample $r \in [d]$ uniformly.

        \For{$i \in [d] \setminus r$}{
            Sample $\distribution_{i} \sim \calM$ independently.
        }

        Initiate $S$ as an empty multi-set and counter $c \gets 0$.

        \For{$j \in [m]$}{
            Sample $x_{j} \sim \domain$ uniformly.

            \If{$x_{j} = x_{r}$}{
                Sample $y_{j} \sim \distribution$ and add $(x_{j}, y_{j})$ to $S$. 

                Increment $c \gets c + 1$.

                \If{$c \geq 10\frac{m}{d}\log(1/\rho)$}{
                    \Return $h: x \mapsto 1$.
                }
            }\Else{
                Sample $y_{j} \sim \distribution_{i}$ where $x_{j} = x_{i}$ and add $(x_{j}, y_{j})$ to $S$.
            }
        }

        \Return $h(x_{r})$ where $h \gets \innerAlg_{0}(S)$.
        
        \caption{$\apxReplDimRed(\innerAlg_{0}, \calM)$}
        \label{alg:apx-repl-dim-reduction}
        
        \end{algorithm}
        \DecMargin{1em}

        The sample complexity of our algorithm is immediate.

        By construction, since $\distribution \sim \calM$, the dataset $S$ consists of $m$ samples drawn from a distribution sampled from the meta-distribution $\calM^{\otimes d}$.
        Since $\innerAlg_{0}$ is distributionally approximately replicable, with probability at least $1 - \rho$, $\innerAlg_{0}$ outputs $\gamma$-close hypotheses (i.e. hypotheses that disagree on at most a $\gamma$-fraction of $\set{x_{1}, \dotsc, x_{d}}$).
        Since $\innerAlg_{0}$ is distributionally accurate, with probability at least $1 - \beta$, $\innerAlg_{0}$ outputs $\alpha$-accurate hypotheses.
        We condition on these events.
        In addition, we condition on the event that the sample limit is not reached.
        Let $m_{r}$ denote the number of samples $x_{r}$ drawn.
        Since the expectation $\E[m_{r}] = \frac{m}{d}$, a standard Chernoff bound yields that $m_{r} > 10\frac{m}{d} \log(1/\rho)$ with probability at most $\rho$.

        Conditioned on the above, we argue our algorithm is accurate and replicable with respect to $\calM$.
        We begin with accuracy.
        Let us analyze the error of the output hypothesis $h \gets \innerAlg(S)$.
        Let $\distribution'$ be the distribution over labeled samples $(x, y)$ where the marginal is given by the uniform distribution on $\set{x_{1}, \dotsc, x_{d}}$, each domain element $x_{i}$ is labeled according to $\distribution_{i}$, and $x_{r}$ is labeled according to $\distribution$, so that $S$ consists of $m$ samples from $\distribution'$.
        For all $i$, let $p_{i} = \E_{\distribution'}[y|x = x_{i}] = \E[\distribution_{i}]$ be the expected label of $x_{i}$ under $\distribution'$, i.e. the mean of $\distribution_{i}$.
        Let $h^*$ be the optimal hypothesis for $\distribution'$.
        Then, if $h$ is $\alpha$-accurate,
        \begin{align*}
            \alpha &\geq \err_{\distribution'}(h) - \err_{\distribution'}(h^*) \\
            &= \frac{1}{d} \sum_{i = 1}^{d} \ind[h(x_{i}) \neq h^*(x_{i})] |p_{i}| \text{.}
        \end{align*}
        Define $\err_{i} := \ind[h(x_{i}) \neq h^*(x_{i})] |p_{i}|$. 
        Then, for any fixed $\distribution'$, we note that $r \sim [d]$ is still uniformly distributed, as $\distribution_{i}$ for $i \neq r$ and $\distribution$ are all sampled \iid from $\calM$.
        Since $\innerAlg_{0}$ only observes samples $S \sim \distribution'$, we note that $\err_{i}$ are identically distributed. 
        In particular, conditioned on $\frac{1}{d} \sum_{i = 1}^{d} \err_{i} \leq \alpha$, we have
        \begin{align*}
            \E_{\distribution \sim \calM, \distribution' \sim \calM^{\otimes d}, S \sim \distribution'} \left[ \err_{r} \right] &\leq \alpha \text{.}
        \end{align*}
        By Markov's Inequality,
        \begin{align*}
            \Pr_{\distribution \sim \calM, \distribution' \sim \calM^{\otimes d}, r \sim [d]} \left( \err_{r} > 100 \alpha \right) < 0.01 \text{.}
        \end{align*}
        Union bounding with the above events, we obtain a $(100 \alpha, 0.1)$-accurate algorithm with respect to $\calM$.

        Finally, we argue replicability.
        Consider two outputs of the algorithm $h_{1}, h_{2}$.
        Conditioned on the success of $\innerAlg$, $h_{1}, h_{2}$ disagree in at most $\gamma d$ domain elements $\set{x_{i}}$.
        Let $E = \set{i \given h_{1}(x_{i}) \neq h_{2}(x_{i})}$ be the set of domain elements where $h_{1}, h_{2}$ disagree.
        Again, since for any fixed $\distribution'$, $r$ is uniformly distributed in $[d]$, the probability that $r \in E$ is at most $\gamma$.
        Thus, we conclude that our algorithm is $(2 \rho + \gamma)$-replicable with respect to $\calM$ by a union bound (since $\innerAlg$ outputs $\gamma$-close hypotheses with probability $\rho$ and we do not exceed the sample bound with probability $\rho$).
    \end{proof}

    It remains to prove a lower bound for algorithms that are distributionally replicable and accurate with respect to some meta-distribution $\calM$.
    Towards this, we fix $\calM$ to be the specific hard meta-distribution mentioned above.
    For any fixed accuracy threshold $\alpha$ and replicability parameter $\rho$, define $\calM_{\alpha}$ to be the meta-distribution that uniformly samples a mean $p$ from $[-\alpha, \alpha]$ and produces samples from $\Rad(p)$.
    Then, $\calM_{\alpha}^{\otimes d}$ is the meta-distribution defined above by picking label distributions from $\calM_{\alpha}$ independently for each domain element.
    From \cite{ImpLPS22, hopkins2024replicability}, it is known that any algorithm that is $(\alpha, 0.1)$-accurate and $\rho$-replicable with respect to $\calM_{\alpha}$ requires $\bigOm{\frac{1}{\alpha^2 \rho^2}}$ samples.
    In particular, we have
    \begin{equation*}
        \frac{m}{d} \log(1/\rho) = \bigOm{\frac{1}{\alpha^2 (\rho + \gamma)^2}}
    \end{equation*}
    which yields the desired lower bound.
\end{proof}

\subsection{Shared Randomness}

We have seen that relaxing the definition of replicability to approximate replicability removes the necessity of shared randomness for basic tasks such as mean estimation, but our more involved methods for PAC learning rely heavily on a shared random string. Moreover, our sample lower bound from \Cref{thm:approx-repl-d-lb} does not show this is necessary, as the reduction itself uses shared randomness.
In this section, we give a direct argument showing shared randomness is provably necessary for approximately replicable agnostic PAC learning.

To obtain a lower bound against deterministic algorithms, we instead give a \emph{deterministic reduction} from sign-one-way marginals. 


\begin{definition}[Sign-One-Way Marginals]
    \label{def:sign-one-way-marginals}
    Let $\distribution_{p}$ be a product of $d$ Rademacher distributions with expectations $p = (p_1, \dotsc, p_{d})$.
    A vector $v \in \set{\pm 1}^{d}$ is an $\alpha$-accurate solution to the sign-one-way marginals problem for $\distribution_{p}$ if $\frac{1}{d} \sum_{i}^{d} v_i p_i \geq \frac{1}{d} \sum_{i = 1}^{d} |p_i| - \alpha$.

    An algorithm $(\alpha, \beta)$-accurately solves the sign-one-way marginals problem with sample complexity $m$ if given any product of $d$ Rademacher distributions $\distribution$ and $m$ \iid samples from $\distribution$, with probability at least $1 - \beta$ the algorithm outputs an $\alpha$-accurate solution to the sign-one-way marginals problem for $\distribution$.
\end{definition}

In order to make this reduction useful, we give the first impossibility result for deterministic approximately replicable algorithms.

\begin{definition}
    \label{def:apx-repl-sign-one-way}
    An algorithm $\innerAlg$ with $m$ samples solving the one-way marginals problem (\Cref{def:sign-one-way-marginals}) is $(\rho, \gamma)$-approximately replicable if for every distribution $\distribution$, 
    \begin{equation*}
        \Pr_{S_1, S_2 \sim \distribution^{m}, r}\left( \norm{\innerAlg(S_1; r) - \innerAlg(S_2; r)}_{0} > \gamma d \right) < \rho \text{.}
    \end{equation*}
\end{definition}

First, we show that no deterministic approximately replicable algorithm can solve the sign-one-way marginals problem.
In fact, we argue this even for the case $d = 2$.

\begin{theorem}
    \label{thm:shared-randomness-lb}
    There is no deterministic algorithm that $(0.01, 0.5)$-approximately replicable and $(0.1, 0.01)$-accurately solves the sign-one-way marginals problem.
\end{theorem}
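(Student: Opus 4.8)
The plan is to prove the $d = 2$ case exactly as sketched in the technical overview, via a Poincar\'e--Miranda covering argument on the square of Rademacher means. Fix, for contradiction, a deterministic $\innerAlg$ with sample complexity $m$ that is $(0.01, 0.5)$-approximately replicable and $(0.1, 0.01)$-accurate for sign-one-way marginals with $d = 2$. For each $y \in \set{\pm 1}^{2}$ set $B_{y} := \set{p \in [-1,1]^{2} \given \Pr_{S \sim \Rad(p)^{m}}(\innerAlg(S) = y) \geq 0.1}$. First I would record two structural facts. (i) Each $B_{y}$ is closed: the function $p \mapsto \Pr_{S \sim \Rad(p)^{m}}(\innerAlg(S) = y)$ is a polynomial in $p$, being the finite sum over datasets $s$ with $\innerAlg(s) = y$ of the product weight $\prod_{j,k}\tfrac{1 + p_{k}(s_{j})_{k}}{2}$, hence $B_{y}$ is the preimage of $[0.1, 1]$ under a continuous map. (ii) The four sets $B_{y}$ cover $[-1,1]^{2}$, since at every $p$ one of the four possible outputs is produced with probability $\geq 1/4 > 0.1$. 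Next I would extract the boundary behavior from correctness: on the left face $p_{1} = -1$ we have $|p_{1}| = 1$, so $(0.1, 0.01)$-accuracy forces the first output coordinate to equal $\sign(p_{1}) = -1$ with probability $\geq 0.99$, hence one of $\Pr(\innerAlg = (-1,-1)), \Pr(\innerAlg = (-1,1))$ exceeds $0.495 > 0.1$; thus the whole left face lies in $B_{\leftT} := B_{(-1,-1)} \cup B_{(-1,1)}$, and symmetrically the right/top/bottom faces lie in $B_{\rightT}, B_{\up}, B_{\down}$ (each a union of two $B_{y}$, hence closed and nonempty).

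Then I would apply the Poincar\'e--Miranda theorem to the continuous map $F : [-1,1]^{2} \to \R^{2}$, $F(p) = \left( \dist(p, B_{\leftT}) - \dist(p, B_{\rightT}),\ \dist(p, B_{\up}) - \dist(p, B_{\down}) \right)$ (each coordinate is a difference of $1$-Lipschitz functions, since the four sets are nonempty). On the left face the first coordinate equals $-\dist(p, B_{\rightT}) \leq 0$ and on the right face it equals $\dist(p, B_{\leftT}) \geq 0$, and likewise the second coordinate changes sign between the top and bottom faces; hence there is $p^{*}$ with $F(p^{*}) = (0,0)$. By the covering property $p^{*}$ lies in some corner set, and by symmetry I may assume $p^{*} \in B_{(1,1)}$ (the other three cases follow after swapping left with right and/or up with down). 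Then $\dist(p^{*}, B_{\rightT}) = \dist(p^{*}, B_{\up}) = 0$, so $F(p^{*}) = 0$ gives $\dist(p^{*}, B_{\leftT}) = \dist(p^{*}, B_{\down}) = 0$, and closedness yields $p^{*} \in B_{\leftT} \cap B_{\down} = B_{(-1,-1)} \cup \left( B_{(-1,1)} \cap B_{(1,-1)} \right)$. In the first case $p^{*} \in B_{(1,1)} \cap B_{(-1,-1)}$, in the second $p^{*} \in B_{(-1,1)} \cap B_{(1,-1)}$; either way $p^{*}$ lies in $B_{x} \cap B_{y}$ for a pair with $\norm{x - y}_{0} = 2$. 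To finish, run $\innerAlg$ on two independent datasets drawn from $\Rad(p^{*})^{m}$: the two outputs are independent and each equals $x$, respectively $y$, with probability $\geq 0.1$, so with probability $\geq 2 \cdot 0.1 \cdot 0.1 = 0.02 > 0.01$ the two runs output $x$ and $y$ in some order, whence $\norm{\innerAlg(S_{1}) - \innerAlg(S_{2})}_{0} = 2 > 1 = 0.5 \cdot 2$, contradicting $(0.01, 0.5)$-approximate replicability.

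The bookkeeping (correctness on the faces, polynomial-hence-continuous probabilities, the pigeonhole for the covering) is routine; the part that needs care is the topological core --- checking the sign conditions of $F$ on the four faces, confirming each of $B_{\leftT}, B_{\rightT}, B_{\up}, B_{\down}$ is nonempty so that the distance to it is a genuine finite Lipschitz function, and correctly reading off a diagonal intersection from $F(p^{*}) = 0$ using closedness and the covering property. I would also note that the $\sign(0)$ ambiguity never arises, since on every face the relevant coordinate of $p$ has absolute value $1$; and that this deterministic impossibility is precisely the engine behind the shared-randomness lower bound for approximately replicable PAC learning, through a deterministic reduction from sign-one-way marginals.
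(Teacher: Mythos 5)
Your proposal is correct and follows essentially the same route as the paper's proof: the same canonical-output sets $B_{y}$, the same correctness argument pinning the four faces of $[-1,1]^{2}$ into $B_{\leftT}, B_{\rightT}, B_{\up}, B_{\down}$, the same Poincar\'e--Miranda map $F$, and the same diagonal-intersection contradiction with $(0.01,0.5)$-approximate replicability. The only differences are cosmetic (you take WLOG $p^{*} \in B_{(1,1)}$ where the paper takes WLOG $(-1,-1)$ canonical and $p^{*} \notin B_{(1,1)}$, and one coordinate of $F$ needs its sign flipped to literally match the stated Poincar\'e--Miranda hypotheses, an issue shared with the paper and harmless since the zero set is unchanged).
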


\begin{proof}
    Suppose such an algorithm $\innerAlg$ exists.
    Note that $\innerAlg$ has range $\range = \set{(-1, -1), (-1, 1), (1, -1), (1, 1)}$.
    Consider a distribution $\distribution$ with mean $p = (p_1, p_2)$ which consists of a product of two Rademacher distributions.
    For all $p$, define $S_{p} := \set{y \in \range \given \Pr_{S}(\innerAlg(S) = y) \geq 0.1}$ to be the set of canonical outcomes of $\innerAlg$, i.e., $\innerAlg$ outputs $y$ with probability at least $0.1$.
    Since $|\range| \leq 4$, $S_{p}$ is non-empty for all $\distribution$.
    Now, suppose $S_{p}$ has two distinct elements $x, y$. 
    We claim $\norm{x - y}_{0} \leq 1$.
    Otherwise,
    \begin{equation*}
        \Pr_{T_1, T_2} \left( \norm{\innerAlg(T_1) - \innerAlg(T_2)}_{0} > 1\right) \geq 2\Pr(\innerAlg(T) = x) \Pr(\innerAlg(T) = y) \geq 0.02 \text{,}
    \end{equation*}
    violating the approximate replicability constraint.
    Then, we have $|S_{p}| \leq 2$ for all $p$ and the two elements can disagree in at most one coordinate.
    We claim that there is some $p^*$ for which $S_{p^*}$ contains $x, y$ where $\norm{x - y}_{0} = 2$, which violates approximate replicability.

    Towards this, for each $y \in \range$, let $P_{y}: [-1, 1]^{2} \rightarrow [0, 1]$ denote the function $\Pr_{T \sim \distribution_{p}}(\innerAlg(T) = y)$.
    Note that 
    \begin{equation*}
        P_{y}(p) = \sum_{T \given \innerAlg(T) = y} \Pr_{T \sim \distribution_{p}} (T)
    \end{equation*}
    is continuous since $\Pr(T)$ is a product of $p_{1}, (1 - p_{1}), p_{2}, (1 - p_{2})$ and therefore continuous in $p$.
    In particular, if we denote $B_{y} := \set{p \given P_{y}(p) \geq 0.1}$,
    we have that $B_{y}$ is closed for all $y$. 
    Define the sets,
    \begin{align*}
        B_{\leftT} &= B_{(-1, -1)} \cup B_{(-1, 1)} \\
        B_{\rightT} &= B_{(1, -1)} \cup B_{(1, 1)} \\
        B_{\up} &= B_{(-1, 1)} \cup B_{(1, 1)} \\
        B_{\down} &= B_{(-1, -1)} \cup B_{(1, -1)} \text{.}
    \end{align*}
    Then, define the continuous map 
    \begin{align*}
        F: p \mapsto (\dist(p, B_{\leftT}) - \dist(p, B_{\rightT}), \dist(p, B_{\up}) - \dist(p, B_{\down})) \text{.}
    \end{align*}
    We would like to apply the Poincare-Miranda Theorem.
    
    \begin{theorem}
        \label{thm:poincare-miranda}
        Consider $n$ continuous, real-value functions $f_{1}, \dotsc, f_{n}: [-1, 1]^{n} \rightarrow \R$.
        Assume for each $x_{i}$, $f_{i}$ is non-positive when $x_{i} = -1$ and non-negative when $x_{i} = +1$.
        Then, there is a point $p \in [-1, 1]^{n}$ when all $f_{i}$ are equal to $0$.
    \end{theorem}

    Note that when $p_{1} = -1$, then since $\innerAlg$ is $(\alpha, \beta)$-accurate, with probability $1 - \beta$, we have
    \begin{equation*}
        \alpha \geq \frac{1}{2} (1 + v_{1}) \text{.}
    \end{equation*} 
    Rearranging, $v_{1} \leq 2 \alpha - 1 < 0$ i.e. $v_{1} = -1$ so that $\innerAlg(T) \in \set{(-1, -1), (-1, 1)}$ with probability at least $1 - \beta \geq 0.99$ so that $F(p)_{1} < 0$.
    Similarly, when $p_{1} > 0$, we have $\innerAlg(T) \in \set{(1, -1), (1, 1)}$ with probability at least $1 - \beta \geq 0.99$ so that $F(p)_{1} > 0$.
    A similar argument holds for the second coordinate, so we may apply the Poincare-Miranda theorem and obtain a point $p^*$ where $F(p^*) = 0$.

    Now, since $S_{p^*}$ is not empty, assume without loss of generality that $(-1, -1) \in S_{p^*}$. 
    Then, since $F(p^*) = 0$ we have that $\dist(p^*, B_{\rightT}) = \dist(p^*, B_{\up}) = 0$.
    Assume without loss of generality that $p^* \not\in B_{(1, 1)}$, otherwise we are done.
    Then, since $B_{(1, 1)}$ is closed, we have $\dist(p^*, B_{(1, -1)}) = \dist(p^*, B_{(1, -1)}) = 0$ and therefore $p^* \in B_{(1, -1)} \cap B_{(-1, 1)}$.
    In particular, $(-1, 1), (1, -1) \in S_{p^*}$, as desired.
\end{proof}

Using this lower bound, we show that any approximately replicable PAC learner must also use shared randomness.

\SharedRandomnessApxRepl*

\begin{proof}
    Consider a set of two shattered elements $\set{x_{1}, x_{2}}$ and the uniform distribution over $\domain$.
    We reduce from $2$-dimensional sign-one-way marginals.
    Given $p \in [0, 1]^2$, the label of $x_{i}$ is $1$ with probability $p_{i}$ and $-1$ otherwise.
    Suppose there is a deterministic approximately replicable learner, denoted $\innerAlg$ and we have sample access to a $2$-coin problem instance.
    We construct samples for $\innerAlg$ by uniformly sampling $x$ from $\domain$ and labeling using samples from the corresponding coin.
    Given the output $h \gets \innerAlg$, we return $v \gets (h(x_{1}), h(x_{2}))$.
    The excess error of $h \gets \innerAlg$ is exactly $\frac{1}{2} \sum_{i} 2 |p_{i}| \ind[h(x_{i}) \neq h^*(x_{i})| = \sum_{i} |p_{i}| \ind[h(x_{i}) \neq h^*(x_{i})]$ which is (up to a factor of 2) the error of the solution $v$.
    Furthermore, whenever $h$ is approximately replicable, either $h(x_{1})$ or $h(x_{2})$ is consistent, so that $v$ is approximately replicable as well.
    Thus, we obtain a deterministic $(0.01, 0.5)$-approximately replicable $(0.1, 0.01)$-accurate algorithm for the sign-one-way marginals problem.
    Furthermore, our reduction is deterministic, so $\innerAlg$ must be randomized from \Cref{thm:shared-randomness-lb}.
\end{proof}

\section{Semi-Replicable Learning}

In this section, we give our algorithms and lower bounds for semi-replicable learning.

\begin{definition}[Replicable Semi-Supervised Learning]
    Let $\hypotheses$ be a concept class.
    A (randomized) algorithm $\innerAlg$ is a $\rho$-semi-replicable $(\alpha, \beta)$-learner for $\hypotheses$ with shared unlabeled sample complexity $m_u$ and labeled sample complexity $m_s$ if for any distribution $\distribution$, $\innerAlg$ satisfies the following:
    \begin{enumerate}
        \item Let $S \sim \distribution^{m_s}$, $U \sim \distribution_{\domain}^{m_u}$ and $r$ denote the internal randomness of $\innerAlg$.
        Then
        \begin{equation*}
        \Pr_{S, U, r}\left( \errD(\innerAlg(S; U, r)) \geq \opt_{\hypotheses}(\distribution) + \alpha \right) < \beta.
        \end{equation*}
        \item Let $S_1, S_2 \sim \distribution^{m_s}$ be drawn independently and $U \sim \distribution_{\domain}^{m_u}$. 
        Let $r$ denote the internal randomness of $\innerAlg$. 
        Then
        \begin{equation*}
            \Pr_{S_1, S_2, U, r} \left( \innerAlg(S_1; U, r) \neq \innerAlg(S_2; U, r) \right) < \rho.
        \end{equation*}
    \end{enumerate}
    For convenience, we say $\innerAlg$ has sample complexity $(m_{u}, m_{s})$.
\end{definition}


\begin{theorem}[Formal \Cref{thm:semi-supervised-repl-ub}]
    \label{thm:semi-supervised-repl-ub-formal}
    There is a $\rho$-semi-replicable $(\alpha, \beta)$-learner for with shared (unlabeled) sample complexity $\bigO{\frac{d + \log(1/\beta)}{\alpha}}$ and labeled sample complexity $\bigO{\frac{d^2}{\alpha^2 \rho^2} \left(\log^2 \frac{d \log(1/\beta)}{\rho \beta \alpha} \right) \log^3 \frac{1}{\rho}}$.
\end{theorem}

We will need the following replicable learner for finite hypothesis classes of \cite{bun2023stability}. 

\begin{theorem}[Theorem 5.13 of \cite{bun2023stability}]
    \label{thm:finite-class-replicable-learner}
    Let $\hypotheses$ be a finite concept class.
    There is a $\rho$-replicable $(\alpha, \beta)$-learner for $\hypotheses$ with sample complexity
    \begin{equation*}
        \bigO{\frac{\log^2 |\hypotheses| + \log \frac{1}{\rho \beta}}{\alpha^2 \rho^2} \log^3 \frac{1}{\rho}} .
    \end{equation*}
\end{theorem}

\begin{proof}[Proof of \Cref{thm:semi-supervised-repl-ub}]
    The argument is essentially the same as the standard upper bound for the semi-private model \cite{alon2019limits,hopkins2022realizable}. We first use the shared unlabeled samples to create a small shared subset of hypotheses $\hypotheses_0$ that contains a hypothesis $\alpha/2$-close to optimal with high probability, then learn from this class replicably using \Cref{thm:finite-class-replicable-learner}. 
    
    {\bf Step 1: Constructing an $\alpha$-non-uniform cover with unlabeled samples.}

    Let $m(\alpha, \beta) = \bigO{\frac{d+\log \frac{1}{\beta}}{\alpha}}$ denote the sample complexity of an (improper) realizable learner $\innerAlg$ for hypothesis class $\hypotheses$.
    By taking $m_u = m(\alpha/2, \beta/2)$ unlabeled samples $U$ and constructing the set of hypotheses
    \begin{equation*}
        C(U) = \set{\innerAlg(h(U)) \given h \in \hypotheses}
    \end{equation*}
    obtained by running $\innerAlg$ on all possibly labellings of $U$ under $\hypotheses$, Claim 2.3 of \cite{hopkins2022realizable} shows that with probability at least $1 - \beta / 2$, there exists $h' \in C(U)$ such that $\errD(h') \leq \opt_{\hypotheses}(\distribution) + \alpha / 2$.

    {\bf Step 2: Replicably learning from the $\alpha$-non-uniform cover.}
    
    Consider now the finite hypothesis class $C(U)$.
    Note that since the concept class $\hypotheses$ has VC dimension $d$, there are at most $m_u^{O(d)}$ possible labellings of $U$ (see \Cref{lem:sauer}) so that
    \begin{equation*}
        |C(U)| \leq m_u^{O(d)}.
    \end{equation*}
    Then, we draw sufficiently many samples so that applying \Cref{thm:finite-class-replicable-learner}, we have a $\rho$-replicable $(\alpha/2, \beta/2)$-learner for $C(U)$.
    Thus the supervised sample complexity is
    \begin{equation*}
        m_s = \bigO{\frac{\log^2 |C(U)| + \log \frac{1}{\rho \beta}}{\alpha^2 \rho^2} \log^{3} \frac{1}{\rho}} = \bigO{\frac{d^2 \log^2\frac{d+\log(1/\beta)}{\alpha} + \log \frac{1}{\rho \beta}}{\alpha^2 \rho^2} \log^3 \frac{1}{\rho}}.
    \end{equation*}
    We now show the algorithm has the claimed correctness and replicability properties.

    {\bf Correctness.}
    By a union bound, with probability at least $1 - \beta$ we have that the $(\alpha/2, \beta/2)$-learner of $C(U)$ outputs a hypothesis $h$ such that
    \begin{equation*}
        \errD(h) \leq \opt_{C(U)} + \alpha/2 \leq \opt_{\hypotheses} + \alpha.
    \end{equation*}

    {\bf Replicability.}
    Since the unsupervised samples $U$ and internal randomness of $\innerAlg$ are shared between two runs of the algorithm, both have the finite concept class $C(U)$ after Step 1 of the algorithm. 
    Replicability then follows from \Cref{thm:finite-class-replicable-learner}.
\end{proof}


We now argue that \Cref{thm:semi-supervised-repl-ub} is optimal.

\subsection{Lower Bounds for Shared Sample Complexity}

First, we give a bound on the shared sample complexity.

\SemiReplicableSharedLB*

\paragraph{Lower Bound for Infinite Littlestone Dimension}

As a first step, we show that $\frac{1}{\alpha}$ shared samples are necessary. This is essentially immediate from a corresponding lower bound for semi-privacy and the now standard replicable $\to$ private transformation of \cite{GhaziKM21,bun2023stability}, but we include the details for completeness.

\begin{theorem}
    \label{thm:threshold-semi-supervised-repl-shared-lb}
    Let $\hypotheses$ be any class with infinite Littlestone Dimension (for example, thresholds over $\R$).
    Then, any $0.0001$-semi-replicable $(\alpha, 0.0001)$-learner for $\hypotheses$ requires $\bigOm{\frac{1}{\alpha}}$ shared samples.
    The lower bound holds even when the shared samples are labeled.
\end{theorem}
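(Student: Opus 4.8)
The plan is to reduce semi-replicable learning to the better-understood semi-private model and then invoke the public-sample lower bound of \cite{alon2019limits} for classes of infinite Littlestone dimension.

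First I would recall the standard replicable-to-private transformation of \cite{GhaziKM21, bun2023stability}: any $\rho$-replicable algorithm can be converted into an $(\eps,\delta)$-differentially private algorithm with $\eps = O(1)$, $\delta$ a small (in fact polynomially small) constant, and the same sample complexity up to constant factors. I would apply this transformation \emph{conditionally on the shared part of the input}. Fix a shared unlabeled dataset $U$ and treat the shared randomness $r$ as the internal randomness of the learner; then $S \mapsto \innerAlg(S; U, r)$ is an algorithm operating only on the private labeled dataset $S$. Semi-replicability gives $\Pr_{S_1,S_2,U,r}[\innerAlg(S_1;U,r)\neq\innerAlg(S_2;U,r)]<\rho=0.0001$, so by Markov a $(1-\sqrt{\rho})$-fraction of shared datasets $U \sim \distribution_{\domain}^{m_u}$ are ``good'', meaning $S\mapsto \innerAlg(S;U,\cdot)$ is $\sqrt{\rho}$-replicable; applying the transformation to each good $U$ yields an $(\eps,\delta)$-private algorithm on $S$, i.e. the composite learner is $(\eps,\delta)$-semi-private with respect to (the overwhelming majority of) public datasets drawn from the learning distribution. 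Accuracy is inherited directly: the learner remains an $(\alpha, \beta')$-learner for $\beta'$ a small constant after absorbing the $O(\sqrt{\rho})$ loss and the transformation's accuracy overhead.

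Second I would invoke \cite{alon2019limits}, which shows that any $(\eps,\delta)$-semi-private $(\alpha, 0.1)$-learner (with $\eps = O(1)$ and $\delta$ small) for a concept class of infinite Littlestone dimension --- in particular for thresholds over $\R$ --- requires $\Omega(1/\alpha)$ public samples, and whose hard instance is a fixed distribution over (public, private) dataset pairs, so that accuracy/privacy on public datasets sampled from that distribution suffices. Since the public samples of the transformed algorithm are exactly the shared samples $U$ of the original semi-replicable learner (the transformation never modifies them, and the blow-up in private/labeled samples is irrelevant for this bound), this forces $m_u = \bigOm{1/\alpha}$. The argument is insensitive to whether the shared samples carry labels, since the semi-private model of \cite{beimel2013private, alon2019limits} already permits labeled public data, so the bound persists when the shared samples are labeled.

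The main obstacle is purely bookkeeping around quantifiers: semi-replicability is an average-case guarantee over the shared dataset $U$, whereas the semi-private definition demands privacy for \emph{every} public dataset, so one must check that the Markov step loses only a constant and that the \cite{alon2019limits} lower bound is robust to privacy holding merely with high probability over public datasets sampled from its hard distribution (it is, since that bound is proved distributionally). A secondary point to verify is that the replicable-to-private transformation genuinely leaves the shared/public sample count untouched and does not re-randomize the shared randomness in a way that breaks the per-$U$ reduction; both follow from the black-box nature of the transformations in \cite{GhaziKM21, bun2023stability}.
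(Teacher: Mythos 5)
Your high-level route is the same as the paper's: convert the semi-replicable learner into a semi-private one and then invoke the public-sample lower bound of \cite{alon2019limits} for infinite-Littlestone classes (which the paper re-derives from \Cref{lem:public-data-reduction} together with \Cref{thm:private-implies-finite-littlestone}). The problem is in the middle step. As you set it up, you fix a shared dataset $U$, call $U$ ``good'' if $S \mapsto \innerAlg(S;U,\cdot)$ is replicable, and apply the replicable-to-private transformation only to good $U$. The object you obtain is therefore \emph{not} a semi-private learner: privacy is claimed only for a $1-O(\sqrt{\rho})$ fraction of shared datasets drawn from the learning distribution, whereas the semi-private model of \cite{beimel2013private,alon2019limits} demands $(\eps,\delta)$-privacy for \emph{every} public dataset. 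Your patch --- ``the \cite{alon2019limits} bound is proved distributionally, so high-probability privacy suffices'' --- is exactly the crux, and it is asserted rather than proved. The lower-bound machinery there is a reduction (\Cref{lem:public-data-reduction}) that manufactures a fully private learner by feeding the semi-private learner a \emph{specific, adversarially constructed} public dataset (and the downstream $\log^*$ argument of \cite{alon2019private} uses privacy as a worst-case property over databases, over a whole family of hard distributions). To salvage your argument you would have to open up that reduction and verify, for each hard distribution, that the particular public dataset it uses lies in your ``good'' set; nothing in your write-up does this, and distributional privacy is in general strictly weaker than worst-case DP, so the citation cannot be used as a black box.

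The clean resolution is different from the one you reached for, and it is what \Cref{lem:semi-private-reduction} does: in the selection-based replicable-to-private transformation of \cite{GhaziKM21,bun2023stability}, privacy does \emph{not} come from replicability at all --- it comes from running the learner on many disjoint private datasets and applying differentially private selection to the multiset of outputs, so that changing one private sample changes at most one output. That privacy guarantee holds for every shared dataset $U$ and every shared random string, unconditionally; replicability (and your Markov argument over $U$, together with the analogous one for accuracy) is needed only in the \emph{utility} analysis, to ensure a frequent and correct output exists, and utility is allowed to be an average-case statement over $U$. The paper additionally draws $O(\log(1/\beta))$ fresh shared datasets and random strings to amplify the probability that some pair is good (a constant number suffices at your constant error parameters, so the $\Omega(1/\alpha)$ conclusion is unaffected). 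With that restructuring your proof goes through and matches the paper's; as written, the privacy-quantifier step is a genuine gap.
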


We will obtain our lower bound via a reduction to semi-private learning.
We remark that our lower bound holds even when the shared samples are labeled, as is the case in semi-private learning, where lower bounds hold even when the public samples are labeled.
In this setting, an algorithm $\innerAlg$ given $m_{\pub}$ public samples and $m_{\priv}$ private samples is $(\eps, \delta)$-semi-private if it is private with respect to the private samples.
Formally, for any public data set $T \sim \distribution^{m_{\pub}}$ and any two neighboring datasets $S, S' \sim \distribution^{m_{\priv}}$ (see \Cref{def:private}) we have for any subset $Y$ of the outputs,
\begin{equation*}
    \Pr(\innerAlg(T; S) \in Y) \leq e^{\eps} \Pr(\innerAlg(T; S') \in Y) + \delta \text{.}
\end{equation*}
In particular, $\innerAlg(T; \cdot)$ is private for every public dataset $T$.
We say $\innerAlg$ is an $(\alpha, \beta, \eps ,\delta)$-agnostic semi-private learner if it is $(\eps, \delta)$-private and $(\alpha, \beta)$-accurate.
Lower bounds for semi-private learning follow from a data reduction lemma and lower bounds for private learning.

\begin{lemma}[Lemma 4.4 of \cite{alon2019limits}]
    \label{lem:public-data-reduction}
    Let $0 < \alpha \leq 0.01, \eps > 0, \delta > 0$. 
    Suppose there is an $(\alpha, \frac{1}{18} ,\eps, \delta)$-agnostic semi-private learner for a hypothesis class $\hypotheses$ with private sample size $n_{\priv}$ and public sample size $n_{\pub}$. Then, there is a $(100 n_{\pub} \alpha, \frac{1}{16}, \eps, \delta)$-private learner that learns any distribution realizable by $\hypotheses$ with input sample size $\lceil \frac{n_{\priv}}{10 n_{\pub}} \rceil$.
\end{lemma}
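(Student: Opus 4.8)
The plan is to reproduce the public-data reduction of~\cite{alon2019limits}: from a semi-private learner that uses only $n_{\pub}$ public samples, build a \emph{fully} private learner that uses only about $n_{\priv}/n_{\pub}$ samples, at the cost of a factor $\Theta(n_{\pub})$ in the accuracy. (In the intended application this private learner is then confronted with the impossibility of privately PAC learning classes of infinite Littlestone dimension~\cite{alon2019private, bun2023stability}, which is what forces $n_{\pub} = \bigOm{1/\alpha}$ for such classes; the lemma itself is purely the reduction.)

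First I would set up the private learner $M'$. Let $M$ be the hypothesized $(\alpha, \tfrac{1}{18}, \eps, \delta)$-agnostic semi-private learner with public size $n_{\pub}$ and private size $n_{\priv}$, put $n' := \lceil n_{\priv}/(10 n_{\pub}) \rceil$, and let $M'$ receive a sample $S = ((x_1, y_1), \dots, (x_{n'}, y_{n'}))$ from a distribution $\distribution$ realized by some $c^\star \in \hypotheses$. The idea is to hand $M$ a public block $T_{\pub}$ and a private block $T_{\priv}$ that are, from $M$'s point of view, i.i.d.\ from a realizable distribution $\hat{\distribution}$ ``supported on $S$'' --- concretely, the empirical distribution on the $n'$ points of $S$, which is realized by $c^\star$ (almost surely) so that $\opt_{\hypotheses}(\hat{\distribution}) = 0$ --- and to output $h \gets M(T_{\pub}; T_{\priv})$. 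For \emph{accuracy}: since everything fed to $M$ is i.i.d.\ from the realizable $\hat{\distribution}$, the guarantee of $M$ yields, with probability at least $\tfrac{17}{18}$, a hypothesis $h$ with $\err_{\hat{\distribution}}(h) \le \alpha$, i.e.\ $h$ errs on at most an $\alpha$-fraction of the $n'$ points of $S$; and since $M$ is a valid PAC learner, $n_{\pub} + n_{\priv}$ is already at least the PAC sample complexity for $\hypotheses$ at accuracy $\alpha$, so $n' = \Theta(n_{\priv}/n_{\pub})$ is within a $1/n_{\pub}$ factor of it (we may assume $n_{\pub}$ is small, else there is nothing to prove), and a routine relative-deviation VC bound upgrades ``$\alpha$-fit on $n'$ realizable i.i.d.\ points'' to $\errD(h) \le 100\, n_{\pub}\, \alpha$ except with probability $\tfrac{1}{16}$. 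The $n_{\pub}$-fold dilution of $S$ across $M$'s private block is exactly what costs the factor $100 n_{\pub}$.

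The hard part is \emph{privacy}, which I would address last: $M'$ accesses its input $S$ only through $M$'s private block $T_{\priv}$, so it suffices to argue that the map $S \mapsto T_{\priv}$ has bounded sensitivity in the appropriate coupled sense --- editing one point of $S$ should perturb $\hat{\distribution}$ in a single atom, and one must couple the two runs of $M'$ so that $T_{\priv}$ changes in only $O(1)$ coordinates, after which the $(\eps, \delta)$-guarantee of $M$ (possibly through $O(1)$-group privacy) transfers. The tension is that $T_{\priv}$ must simultaneously look like a genuine i.i.d.\ sample (for the accuracy step) and be a low-sensitivity function of $S$ (for privacy); a naive ``replicate each point of $S$ about $10 n_{\pub}$ times'' construction fails, since a one-point edit then moves $\approx 10 n_{\pub}$ coordinates, so one needs the more careful embedding of~\cite{alon2019limits} --- and the factor $10$ in $n' = \lceil n_{\priv}/(10 n_{\pub}) \rceil$ is the slack this bookkeeping consumes. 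I expect this reconciliation of representativeness against sensitivity to be the main obstacle; the accuracy translation and the concluding appeal to the no-private-learner theorem are both standard.
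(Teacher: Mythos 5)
You are proving a lemma the paper itself does not prove (it is quoted verbatim from \cite{alon2019limits}), so I am comparing your sketch against the argument in that reference. Your construction has a fatal privacy gap that no bookkeeping can repair: your $M'$ builds the \emph{public} block $T_{\pub}$ by resampling from the empirical distribution of the sensitive input $S$. A semi-private learner gives no privacy guarantee whatsoever with respect to its public slot --- it may, for instance, output its public sample verbatim --- so once points of $S$ can land in $T_{\pub}$, the output of $M'$ can depend arbitrarily on individual examples of $S$, and no coupling, low-sensitivity, or group-privacy argument applied to $T_{\priv}$ (your "address privacy last" step) can restore $(\eps,\delta)$-privacy. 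The same construction also fails on the private side: resampling $n_{\priv}$ points from $\hat{\distribution}_S$ means a one-point edit of $S$ moves about $n_{\priv}/n' = 10\,n_{\pub}$ private coordinates in expectation, which is exactly the unbounded-sensitivity failure you flag for the "replicate each point" variant. You correctly sense the tension but then defer to "the more careful embedding of \cite{alon2019limits}"; that embedding is the missing idea, and it is different in kind from what you propose.

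The actual reduction makes the public sample \emph{data-independent}: fix one dummy labeled example $(x_0,y_0)$ consistent with the target concept (for thresholds, the minimal point labeled $1$), and run the semi-private learner as if its samples came from the mixture $\tilde{\distribution} = (1-\lambda)\,\delta_{(x_0,y_0)} + \lambda\,\distribution$ with $\lambda = \Theta(1/n_{\pub})$. With probability $(1-\lambda)^{n_{\pub}} \geq 0.9$ all $n_{\pub}$ public draws are the dummy point, so $M'$ can hand over a fixed all-dummy public block; each of the $n_{\priv}$ private coordinates is a fresh real example only with probability $\lambda$, which is why $\lceil n_{\priv}/(10 n_{\pub})\rceil$ real samples suffice and why neighboring inputs differ in exactly one private coordinate (sensitivity $1$, privacy transfers with no group-privacy loss). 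Accuracy then comes not from a VC generalization bound but from the mixture weights: $\err_{\tilde{\distribution}}(h) \geq \lambda\,\errD(h)$, so $\alpha$-accuracy on the (still realizable) $\tilde{\distribution}$ gives $\errD(h) \leq \alpha/\lambda = O(n_{\pub}\alpha)$, and the constants $100$ and $1/16$ absorb the conditioning on the all-dummy public event and the control of how many real draws the private block consumes. By contrast, your accuracy step needs a relative-deviation bound whose $d\log n'/n'$ term does not appear in the lemma's conclusion and need not be dominated by $n_{\pub}\alpha$; your "we may assume $n_{\pub}$ is small, else there is nothing to prove" is an appeal to the intended application, not something the stated reduction permits.
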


\begin{theorem}[Theorem 1 of \cite{alon2019private}]
    \label{thm:private-implies-finite-littlestone}
    Let $\hypotheses$ be a class with Littlestone Dimension $n$.
    Let $\innerAlg$ be a $(0.01, 0.01)$-accurate algorithm for $\hypotheses$ satisfying $(\eps, \delta)$-privacy with $\eps = 0.1$ and $\delta = \bigO{\frac{1}{m^2 \log m}}$.
    Then,
    \begin{equation*}
        m \geq \Omega(\log^* n) \text{.}
    \end{equation*}
    In particular, any class that is privately learnable has finite Littlestone Dimension.
\end{theorem}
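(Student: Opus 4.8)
The plan is to follow the argument of \cite{alon2019private}: I would convert a private learner for $\hypotheses$ into a private learner for the class of threshold functions on a small linearly ordered domain, and then invoke the $\log^*$-type impossibility result of Bun, Nissim, Stemmer, and Vadhan for privately learning thresholds. Two imported facts carry the weight. First, a classical combinatorial lemma (the quantitative ``unstable implies the order property'', used in exactly this form in \cite{alon2019private}): if $\LDim(\hypotheses)\ge n$ then $\hypotheses$ \emph{contains $k$ thresholds} for $k=\lfloor\log_2 n\rfloor$, meaning there are domain points $x_1,\dots,x_k$ and hypotheses $h_1,\dots,h_k\in\hypotheses$ with $h_i(x_j)=1 \iff j\le i$. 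Second, the Bun--Nissim--Stemmer--Vadhan lower bound: any $(\eps,\delta)$-differentially private, $(0.01,0.01)$-accurate (even improper) learner for thresholds over a linearly ordered domain of size $k$ needs $\Omega(\log^* k)$ samples whenever $\eps=O(1)$ and $\delta=O(1/(m^2\log m))$ with $m$ the sample size. I want to stress that the $\delta$ restriction is not a technicality I can dispense with --- with $\delta$ constant the problem becomes trivially solvable --- and it is exactly this restriction that reappears in the hypothesis of the theorem I am proving.

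For Step 1 I would just cite the combinatorial lemma, though I would note the shape of its proof for context: from a complete Littlestone tree of depth $n$ shattered by $\hypotheses$ one first reads a ``half-staircase'' down the all-right branch, and then a pigeonhole/recursion that halves the tree depth $\log_2 n$ times upgrades this to a genuine staircase of length $\log_2 n$.

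Step 2 is the reduction, and this is the part I would actually write out, but it is light. Fix witnesses $x_1,\dots,x_k$ and $h_1,\dots,h_k$ as above and let $\innerAlg$ be the assumed $(0.1,\,O(1/(m^2\log m)))$-private, $(0.01,0.01)$-accurate learner for $\hypotheses$ with sample size $m$. Define $B$ for learning thresholds over $[k]$ by: on input $((j_1,b_1),\dots,(j_m,b_m))\in([k]\times\{\pm1\})^m$, form the relabeled sample $S=((x_{j_1},b_1),\dots,(x_{j_m},b_m))$, run $g\leftarrow\innerAlg(S)$, and output the hypothesis $\hat g:[k]\to\{\pm1\}$, $\hat g(j):=g(x_j)$. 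The map $j\mapsto x_j$ identifies the threshold functions on $[k]$ with the $h_i$'s restricted to $\{x_1,\dots,x_k\}$, so a distribution on $[k]\times\{\pm1\}$ realizable by thresholds pulls back to a distribution realizable by $\hypotheses$ (hence $\opt=0$) with $\err(\hat g)$ on the former equal to $\err(g)$ on the latter; thus $B$ is $(0.01,0.01)$-accurate. Privacy is pure bookkeeping: the sample transformation is coordinatewise so it maps neighbors to neighbors, and $\hat g$ is a deterministic post-processing of $g$, so $B$ inherits $(\eps,\delta)$-privacy from $\innerAlg$ with the same sample size $m$.

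For Step 3 I would apply the Bun--Nissim--Stemmer--Vadhan bound to $B$ --- whose parameters ($\eps=0.1$, $\delta=O(1/(m^2\log m))$, accuracy $(0.01,0.01)$) are precisely in the admissible range --- to get $m=\Omega(\log^* k)=\Omega(\log^*\lfloor\log_2 n\rfloor)=\Omega(\log^* n)$, using that $\log^*$ changes by at most an additive constant under one more logarithm. The final sentence then follows: if $\LDim(\hypotheses)=\infty$ the bound $m=\Omega(\log^* n)$ would hold for every $n$ simultaneously, and since $\log^* n\to\infty$ no fixed finite $m$ can satisfy it, so no such private learner exists. The main obstacle, in an honest accounting, is not the reduction (Step 2 is essentially free) but the two imported ingredients: the combinatorial lemma, and above all the Bun--Nissim--Stemmer--Vadhan lower bound, whose proof is a delicate recursive re-randomization that I would not reprove here; a secondary point of care is to make sure I invoke the \emph{improper}-learning version of that lower bound and keep the $\delta=O(1/(m^2\log m))$ regime throughout, since the whole phenomenon evaporates for larger $\delta$.
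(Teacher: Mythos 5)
This statement is imported by the paper as a black box (it is literally Theorem~1 of \cite{alon2019private}); the paper offers no proof of its own, so there is nothing internal to compare against. Your sketch correctly reconstructs the architecture of the actual proof in that reference: (i) a Littlestone dimension of $n$ yields $k = \lfloor \log_2 n\rfloor$ embedded thresholds via the quantitative unstable-formula/order-property lemma, (ii) a relabeling reduction that preserves privacy (neighbors map to neighbors, post-processing) and realizability, and (iii) a $\Omega(\log^* k)$ lower bound for privately learning thresholds, together with $\log^*(\log n) = \log^* n - 1$. Steps (i) and (ii) are exactly right, including your observation that the $\delta = O(1/(m^2\log m))$ regime is essential.

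The one substantive issue is attribution of step (iii), and it matters because that step is where all the difficulty lives. The Bun--Nissim--Stemmer--Vadhan $\Omega(\log^*)$ lower bound for thresholds applies to \emph{proper} learners (via the interior-point problem); the extension to \emph{improper} learners is not a pre-existing result you can import --- it is the main technical contribution of \cite{alon2019private} itself, proved there with a separate Ramsey-theoretic argument. You flag the proper/improper distinction as ``a secondary point of care,'' but as written your plan cites the improper version as if it were already available, which makes the proof essentially assume the hard part of the theorem it is proving. Since the reduction in your Step~2 is, as you say, essentially free, a proof that treats the improper threshold lower bound as an import has outsourced nearly all of the content. If the intent is merely to document the known proof structure behind a cited theorem, this is fine; as a self-contained proof it has a genuine gap at exactly that point.
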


In particular, whenever $n_{\pub} = o(\alpha^{-1})$, we must have $n_{\priv} = \bigOm{n_{\pub} \log^* n}$.
We show that any semi-replicable algorithm can be converted into a semi-private learner.

\begin{lemma}
    \label{lem:semi-private-reduction}
    Let $\beta, \rho > 0$ be sufficiently small constants.
    Let $\innerAlg$ be a $\rho$-semi-replicable $(\alpha, \beta)$-learner for $\hypotheses$ with sample complexity $(m_{u}, m_{s})$.
    
    Then, there is a $(\alpha, \beta \log(1/\beta), \eps, \delta)$-semi-private learner for $\hypotheses$ using $m_{\pub} = \bigO{m_{u} \log (1/\beta)}$ public samples and $m_{\priv} = \bigO{m_{s} \left( \frac{\log(1/\delta)}{\eps} + \log(1/\beta) \right) \log(1/\beta)}$ private samples.
\end{lemma}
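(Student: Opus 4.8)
The plan is to follow the standard replicable-to-private transformation of \cite{GhaziKM21,bun2023stability}, but applied \emph{conditionally on the shared unlabeled dataset}, so that the shared samples of the semi-replicable learner become the public samples of the semi-private learner. Recall that the key fact behind the replicable $\to$ private transformation is that a $\rho$-replicable algorithm, when composed with a (subsampled) secret-permutation / amplification step, yields an $(\eps,\delta)$-private algorithm at the cost of roughly a $\frac{\log(1/\delta)}{\eps}$-factor in sample complexity; moreover one typically first amplifies the replicability parameter $\rho$ down to a small enough constant at a $\polylog$ cost. The twist here is that our learner $\innerAlg$ takes two kinds of input: a shared unlabeled pool $U$ and a labeled pool $S$. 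We will treat $U$ as ``public data'' that is simply passed through untouched, and apply the privacy transformation only to the $S$-argument of $\innerAlg(\cdot\,;U,r)$.

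First I would fix notation: let $\innerAlg(S;U,r)$ be the $\rho$-semi-replicable $(\alpha,\beta)$-learner with shared sample complexity $m_u$ and labeled sample complexity $m_s$. The first step is replicability amplification: using standard amplification for replicable algorithms (repetition with shared randomness and a majority/heavy-hitter selection), boost $\innerAlg$ to a $\rho_0$-semi-replicable learner for a small absolute constant $\rho_0$, at the cost of an $O(\log(1/\beta))$-factor blowup in \emph{both} $m_u$ and $m_s$ (the repetitions reuse the same shared randomness and the same $U$, so the shared component also grows by the $\log(1/\beta)$ factor; the error degrades to $O(\beta\log(1/\beta))$, matching the statement). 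The crucial observation is that for \emph{any fixed} $U$ and \emph{any fixed} shared randomness seed, the map $S \mapsto \innerAlg(S;U,r)$ is a $\rho_0$-replicable algorithm in the ordinary (non-semi) sense — because semi-replicability quantifies over a random $U$ and $r$, so in particular it must replicate for most $(U,r)$, and a Markov/union-bound argument lets us assume we are in a ``good'' $(U,r)$ region except with small probability, or alternatively we simply note the replication probability bound holds pointwise after amplification up to rescaling constants. Then the second step is to apply the replicable-to-private transformation of \cite{GhaziKM21,bun2023stability} to the family $\{S\mapsto \innerAlg(S;U,r)\}$: this produces, for every fixed $U$, an $(\eps,\delta)$-private algorithm $\innerAlg'(S;U)$ with labeled sample complexity $m_{\priv} = \bigO{m_s\left(\frac{\log(1/\delta)}{\eps}+\log(1/\beta)\right)\log(1/\beta)}$ and public sample complexity $m_{\pub}=\bigO{m_u\log(1/\beta)}$, which preserves accuracy (the transformation only rerandomizes/subsamples, so $(\alpha,\beta\log(1/\beta))$-accuracy is retained). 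Since $\innerAlg'(\,\cdot\,;U)$ is $(\eps,\delta)$-private for every public dataset $U$, by definition $\innerAlg'$ is an $(\alpha,\beta\log(1/\beta),\eps,\delta)$-semi-private learner, completing the reduction.

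The main obstacle I anticipate is making the ``pointwise over $(U,r)$'' step fully rigorous: semi-replicability is an average-case guarantee over the shared randomness $r$ \emph{and} the shared sample $U$, whereas the replicable-to-private transformation wants a genuinely replicable algorithm over the private data alone for each fixed public configuration. One clean way around this is to fold $U$ into the ``internal randomness'' of an auxiliary algorithm and first amplify so that the failure-to-replicate probability over $(S_1,S_2)$ is below a small constant for a $(1-o(1))$-fraction of $(U,r)$ — then define the private learner to abort (output a fixed hypothesis) on the bad $(U,r)$ set, which costs only an additive $o(1)$ in the error parameter. Care is also needed because privacy must hold for \emph{every} public dataset $U$, not just typical ones; but this is fine, since privacy of the transformed algorithm is a syntactic property of the subsampling/amplification wrapper and does not depend on $U$ lying in any ``good'' set — only \emph{accuracy} is an average-case statement. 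I would spell out these two points carefully and otherwise cite \cite{GhaziKM21,bun2023stability} for the core transformation and its parameter accounting.
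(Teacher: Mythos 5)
Your high-level blueprint (repetition plus private selection, with the shared unlabeled pool playing the role of public data) is the same as the paper's, but the execution has a genuine gap. The claim that for any \emph{fixed} $U$ and seed $r$ the map $S \mapsto \innerAlg(S; U, r)$ is replicable is false: semi-replicability only bounds $\E_{U,r}\bigl[\Pr_{S,S'}(\innerAlg(S;U,r)\neq\innerAlg(S';U,r))\bigr]$, so by Markov a constant fraction of pairs $(U,r)$ may fail to replicate entirely. Neither of your patches closes this. ``Abort on the bad $(U,r)$ set'' is not implementable: the bad set is defined through the unknown distribution $\distribution$, and the algorithm has no way to test membership (a replicable test of the replication probability would require additional labeled samples and an analysis you do not supply). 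And your amplification step, as described, reuses the \emph{same} $U$ across repetitions; no amount of repetition over seeds $r$ with a fixed $U$ can push the failure probability below the constant probability that $U$ itself is bad, so the final accuracy failure cannot be driven down to $O(\beta\log(1/\beta))$ this way. One must draw $\Theta(\log(1/\beta))$ \emph{fresh} public datasets (this is precisely where the $m_u\log(1/\beta)$ public sample complexity comes from), and then one needs a mechanism for combining outputs produced under different $U$'s that is itself compatible with the privacy analysis -- a mechanism your sketch does not provide.

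The paper sidesteps the need to ever identify a good $(U,r)$: \Cref{alg:semi-repl-semi-prev-red} draws $k_1=O(\log(1/\beta))$ fresh pairs $(T_i,r_i)$, runs $\innerAlg$ on $k_2=O\bigl(\frac{\log(1/\delta)}{\eps}+\log(1/\beta)\bigr)$ fresh private datasets for each, and applies a \emph{single} invocation of $\privSelection$ (\Cref{thm:private-selection}) to the multiset of all $k_1k_2$ outputs. Privacy is then syntactic (a neighboring private dataset changes only one output $y_{i,j}$), and accuracy needs only two facts: with probability $1-O(\beta)$ some $(T_i,r_i)$ is good, so its canonical output appears roughly $\frac{\log(1/\delta)}{\eps}+\log(1/\beta)$ times, while by Markov applied to the accuracy guarantee at most $O(k_2)$ of the $k_1k_2$ outputs are inaccurate with probability $1-O(\beta\log(1/\beta))$; hence whatever $\privSelection$ returns (being within the selection slack of the mode) is automatically accurate, without ever detecting which branch was good. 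If you want to rescue your two-step version, you would have to replace the abort/fixed-$U$ amplification with exactly this kind of global selection over branches with fresh $U_i$'s (or a replicable selection among them), at which point you have re-derived the paper's construction.
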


\begin{proof}
    Suppose $\innerAlg$ is a $\rho$-semi-replicable $(\alpha, \beta)$-learner with sample complexity $(m_{u}, m_{s})$.
    We follow the framework of replicability-privacy reductions of \cite{GhaziKM21, bun2023stability}.
    A key tool we require is private selection.

    \begin{theorem}[Private Selection \cite{korolova2009releasing, bun2016simultaneous, BunDRS18}]
        \label{thm:private-selection}
        There exists some $c > 0$ such that for every $\eps, \delta > 0$ and $m \in \N$, there is an $(\eps, \delta)$-private algorithm $\privSelection$ that on input $S \in \domain^{m}$, outputs with probability $1$ an element $x \in \domain$ that occurs in $S$ at most $\frac{c \log(1/\delta)}{\eps}$ times fewer than the true mode of $S$.
        Moreover, the algorithm runs in $\poly(m, \log(|\domain|))$-time.
    \end{theorem}

    We now describe our algorithm.

    \IncMargin{1em}
    \begin{algorithm}
    
    \SetKwInOut{Input}{Input}\SetKwInOut{Output}{Output}\SetKwInOut{Parameters}{Parameters}
    \Input{$(\alpha, \beta, \rho)$-semi-replicable learner $\innerAlg$, sample access to $\distribution$.}
    \Parameters{$(\eps, \delta)$ privacy parameters and $(\eps, \delta)$-accuracy parameters.}
    \Output{$(\alpha, \beta, \eps, \delta)$-semi-private leaner.}

    Set $k_1 \gets O(\log(1/\beta))$ and $k_2 \gets \bigO{\frac{\log(1/\delta)}{\eps} + \log(1/\beta)}$.
    
    \For{$i \in [k_1]$}{
        Sample $T_{i} \sim \distribution^{m_{u}}$ and random string $r_{i}$.
        
        \For{$j \in [k_2]$}{
            Sample $S_{i, j} \sim \distribution^{m_{s}}$ using fresh samples and compute $y_{i, j} \gets \innerAlg((T_{i}, S_{i, j}); r_{i})$.
        }
    }

    \Return $\privSelection(\set{y_{i, j}}, \eps, \delta)$.
    
    \caption{$\semiRepltoSemiPriv(\innerAlg, \alpha, \beta, \eps, \delta)$}
    \label{alg:semi-repl-semi-prev-red}
    
    \end{algorithm}
    \DecMargin{1em}

    The stated algorithm has public sample complexity $m_{\pub} = \bigO{k_1 m_{u}} = \bigO{m_{u} \log(1/\beta)}$ and private sample complexity $m_{\priv} = \bigO{k_1 k_2 m_{s}} = \bigO{m_{s} \left( \frac{\log(1/\delta)}{\eps} + \log(1/\beta) \right) \log(1/\beta)}$.

    We argue that the algorithm is semi-private.
    Note that $\set{T_{i}}$ consists of public samples and $\set{S_{i, j}}$ consists of private samples.
    Consider two neighboring datasets $S, S'$ of private samples.
    Since fresh samples are drawn for each $(i, j)$, note that there is exactly one index $(i, j)$ where $S_{i, j}, S_{i, j}'$ are neighboring instead of identical. 
    Then, $\innerAlg((T_{i}, S_{i, j}), r_{i})$ differ in exactly one run.
    We thus inherit $(\eps, \delta)$-differential privacy from the guarantees of \Cref{thm:private-selection}.

    We now argue correctness via two events (as in \cite{bun2023stability}).
    Consider the set of output hypotheses $\set{y_{i, j}}$.
    There exists a constant $c$ such that the following two conditions hold:
    \begin{enumerate}
        \item With probability $1 - \frac{\beta}{2}$, some hypothesis appears $t_{1} := 2 c \left( \frac{\log(1/\delta)}{\eps} + \log(1/\beta) \right)$ times.
        \item With probability $1 - \frac{\beta \log(1/\beta)}{2}$, any element appearing $t_{2} := c \left( \frac{\log(1/\delta)}{\eps} + \log(1/\beta) \right)$ times is correct.
    \end{enumerate}
    Towards the first item, note that 
    \begin{equation*}
        \E_{r, T \sim \distribution^{m_{u}}} \E_{S, S' \sim \distribution^{m_{s}}} \left[ \ind\left[\innerAlg((T, S); r) \neq \innerAlg((T, S'); r) \right] \right] \leq \rho  \text{.}
    \end{equation*}
    By Markov's inequality, with probability $\frac{1}{2}$, a randomly chosen $(r, T)$ satisfy 
    \begin{equation*}
        \E_{S, S' \sim \distribution^{m_{s}}} \left[ \ind\left[\innerAlg((T, S); r) \neq \innerAlg((T, S'); r) \right] \right] \leq 2 \rho \text{.}
    \end{equation*}
    In particular, with probability $1 - \frac{\beta}{4}$, some $r_{i}, T_{i}$ satisfy the desired condition.
    Now, by a Chernoff bound, we can guarantee that with probability at least $1 - \frac{\beta}{4}$, the canonical element corresponding to $r_{i}, T_{i}$ appears at least $t_1$ times.
    
    Towards the second item, since $\innerAlg$ is $\beta$-accurate, we observe that in expectation at most a $\beta$-fraction of hypotheses are incorrect.
    Then, Markov's inequality implies that at most $b := \bigO{\frac{k_1 k_2}{\log(1/\beta)}} = O(k_2)$ hypotheses are incorrect with probability $1 - \beta \log(1/\beta)$.
    For a small enough (constant) choice of $\beta$, we have $b < t_{2}$, thus proving the claim.

    Thus, the mode occurs at least $t_1$ times, and via $\privSelection$ we select an element that occurs at least $t_{2}$ times, which we guarantee is correct.
\end{proof}

\begin{proof}[Proof of \Cref{thm:threshold-semi-supervised-repl-shared-lb}]
    Let $\beta, \rho$ be sufficiently small constants.
    Suppose we have a $\rho$-semi-replicable $(\alpha, \beta)$-learner for thresholds over $[n]$ with sample complexity $(m_{u}, m_{s})$.
    Then, via \Cref{lem:semi-private-reduction}, we have a $(\alpha, \beta \log(1/\beta), \eps, \delta)$-semi-private learner for thresholds over $n$ with $m_{\pub} = \bigO{m_{u} \log (1/\beta)}$ public samples and $m_{\priv} = \bigO{m_{s} \left( \frac{\log(1/\delta)}{\eps} + \log(1/\beta) \right) \log(1/\beta)}$ private samples.
    It suffices to show the following claim.
    In particular, since $\beta$ is small constant, applying \Cref{lem:public-data-reduction} we obtain an $(100 m_{\pub} \alpha, 0.01, \eps, \delta)$-private learner using $m := \bigO{\frac{m_{\priv}}{m_{\pub}}} = \bigO{\frac{m_{s}}{m_{u}}\left( \frac{\log(1/\delta)}{\eps} + \log(1/\beta) \right)}$ samples.
    Now, if $m_{u} = \litO{\frac{1}{\alpha}} = \litO{\frac{1}{\alpha \log(1/\beta)}}$, we obtain a $(0.01, 0.01, 0.1, \frac{1}{\poly(m)})$-private learner (by setting $\eps < 0.1$, $\delta < \frac{1}{\poly(m_{s})}$).
    Thus, \Cref{thm:private-implies-finite-littlestone} implies that
    \begin{equation*}
        \frac{m_{s}}{m_{u}} \log(m_{s}) = \bigOm{\log^* n} \text{.}
    \end{equation*}
    Thus $m_{s} \log(m_{s}) = \bigOm{\log^* n}$.
    In particular, any semi-private learner for $\hypotheses$ with infinite Littlestone dimension must have $m_{u} = \bigOm{\frac{1}{\alpha}}$.

\end{proof}

\paragraph{Lower Bound for Large VC Dimension}

Towards proving \Cref{thm:semi-supervised-repl-shared-lb}, we fix a class $\hypotheses$ with VC dimension $d$ and infinite Littlestone Dimension and show that any semi-private learner for $\hypotheses$ requires $\bigOm{\frac{d}{\alpha}}$ public samples.
This follows from a similar hardness amplification lemma as in \cite{BunNSV15}.

First, we define the notion of an empirical learner and note that it suffices to prove hardness against empirical learners.
This is similar to Lemma 5.8 of \cite{BunNSV15}.
For a semi-private learner, we define the total complexity $m = m_{\priv} + m_{\pub}$ as the sum of the private and public sample complexities.

\begin{definition}
    \label{def:empirical-semi-private-learner}
    Algorithm $\innerAlg$ is an $(\alpha, \beta)$-accurate empirical learner for a hypothesis class $\hypotheses$ over $\domain$ with (total) sample complexity $m$ if for every $h \in \hypotheses$ and for every database $D = ((x_i,h(x_i)),...,(x_m,h(x_m)))$, algorithm $\innerAlg$ outputs a hypothesis $\hat{h} \in \hypotheses$ satisfying $\Pr_{\innerAlg}(\err_{D}(\hat{h}) \leq \alpha) \geq 1 - \beta$.
    Here, $\err_{D}(\hat{h}) = \frac{1}{m} \sum_{i = 1}^{m} \ind[h(x_{i}) \neq \hat{h}(x_{i})]$.
\end{definition}


\begin{lemma}[Lemma 5.8 of \cite{BunNSV15}]
    \label{lem:empirical-semi-private-learner-equivalence}
    Any $(\alpha, \beta)$-accurate algorithm for empirically learning the class of thresholds with $m$ samples is also a $(2\alpha, \beta + \beta')$-accurate PAC learner for thresholds when given at least $\max(m, 4 \log (2/\beta')/\alpha)$ samples.
\end{lemma}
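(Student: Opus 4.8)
The plan is to feed the empirical learner a realizable sample and then upgrade its \emph{empirical} guarantee to a true generalization guarantee using the one-parameter structure of thresholds, which makes the generalization step cost only $\bigO{\log(1/\beta')/\alpha}$ samples instead of the $\bigO{1/\alpha^2}$ that generic uniform convergence (\Cref{thm:uniform-convergence}) would charge. Fix a distribution $\distribution$ realizable by $h^* = f_{t^*} \in \hypotheses$ (so $\opt = \errD(h^*) = 0$) and draw $n := \max(m, 4\log(2/\beta')/\alpha)$ \iid samples $S$ from $\distribution$. By realizability, $S$ is a database labelled by $h^*$, i.e.\ of the form required in \Cref{def:empirical-semi-private-learner}, so running the given empirical learner $\innerAlg$ on $S$ (treating its sample complexity $m \le n$ monotonically) returns a hypothesis $\hat h = f_{\hat t} \in \hypotheses$ with empirical error $\errS(\hat h) = \dist_S(\hat h, h^*) \le \alpha$ except with probability at most $\beta$. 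The point to retain is that $\hat h$ is again a threshold.

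The core step is a one-sided generalization bound for thresholds. For any $t$, the hypothesis $f_t$ disagrees with $h^*$ exactly on the interval between $t$ and $t^*$ (see \Cref{def:1-d-thresholds}), whose $\distribution_{\domain}$-mass is monotone in $|t - t^*|$. Thus on each side of $t^*$ there is a shortest interval $I_L$ ending at $t^*$ (resp.\ $I_R$ starting at $t^*$) of $\distribution_{\domain}$-mass at least $2\alpha$, and every threshold $f_t$ with true error $\errD(f_t) = \distD(f_t, h^*) > 2\alpha$ has a region of disagreement with $h^*$ containing $I_L$ or $I_R$. The number of points of $S$ landing in $I_L$ (or $I_R$) is a binomial variable with mean at least $2\alpha n \ge 8\log(2/\beta')$, so by a multiplicative Chernoff bound it exceeds $\alpha n$ except with probability at most $\beta'/2$; union bounding over the two sides, with probability at least $1-\beta'$ the sample $S$ places strictly more than $\alpha n$ points in each of $I_L, I_R$. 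On this event, every $f_t$ with $\errD(f_t) > 2\alpha$ has $\errS(f_t) > \alpha$, hence cannot be the output $\hat h$.

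Combining the two steps by a union bound over the two failure events --- $\innerAlg$ returning a hypothesis of empirical error above $\alpha$ (probability $\le \beta$) and $S$ failing the heavy-interval property (probability $\le \beta'$) --- gives that with probability at least $1-\beta-\beta'$ the output $\hat h$ is a threshold with $\errS(\hat h) \le \alpha$ and true error at most $2\alpha$, i.e.\ $\errD(\hat h) \le 2\alpha = \opt + 2\alpha$, which is exactly the $(2\alpha, \beta+\beta')$-accurate realizable PAC guarantee of \Cref{def:pac}. I expect the main obstacle to be engineering the generalization step to cost only $\bigO{\log(1/\beta')/\alpha}$ samples: this is exactly where realizability and the \emph{properness} of the empirical learner are essential --- since $\hat h$ is itself a threshold, its error is controlled by just the two interval-mass estimates above rather than by a full uniform-convergence / $\eps$-net argument over $\hypotheses$. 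Minor care is also needed with strict versus non-strict inequalities (the output has empirical error $\le \alpha$, while the Chernoff bound supplies \emph{strictly} more than $\alpha n$ disagreements for far thresholds), with atoms of $\distribution_{\domain}$ near $t_L, t_R$, and with the $\max(m,\cdot)$ bookkeeping (cleanly, one runs $\innerAlg$ on all $n$ samples when $m \ge 4\log(2/\beta')/\alpha$, the regime in which the lemma is applied), but these are routine.
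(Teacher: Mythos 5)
Your proof is correct: the two-sided interval/Chernoff argument (using properness of the empirical learner and realizability, so that only the two critical intervals of mass $2\alpha$ around $t^*$ need to be controlled) is exactly the standard proof of this statement, and it reproduces the stated constant $4\log(2/\beta')/\alpha$. The paper itself does not prove this lemma but imports it from \cite{BunNSV15}, and your argument matches that original route, with the caveats you flag (atoms, strict inequalities, and running the $m$-sample empirical learner on $n\geq m$ points) all being routine.
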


We say an hypothesis $h$ is $\alpha$-consistent with dataset $S$ if its empirical error $\errS(h)$ is at most $\alpha$. 

\begin{lemma}
    \label{thm:semi-private-dim-reduction}
    Let $\alpha, \beta, \varepsilon, \delta > 0$. 
    Let $\hypotheses$ be the class of thresholds over $[n]$, and assume there is a minimum element $0$ such that $h(0) = 1$ for every $h \in \hypotheses$. 
    Let $\hypotheses^{\wedge d} \subset \set{[n]^{d} \rightarrow \set{\pm 1}}$ be the class of conjunctions of $d$ thresholds: the set of functions $(x_{1}, \dotsc, x_{d}) \rightarrow \bigwedge_{i = 1}^{d} h_{i}(x_{i})$ where $h_{i} \in \hypotheses$.
    Then any $(\alpha, \beta, \eps, \delta)$-semi-private learner for $\hypotheses^{\wedge d}$ with $\litO{\frac{d}{\alpha}}$ public examples requires $\bigOm{d \log^* n}$ private examples.
\end{lemma}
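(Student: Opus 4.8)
The plan is to reduce from the hardness of privately learning a \emph{single} threshold over $[n]$, exploiting the product structure of $\hypotheses^{\wedge d}$ through a ``hidden coordinate'' embedding: a learner for $\hypotheses^{\wedge d}$ run on a distribution whose samples each activate only one of the $d$ coordinates, with the coordinates carrying independent hard threshold instances, is forced to (approximately) solve $d$ independent private threshold problems simultaneously; since differential privacy forbids these problems from ``sharing'' private samples, the private-sample costs add, producing the factor $d$. By \Cref{lem:empirical-semi-private-learner-equivalence} it suffices to rule out empirical semi-private learners, and from \cite{alon2019private, BunNSV15} we may fix a hard distribution $\calM$ over threshold-realizable distributions on $[n]$ with the property that any $(\eps,\delta)$-private learner ($\eps \le 0.1$, $\delta$ suitably small) that is $(c_0,c_0)$-accurate with respect to $\calM$ requires $\bigOm{\log^* n}$ samples (using that thresholds over $[n]$ have Littlestone dimension $\Theta(\log n)$, so $\log^*$ of it is $\Theta(\log^* n)$); the same source, combined with the data reduction of \cite{alon2019limits} (\Cref{lem:public-data-reduction}), gives that any $(c_0 ,c_0,\eps,\delta)$-accurate \emph{semi}-private threshold learner with $\litO{1/\alpha}$ public samples needs $\bigOm{\log^* n}$ private samples.

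Given an $(\alpha,\beta,\eps,\delta)$-semi-private learner $\innerAlg$ for $\hypotheses^{\wedge d}$ with $n_{\pub} = \litO{d/\alpha}$ public and $n_{\priv}$ private samples, consider the meta-distribution $\calM^{\otimes d}$: draw $\distribution_1,\dots,\distribution_d \sim \calM$ i.i.d.; a labeled sample is produced by picking $i \sim [d]$ uniformly, drawing $(x,y)\sim\distribution_i$, and outputting the pair whose domain point has $x$ in coordinate $i$ and the minimum element $0$ in every other coordinate. Since each $\distribution_i$ is realized by a threshold $h^*_i$ with $h^*_i(0)=1$, such a point is labeled $\bigwedge_j h^*_j(\cdot_j) = h^*_i(x)$, so $\calM^{\otimes d}$ is supported on distributions realizable by $\hypotheses^{\wedge d}$. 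Running $\innerAlg$ on (a sample from) this distribution yields $\hat h = \innerAlg(S)$; define per-coordinate restrictions $\hat h_j(x) := \hat h(x \text{ in coordinate } j,\ 0 \text{ elsewhere})$. Because a single-active-coordinate point is classified by $\hat h$ exactly as $\hat h_j$ classifies its active entry, the error of $\hat h$ on the embedded distribution equals $\tfrac1d \sum_j \err_{\distribution_j}(\hat h_j)$, so accuracy of $\innerAlg$ gives $\sum_j \err_{\distribution_j}(\hat h_j) \le \alpha d$ with probability $1-\beta$. Record also that of $\innerAlg$'s $n_{\pub}$ public and $n_{\priv}$ private samples, coordinate $j$ receives $P_j$ public and $Q_j$ private ones, with $\sum_j P_j = n_{\pub}$ and $\sum_j Q_j = n_{\priv}$ and $\E[P_j]=n_{\pub}/d$, $\E[Q_j]=n_{\priv}/d$.

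For each $i \in [d]$ define a semi-private threshold learner $B_i$: on a threshold dataset (with $\approx n_{\pub}/d$ public and $\approx n_{\priv}/d$ private points, capped as in the other reductions of this paper) drawn from $\distribution\sim\calM$, sample $\distribution_j\sim\calM$ independently for $j\ne i$, simulate $\innerAlg$ on the embedded $\calM^{\otimes d}$-instance (routing the input's points into coordinate $i$'s slots and freshly drawn points into the rest), and return $\hat h_i$. Changing one private point of $B_i$'s input changes exactly one private point of $\innerAlg$'s input, so $B_i$ inherits $(\eps,\delta)$-semi-privacy; and since all $d$ coordinates are exchangeable, $\E[\err_{\distribution_i}(\hat h_i) \mid \sum_j \err_{\distribution_j}(\hat h_j) \le \alpha d] \le \alpha$, so by Markov and a union bound (having first boosted $\beta$ below a small constant) $B_i$ is $(100\alpha,0.02)$-accurate with respect to $\calM$ with $\litO{1/\alpha}$ public and $O(n_{\priv}/d)$ private samples. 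For $\alpha$ small enough this contradicts the threshold lower bound of the previous paragraph unless $O(n_{\priv}/d) = \bigOm{\log^* n}$, i.e. $n_{\priv} = \bigOm{d\log^* n}$, as claimed.

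\emph{Main obstacle.} The conceptual crux --- and the only place the factor $d$ can possibly enter --- is that each of the $d$ coordinates independently demands $\bigOm{\log^* n}$ of its \emph{own} private samples, which is what fails for naive accountings (e.g. $\log^*\bigl(\LDim(\hypotheses^{\wedge d})\bigr) = \Theta(\log^* n)$ absorbs the $d$). Making this rigorous requires: (i) extracting a hard \emph{distribution} $\calM$ for private threshold learning from the worst-case statement \Cref{thm:private-implies-finite-littlestone}, so that the hidden coordinate is genuinely exchangeable with the $d-1$ dummies and cannot be ``sacrificed''; (ii) the realizable/empirical form of the public-to-private data reduction \Cref{lem:public-data-reduction}; and (iii) arranging that each $B_i$'s accuracy degrades only by a constant \emph{multiplicative} factor rather than an additive constant, so that the data reduction still yields a small-constant-error learner on the ``light'' coordinates (those with $P_i = \litO{1/\alpha}$, which by $\sum_i P_i = n_{\pub} = \litO{d/\alpha}$ form at least a $(1-o(1))$-fraction). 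Passing to empirical learners via \Cref{lem:empirical-semi-private-learner-equivalence} is the natural device for obtaining the hard-database versions of these ingredients and keeping the capping/bookkeeping clean; the routine Chernoff-capping of $B_i$'s sample size is the only remaining low-level calculation.
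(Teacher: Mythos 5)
Your high-level strategy is the same as the paper's: embed a single-threshold instance into a uniformly random hidden coordinate of $\hypotheses^{\wedge d}$ using points that are $0$ off the active coordinate, use exchangeability of the $d$ coordinates to argue the planted coordinate inherits $O(\alpha)$ error, and then invoke \Cref{lem:empirical-semi-private-learner-equivalence}, \Cref{lem:public-data-reduction}, and \Cref{thm:private-implies-finite-littlestone} to force $\bigOm{\log^* n}$ private samples per coordinate. The bookkeeping (per-coordinate restrictions $\hat h_j$, the decomposition of the error as $\tfrac1d\sum_j \err(\hat h_j)$, privacy by "one private point changes one point of $S$" plus post-processing) matches the paper's \Cref{alg:semi-repl-dim-reduction} almost line for line.

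The one genuine gap is the ingredient you yourself flag as obstacle (i): your reduction populates the $d-1$ dummy coordinates with i.i.d.\ draws from a hard \emph{meta-distribution} $\calM$ for private threshold learning, and you assert such an $\calM$ "may be fixed" from \cite{alon2019private, BunNSV15}. Those results are worst-case statements about private learners; converting them into an average-case statement (a single fixed distribution over realizable threshold instances on which \emph{every} $(\eps,\delta)$-private, $\litO{1/\alpha}$-public-sample algorithm fails) is not immediate -- a minimax argument over the convex set of private algorithms would need to be carried out, and you do not do so. The paper shows this detour is unnecessary: in \Cref{alg:semi-repl-dim-reduction} the dummy coordinates are filled with \emph{fresh samples from the same input distribution $\distribution$} as the planted coordinate. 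All $d$ coordinates are then i.i.d.\ copies of the one unknown source, so exchangeability of the planted index is automatic for every $\distribution$, and the derived object is a genuine worst-case semi-private (empirical) threshold learner with $m_{\pub}/d$ public and $m_{\priv}/d$ private samples, to which the worst-case lower bound applies directly. A secondary, minor point: by routing each input sample to a uniformly random coordinate you create the "light/heavy coordinate" concentration issue in your obstacle (iii); the paper's deterministic allocation of exactly $n_{\pub}$ public and $n_{\priv}$ private points per coordinate makes that issue disappear. If you replace your hard meta-distribution with the paper's "same-distribution dummies" device, the rest of your argument goes through.
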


\begin{proof}
    Suppose $\innerAlg_0$ is an $(\alpha, \beta, \eps, \delta)$-semi-private learner for $\hypotheses^{\wedge d}$ with $m_{\pub}$ public examples and $m_{\priv}$ private examples.
    We give an algorithm $\innerAlg_{1}$ that is a semi-private learner for the class of thresholds.
    
    \IncMargin{1em}
    \begin{algorithm}
    
    \SetKwInOut{Input}{Input}\SetKwInOut{Output}{Output}\SetKwInOut{Parameters}{Parameters}
    \Input{$(\alpha, \beta, \eps, \delta)$-semi-private learner $\innerAlg_{0}$. Databases $D_{\pub}$ of $n_{\pub}$ public examples (denoted $(x_{j}^{(\pub)}, y_{j}^{(\pub)})$) and $D_{\priv}$ of $n_{\priv}$ private examples (denoted $(x_{j}^{(\priv)}, y_{j}^{(\priv)})$). Sample access to $\distribution$.}
    \Output{Semi-private learner for thresholds.}

    Initiate $S_{\pub}, S_{\priv}$ as empty multisets.
    
    Sample $r \in [d]$ uniformly.

    For $i \in [d]$, let $f_i(x)$ map $x$ to vector $v \in \R^{d}$ with $v[i] = x$ and $v[j] = 0$ for $j \neq i$.

    \For{$i \in [d]$}{
        \If{$i = r$}{
            \For{$j \in [n_{\pub}]$}{
                Let $z^{(\pub)}_{(r, j)} \gets \left(f_{r}\left(x_{j}^{(\pub)}\right), y_{j}^{(\pub)}\right)$.
                Add $z^{(\pub)}_{(r, j)}$ to $S_{\pub}$.
            }

            \For{$j \in [n_{\priv}]$}{
                Let $z^{(\priv)}_{(r, j)} \gets \left(f_{r}\left(x_{j}^{(\priv)}\right), y_{ j}^{(\priv)}\right)$.
                Add $z^{(\priv)}_{(r, j)}$ to $S_{\priv}$.
            }
        }\Else{
            Let $D_{\pub}^{(i)} = \set{(x_{i, j}^{(\pub)}, y_{i, j}^{(\pub)})}, D_{\priv}^{(i)} = \set{(x_{i, j}^{(\priv)}, y_{i, j}^{(\priv)})}$ denote a fresh sample from $\distribution$. 
            
            \For{$j \in [n_{\pub}]$}{
                Let $z^{(\pub)}_{(i, j)} \gets \left(f_{i}\left(x_{i, j}^{(\pub)}\right), y_{i, j}^{(\pub)}\right)$.
                Add $z^{(\pub)}_{(i, j)}$ to $S_{\pub}$.
            }

            \For{$j \in [n_{\priv}]$}{
                Let $z^{(\priv)}_{(i, j)} \gets \left(f_{i}\left(x_{i, j}^{(\priv)}\right), y_{i, j}^{(\priv)}\right)$.
                Add $z^{(\priv)}_{(i, j)}$ to $S_{\priv}$.
            }
        }
    }

    Compute $h_0 \gets \innerAlg_0(S_{\pub}, S_{\priv})$

    Return $h: x \mapsto h_0(f_{r}(x))$.
    
    \caption{$\semiPrivDimRed(\innerAlg_0)$}
    \label{alg:semi-repl-dim-reduction}
    
    \end{algorithm}
    \DecMargin{1em}

    First, observe that $\semiPrivDimRed$ is $(\eps, \delta)$-semi-private.
    Note that a single change in $D_{\priv}$ leads to a single change in the multi-set $S_{\priv}$.
    Since $\innerAlg_{0}$ is $(\eps, \delta)$-semi-private, so is the output $h_0$. 
    Finally, the output $h$, and therefore $\semiPrivDimRed$, is $(\eps, \delta)$-semi-private by post-processing.

    We now argue the correctness of $\semiPrivDimRed$.
    Consider the execution of $\semiPrivDimRed$ on databases $D_{\pub}, D_{\priv}$ of size $n_{\pub}, n_{\priv}$, sampled from $\distribution$, where $\distribution$ is the hard meta-distribution for private threshold learning guaranteed by \Cref{thm:private-implies-finite-littlestone}.
    That is, $\distribution$ is a distribution over distributions such that any $(\eps, \delta)$-private algorithm with $\eps = 0.1$ and $\delta = O(1/ (m^{2} \log m))$ and $(0.01, 0.01)$-accurate over the meta-distribution $\distribution$ requires sample complexity $m \geq \Omega(\log^* n)$.
    Note that $\distribution$ is known to the algorithm, so we can draw fresh samples from $\distribution$ without any additional sample complexity.
    
    First, we argue that $\innerAlg_{0}$ is applied on multi-sets $S_{\pub}, S_{\priv}$ labeled by some hypothesis $h \in \hypotheses^{\wedge d}$.
    For each $i$, let $D_{\pub}^{(i)}, D_{\priv}^{(i)}$ denote the datasets sampled from $\distribution$, which are labeled by some hypothesis $h_{i} \in \hypotheses$.
    Define $h: x \rightarrow \bigwedge_{i = 1}^{d} h_{i}(x_{i})$. 
    Then, for all examples $(x_{i, j}, y_{i, j}) \in D_{\priv}^{(i)}, D_{\pub}^{(i)}$ where public or private,
    \begin{equation*}
        h(z_{i, j}) = h_1(0) \wedge \dotsc \wedge h_{i - 1}(0) \wedge h_{i}(x_{i, j}) \wedge h_{i + 1}(0) \wedge \dotsc \wedge h_{d}(0) = h_{i}(x_{i, j}) = y_{i, j}
    \end{equation*}
    as desired.
    Then, $\innerAlg_{0}$ with probability at least $1 - \beta$ finds an $\alpha$-consistent hypothesis $h_{0}$ with respect to $S_{\pub}, S_{\priv}$.
    We now analyze the error of $\innerAlg_{0}$.
    Let $\distribution_{i}$ denote the distribution on databases of size $n$ obtained by applying $f_{i}$ to each example in $D_{\pub}, D_{\priv} \sim \distribution$.
    Note that $S_{\pub} = \bigcup_{i = 1}^{d} D_{\pub}^{(i)}$ and $S_{\priv} = \bigcup_{i = 1}^{d} D_{\priv}^{(i)}$.
    Let $D^{(i)} = D_{\pub}^{(i)} \cup D_{\priv}^{(i)}$.
    Then, we may write the empirical error of $h_{0}$ as
    \begin{equation*}
        \alpha \geq \err_{S}(h_{0}) = \frac{1}{d} \sum_{i = 1}^{d} \err_{D^{(i)}}(h_{0}) \text{.}
    \end{equation*}
    Then, there are at most $\beta d$ indices on which $\err_{S_{i}}(h_{0}) \geq \frac{\alpha}{\beta}$.
    Since $r$ is uniformly chosen at random, and the points on every axis are distributed in exactly the same way (a dataset drawn from $\distribution$), the probability that $r$ is one of the $\beta d$ indices occurs with probability at most $\beta$.
    Thus, with probability $1 - 2 \beta$ we have that $\err_{S_{r}}(h_{0}) \leq \frac{\alpha}{\beta}$.
     Finally, we bound the error of $h$ on $D_{\pub}, D_{\priv}$.
    Note that for any point $(x_{j}, y_{j})$ (either a public or private sample),
    \begin{equation*}
        h(x_{j}) = h_{0}(f_{r}(x_{j})) = h_{0}(z_{r, j})
    \end{equation*}
    so that $\err_{D}(h) = \err_{S_{r}}(h_{0}) \leq \frac{\alpha}{\beta}$.

    In particular, we obtain a $\left( \frac{\alpha}{\beta}, 2 \beta, \eps, \delta \right)$-semi-private empirical learner for thresholds with $\frac{m_{\pub}}{d}$ public samples and $\frac{m_{\priv}}{d}$ private samples.
    Applying \Cref{lem:empirical-semi-private-learner-equivalence}, we obtain a $\left( \frac{2 \alpha}{\beta}, 3 \beta, \eps, \delta \right)$-semi-private learner for thresholds with $\frac{m_{\pub}}{d}$ public samples and $\frac{m_{\priv}}{d} + \frac{4 \log(2/\beta)}{\alpha}$ private samples.
    Suppose that $m_{\pub} = \litO{d/\alpha}$, so that we obtain a semi-private learner with $\litO{1/\alpha}$ public examples. 
    Then, from \Cref{lem:public-data-reduction} and \Cref{thm:private-implies-finite-littlestone} we have that 
    \begin{equation*}
        \frac{m_{\priv}}{d} + \frac{4 \log(2/\beta)}{\alpha} \geq \log^* n \text{.}
    \end{equation*}
    In particular, for sufficiently large $n$, we have $m_{\priv} = \bigOm{d \log^* n}$, as desired.
\end{proof}

Finally, we are ready to prove \Cref{thm:semi-supervised-repl-shared-lb}.

\begin{proof}[Proof of \Cref{thm:semi-supervised-repl-shared-lb}]
    Let $\beta, \rho$ be sufficiently small constants.
    Consider the class of thresholds over $[n]^{d}$, $\hypotheses^{\wedge d}$.
    Suppose we have a $\rho$-replicable $(\alpha, \beta)$-semi-replicable learner for $\hypotheses^{\wedge d}$ with sample complexity $(m_{u}, m_{s})$.
    Applying \Cref{lem:semi-private-reduction}, we obtain a $(\alpha, \beta \log(1/\beta), \eps, \delta)$-semi-private learner for $\hypotheses^{\wedge d}$ with $\bigO{m_{u} \log(1/\beta)}$ public samples and $\bigO{m_{s} \left(\frac{\log(1/\delta)}{\eps} + \log(1/\beta) \right) \log(1/\beta)}$ private samples.
    Finally, from \Cref{thm:semi-private-dim-reduction}, we may conclude that if $m_{u} = \litO{\frac{d}{\alpha \log(1/\beta)}}$, we must have 
    \begin{equation*}
        m_{s} \left(\frac{\log(1/\delta)}{\eps} + \log(1/\beta) \right) \log(1/\beta) = \bigOm{d \log^* n} \text{.}
    \end{equation*}
    This implies that over an infinite domain, thresholds over $\R$ require $\bigOm{\frac{d}{\alpha \log(1/\beta)}}$ shared sample complexity.
\end{proof}

\subsection{Lower Bound for Supervised Sample Complexity}

Finally, we give our near-matching lower bound on the supervised sample complexity.

\SemiReplicableLB*

Our lower bound follows from a reduction to the sign-one-way marginals problem (\Cref{def:sign-one-way-marginals}).
Recall that \cite{bun2023stability} show that any $0.0005$-replicable algorithm $(0.02, 0.002)$-accurately solving the sign-one-way marginals problem requires sample complexity $\tOm{d}$.
\cite{hopkins2025generative} showed that any $\rho$-replicable $(\alpha, 0.001)$-accurate algorithm requires $\tOm{d\rho^{-2}\alpha^{-2}}$ samples.

\begin{theorem}[\cite{hopkins2025generative}]
    \label{thm:sign-one-way-marginals-lb}
    Let $\alpha < 0.001$ be small constants.
    Any $\rho$-replicable algorithm $(\alpha, 0.001)$-accurately solving the sign-one-way marginals problem requires sample complexity $\bigOm{\frac{d}{\rho^{2} \alpha^{2} \log^{6} d}}$.
\end{theorem}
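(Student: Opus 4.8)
The plan is to derive this from two facts: the one-dimensional replicable bias-estimation lower bound of \cite{ImpLPS22} (any $\rho$-replicable $(\alpha,0.1)$-accurate bias estimator needs $\bigOm{1/(\rho^2\alpha^2)}$ samples, and in fact this holds in the average-case sense where the mean is drawn uniformly from $[-\alpha,\alpha]$), together with a hardness-amplification (``direct sum'') argument that lifts the $1$-coordinate bound to $d$ coordinates at a cost of only a $\polylog(d)$ factor. The first thing to note is that the black-box reduction used elsewhere in this paper --- plant a hard $1$-dimensional $\Rad(p^*)$ instance at a uniformly random coordinate $r\in[d]$ and fill the other coordinates with dummy coins --- does \emph{not} amplify here: a single draw from a product of $d$ Rademacher distributions reveals \emph{all} $d$ coordinates at once, so simulating $m$ such draws consumes $m$ (not $m/d$) samples of the planted instance, and one recovers only the trivial $\bigOm{1/(\rho^2\alpha^2)}$ bound. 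For the same reason the $\gamma\to 0$ specialization of \Cref{thm:approx-repl-d-lb} only yields the bound for the weaker ``uniform over $d$ shattered points'' formulation, which is genuinely easier to lower-bound than the product formulation of \Cref{def:sign-one-way-marginals}. Hence the amplification must be information-theoretic, via a fingerprinting/tracing argument.

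Concretely, I would proceed as follows. First, via the standard minimax reduction already used in \Cref{lem:dim-reduction-semi-replicable}, reduce to showing that no \emph{deterministic} algorithm is simultaneously $\rho$-replicable and $(\alpha,0.001)$-accurate with respect to the product prior $\calM_\alpha^{\otimes d}$ (means $p_1,\dots,p_d$ i.i.d.\ uniform on $[-\alpha,\alpha]$). Second, extract a usable stability property from $\rho$-replicability: if $S$ and $S^{(j)}$ are two size-$m$ datasets differing only in the $j$-th sample (the rest shared, the $j$-th freshly resampled), then $\Pr[\innerAlg(S;r)\neq \innerAlg(S^{(j)};r)]\le 2\rho$, which follows by the triangle inequality through a fully fresh independent dataset $T$: both $(S,T)$ and $(T,S^{(j)})$ are pairs of independent $\distribution^m$-samples with shared randomness $r$, so each resampling event has probability below $\rho$. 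Third, set up a coordinate-wise correlation potential $\Psi := \sum_{j=1}^{m}\mathbb{E}\big[\big\langle \psi(v,p),\,X_j-p\big\rangle\big]$ for the output $v=\innerAlg(S)$ and a reweighting $\psi$ of $v$ against the prior. Fourth, lower-bound $\Psi$ from accuracy: because an $\alpha$-accurate $v$ must get the sign of each large-bias coordinate right and the empirical means concentrate around the true means, the prior reweighting converts accuracy into $\Psi=\bigOm{d}$ minus an inaccuracy term. Fifth, upper-bound $\Psi$ from stability: replacing $X_j$ by a fresh copy makes $v$ independent of $X_j$ (so that expectation vanishes), and the $2\rho$ resampling bound controls how much $v$, hence $\psi(v,p)$, can change, giving a per-summand bound and thus $\Psi=\bigO{m\cdot(\text{function of }\rho)}$. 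Balancing the two bounds produces the sample lower bound.

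The main obstacle, and the reason this is not a short argument, is calibrating the potential and prior so that the two bounds combine into the \emph{tight} joint dependence $\bigOm{d/(\rho^2\alpha^2)}$ rather than a weaker product such as $\bigOm{d/(\rho\alpha)}$ or $\bigOm{\sqrt{d}/(\rho\alpha^2)}$: the crude tracing argument (using only the $O(\rho)$ single-sample stability) loses an extra factor $1/(\rho\alpha)$, and recovering it requires pushing the ``second-moment''/Wasserstein-smoothness form of the one-dimensional replicability lower bound of \cite{ImpLPS22} through the $d$-fold argument coordinate-by-coordinate, so that the replicability parameter enters the per-coordinate boundary-crossing cost quadratically and the $d$ coordinate-wise budgets are shown genuinely not shareable across the common $m$ samples. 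Finally, the $\polylog(d)$ slack in the statement is routine bookkeeping --- union bounds over the $d$ coordinates for empirical-mean concentration, discretization of the continuum of admissible means in $[-\alpha,\alpha]$, and the conditioning steps, each costing at most a $\log(d)$ factor --- and tracking these through the argument yields exactly $m=\bigOm{d/(\rho^2\alpha^2\log^6 d)}$.
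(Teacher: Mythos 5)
First, a point of comparison: the paper does not prove \Cref{thm:sign-one-way-marginals-lb} at all --- it is imported verbatim from \cite{hopkins2025generative} and used as a black box in the proof of \Cref{thm:semi-supervised-repl-lb}. So there is no in-paper argument to measure your proposal against; the relevant benchmark is the cited work. That said, your diagnosis of why the result is hard is exactly right and worth crediting: the planting reduction used elsewhere in the paper (and the $\gamma\to 0$ specialization of \Cref{thm:approx-repl-d-lb}) only addresses the ``uniform over $d$ shattered points'' formulation, where each sample reveals one coordinate, whereas \Cref{def:sign-one-way-marginals} draws from a product distribution revealing all $d$ coordinates per sample, so the naive reduction burns $m$ rather than $m/d$ samples of the planted instance and amplification by $d$ cannot come from that route.

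The gap is that your proposal identifies the obstacle but does not overcome it, and the step you defer is the entire content of the theorem. The potential $\Psi=\sum_{j}\E[\langle \psi(v,p),X_j-p\rangle]$ with the per-sample stability bound $\Pr[\innerAlg(S)\neq\innerAlg(S^{(j)})]\le 2\rho$ gives, as you concede, an upper bound of order $m\cdot\rho\cdot(\text{scale of }\psi)$, and balancing against $\Psi=\bigOm{d}$ yields only $m=\bigOm{d/(\rho\alpha)}$-type bounds --- linear, not quadratic, in $1/\rho$. The quadratic dependence in the one-dimensional case comes from an anti-concentration/random-threshold mechanism (two independent empirical means must straddle a decision boundary, an event of probability $\approx\E|\hat p-\hat p'|/\text{width}$), and it is not established --- and not obvious --- that this mechanism survives a coordinate-by-coordinate tracing argument in which all $d$ coordinates share the same $m$ samples and the algorithm's output on coordinate $i$ may depend arbitrarily on the other coordinates. ``Pushing the second-moment/Wasserstein form through the $d$-fold argument'' is precisely the missing lemma, and asserting that the $d$ per-coordinate budgets are ``genuinely not shareable'' is the claim to be proved, not a step of the proof. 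Likewise, the $\log^{6}d$ loss in the statement is a strong hint that the actual argument proceeds through a chain of reductions each costing polylogarithmic factors rather than through union bounds and discretization, so the ``routine bookkeeping'' claim is also unsubstantiated. As written, the proposal is a plausible research plan, not a proof of the stated bound.
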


We now present a lower bound for semi-supervised learning given \Cref{thm:sign-one-way-marginals-lb}.
Let $\hypotheses$ be a VC class of dimension $d$.
Consider a set of $d$ shattered elements $\set{x_{1}, \dotsc,x_{d}}$.
We provide a generic lemma that says any algorithm that PAC learns $\hypotheses$ with respect to these distributions solves the sign-one-way marginals problem.


    

\begin{proof}[Proof of \Cref{thm:semi-supervised-repl-lb}]
    Assume without loss of generality $m = \bigOm{d \log d}$.\footnote{We will show that whenever $m = \bigOm{d \log d}$ that $m = \bigOm{\frac{d^2}{\rho^2 \alpha^2 \log^{6} d}}$. 
    Any algorithm on $O(d \log d)$ samples implies one between $O(d \log d)$ and $\bigO{\frac{d^2}{\rho^2 \alpha^2 \log^{6} d}}$, giving a contradiction for large enough $d$.}
    Consider a set of $d$ shattered elements $\set{x_1, \dotsc, x_{d}}$.
    Suppose $\innerAlg$ is a $\rho$-replicable $(\alpha, 0.0001)$-semi-supervised learner for $\hypotheses$ with sample complexity $m$.
    We design a $\rho$-replicable $(2 \alpha, 0.0002)$-accurate learner for the sign-one-way marginals problem with sample complexity $O(md)$.

    Suppose we receive $m'$ samples from $\distribution_{p}$ for some expectation vector $p$.
    We let the distribution $\distribution$ over $\domain$ be uniform.
    Thus, whenever $\innerAlg$ demands an unsupervised sample, we can easily generate one uniformly from $\distribution$.
    
    Whenever a supervised sample is required by $\innerAlg$, we uniformly sample $x_i \in \domain$ and label it according to a fresh sample from the $i$-th coordinate of $\distribution_{p}$.
    Note that with $m$ samples from $\distribution$, any element of $\domain$ is sampled $\frac{m}{d}$ times in expectation, and more than $C m / d$ times for some large constant $C$ with probability at most $\exp(-\Omega(m/d)) < \frac{0.0001}{d}$ since we have assumed $m = \bigOm{d \log d}$.
    By a union bound, no element is sampled more than $C m/d$ times with probability at least $1 - 0.0001$.
    We condition on this event.
    In particular, we as long as we have $m' = \bigO{m/d}$ samples from $\distribution_{p}$, we can successfully label every sampled point from the domain $\distribution$.
    Let $S$ denote the $m$ uniform samples from $\domain$ labeled with the corresponding coordinates from $\distribution_{p}$.
    Note that also the samples of $S$ are independent.

    Thus, let $f \gets \innerAlg(S)$ be the output of the semi-supervised learner.
    We condition on the event $\errD{f} \leq \errD(f_{\opt}) + \alpha$, which fails with probability at most $0.0001$ and output the vector $v$ with $v_i = f(x_i)$ for the sign-one-way marginals problem.
    It is clear that any output hypothesis $f$ has error
    \begin{equation*}
            \frac{1}{d} \left( \sum_{f(x_i) = +1} \frac{1 - p_i}{2} + \sum_{f(x_i) = -1} \frac{1 + p_i}{2} \right) = \frac{1}{d} \left( \sum_{i = 1}^{d} \frac{1 - |p_i|}{2} + \sum_{f(x_i) \neq \sign(p_i)} |p_i| \right).
    \end{equation*}
    Thus, the optimal hypothesis is $f_{\opt}(x_i) = \sign(p_i)$ and our hypothesis satisfies
    \begin{equation*}
        \frac{1}{d} \sum_{f(x_i) \neq \sign(p_i)} |p_i| = \frac{1}{d} \sum_{v_i \neq \sign(p_i)} |p_i| \leq \alpha \text{.}
    \end{equation*}
    Thus, the vector $v$ satisfies
    \begin{equation*}
        \frac{1}{d} \sum_{i = 1}^{d} |p_i| - v_i p_i = \frac{1}{d} \sum_{v_i \neq \sign(p_i)} 2 |p_i| \leq 2 \alpha.
    \end{equation*}
    In particular, we obtain a $(2 \alpha, 0.0002)$-accurate learner for the sign-one-way marginals problem.
    Furthermore, our algorithm for the sign-one-way marginals problem is $\rho$-replicable since the semi-supervised learner $\innerAlg$ is $\rho$-replicable.
    From \Cref{thm:sign-one-way-marginals-lb}, we know that $m = \bigOm{\frac{d}{\rho^2 \alpha^{2} \log^{6} d}}$ so that $m' = \bigOm{\frac{d^2}{\rho^2 \alpha^{2} \log^{6} d}}$.
\end{proof}

\section*{Acknowledgements}

We would like to thank Arsen Vasilyan for many fruitful discussions throughout the development of this work both regarding approaches to improving the sample complexity of our algorithms, as well as toward designing proper approximately replicable learners. We also thank Mark Bun, Rex Lei, Satchit Sivakumar, and Shay Moran for helpful conversations regarding relaxed notions of replicability and their connection to differential privacy.

Finally, we are deeply thankful to several anonymous reviewers for their many thoughtful suggestions and for pointing out an error in the analysis of an earlier version of \Cref{thm:predict-formal}.

\bibliographystyle{alpha}  
\bibliography{references} 

\newpage
\appendix

\section{Additional Algorithms for Approximate Replicability}

In this appendix, we give our additional algorithms for approximately replicable learning.

\subsection{An Algorithm with Finite Randomness}

With \Cref{thm:apx-repl-chernoff}, we presented an algorithm that possibly uses an infinite amount of randomness (since we draw a fresh random string $r(x)$ for every $x \in \domain$) on infinite domains.
Below, we give an algorithm that uses a finite amount of randomness (regardless of the domain) but achieves worse dependence with respect to accuracy and replicability parameters.
Note that we still achieve the correct dependence on the VC dimension $d$.

\begin{theorem}[Formal \Cref{thm:apx-repl-finite-r-informal}]
    \label{thm:apx-repl-finite-r}
    Let $\Acal$ be an (agnostic) $(\alpha, \beta)$-learner on $m(\alpha, \beta)$ samples. 
    There exists
    \begin{enumerate}
        \item an (agnostic) $\rho$-pointwise replicable $(\alpha, \beta)$-learner 
        \item an (agnostic) $(\rho, \gamma)$-approximately replicable $(\alpha, \beta)$-learner
    \end{enumerate}
    with sample complexity
    \begin{align*}
        m(\alpha, \beta, \rho, \gamma) &= \bigtO{m(\alpha, \beta) \left( \frac{\log^{3}(1/\beta)}{\rho^2 \gamma^2 \alpha^3} + \frac{\log^{4}(1/\beta)}{\rho^{4} \alpha^{5}} \right)} \\
        &= \bigtO{\frac{d \log^{4}(1/\beta)}{\rho^2 \gamma^2 \alpha^{5}} + \frac{d \log^{5}(1/\beta)}{\rho^{4} \alpha^{7}}} \text{.}
    \end{align*}
    Furthermore, our algorithm runs in time linear in sample complexity with $\bigtO{\frac{\log^{3}(1/\beta)}{\rho^2 \gamma^2 \alpha^3} + \frac{\log^{4}(1/\beta)}{\rho^{4} \alpha^{5}}}$ oracle calls to $\Acal$.
\end{theorem}

A key tool we require is a new analysis of replicable hypothesis selection, a mechanism for selecting the best option in a set of candidates studied in \cite{GhaziKM21, hopkins2025generative}. 
In particular, we will show that existing algorithms for hypothesis selection can be made replicable even under the weaker assumption that across our two `independent' runs of the algorithm, each candidate may have slightly different rewards.

\begin{definition}[$\alpha$-Optimal Hypothesis Selection]
    Given hypotheses $f_{1}, \dotsc, f_{n}: X \rightarrow Y$ and sample access to distribution $\distribution$ on $X \times Y$, output index $i$ such that $\errD(f_{i}) \leq \min_{j} \errD(f_{j}) + \alpha$.
\end{definition}

\begin{definition}
    An algorithm $\innerAlg$ solving the $\alpha$-Optimal Hypothesis Selection problem is $(\rho, \tau)$-robustly replicable if, given sample access to distribution $\distribution$ and hypotheses $f_{1}, \dotsc, f_{n}, g_{1}, \dotsc, g_{n}: X \rightarrow Y$ satisfying $\dist_{\distribution}(f_{i}, g_{i}) < \tau$ for all $i \in [n]$, 
    \begin{equation*}
        \Pr_{S, S' \sim \distribution^{m}, r \sim R}\left(\innerAlg(S, f_{1}, \dotsc, f_{n}; r) \neq \innerAlg(S', g_{1}, \dotsc, g_{n}; r) \right) < \rho
    \end{equation*}
    where $m$ is the sample complexity of $\innerAlg$.
\end{definition}

We give a robustly replicable algorithm for hypothesis selection.

\begin{theorem}
    \label{thm:optimal-hypothesis-sel}
    Let $\rho, \alpha, \beta > 0$.
    Let $0 < \tau \ll \frac{\rho \alpha}{\log(n/\beta)}$ for some sufficiently small constant.
    There is a $(\rho + \beta, \tau)$-robustly replicable algorithm solving the $\alpha$-Optimal Hypothesis Selection problem with sample complexity $m = \bigO{\frac{\log^{2}(n/\beta)}{\tau^2}}$.
    Furthermore, the algorithm runs in time $O(mn)$.
\end{theorem}

We defer the proof to \Cref{sec:hyp-sel} and show how to use \Cref{thm:optimal-hypothesis-sel} to obtain replicable prediction. 
We give a brief overview of the intuition behind our algorithm.

Consider a (not necessarily replicable) $(\alpha, \beta)$-PAC learning algorithm $\Acal$ with $m := m(\alpha, \beta)$ samples.
It is not too hard to show that for any $\eta > 0$, if we generate $T = O(\eta^{-2})$ hypotheses $h_{1}, \dots, h_{t}$ on fresh samples of size $m$ and compute $\hat{p}(x) = \frac{1}{T} \sum_{t} h_{t}(x)$ and sample a single random threshold $r \sim \UnifD[-1, 1]$, the aggregated hypothesis $\tilde{h}(x) := \ind[\hat{p}(x) \geq r]$ is $\eta$-pointwise replicable (see e.g. \Cref{lem:p-estimate}).

In particular, in expectation (see \Cref{prop:non-uniform-to-approx}), two output hypotheses will not be replicable on an $\eta$-fraction of the distribution.
By taking $\eta \ll \rho \gamma$, Markov's inequality ensures that with probability $\rho$, two output hypotheses (on independent samples with the same shared randomness) will not be replicable on a $\gamma$-fraction of the distribution.
In other words, the above procedure gives a $(\rho, \gamma)$-approximately replicable learner.

Unfortunately, we can not guarantee the above procedure produces an $(\alpha, \beta)$-learner.
While the expected error of the above procedure is at most $\opt + \alpha$ (see \Cref{lem:exp-error-ub}), we need to ensure this holds with high probability.
To do so, we observe that given a random variable $X$ in $[0, 1]$ with expectation $\mu$, we have $\Pr(X < \mu + \alpha) \geq \alpha$ (see \Cref{lem:bounded-markov}).
In particular, if we sample $O(1/\alpha)$ independent hypotheses using our above procedure (using independent shared random strings), at least one will have error $\opt + O(\alpha)$ with high probability.
Then, our hypothesis selection algorithm will output an $\alpha$-optimal hypothesis among our $O(1/\alpha)$ candidate hypothesis, thus returning a hypothesis with error $\opt + O(\alpha)$.

To enforce replicability, we must ensure that \Cref{thm:optimal-hypothesis-sel} selects the same index over both runs.
Our above discussion then ensures that every pair of hypotheses in the same index are $\gamma$-close in distribution distance.
To conclude the proof, we observe that it suffices to set $\gamma \ll \rho \alpha$ to satisfy the input assumption of \Cref{thm:optimal-hypothesis-sel}.
Overall, our sample complexity is $O(m \alpha^{-1} \eta^{-2})$ where $\eta \ll \rho^2 \alpha^2$ so that the final sample complexity is roughly $O(d \alpha^{-7} \rho^{-4})$.

\begin{proof}[Proof of \Cref{thm:apx-repl-finite-r}]
    We are now ready to prove \Cref{thm:apx-repl-finite-r}

    \IncMargin{1em}
    \begin{algorithm}
    
    \SetKwInOut{Input}{Input}\SetKwInOut{Output}{Output}\SetKwInOut{Parameters}{Parameters}
    \Input{PAC-learner $\Acal$ with sample complexity $m(\alpha, \beta)$.}
    \Parameters{$\alpha$ accuracy, $\beta$ error probability, and $\rho$ replicability}
    \Output{$\rho$-pointwise replicable $(\alpha, \beta)$-accurate PAC-learner $\tilde{\Acal}$.}
    
    \caption{$\rLearnerFinite(\Acal, \alpha, \beta, \rho)$}
    \label{alg:non-uniform-replicability}

    Let $R \gets \frac{C \log(1/\beta)}{\alpha}, \gamma \gets \min\left(\gamma, \frac{\rho \alpha}{C \log(R/\min(\rho, \beta))}\right), \eta \gets \frac{\rho \gamma}{R}, T \gets \frac{C}{\eta^2}$ for a sufficiently large constant $C$.
    
    Run $\Acal$ on $RT$ fresh samples of size $m(\alpha, \beta)$, obtaining hypotheses $h_{i, t}: \domain \rightarrow \set{\pm 1}$ for $i \in [R], t \in [T]$.

    For any fixed $i \in [R]$, $x \in \domain$, define $\hat{p}_{i}(x) := \frac{1}{T} \sum_{t = 1}^{T} h_{i, t}(x)$.

    Draw $R$ random thresholds $r_1, \dots, r_{R} \in [-1, 1]$.

    For every $i \in [R]$, define hypotheses $\tilde{h}_{i}(x) = \begin{cases}
        1 & \hat{p}_{i}(x) \geq r_{i} \\
        -1 & \hat{p}_{i}(x) < r_{i}
    \end{cases}$

    Apply \Cref{thm:optimal-hypothesis-sel} to $(\rho, \gamma)$ select an $\alpha$-optimal $\tilde{h}_{i}$. Let $\hat{i} \in [R]$ denote the selected index.

    \Return $\tilde{h}_{\hat{i}}$.
    
    \end{algorithm}
    \DecMargin{1em}

    \paragraph{Step 1: Candidate hypotheses $\tilde{h}_{i}$ are close in distribution distance.}
    
    We begin by proving that (with probability $\geq 1 - \rho$) two runs of the algorithm will produce candidate hypotheses $\tilde{h}_{i}$ that agree on a large fraction of the domain under $\distribution$.
    For any fixed $x$, let $p(x) := \E_{S, \Acal}[\Pr(\Acal(S)(x) = 1)]$ denote the probability (over a random training dataset) that $\Acal$ outputs a hypothesis mapping $x \mapsto 1$.
    Over two runs of the algorithm, we obtain hypotheses $H^{(1)} := \set{h_{i, t}^{(1)}}$ and $H^{(2)} := \set{h_{i, t}^{(2)}}$ from $\Acal$.
    For $i \in [R]$ and $j \in \set{1, 2}$, denote $H_{i}^{(j)} = \set{h_{i, t}^{(j)}}$ from which we can compute estimates $\hat{p}_{i}^{(1)}, \hat{p}_{i}^{(2)}$ of $p(x)$.
    Note that $H^{(j)} = \bigcup_{i = 1}^{R} H_{i}^{(j)}$.
    Denote the candidate hypotheses $\tilde{h}_{i}^{(1)}, \tilde{h}_{i}^{(2)}$.
    We will prove the following lemma.

    \begin{lemma}
        \label{lem:cand-hyp-close}    
        Suppose $\eta \leq \frac{\rho \gamma}{R}$, then with probability $1 - \rho$, the following holds for all $i$:
        \begin{equation*}
            \left|\errD\left(\tilde{h}_{i}^{(1)}\right) - \errD\left(\tilde{h}_{i}^{(2)}\right)\right| \leq \distD\left(h_{i}^{(1)}, h_{i}^{(2)}\right) \leq \gamma \text{.}
        \end{equation*}
    \end{lemma}

    \begin{proof}[Proof of \Cref{lem:cand-hyp-close}]
        We begin by arguing that for any fixed $i \in [R]$ and $x$, the probability that a randomly sampled $r$ falls between $\hat{p}_{i}^{(1)}(x), \hat{p}_{i}^{(2)}(x)$ is small.
        The following lemma follows from an identical argument as \Cref{lem:p-estimate} where we replace $\rho$ with $\gamma$.

        \begin{lemma}
            \label{lem:p-estimate-i}
            For any fixed $i \in [R], x \in \domain$ and $r \sim \UnifD[-1, 1]$,
            \begin{equation*}
                \Pr_{H^{(1)}, H^{(2)}, r} \left( \min(\hat{p}_{i}^{(1)}(x), \hat{p}_{i}^{(2)}(x)) \leq r \leq \max(\hat{p}_{i}^{(1)}(x), \hat{p}_{i}^{(2)}(x)) \right) < \frac{\eta}{10} \text{.}
            \end{equation*}
        \end{lemma}
    
        Now, we bound the probability that a randomly sampled $r$ falls between a large fraction of $\hat{p}_{i}^{(1)}(x), \hat{p}_{i}^{(2)}(x)$.
        We call randomly sampled $r$ that fall between a $\gamma$-fraction of estimates $\gamma$-noisy, which we formally define below.
    
        \begin{definition}
            \label{def:ambiguous-random-string}
            Fix $i \in [R]$. 
            A random string $r$ is $\gamma$-noisy \wrt distribution $\distribution$ and hypotheses $H_{i}^{(1)}, H_{i}^{(2)}$ if 
            \begin{equation*}
                \Pr_{x \sim \distribution_{\domain}} \left( \min(\hat{p}_{i}^{(1)}(x), \hat{p}_{i}^{(2)}(x)) \leq r \leq \max(\hat{p}_{i}^{(1)}(x), \hat{p}_{i}^{(2)}(x)) \right) > \gamma 
            \end{equation*}
            where $\hat{p}^{(j)}_{i}(x) = \E_{h \in H_{i}^{(j)}}[h(x)] = \frac{1}{T} \sum_{h \in H_{i}^{(j)}} h(x)$ for $j \in \set{1, 2}$ and $T = \left|H_{i}^{(1)}\right| = \left|H_{i}^{(2)}\right|$.
        \end{definition}
    
        Now, we bound the probability that a randomly sampled $r$ is $\gamma$-noisy.
    
        \begin{lemma}
            \label{lem:r-noisy-prob}
            Fix $i \in [R]$. 
            For a randomly drawn $r \sim \UnifD[0, 1]$ and hypotheses $H_{i}^{(1)}, H_{i}^{(2)}$ generated from \Cref{alg:non-uniform-replicability}, 
            \begin{equation*}
                \Pr_{H_{i}^{(1)}, H_{i}^{(2)}, r} \left( r \text{ is $\gamma$-noisy \wrt } \distribution, H_{i}^{(1)}, H_{i}^{(2)} \right) < \frac{\eta}{10 \gamma} \text{.}
            \end{equation*}
        \end{lemma}
    
        \begin{proof}
            For notational simplicity, we denote $H^{(j)} := H_{i}^{(j)}$ and $\hat{p}^{(j)} := \hat{p}_{i}^{(j)}$ for $j \in \set{1, 2}$.
            Let $E(r, x, H^{(1)}, H^{(2)})$ denote the event that $\min(\hat{p}^{(1)}(x), \hat{p}^{(2)}(x)) \leq r \leq \max(\hat{p}^{(1)}(x), \hat{p}^{(2)}(x))$.
            Then,
            \begin{align*}
                \E_{H^{(1)}, H^{(2)}, r} \left[ \Pr_{x \sim \distribution} \left( E(x, r, H^{(1)}, H^{(2)}) \right) \right] &=  \E_{H^{(1)}, H^{(2)}, r} \E_{x \sim \distribution} \left[ \ind_{E(x, r, H^{(1)}, H^{(2)})} \right] \\
                &= \E_{x \sim \distribution} \E_{H^{(1)}, H^{(2)}, r} \left[ \ind_{E(x, r, H^{(1)}, H^{(2)})} \right] \\
                &= \E_{x \sim \distribution} \left[ \Pr_{H^{(1)}, H^{(2)}, r} \left( \min(\hat{p}^{(1)}(x), \hat{p}^{(2)}(x)) \leq r \leq \max(\hat{p}^{(1)}(x), \hat{p}^{(2)}(x)) \right) \right] \\
                &\leq \frac{\eta}{10} 
            \end{align*}
            where we have exchanged the order of integration by observing that we are taking the integral of a bounded function over a finite measure and the final inequality follows from \Cref{lem:p-estimate}.
            
            Let $E(r, H^{(1)}, H^{(2)})$ denote the event that $r$ is $\gamma$-noisy \wrt $\distribution$ and $H^{(1)}, H^{(2)}$. 
            Then, applying Markov's inequality to the non-negative random variable $\Pr_{x \sim \distribution} \left( E(x, r, H^{(1)}, H^{(2)}) \right)$, we obtain
            \begin{align*}
                \Pr_{H^{(1)}, H^{(2)}, r}\left( E(r, H^{(1)}, H^{(2)}) \right) &= \Pr_{H^{(1)}, H^{(2)}, r} \left( \Pr_{x \sim \distribution} \left( E(x, r, H^{(1)}, H^{(2)}) \right) > \gamma \right) \\
                &< \frac{\eta}{10 \gamma} 
            \end{align*}
            as desired.
        \end{proof}
    
        Thus, setting $\eta \leq \frac{\rho \gamma}{R}$, we can apply a union bound and assume that (with probability $1 - \rho$), for all $i \in [R]$, $r_{i}$ is not $\gamma$-noisy \wrt $\distribution, H_{i}^{(1)}, H_{i}^{(2)}$.
    
        \begin{claim}
            \label{clm:no-r-noisy}
            If $\eta \leq \frac{\rho \gamma}{R}$, then with probability at least $1 - \rho$, for all $i \in [R]$, $r_{i}$ is not $\gamma$-noisy \wrt $\distribution, H_{i}^{(1)}, H_{i}^{(2)}$.
        \end{claim}
    
        We now claim that whenever $r_{i}$ is not $\gamma$-noisy \wrt $H_{i}^{(1)}, H_{i}^{(2)}$, the hypotheses $\tilde{h}_{i}$ obtained from $r_{i}$ and $H_{i}^{(1)}, H_{i}^{(2)}$ disagree on a small fraction of the distribution and have similar errors.
    
        \begin{claim}
            \label{clm:not-noisy-error-ub}
            Fix $i \in [R]$.
            Suppose $r_{i}$ is not $\gamma$-noisy \wrt $\distribution, H_{i}^{(1)}, H_{i}^{(2)}$.
            For $j \in \set{1, 2}$, let $\tilde{h}^{(j)}_{i}$ be the hypothesis in \Cref{alg:non-uniform-replicability} that maps $x$ to $1$ if $\hat{p}_{i}^{(j)}(x) \geq r_{i}$ and $-1$ otherwise.
            Then,
            \begin{equation*}
                \left|\errD\left(\tilde{h}_{i}^{(1)}\right) - \errD\left(\tilde{h}_{i}^{(2)}\right)\right| \leq \distD\left(h_{i}^{(1)}, h_{i}^{(2)}\right) \leq \gamma \text{.}
            \end{equation*}
            Furthermore, the first inequality holds for any two hypotheses.
        \end{claim}
    
        \begin{proof}
            We begin with the first inequality.
            We can rewrite for any hypothesis $h$,
            \begin{equation*}
                \errD(h) = \Pr_{(x, y) \sim \distribution}(h(x) \neq y) = \E_{(x, y) \sim \distribution}[\ind[h(x) \neq y]] \text{.}
            \end{equation*}
            Then, for any two hypotheses $h_1, h_2$,
            \begin{align*}
                |\errD(h_1) - \errD(h_2)| &= \left| \E_{(x, y) \sim \distribution}[\ind[h_{1}(x) \neq y]] - \E_{(x, y) \sim \distribution}[\ind[h_{2}(x) \neq y]] \right| \\
                &= \left| \E_{(x, y) \sim \distribution}[\ind[h_{1}(x) \neq y] - \ind[h_2(x) \neq y]] \right| \\
                &\leq \E_{(x, y) \sim \distribution}[\left| \ind[h_{1}(x) \neq y] - \ind[h_2(x) \neq y] \right|] \\
                &\leq \E_{(x, y) \sim \distribution}[\ind[h_{1}(x) \neq h_2(x)] - \ind[h_2(x) \neq y]] \\
                &= \distD(h_1(x), h_2(x)) \text{.}
            \end{align*}
            Above, we have used the identity $\left| \ind[h_{1}(x) \neq y] - \ind[h_2(x) \neq y] \right| \leq \ind[h_1(x) \neq h_2(x)]$.
    
            We proceed to the second inequality.
            Since we fix $i$, we omit $i$ as before for notational simplicity.
            Note that $\tilde{h}^{(j)}$ disagree on $x$ if and only if $\min(\hat{p}^{(1)}(x), \hat{p}^{(2)}(x)) \leq r \leq \max(\hat{p}^{(1)}(x), \hat{p}^{(2)}(x))$.
            Thus, since $r$ is not $\gamma$-noisy \wrt $H^{(1)}, H^{(2)}$, we can upper bound
            \begin{align*}
                \distD\left(h^{(1)}, h^{(2)}\right) &= \Pr_{x \sim \distribution}\left(h^{(1)}(x) \neq h^{(2)}(x)\right) \\
                &= \Pr_{x \sim \distribution}\left(\min\left(\hat{p}^{(1)}(x), \hat{p}^{(2)}(x)\right) \leq r \leq \max\left(\hat{p}^{(1)}(x), \hat{p}^{(2)}(x)\right)\right) \\
                &\leq \gamma \text{.} \qedhere
            \end{align*}
        \end{proof}

        We note that the above discussion concludes the proof of \Cref{lem:cand-hyp-close}.
    \end{proof}

    \paragraph{Step 2: (At least) one candidate hypothesis is accurate.}
    So far, we have obtained a collection of hypotheses $\set{h^{(i)}_{r}}$ such that over two runs of the algorithm, every pair of $h^{(1)}_{r}, h^{(2)}_{r}$ agree on a large fraction of the data distribution $\distribution_{\domain}$.
    We now prove that \whp at least one such hypothesis is $O(\alpha)$-accurate.
    In the following, we focus on only one run of the algorithm, and therefore omit the superscript for notational simplicity.
    We will prove the following lemma.

    \begin{lemma}
        \label{lem:no-r-accurate-ub}
        Let $\tilde{h}_{1}, \dots, \tilde{h}_{R}$ denote the candidate hypotheses generated by \Cref{alg:non-uniform-replicability}.
        Then, with probability $1 - \beta$, $\min_{i = 1}^{R} \errD(\tilde{h}_{i}) < \opt + 2 \alpha$.
    \end{lemma}

    \begin{proof}[Proof of \Cref{lem:no-r-accurate-ub}]
        First, we place an upper bound on the expected error of our candidate hypotheses.

        \begin{lemma}
            \label{lem:exp-error-ub}
            Fix $i \in [R]$.
            Then, $\E_{\tilde{h}_{i}}\left[ \errD(\tilde{h}_{i})\right] = \E_{h \sim \Acal}[\errD(h)] \leq \opt + \alpha + \beta$.
        \end{lemma}
    
        \begin{proof}
            As before, since we fix $i$, we omit the subscript for notational simplicity.
            The lemma will follow from the fact that our algorithm (which produces candidate hypotheses $\tilde{h}$) is an unbiased estimator. 
            In particular, over the randomness of $\tilde{h}$, we have
            \begin{align*}
                \E_{\tilde{h}}\left[\errD(\tilde{h})\right] &= \E_{\tilde{h}, (x, y) \sim \distribution} \left[\ind[\tilde{h}(x) \neq y] \right]\\
                &= \E_{(x, y) \sim \distribution} \left[ \E_{\tilde{h}} \left[\ind[\tilde{h}(x) \neq y] \right] \right] \\
                &= \E_{(x, y) \sim \distribution} \left[ \Pr_{\tilde{h}} \left( \tilde{h}(x) \neq y \right) \right] \text{.}
            \end{align*}
            As before, we have exchanged the order of integration by Fubini's theorem.

            From our above calculation, it is enough to observe that for any fixed $x$ we have
            \begin{align*}
                \Pr_{\tilde{h}}\left( \tilde{h}(x) = 1 \right) = \Pr_{h \sim \Acal}(h(x) = 1) \text{.}
            \end{align*}
            We argue the above equality.
            Observe that $\Pr_{H, r}(\tilde{h}(x) = 1) = \E_{H} [\Pr_{r}(\tilde{h}(x) = 1)]$ where $r$ are sampled randomness and $H = \set{h_{i, t}}$ the sampled hypotheses.
            From the proof of \Cref{lem:exp-error-ub-pred}, we have
            \begin{equation*}
                \E_{H}[\Pr_{r}(\tilde{h}(x) = 1)] = \E_{H}[\hat{p}(x)] = \frac{1}{T} \sum_{t = 1}^{T} \Pr_{H}(h_{i, t}(x) = 1) = \Pr_{h \sim \Acal}(h(x) = 1) \text{.}
            \end{equation*}
            
            In particular, we have $\E_{H, r}[\errD(\tilde{h})] = \E_{h \sim \Acal}[\errD(h)] \leq \opt + \alpha + \beta$. 
        \end{proof}
    
        Now, we bound the probability that no hypothesis is $O(\alpha)$-accurate.
        We will use the following fact, whose proof we defer.
    
        \begin{lemma}
            \label{lem:bounded-markov}
            Let $X$ be a random variable with support in $[0, 1]$ and $\mu := \E[X]$.
            Then, for any $t \leq 1 - \mu$,
            \begin{equation*}
                \Pr(X < \mu + t) \geq t \text{.}
            \end{equation*}
        \end{lemma}

        Note that $\errD(\tilde{h}_{i})$ is a random variable with support in $[0, 1]$.
        Thus, for any fixed $i \in [R]$, \Cref{lem:exp-error-ub} and \Cref{lem:bounded-markov} with $t = \alpha$ implies
        \begin{equation*}
            \Pr\left( \errD(\tilde{h}_{i}) < \opt + 2 \alpha + \beta \right) \geq \alpha \text{.}
        \end{equation*}

        Finally, since all $\tilde{h}_{i}$ are generated independently (note that $H_{i}$ are independent training samples and $r_{i}$ are independent random strings), we conclude the proof by observing
        \begin{equation*}
            \Pr \left( \min_{i = 1}^{R} \errD(\tilde{h}_{i}) \geq \opt + 2 \alpha \right) < (1 - \alpha)^{R} < e^{- \alpha R} < \beta
        \end{equation*}
        as long as $R \geq \frac{C \log(1/\beta)}{\alpha}$ for sufficiently large constant $C$.
    \end{proof}

    \paragraph{Step 3: Replicable hypothesis selection.}
    Finally, we show that we can replicably select a near-optimal candidate hypothesis, thus obtaining a PAC learner that is both pointwise and approximately replicable.
    To do so, we directly apply \Cref{thm:optimal-hypothesis-sel}.
    Recall that we set $\gamma \leq \frac{\rho \alpha}{C \log(R/\min(\rho, \beta))}$ for a sufficiently large constant $\alpha$ so that \Cref{thm:optimal-hypothesis-sel} (along with \Cref{lem:cand-hyp-close}) implies that with probability $1 - \beta$ we $\rho$-replicably select an index $\hat{i} \in [R]$ such that 
    \begin{equation*}
        \errD(\tilde{h}_{\hat{i}}) \leq \min_{i = 1}^{R} \errD(\tilde{h}_{i}) + \alpha \leq \opt + 3 \alpha + \beta \text{.}
    \end{equation*}
    
    A union bound concludes that our algorithm is a $(3 \alpha + \beta, 2 \beta)$-learner.
    We now argue that our algorithm produces both a pointwise and approximately replicable learner. 
    The latter immediately follows by union bounding over the error of \Cref{lem:cand-hyp-close} and \Cref{thm:optimal-hypothesis-sel} so that we obtain a $(2\rho, \gamma)$-approximately replicable learner.
    The former follows from the fact that each $\tilde{h}_{i}$ is already an $\eta$-pointwise replicable learner from \Cref{lem:p-estimate}.
    By our choices of parameters for $\eta, \gamma, R$, observe that
    \begin{equation*}
        \eta \leq \frac{10 \rho \gamma}{R} \leq \frac{10 \rho^2 \alpha}{C R \log(R/\beta)} \leq \frac{10 \rho^2 \log(1/ \beta)}{C^2 \log(R/\min(\rho, \beta))} \leq \rho
    \end{equation*}
    for large enough constant $C$.
    In particular, by a union bound, whenever we replicably select an index $\hat{i} \in [R]$, we obtain a $2 \rho$-pointwise replicable learner.

    \paragraph{Sample complexity and runtime.}
    We now bound the sample complexity.
    Our algorithm requires samples to run $\Acal$ on $RT$ fresh samples of size $m(\alpha, \beta)$ and $\bigO{\frac{\log^2(R/\beta)}{\gamma^2}}$ samples to run \Cref{thm:optimal-hypothesis-sel}.
    Bounding the above two terms we obtain a sample complexity of
    \begin{align*}
        RTm(\alpha, \beta) &= \bigO{\frac{R m(\alpha, \beta)}{\eta^2}} \\
        &= \bigO{\frac{R^{3} m(\alpha, \beta)}{\rho^2 \gamma^2}} \\
        &= \bigO{m(\alpha, \beta) \left( \frac{R^{3}}{\rho^2 \gamma^2} + \frac{R^{3} \log(R/\min(\rho, \beta))}{\rho^{4} \alpha^2} \right)} \\
        &= \bigO{m(\alpha, \beta) \left( \frac{\log^{3}(1/\beta)}{\rho^2 \gamma^2 \alpha^3} + \frac{\log^{3}(1/\beta) \log(1/(\alpha\min(\rho, \beta)))}{\rho^{4} \alpha^{5}} \right)} \\
        &= \bigtO{m(\alpha, \beta) \left( \frac{\log^{3}(1/\beta)}{\rho^2 \gamma^2 \alpha^3} + \frac{\log^{4}(1/\beta)}{\rho^{4} \alpha^{5}} \right)} \text{.}
    \end{align*}
    Above, we have iteratively applied our setting of $T, \eta, \gamma, R$.
    Finally, since $m(\alpha, \beta) = \bigO{\frac{d + \log(1/\beta)}{{\alpha^2}}}$, we conclude that the overall sample complexity is
    \begin{equation*}
        \bigtO{\frac{d \log^{3}(1/\beta)}{\rho^2 \gamma^2 \alpha^{5}} + \frac{d \log^{4}(1/\beta)}{\rho^{4} \alpha^{7}} + \frac{\log^{4}(1/\beta)}{\rho^2 \gamma^2 \alpha^{5}} + \frac{\log^{5}(1/\beta)}{\rho^{4} \alpha^{7}}} = \bigtO{\frac{d \log^{4}(1/\beta)}{\rho^2 \gamma^2 \alpha^{5}} + \frac{d \log^{5}(1/\beta)}{\rho^{4} \alpha^{7}}} \text{.}
    \end{equation*}
    By inspecting our algorithm, we note that all steps run in linear time in sample complexity, except \Cref{thm:optimal-hypothesis-sel}, which requires time $\bigO{R\log^2(R/\beta)\gamma^{-2}}$.Note that this remains within our sample complexity, proving the desired runtime.
\end{proof}

To conclude, we prove the necessary lemmas.

\begin{proof}[Proof of \Cref{lem:bounded-markov}]
    Since $X \geq 0$ is non-negative, we have
    \begin{align*}
        \mu &= \int_{0}^{\infty} \Pr(X \geq x) \dx \\
        &= \int_{0}^{\mu + t} \Pr(X \geq x) \dx + \int_{\mu + t}^{\infty} \Pr(X \geq x) \dx \\
        &\geq \int_{0}^{\mu + t} \Pr(X \geq \mu + t) \dx \\
        &= (\mu + t) \Pr(X \geq (\mu + t)) 
    \end{align*}
    where the inequality follows from $\Pr(X \geq x) \geq \Pr(X \geq y)$ for $x \leq y$.
    Rearranging, we obtain $\Pr(X \geq \mu + t) \leq \frac{\mu}{\mu + t}$.
    Finally, we can conclude
    \begin{equation*}
        \Pr(X < \mu + t) = 1 - \Pr(X \geq \mu + t) \geq 1 - \frac{\mu}{\mu + t} = \frac{t}{\mu + t} \geq t 
    \end{equation*}
    where the final inequality follows from $\mu + t \leq 1$.
\end{proof}

\subsection{Hypothesis Selection}
\label{sec:hyp-sel}

We prove \Cref{thm:optimal-hypothesis-sel}.
Our algorithm closely follows \cite{GhaziKM21, hopkins2025generative}, using correlated sampling to sample from a distribution over the hypotheses given by the exponential mechanism from privacy \cite{McTalwar}.

\begin{proof}
    Consider \Cref{alg:robust-repl-hyp-sel}.

    \IncMargin{1em}
    \begin{algorithm}
    
    \SetKwInOut{Input}{Input}\SetKwInOut{Output}{Output}\SetKwInOut{Parameters}{Parameters}
    \Input{Hypotheses $\set{f_{i}}_{i = 1}^{n}$ and sample access to $\distribution$.}
    \Parameters{$\alpha$ accuracy, $\beta$ error probability, and $(\rho, \tau)$ robust replicability}
    \Output{$\alpha$-optimal hypothesis $f_{i}$.}
    Set $\expMechScoreScale \gets \frac{2 \log(2n/\beta)}{\alpha}$ and $C$ a sufficiently large constant.
    
    Draw $m \gets \frac{C}{\tau^2} \log \left( \frac{n}{\beta} \right)$ \iid samples from $\distribution$, denoted $S \gets ((x_{\ell}, y_{\ell}))_{\ell = 1}^{m}$.
    
    For each $i \in [n]$, empirically estimate $\err_{S}(f_{i}) \gets \frac{1}{m} \sum_{\ell = 1}^{m} \ind (f_{i}(x_{\ell}) \neq y_{\ell})$.
    
    Define the distribution $\hat{\distribution}$ on $[n]$ using the exponential mechanism, i.e. $i$ is drawn with probability proportional to $\exp(- t \cdot \err_{S}(f_{i})))$.
    
    \Return $\corrSamp(\hat{\distribution}; r)$ with shared internal randomness $r$.
    
    \caption{$\hypothesisSelection(\set{f_{i}}_{i = 1}^{n}, \alpha, \beta, \rho, \tau)$}
    \label{alg:robust-repl-hyp-sel}
    
    \end{algorithm}
    \DecMargin{1em}
    
    We separately argue the correctness and robust replicability of the algorithm.
    Condition on the event
    \begin{equation*}
        \left|\err_{S}(f_{i}) - \errD(f_{i})\right| \ll \frac{\tau}{\log(n/\beta)}
    \end{equation*}
    for all $i$.
    By our choice of $m = \bigO{\tau^{-2} \log(n/\beta)}$, this happens with probability at least $1 - \frac{\beta}{2}$ by a Hoeffding and union bound.
    
    We begin by arguing correctness.
    Denote by $i^* \in [n]$ the index of an optimal hypothesis i.e., $i^* = \arg \min_{i} \errD(f_{i})$.
    Then, the probability that a hypothesis $f_{i}$ is sampled is at most $\exp(- \expMechScoreScale (\err_{S}(f_{i}) - \err_{S}(f_{i^*}))$.
    If $\errD(f_{i}) > \errD(f_{i^*}) + \alpha$ then by the above conditioning, $\err_{S}(f_{i}) \geq \errD(f_{i}) - \frac{\tau}{4} > \errD(f_{i^*}) + \alpha - \frac{\tau}{4} \geq \err_{S}(f_{i^*}) + \alpha - \frac{\tau}{2}$.
    Thus, by our assumption $\tau < \alpha$, $\err_{S}(f_{i}) \geq \err_{S}(f_{i^*}) + \frac{\alpha}{2}$.
    Finally the probability any such $i$ is output is at most
    \begin{equation*}
        \exp\left(- \expMechScoreScale \alpha / 2 \right) \leq \frac{\beta}{2n}
    \end{equation*}
    so that by a union bound, the probability that any hypothesis with $\errD(f_{i}) > \errD(f_{i^*}) + \alpha$ is chosen is at most $\frac{\beta}{2}$.
    Combined with the conditioned event, the probability of an error is at most $\beta$. 

    Next, we analyze replicability.
    Let $S_1, S_2$ denote the samples observed by two independent runs of the algorithm $\innerAlg$ and $r$ the shared internal randomness.
    Let $P(i) = \Pr(\innerAlg(S_1; r) = i)$ and $Q(i) = \Pr(\innerAlg(S_2; r) = i)$.
    Then by the definition of the exponential mechanism
    \begin{align*}
        P(i) &= \frac{\exp(- \expMechScoreScale \cdot \err_{S_{1}}(f_{i}))}{\sum_{j} \exp(- \expMechScoreScale \cdot \err_{S_{1}}(f_{j}))} \\
        Q(i) &= \frac{\exp(- \expMechScoreScale \cdot \err_{S_{2}}(g_{i}))}{\sum_{j} \exp(- \expMechScoreScale \cdot \err_{S_{2}}(g_{j}))}
    \end{align*}

    We bound the total variation distance between $P$ and $Q$.
    Fix some $i \in [n]$.
    Then,
    \begin{align*}
        P(i) - Q(i) &= \frac{\exp(- \expMechScoreScale \cdot \err_{S_{1}}(f_{i}))}{\sum_{j} \exp(- \expMechScoreScale \cdot \err_{S_{1}}(f_{j}))} - \frac{\exp(- \expMechScoreScale \cdot \err_{S_{2}}(g_{i}))}{\sum_{j} \exp(- \expMechScoreScale \cdot \err_{S_{2}}(g_{j}))} \\
        &= \left( \frac{\exp(- \expMechScoreScale \cdot \err_{S_{2}}(g_{i}))}{\sum_{j} \exp(- \expMechScoreScale \cdot \err_{S_{2}}(g_{j}))} \left( \frac{\exp(- \expMechScoreScale \cdot \err_{S_{1}}(f_{i}))}{\exp(- \expMechScoreScale \cdot \err_{S_{2}}(g_{i}))} \frac{\sum_{j} \exp(- \expMechScoreScale \cdot \err_{S_{2}}(g_{j}))}{\sum_{j} \exp(- \expMechScoreScale \cdot \err_{S_{1}}(f_{j}))} - 1 \right) \right) .
    \end{align*}
    Combining our conditioned event and the triangle inequality, we bound
    \begin{align*}
        \left| \err_{S_{1}}(f_{i}) - \err_{S_{2}}(g_{i}) \right| &\leq \left| \err_{S_{1}}(f_{i}) - \err_{\distribution}(f_{i}) \right| + \left| \err_{\distribution}(f_{i}) - \err_{\distribution}(g_{i}) \right| + \left| \err_{\distribution}(g_{i}) - \err_{S_{2}}(g_{i}) \right|\\
        &\leq 2 \tau + \left| \errD(f_{i}) - \errD(g_{i}) \right| \\
        &\leq 2 \tau + \tau \\
        &\leq 3 \tau \text{.}
    \end{align*}
    In particular, the total variation distance can be bounded by
    \begin{align*}
        \frac{1}{2} \sum_{i} |P(i) - Q(i)| &= \frac{1}{2} \sum_{i} Q(i) \left( e^{4 t (3 \tau)} - 1 \right) = O(t \tau) = \bigO{\frac{\log(n/\beta) \tau}{\alpha}}
    \end{align*}
    where we use that $e^{x} < 1 + O(x)$ for small $x > 0$.
    Since we have assumed that $\tau \ll \rho \alpha/\log(n/\beta)$, the total variation distance is bounded by $\rho$.
    In particular, by Correlated Sampling (see \Cref{lemma:correlated-sampling}) we can ensure that the outputs of the algorithms differ by at most probability $O(\rho)$.
    Combined with a union bound on the conditioned events, the algorithms fail to be replicable with probability at most $O(\rho + \beta)$.
\end{proof}

\subsection{Approximately Replicable Learning for Thresholds}

In this section, we give a proper approximately replicable learning algorithm for thresholds with near-optimal sample complexity by combining our robust hypothesis selection method from the previous section with a basic high probability quantile estimation.

\begin{definition}[Thresholds]
    \label{def:1-d-thresholds}
    The class of thresholds consist of all functions $\hypotheses := \set{f_{t}: x \mapsto \ind[x > t] \mid t \in \R}$.
\end{definition}

Our learner satisfies the following formal guarantees.

\begin{proposition}[Formal \Cref{prop:thresholds-proper-informal}]
    \label{prop:thresholds-proper-formal}
    Let $\rho, \gamma, \alpha > 0$ and $0 < \beta < \rho$.
    There is an agnostic proper $(\rho, \gamma)$-approximately replicable $(\alpha, \beta)$-learner for thresholds with sample complexity 
    \begin{equation*}
        \bigO{\frac{\log(1/\beta)}{\gamma^2} + \frac{\log^3(1/\alpha\beta)}{\rho^2 \alpha^2}} \text{.}
    \end{equation*}
\end{proposition}

In \Cref{sec:apx-lower} we show this algorithm is optimal (up to log factors) when $\rho = \Theta(\gamma)$.

Our algorithm proceeds in two stages: first we estimate quantiles of the marginal distribution $\distribution_{\domain}$. Then, given that the quantiles give a sufficiently granular splitting of the marginal distribution (and they are learned up to sufficient accuracy), we use robustly replicable hypothesis selection to select an $\alpha$-optimal quantile.
In slightly more detail, if we learn the marginal $\distribution_{\domain}$'s $\alpha$-quantiles up to $\gamma \ll \rho \alpha$ accuracy, we are guaranteed (1) that at least one quantile is $\alpha$-optimal, and (2) every quantile is $\gamma$-close in classification distance across multiple runs of the algorithm.
Therefore, robustly replicable hypothesis selection (\Cref{thm:optimal-hypothesis-sel}) replicably outputs an $O(\alpha)$-optimal hypothesis.

\begin{proof}
    We now formally describe our algorithm. 

    \IncMargin{1em}
    \begin{algorithm}
    
    \SetKwInOut{Input}{Input}\SetKwInOut{Output}{Output}\SetKwInOut{Parameters}{Parameters}
    \Input{Sample access to distribution $\distribution$ over $\R \times \set{0, 1}$.}
    \Parameters{$(\rho, \gamma)$ replicability parameters and $(\alpha, \beta)$-accuracy parameters.}
    \Output{$(\rho, \gamma)$-approximately replicable $(\alpha, \beta)$-accurate agnostic PAC leaner.}

    Fix $K \gets \frac{3}{\alpha}$ and $\tau \ll \min\left(\gamma, \frac{\rho \alpha}{\log(K/\beta)} \right)$ for some sufficiently small constant.
    
    Draw $m \gets \bigO{\frac{\log(1/\beta)}{\tau^2}}$ samples from $\distribution$, denoted $S = (x_1, \dotsc, x_{m})$.
    
    Sort $S$ (with ties broken arbitrarily).
    
    \For{$0 \leq i \leq K$}{
        Set $\hat{t}_{i} \gets x_{\frac{i m}{K}}$ and $h_{i}: x \mapsto \ind[x \geq t_{i}]$.
    }

    Set $h_{-1}: x \mapsto 0$ and $h_{K + 1}: x \mapsto 1$.

    \Return $\hypothesisSelection\left( \set{h_{i}}_{i = -1}^{K+ 1}, \frac{\alpha}{2}, \frac{\beta}{2}, \frac{\rho}{3}, \tau \right)$.
    
    \caption{$\thresholdLearner(\alpha, \beta, \rho, \gamma)$}
    \label{alg:threshold-alg}
    
    \end{algorithm}
    \DecMargin{1em}

    Let $\tau, K > 0$ to be parameters to be fixed later.
    We will assume that $K$ is a multiple of $\frac{1}{\alpha}$.
    For simplicity, we may assume $m$ is a multiple of $\frac{1}{\alpha}$ (since we will have $m > \frac{1}{\alpha}$, this comes at most a constant factor increase in sample complexity).
    Let $F(x)$ denote the cumulative distribution function of the marginal distribution $\distribution_{\domain}$, i.e. $F(x) := \Pr_{X \sim \distribution_{\domain}}(X \leq x)$.
    For all $0 \leq i \leq K$, let $t_{i} := F(i/K)$ denote the $i/K$-th quantile of $\distribution_{\domain}$.
    Since $\hat{t}_{i}$ are the empirical $i/K$-th quantiles of our sample $S$ drawn \iid from $\distribution_{\domain}$, we use the Dvoretsky-Kiefer-Wolfowitz (DKW) Inequality to bound the accuracy of our estimates.

    \begin{theorem}[Dvoretsky-Kiefer-Wolfowitz Inequality \cite{dvoretzky1956asymptotic, massart1990tight}]
        \label{thm:dkw}
        Let $\distribution$ be a distribution on $\R$ with cumulative distribution function $F$.
        Let $X_{1}, \dotsc, X_{m}$ be drawn \iid from $\distribution$ and $F_{m}(x) := \frac{1}{m} \sum_{i = 1}^{m} \ind[X_{i} \leq x]$.
        Then
        \begin{equation*}
            \Pr \left( \sup_{x \in \R} |F_{n}(x) - F(x)| > \tau \right) < 2 e^{-m \tau^2} \text{.}
        \end{equation*}
    \end{theorem}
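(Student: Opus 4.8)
The final statement is the classical Dvoretzky--Kiefer--Wolfowitz inequality; in fact the stated exponent $m\tau^2$ is a factor two weaker than Massart's sharp form, so the plan is to reduce the two-sided statement for a general $\distribution$ to the one-sided estimate for the \emph{uniform} empirical process and then invoke (a weakening of) the known sharp one-sided bound.

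\textbf{Step 1: Reduce to $\distribution = \mathrm{Unif}[0,1]$.} Writing $F^{-1}(u) := \inf\set{t \in \R : F(t) \geq u}$ for the generalized inverse, right-continuity of $F$ gives the identity $F^{-1}(u) \leq x \iff u \leq F(x)$. Hence if $U_1,\dotsc,U_m$ are i.i.d.\ uniform on $[0,1]$ and $X_i := F^{-1}(U_i)$, then $X_i \sim \distribution$ and, writing $G_m$ for the empirical CDF of $U_1,\dotsc,U_m$,
\[
F_m(x) = \frac{1}{m}\sum_{i=1}^m \ind[X_i \leq x] = \frac{1}{m}\sum_{i=1}^m \ind[U_i \leq F(x)] = G_m(F(x)).
\]
Since $F(x) \in [0,1]$ for every $x$, this yields $\sup_{x \in \R}|F_m(x) - F(x)| \leq \sup_{t \in [0,1]}|G_m(t) - t|$, so it suffices to bound $\Pr\big(\sup_{t\in[0,1]}|G_m(t)-t| > \tau\big)$.

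\textbf{Step 2: Reduce to one side.} Reflecting the samples via $U_i \mapsto 1-U_i$ (a measure-preserving bijection, and $t \mapsto t$ is continuous) shows $\sup_t (G_m(t)-t)$ and $\sup_t (t - G_m(t))$ are identically distributed, so a union bound gives
\[
\Pr\!\left(\sup_{t\in[0,1]}|G_m(t)-t| > \tau\right) \leq 2\,\Pr\!\left(\sup_{t\in[0,1]}(G_m(t)-t) > \tau\right),
\]
and it is enough to prove the right-hand probability is at most $e^{-m\tau^2}$.

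\textbf{Step 3: The one-sided tail.} This is the real content. The process $G_m(t)-t$ increases only by jumps at the order statistics, so the event $\set{\sup_t(G_m(t)-t) > \tau}$ is governed by the Smirnov one-sided statistic, whose distribution for uniform samples has a closed combinatorial form obtainable by the reflection principle / ballot-type counting on the path $k \mapsto (\text{number of samples} \leq t)$; estimating that form gives the bound. Equivalently one can run an exponential-supermartingale argument: for suitable $\lambda$, $t \mapsto \exp(\lambda m(G_m(t)-t))$ times the appropriate compensator is a supermartingale in $t$, so Doob's maximal inequality at a well-chosen level together with optimizing $\lambda$ yields a bound of the form $e^{-cm\tau^2}$. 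The sharp constant $c=2$ (Massart) needs the refined reflection argument; the weaker exponent $c=1$ stated here follows already from elementary Hoeffding-type estimates via a peeling argument over dyadic ranges of $t$, the only care being to absorb the resulting $\poly(1/\tau)$ prefactor. For a clean self-contained constant one would simply cite Massart's inequality and note $2e^{-2m\tau^2} \leq 2e^{-m\tau^2}$.

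\textbf{Main obstacle.} Everything before Step 3 is routine coupling and symmetry; the difficulty is entirely in the one-sided tail. A naive union bound over a $\tau$-net of $[0,1]$ controls $G_m$ only at finitely many points and leaves both a $1/\tau$ prefactor and a degraded exponent (roughly $e^{-m\tau^2/2}$), so obtaining exactly the stated form $2e^{-m\tau^2}$ genuinely requires either the exact Smirnov distribution (reflection principle) or Massart's argument.
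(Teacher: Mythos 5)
The paper does not actually prove this statement --- it is quoted directly from the literature, and Massart's sharp form $\Pr\left(\sup_{x}|F_m(x)-F(x)|>\tau\right)\le 2e^{-2m\tau^2}$ immediately implies the stated weaker bound, which is exactly the fallback your Step 3 ends on, so your proposal is consistent with the paper's treatment. One minor caveat: your claim that an elementary Hoeffding-plus-peeling argument absorbs the $\mathrm{poly}(1/\tau)$ prefactor into the slack between exponents $2$ and $1$ is not airtight in every regime (e.g.\ when $m\tau^2$ is smaller than order $\log(1/\tau)$ the absorption fails), but since you ultimately invoke Massart and note $2e^{-2m\tau^2}\le 2e^{-m\tau^2}$, the argument stands.
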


    In particular, by our choice of $m$, we have that with probability $1 - \beta$, 
    the estimated quantiles $\hat{t}_{i}$ satisfy
    \begin{equation}
        \label{eq:threshold-quantile-est}
        \Pr_{z \sim \distribution} \left( z \in (\min(t_{i}, \hat{t}_{i}) , \max(t_{i}, \hat{t}_{i})) \right) < \frac{\tau}{20} \text{.}
    \end{equation}

    We claim that our algorithm is correct.
    Let $h^*: x \mapsto \ind[x \geq t^*]$ be the optimal hypothesis.
    We claim that there exists a hypothesis $h_{i}$ such that $\Pr_{z}(h_{i}(z) \neq h^*(z)) \leq \frac{1}{K} + 0.1 \tau$.
    By construction (if we set $\hat{t}_{-1} \gets - \infty$ and $\hat{t}_{K + 1} \gets +\infty$) there exists some $-1 \leq i \leq K$ for which $\hat{t}_{i} \leq t^* \leq \hat{t}_{i + 1}$.
    If $\hat{t}_{i} = \hat{t}_{i + 1}$, then $h^*$ is in fact included in our hypothesis set.
    Otherwise, $\hat{t}_{i} < \hat{t}_{i + 1}$ and we have
    \begin{align*}
        \Pr_{z}(h_{i}(z) \neq h^*(z)) &\leq \Pr_{z}(h_{i}(z) \neq h_{i + 1}(z)) \\
        &\leq \Pr_{z}(\hat{t}_{i} < z < \hat{t}_{i + 1}) \\
        &\leq \frac{1}{K} + \frac{\tau}{10}
    \end{align*}
    where the final inequality follows from the definition that $\hat{t}_{i}, \hat{t}_{i + 1}$ are separated by $\frac{m}{K}$ samples and \eqref{eq:threshold-quantile-est}.
    Thus, setting $K = \frac{3}{\alpha}$ and $\tau < \alpha$, there exists a hypothesis $h_{i}$ that is within $\frac{\alpha}{2}$ of $h^*$ i.e. we ensure that we obtain a $\frac{\alpha}{2}$-optimal hypothesis among the set of candidate hypotheses.
    In particular, applying \Cref{thm:optimal-hypothesis-sel}, we output a hypothesis $h_{j}$ with 
    \begin{equation*}
        \errD(h_{j}) \leq \errD(h_{i}) + \frac{\alpha}{2} \leq \opt + \frac{\alpha}{2} \text{.}
    \end{equation*}

    Now, we argue that our algorithm is approximately replicable. 
    Consider two runs of the algorithm, and let $\set{\hat{t}_{i}^{(1)}}, \set{\hat{t}_{i}^{(2)}}$ denote the estimated quantiles in each run of the algorithm, and $\set{h_{i}^{(1)}}, \set{h_{i}^{(2)}}$ the corresponding hypotheses.
    Conditioned on \eqref{eq:threshold-quantile-est}, we have that with probability at least $1 - \beta$, the following holds for all $i$:
    \begin{align*}
        \Pr_{z \sim \distribution} \left( h_{i}^{(1)}(z) \neq h_{i}^{(2)}(z) \right) &= \Pr_{z} \left( z \in (\min(\hat{t}_{i}^{(1)}, \hat{t}_{i}^{(2)}), \max(\hat{t}_{i}^{(1)}, \hat{t}_{i}^{(2)}))\right) \\
        &= \int \left| \ind[x \leq \hat{t}_{i}^{(1)}] - \ind[x \leq \hat{t}_{i}^{(2)}] \right| \dx \\
        &\leq \int \left| \ind[x \leq \hat{t}_{i}^{(1)}] - \ind[x \leq t_{i}] \right| + \left| \ind[x \leq t_{i}] - \ind[x \leq \hat{t}_{i}^{(2)}] \right| \dx \\
        &= \sum_{j = 1}^{2} \Pr_{z} \left( z \in (\min(t_{i}, \hat{t}_{i}^{(j)}), \max(t_{i}, \hat{t}_{i}^{(j)}))\right) \\
        &< \tau \text{.}
    \end{align*}
    In particular, we satisfy the assumption of \Cref{thm:optimal-hypothesis-sel} that $\distD(h_{i}^{(1)}, h_{i}^{(2)}) < \tau$ for all $i$.
    Since $\hypothesisSelection$ is $\left(\frac{\rho}{2}, \tau\right)$-robustly replicable, we obtain via a union bound that the same index is selected with probability at least $1 - \beta - \frac{\rho}{2} \geq 1 - \rho$.
    The proof concludes by setting $\tau \leq \gamma$.

    To bound the sample complexity, note that quantile estimation requires $\tau < \min(\alpha, \gamma)$ and applying \Cref{thm:optimal-hypothesis-sel} requires $\tau \ll \frac{\rho \alpha}{\log(K/\beta)}$.
    Thus, the final sample complexity is 
    \begin{equation*}
        \bigO{\frac{\log(1/\beta) \log^2(1/\alpha \beta)}{\rho^2 \alpha^2} + \frac{\log(1/\beta)}{\gamma^2} + \frac{\log^3(1/\alpha\beta)}{\rho^2 \alpha^2}} = \bigO{\frac{\log(1/\beta)}{\gamma^2} + \frac{\log^3(1/\alpha\beta)}{\rho^2 \alpha^2}} \text{.}
    \end{equation*}
\end{proof}

\subsection{Approximately Replicable Realizable Learning}

In this section, we give a simple algorithm with improved sample complexity in the `realizable' PAC learning setting, giving the second half of \Cref{thm:apx-repl-informal}.
Here correctness is only required when the distribution $\distribution$ is realizable, but we still ensure approximate replicability over \textit{all} input distributions. 

\begin{restatable}{theorem}{RealizableApxRepl}
    \label{thm:realizable-apx-repl}
    Let $\innerAlg$ be a realizable $(\alpha, \beta)$-learner with sample complexity $m(\alpha, \beta)$.
    There is a $(\rho, \gamma)$-approximately replicable realizable $(\alpha, \beta)$-learner with sample complexity 
    \begin{equation*}
        \bigO{\frac{d + \log(1/\min(\rho, \beta))}{\rho^2 \min(\alpha, \gamma)}} \text{.}
    \end{equation*}
\end{restatable}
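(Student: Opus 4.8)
The plan is to run a realizable $(\tau,\mu)$-learner $T$ times on independent fresh labeled datasets, where $\tau = \Theta(\min(\alpha,\gamma))$, $\mu = \min(\rho,\beta)/c$ for a large constant $c$, and $T = \Theta(\rho^{-2}+\log(1/\beta))$; taking the improper realizable learner of \cite{hanneke2016optimal,larsen2023bagging} with sample complexity $\bigO{\frac{d+\log(1/\mu)}{\tau}}$ yields the claimed bound. Call the resulting hypotheses $h_1,\dots,h_T$. The structural fact we rely on is that on a realizable $\distribution$ the error of a hypothesis equals its classification distance to the target $h^*$, so (i) a $\tau$-accurate $h_i$ lies in the ball of radius $\tau$ around $h^*$, and (ii) errors concentrate \emph{multiplicatively}, so $\bigO{\log(T/\beta)/\tau}$ additional fresh labeled samples per hypothesis suffice to decide, with failure probability $\beta/T^2$, whether $\errD(h_i)\le 2\tau$ or $\errD(h_i)>4\tau$. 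Let $Z_i\in\{0,1\}$ record whether $h_i$ passes this test, set $\hat v = \frac1T\sum_i Z_i$, draw a single shared random threshold $v'$ uniformly from a fixed width-$\Omega(1)$ subinterval of $(0,1)$ lying below $3/4$, and output: the passing $h_i$ of least index if $\hat v>v'$, and a fixed default hypothesis otherwise.

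Correctness is straightforward. By a multiplicative Chernoff bound and a union bound over the $T$ hypotheses, every hypothesis that passes the test has $\errD\le 4\tau\le\alpha$ with probability $1-\beta$, so it remains to check that on a realizable $\distribution$ we enter the first branch. There each $h_i$ is $\tau$-accurate with probability $1-\mu$, hence passes with probability $\ge 1-O(\mu)$; thus $\E[\hat v]\ge 1-O(\mu)$, and Hoeffding (with our choice of $T$) gives $\hat v\ge 3/4 > v'$ with probability $1-\beta$. The total sample complexity is $T$ runs of the realizable learner plus $T$ verification batches, i.e.\ $\bigO{\frac{d+\log(1/\min(\rho,\beta))}{\rho^2\min(\alpha,\gamma)}}$, the verification contribution being of lower order.

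For replicability --- required on \emph{every} $\distribution$, realizable or not --- the key is the elementary inequality $\distD(f,g)\le \errD(f)+\errD(g)$, valid for all $\distribution$: for each $x$, at most one of $f(x),g(x)$ can equal the label, so $\ind[f(x)\ne g(x)]\le \ind[f(x)\ne y]+\ind[g(x)\ne y]$ pointwise, and we take expectations. Hence if both runs land in the first branch, their outputs each have error $\le 4\tau$ and therefore disagree on at most $8\tau\le\gamma$ of $\distribution_{\domain}$; if both land in the default branch they are identical. So the only event that can break $(\rho,\gamma)$-approximate replicability is the two runs taking different branches, which occurs exactly when $v'$ falls between $\hat v^{(1)}$ and $\hat v^{(2)}$. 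Since $\hat v$ is an average of $T=\Theta(\rho^{-2})$ i.i.d.\ indicators --- independence coming from a fresh dataset per learner call and fresh verification samples per hypothesis --- we get $\Pr_{v',S,S'}\!\big(v'\in(\hat v^{(1)},\hat v^{(2)})\big) \le \frac{\E|\hat v^{(1)}-\hat v^{(2)}|}{\Omega(1)} \le \frac{\sqrt{2\var(\hat v)}}{\Omega(1)} = \bigO{1/\sqrt T}\le\rho$, which is exactly the ``random threshold lying between two estimators'' computation used in \Cref{thm:predict-formal} and \Cref{thm:repl-decide-stable}. The algorithm uses no shared randomness beyond $v'$.

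The main obstacle is precisely the non-realizable case in the replicability analysis: the realizable learner offers no accuracy guarantee there (and correctness demands nothing), yet the two runs must still nearly agree, which rules out a naive ``run the learner once'' solution. The resolution has two components: the universal inequality $\distD(f,g)\le\errD(f)+\errD(g)$ collapses the requirement to merely making the branch decision replicable, and that decision is based on a frequency averaged over $\Theta(\rho^{-2})$ independent trials, so a random threshold almost never separates the two runs. A secondary subtlety is the joint dependence on $\rho$ and $\beta$: one wants $T=\Theta(\rho^{-2}+\log(1/\beta))$ so the realizable analysis enters the good branch with confidence $1-\beta$ rather than only $1-\rho$, and the interval for $v'$ must be positioned to clear the realizable value $\hat v\approx 1$ while still having width $\Omega(1)$. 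Finally, the improvement to $\bigO{\frac{d}{\rho^2\min(\alpha,\gamma)}}$ --- beating the $\alpha$-quadratic bounds for general classes --- comes from replacing additive distance estimation (cost $1/\tau^2$) by a realizable learner plus multiplicative error verification (cost $d/\tau$).
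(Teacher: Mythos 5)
Your proposal is correct in its essentials but takes a genuinely different route from the paper. The paper's algorithm (\Cref{alg:apx-repl-realizable}) is single-shot and proper: it draws one ERM sample of the full size, compares the empirical optimum $\hat{\opt}$ to a random threshold in $[0.1\alpha,0.2\alpha]$ to decide replicably whether the distribution looks realizable, and otherwise outputs the ERM hypothesis; the technical heart is a relative-deviation uniform convergence statement (\Cref{lem:repl-reject}, proved by symmetrization plus Sauer's lemma) showing that with $m\approx\frac{d+\log(1/\min(\rho,\beta))}{\rho^{2}\alpha}$ samples every hypothesis with true error at most $\alpha$ has empirical error within $\rho\alpha$, so $\hat{\opt}$ fluctuates by only $O(\rho\alpha)$ across runs and the random cutoff separates two runs with probability $O(\rho)$. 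You instead make $\Theta(\rho^{-2})$ black-box calls to an optimal (improper) realizable learner, verify each output with cheap multiplicative-error tests, and make the one-bit branch decision by thresholding the pass-frequency $\hat v$ with a shared random threshold, bounding the mismatch probability by the variance/Jensen computation. Both arguments hinge on the same two observations --- $\distD(f,g)\le\errD(f)+\errD(g)$ for arbitrary, possibly non-realizable $\distribution$, and a randomized one-bit test to make the branch decision replicable --- but your implementation avoids the uniform-convergence machinery at the cost of properness: the paper's ERM-based learner is proper, which is how this bound is advertised in \Cref{thm:apx-repl-const-gamma} and \Cref{table:results}, though the restated theorem you were given does not demand it, so this is a difference rather than a defect.

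One quantitative slip: with $T=\Theta(\rho^{-2}+\log(1/\beta))$ and each learner call costing $\bigO{\frac{d+\log(1/\min(\rho,\beta))}{\min(\alpha,\gamma)}}$ samples, your total is $\bigO{\frac{(d+\log(1/\min(\rho,\beta)))(\rho^{-2}+\log(1/\beta))}{\min(\alpha,\gamma)}}$, which exceeds the claimed bound by a factor of roughly $\rho^{2}\log(1/\beta)$ whenever $\log(1/\beta)\gg\rho^{-2}$. The fix is already implicit in your setup: since each call is $(\tau,\mu)$-accurate with $\mu=\min(\rho,\beta)/c$, each indicator $Z_i$ fails with probability $O(\min(\rho,\beta))$, so the probability that a quarter of the $T$ indicators fail is at most $\bigl(O(\min(\rho,\beta))\bigr)^{T/4}\le\beta$ already for $T=\Theta(\rho^{-2})$; replacing Hoeffding with this binomial tail bound and taking $T=\Theta(\rho^{-2})$ recovers the stated sample complexity, with the verification batches contributing $\bigO{\rho^{-2}\log(T/\beta)/\tau}$ (within budget) and their $\beta/T^{2}$ per-hypothesis failure probabilities keeping the extra replicability loss at $O(\rho^{2})=O(\rho)$, as your argument needs.
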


\begin{proof}
    Our algorithm proceeds in two steps.
    First, we replicably estimate $\opt$ to determine whether $\distribution$ is realizable.
    If not, we can replicably output an arbitrary hypothesis (since we have no correctness requirement in this case).
    Otherwise, since every accurate solution is close to the optimal hypothesis, it suffices to run a (non-replicable) realizable PAC learner.
    We formalize this in \Cref{alg:apx-repl-realizable}.

    \IncMargin{1em}
        \begin{algorithm}
        
        \SetKwInOut{Input}{Input}\SetKwInOut{Output}{Output}\SetKwInOut{Parameters}{Parameters}
        \Input{Realizable PAC learner $\innerAlg$. Sample access to $\distribution$.}
        \Parameters{$(\rho, \gamma)$-approximate replicability, $\alpha$ accuracy, $\beta$ error.}
        \Output{$(\rho, \gamma)$-approximately replicable realizable $(\alpha, \beta)$-learner.}

        {\bf Step 1: Replicably Estimate $\opt$.}

        Collect $m_{1} = \bigO{\frac{d + \log(1/\min(\rho, \beta))}{\rho^2 \alpha}}$ samples from $\distribution$, denoted $S$.

        Draw a random threshold $r \in [0.1\alpha, 0.2\alpha]$.

        Compute $\hat{\opt} := \min_{h \in \hypotheses} \errS(h)$.

        \If{$\hat{\opt} > r$}{
            \Return $h: x \rightarrow 1$.
        }
        
        {\bf Step 2: Running a Realizable Learner.}

        \Return $\hat{h} \gets \arg\min_{h \in \hypotheses} \errS(h)$.
        
        \caption{$\apxReplRealizable(\innerAlg)$}
        \label{alg:apx-repl-realizable}
        
        \end{algorithm}
        \DecMargin{1em}

        We prove the correctness and replicability of our algorithm.
        We begin with the following lemma, which states that Step 1 replicably decides whether the distribution is realizable.
        
        \begin{lemma}
            \label{lem:repl-reject}
            At the end of Step 1, the following hold:
            \begin{enumerate}
                \item If $\opt = 0$, then $\hat{\opt} < r$ with probability $1 - \beta$.
                \item If $\opt > \alpha$, then $\hat{\opt} \geq r$ with probability $1 - \beta$.
                \item The output is $11 \rho$-replicable.
            \end{enumerate}
        \end{lemma}

        \begin{proof}
            The argument is similar to the general sample complexity for realizable learning with ERM.
            In this argument, we require a strengthening of the guarantees of realizable PAC learning.
            
            Let $S$ consist of a dataset of $m$ \iid samples from $\distribution$.
            \begin{align*}
                E(h) &:= \left((\errD(h) < \alpha) \wedge |\errS(h) - \errD(h)| > \rho \alpha\right) \vee \left(\errD(h) \geq \alpha \wedge \errS(h) < \frac{\alpha}{3} \right) \\
                B(S) &:= \set{\exists h \given E(h) } \text{.}
            \end{align*}
            In particular, we ensure that every hypotheses either has large empirical error (and $\hat{\opt} \geq r$) or any near-optimal hypothesis is estimated up to error $\rho \alpha$.
            Let $S'$ consist of $m$ \iid samples from $\distribution$ (independent of $S$) and $\sigma \in \set{\pm 1}^{m}$ an independent random vector.
            Note that to prove the lemma, it suffices to show $\Pr(B(S)) < \min(\rho, \beta)$.
            In particular, if $\opt = 0$, we have $\hat{\opt} < \rho \alpha < 0.1 \alpha < r$.
            If $\opt > \alpha$, we have $\hat{\opt} > \frac{\alpha}{3} > 0.2 \alpha > r$.
            Finally, $\hat{\opt}$ (if it lies in $[0.1 \alpha, 0.2 \alpha]$, lies in an interval of width $\rho \alpha$, so that $r$ falls in this interval with probability at most $10 \rho$.
            We union bound over the probability of $B(S)$.

            We now bound the probability of $B(S)$.
            Given $S, S', \sigma$, define $T$ to consist of an example from $S$ when $\sigma = +1$ and $S'$ when $\sigma = -1$.
            Define the following events:
            \begin{align*}
                E'(S, S', h) &:= \left(\left(|\errS(h) - \err_{S'}(h)| > \frac{\rho \alpha}{2}\right) \wedge (\errS(h) < \alpha)\right) \\
                &\quad \vee \left(\left(\err_{S'}(h) > \frac{\alpha}{2}\right) \wedge \left(\err_{S}(h) < \frac{\alpha}{3}\right)\right) \\ 
                B'(S, S') &:= \set{\exists h \given E'(h)} \\
                B''(S, S', \sigma) &:= \set{\exists h \given E'(T, T', h) \text{ where $T, T'$ correspond to $S, S', \sigma$}} \text{.}
            \end{align*}

            \begin{claim}
                \label{clm:}
                When $m \gg \frac{1}{\rho^2 \alpha}$, for some sufficiently large constant, $\Pr(B'(S, S') | B(S)) > \frac{1}{2}$.
            \end{claim}

            \begin{proof}
                Suppose $B(S)$ holds i.e. $E(h)$ holds for some $h$.
                We split into two cases.
                Suppose $\errD(h) \leq \alpha$ and $|\errS(h) - \errD(h)| > \rho \alpha$.
                Then, $\E[\err_{S'}(h)] = \errD(h) \leq \alpha$.
                By a standard Chernoff bound, 
                \begin{equation*}
                    Pr\left(|\err_{S'}(h) - \errD(h)| > \frac{\rho \alpha}{2} \right) < 2\exp\left(-\frac{\rho^2 \alpha m}{12}\right) < \frac{1}{2} \text{.}
                \end{equation*}
                By the triangle inequality, we have
                \begin{equation*}
                    |\errS(h) - \err_{S'}(h)| > \frac{\rho \alpha}{2} \text{.}
                \end{equation*}
                On the other hand, suppose $\errD(h) > \alpha$ and $\errS(h) < \frac{\alpha}{3}$.
                Following an identical argument, we have $\E[\err_{S'}(h)] > \alpha$ and
                \begin{equation*}
                    \Pr\left( \err_{S'}(h) < \frac{\alpha}{2} \right) < \exp \left( - \frac{m \alpha}{12} \right) < \frac{1}{2} \text{.}
                \end{equation*}
            \end{proof}

            Thus, we have
            \begin{equation*}
                \Pr(B(S)) = \frac{\Pr(B'(S, S') \cap B(S))}{\Pr(B'(S, S') | B(S))} \leq \frac{\Pr(B'(S, S'))}{\Pr(B'(S, S') | B(S))} \leq 2 \Pr(B'(S, S')) \text{.}
            \end{equation*}

            Observe that since $S, S'$ and $T, T'$ are identically distributed, $\Pr(B'(S, S')) = \Pr(B''(S, S', \sigma))$.
            We thus hope to bound $\Pr(B''(S, S', \sigma))$.
            It suffices to bound the probability of $B''(S, S', \sigma)$ conditioned on any fixed $S, S'$.

            \begin{claim}
                For any fixed $S, S', h$, 
                \begin{equation*}
                    \Pr_{\sigma}\left( E'(T, T', h) | S, S' \right) < \exp\left( - \bigOm{ \rho^2 \alpha m} \right) \text{.}
                \end{equation*}
            \end{claim}

            \begin{proof}
                Consider the predictions on $S, S'$, denoted $\set{h(x_{i})}$ and $\set{h(x_{i}')}$.
                Note that $\sigma$ independently distributes each sample to $T, T'$ uniformly.
                Consider the set of indices where exactly one of $h(x_{i}), h(x_{i}')$ is correct.
                We may assume this set has size $k \geq \rho \alpha m$ (otherwise $E'(T, T', h)$ cannot happen).
                Then, the difference in the errors $\err_{T}(h), \err_{T'}(h)$ is distributed as $X \sim \frac{\BinomD(k, 0.5)}{m}$.
                Consider two cases.
                Suppose $k < 10 \alpha m$, so that a standard Chernoff bound implies
                \begin{align*}
                    \Pr_{\sigma}(E(T, T', h)|S, S') &< \Pr(|X - \E[X]| > \rho \alpha m) \\
                    &= \Pr\left(|X - \E[X]| > \frac{2 \rho \alpha m}{k} \frac{k}{2} \right) \\
                    &< \exp\left( - \bigOm{ \frac{\rho^2 \alpha^2 m^2}{k}} \right) \\
                    &< \exp\left( - \bigOm{ \rho^2 \alpha m} \right) \text{.}
                \end{align*}
                On the other hand, when $k \geq 10 \alpha m$, we bound the probability that $\err_{T}(h) < \alpha$.
                By a Chernoff Bound, we have
                \begin{align*}
                    \Pr_{\sigma}(E(T, T', h)|S, S') < \Pr(\err_{T}(h) < \alpha) < \exp\left( - \bigOm{k} \right) < \exp\left( - \bigOm{\alpha m} \right) \text{.}
                \end{align*}
            \end{proof}

            Finally, for any fixed $S, S'$, we bound the probability that there exists any hypothesis $h$ with $E(T, T', h)$ holds.
            Now, for a fixed $S, S'$, we can apply Sauer's Lemma to union bound over all labellings of $S \cup S'$, rather than all hypotheses in $\hypotheses$.

            \begin{lemma}[Sauer's Lemma]
                \label{lem:sauer}
                Let $\hypotheses$ be a class with VC Dimension $d$ and $S \in \domain^{m}$.
                Let $\Pi(\hypotheses)$ denote the set of labellings of $S$ under $\hypotheses$.
                Then,
                \begin{align*}
                    |\Pi(\hypotheses)| = m^{O(d)} \text{.}
                \end{align*}
            \end{lemma}

            Then, for any $S, S'$, we have
            \begin{align*}
                \Pr_{\sigma}\left(B''(S, S', \sigma) | S, S' \right) &< m^{O(d)} e^{-\bigOm{\rho^2 \alpha m}} < \min(\beta, \rho)
            \end{align*}
            for $m \gg \frac{1}{\rho^2 \alpha} \left( d + \log(1/\min(\rho, \beta)) \right)$.
            This completes the proof of \Cref{lem:repl-reject}.
        \end{proof}

        We now continue with the proof of \Cref{thm:realizable-apx-repl}.
        Correctness is clear, since we proceed to Step 2 and return a hypothesis with empirical error at most $0.1 \alpha$ (since $\errS(h^*) < \rho \alpha < 0.1 \alpha$).
        We thus argue replicability.
        Conditioned on the success of Step 1, we immediately have replicability if the algorithm returns $x \rightarrow 1$.
        Thus, we assume that the algorithm proceeds to Step 2, and thus assume $\opt < \alpha$.
        In particular, there exists a hypothesis with $\errD(h) < \alpha$ so that $\errS(h) \leq 1.1 \alpha$.
        In two runs of the algorithm, we obtain $\hat{h}_{1}, \hat{h}_{2}$ both with empirical error at most $1.1 \alpha$.
        Following identical arguments as in \Cref{lem:repl-reject}, we can prove that both hypotheses have true distributional error at most $2 \alpha$.
        Then, we conclude with the triangle inequality
        \begin{equation*}
            \Pr(\hat{h}_{1}(x) \neq \hat{h}_{2}(x)) \leq \Pr(\hat{h}_{1}(x) \neq y) \leq 2 \alpha
        \end{equation*}
        since $\hat{h}_{1}(x) = \hat{h}_{2}(x)$ when both are correct.
\end{proof}

\section{Omitted Proofs}
\label{sec:omitted}

We give the omitted proof of \Cref{thm:0-1-bias-estimation}.
The proof is a simple modification of the lower bound from \cite{hopkins2024replicability}.

\begin{proof}[Proof of \Cref{thm:0-1-bias-estimation}]
    We will construct a meta-distribution $\calM$ over biases $p \in [-1, 1]$ such that any $\rho$-replicable algorithm with error probability less than $0.01$ with respect to $\calM$ requires $\bigOm{\frac{1}{\rho^2}}$ samples.
    Let $\innerAlg$ be a $\rho$-replicable algorithm for $\set{\pm 1}$-bias estimation with succeeds with probability $0.9$ on $m$ samples.
    First, following \cite{hopkins2024replicability}, we fix a random string $r$ such that $\innerAlg(; r)$ has error probability at most $0.04$ and is $4\rho$-replicable with respect to $\calM$.
    For any bias $p$, let $\acc(p) := \Pr_{S \sim \Rad(p)^{m}}(\innerAlg(S; r) = 1)$ denote the acceptance probability of $\innerAlg$ given samples from $\Rad(p)$.
    Then, since $\acc(p)$ is a continuous function (see Claim 3.10 of \cite{hopkins2024replicability}) there exists $p^*$ such that $\acc(p^*) = \frac{1}{2}$.
    
    Our goal is now to prove that there is in fact a (sufficiently wide) interval $I^*$ where $\acc(p) \in (1/3, 2/3)$.
    To do so, we follow the mutual information framework of \cite{hopkins2024replicability}.
    First, we need a standard fact that says any correct algorithm must see samples correlated with the answer (see e.g. \cite{diakonikolaskane2016}).

    \begin{claim}
        \label{clm:mi-alg-lb}
        Let $X \sim \Rad(0)$ and $A$ be a random variable possibly correlated with $X$.
        If there exists a (randomized) function $f$ so that $f(A) = X$ with at least 51\% probability, then $I(X:A) \geq 2 \cdot 10^{-4}$.
    \end{claim}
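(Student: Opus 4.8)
\emph{Proof plan.} The claim is a standard two-step information-theoretic fact: the data-processing inequality reduces $I(X:A)$ to the mutual information between $X$ and any predictor of it, and the binary case of Fano's inequality converts a $51\%$ success probability into an upper bound on the residual entropy $H(X\mid\hat X)$. Since $X\sim\Rad(0)$ is an unbiased bit, $H(X)=1$, and combining the two bounds gives the desired absolute constant.

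First I would handle the randomization in $f$. Writing $\hat X:=f(A;R)$ with $R$ an independent random string, the variable $\hat X$ is, conditioned on $A$, a function of $R$ alone, hence independent of $X$ given $A$; thus $X\to A\to\hat X$ is a Markov chain and the data-processing inequality gives $I(X:A)\ge I(X:\hat X)$. We may also assume $\hat X\in\{\pm1\}$, since mapping any other output to a fixed guess is a post-processing of $\hat X$ that does not increase the error. Now set $q:=\Pr[\hat X\ne X]\le 0.49$ and $E:=\mathbbm{1}[\hat X\ne X]$. Because $X$ is binary, $X$ is determined by the pair $(\hat X,E)$, so
\begin{equation*}
    H(X\mid\hat X)\ \le\ H(E\mid\hat X)+H(X\mid\hat X,E)\ \le\ H(E)\ =\ H_2(q),
\end{equation*}
where $H_2$ is the binary entropy function, which is nondecreasing on $[0,\tfrac12]$. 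Hence
\begin{equation*}
    I(X:A)\ \ge\ I(X:\hat X)\ =\ H(X)-H(X\mid\hat X)\ \ge\ 1-H_2(q)\ \ge\ 1-H_2(0.49),
\end{equation*}
and a direct evaluation of $H_2(0.49)=-0.49\log_2 0.49-0.51\log_2 0.51<0.9998$ yields $I(X:A)>2\cdot10^{-4}$.

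There is no real obstacle here: the argument is elementary, and the only "content" is the numerical estimate of $H_2(0.49)$, which is what fixes the constant. I would just keep enough slack in that estimate — the true value is $\approx 0.99972$, so $1-H_2(0.49)\approx 2.8\cdot10^{-4}$, comfortably above the required $2\cdot10^{-4}$. (The bound is essentially tight as a function of the $1\%$ advantage, since $H_2(\tfrac12-\varepsilon)=1-\Theta(\varepsilon^2)$, so no constant bound independent of that advantage is possible.)
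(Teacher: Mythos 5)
Your proof is correct. Note that the paper itself does not prove this claim at all --- it is invoked as a ``standard fact'' with a pointer to the literature (the \cite{diakonikolaskane2016}-style testing lower bounds) --- so your argument fills in a step the authors left as a citation rather than diverging from a written proof. Your route (data-processing for the Markov chain $X \to A \to \hat X$, then the binary Fano bound $H(X \mid \hat X) \le H_2(q)$ with $q \le 0.49$, then the numerical estimate $1 - H_2(0.49) \approx 2.9 \cdot 10^{-4}$) is sound, including the reduction of the randomized $f$ to a post-processing and the restriction of the output to $\{\pm 1\}$. The usual way this constant is derived in the literature is the Pinsker-type advantage bound $I(X:A) \ge 2\delta^2$ with $\delta = 0.01$, which is where the exact value $2 \cdot 10^{-4}$ comes from; your Fano computation recovers the same conclusion and, measured in bits, gives a slightly stronger constant with comfortable slack. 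One small caveat worth recording: the slack depends on the entropy unit. In bits you get $1 - H_2(0.49) \approx 2.89 \cdot 10^{-4} > 2 \cdot 10^{-4}$ as you computed, while in nats the same argument yields $\ln 2 - H(0.49) = 2(0.01)^2 + \Theta(10^{-8})$, which still meets the stated bound but with essentially zero margin --- so if you present this, either state that mutual information is in bits or phrase the final inequality as $\ge 2\delta^2$ to be convention-independent.
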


    Next, we argue that samples from coins with similar bias are indistinguishable.

    \begin{claim}[Claim 3.9 of \cite{hopkins2024replicability}]
        \label{clm:sample-mi-ub}
        Let $m \geq 0$ be an integer and $-1 \leq a < b \leq 1$. Let $X \sim \Rad(0)$ and $Y$ be distributed according to $\Rad(a)^{m}$ if $X = 1$ and $\Rad(b)^{m}$ if $X = -1$.
        Then, 
        \begin{equation*}
            I(X:Y) = \bigO{\frac{m(b - a)^2}{\min\left(1+a, 1+b, 1- a, 1- b\right)}} \text{.}
        \end{equation*}
    \end{claim}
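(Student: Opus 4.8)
The plan is to carry out a direct information-theoretic computation bounding $I(X:Y)$ through a single KL divergence, exploiting the product structure of $Y$. First I would invoke the variational identity
$I(X:Y) = \min_{Q}\big(\tfrac12\KL(P_{Y\mid X=1}\,\|\,Q) + \tfrac12\KL(P_{Y\mid X=-1}\,\|\,Q)\big)$,
where the minimum over distributions $Q$ is attained at $Q = P_{Y}$; consequently it suffices to exhibit one convenient reference distribution $Q$ and bound the two KL terms against it. The natural choice is the product of the \emph{midpoint} coin, $Q := \Rad\!\big(\tfrac{a+b}{2}\big)^{m}$, which keeps both terms symmetric and small.

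Next, since $P_{Y\mid X=1} = \Rad(a)^{m}$, $P_{Y\mid X=-1} = \Rad(b)^{m}$, and $Q$ is a product measure, additivity of KL divergence over product distributions reduces the bound to single-coordinate terms: $I(X:Y) \le \tfrac{m}{2}\big(\KL(\Rad(a)\,\|\,\Rad(c)) + \KL(\Rad(b)\,\|\,\Rad(c))\big)$ with $c := \tfrac{a+b}{2}$. For each single-coin term I would pass to the $\chi^{2}$ divergence, which dominates KL and is precisely the right tool here because it produces the desired $1-c^{2}$ denominator: a one-line computation gives $\KL(\Rad(c')\,\|\,\Rad(c)) \le \chi^{2}(\Rad(c')\,\|\,\Rad(c)) = \tfrac{(c'-c)^{2}}{1-c^{2}}$. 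Plugging in $c' \in \{a,b\}$, where $|c'-c| = \tfrac{b-a}{2}$, yields $I(X:Y) \le \tfrac{m(b-a)^{2}}{4\,\big(1-(\tfrac{a+b}{2})^{2}\big)}$.

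Finally I would convert this denominator into the stated one. Since $\tfrac{a+b}{2}$ lies between $a$ and $b$, we have $\big|\tfrac{a+b}{2}\big| \le \max(|a|,|b|)$, hence $1-(\tfrac{a+b}{2})^{2} \ge 1-\max(|a|,|b|)^{2} = \big(1-\max(|a|,|b|)\big)\big(1+\max(|a|,|b|)\big) \ge 1-\max(|a|,|b|) = \min(1+a,\,1-a,\,1+b,\,1-b)$, giving exactly $I(X:Y) = \bigO{\tfrac{m(b-a)^{2}}{\min(1+a,1+b,1-a,1-b)}}$. I do not expect a genuine obstacle: the argument is routine once one decides to route through $\chi^{2}$ rather than Pinsker (Pinsker would lose the denominator entirely). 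The only thing to watch is the degenerate cases — if $a$ or $b$ equals $\pm1$ the claimed right-hand side is $+\infty$ and there is nothing to prove, and if $a=b$ both sides vanish — so one may freely assume $-1 < a < b < 1$ and work with the finite midpoint coin throughout.
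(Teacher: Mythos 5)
Your proof is correct. Note that the paper does not prove this statement at all — it imports it verbatim as Claim 3.9 of \cite{hopkins2024replicability} — so there is no in-paper argument to compare against; what you have written is a legitimate self-contained derivation. Each step checks out: the variational bound $I(X:Y)\le \tfrac12\KL(P_{Y\mid X=1}\|Q)+\tfrac12\KL(P_{Y\mid X=-1}\|Q)$ for any reference $Q$, additivity of KL over the product structure, the exact identity $\chi^2(\Rad(c')\,\|\,\Rad(c))=\tfrac{(c'-c)^2}{1-c^2}$ with $\KL\le\chi^2$, and the conversion $1-\bigl(\tfrac{a+b}{2}\bigr)^2\ge 1-\max(|a|,|b|)=\min(1+a,1-a,1+b,1-b)$. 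Since $a<b$ forces the midpoint $c=\tfrac{a+b}{2}$ to lie strictly inside $(-1,1)$, the reference coin is never degenerate, so even the endpoint cases $a=-1$ or $b=1$ go through directly (and are anyway vacuous since the right-hand side is infinite there). Your argument in fact yields the explicit bound $I(X:Y)\le \tfrac{m(b-a)^2}{4\min(1+a,1-a,1+b,1-b)}$, which is stronger than the stated $O(\cdot)$ form, and your observation that routing through $\chi^2$ (rather than Pinsker) is what produces the denominator is exactly the right point.
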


    Assume without loss of generality that $p^* \leq 0$.
    Our goal is to find an interval $I^* := I^*(p^*) = [p^*, q^*(p^*)]$ such that $|\acc(p^*) - \acc(q^*)| \leq 0.1$.
    Consider the following experiment: let $X, Y$ be distributed as in \Cref{clm:sample-mi-ub} where $a = p^*, b = q^*$.
    Suppose $|\acc(p^*) - \acc(q^*)| \geq 0.1$.
    Then, for some universal constant $C'$, there is a $C'm$ sample algorithm (that runs $\innerAlg(; r)$ on $C'$ fresh samples) which successfully guesses $X$ with 51\% probability. 
    We will choose $q^*$ such that no $C'm$ sample algorithm can guess $X$ with 51\% probability, so that $|\acc(p^*) - \acc(q^*)| \geq 0.1$.
    We proceed with case analysis on $p^*$.
    \begin{enumerate}
        \item $p^* = -1$.
        Note that $Y \sim \Rad(p^*)^{C' m}$ always yields $-1$ samples, while $Y \sim \Rad(q^*)^{C' m}$ yields all $-1$ samples with probability $\left(1 - \frac{q^* + 1}{2}\right)^{m} \geq \exp\left( - C'm(q^* + 1) \right) \geq 0.999$ for $q^* \leq -1 + \frac{1}{C'm}$ for some sufficient constant factor.

        \item $-1 < p^* \leq 0$.
        From \Cref{clm:sample-mi-ub}, we have
        \begin{equation*}
            I(X:Y) = \bigO{\frac{m(q^* - p^*)^2}{1 + p^*}} < 2 \cdot 10^{-4}
        \end{equation*}
        if $q^* \leq p^* + c\sqrt{\frac{1 + p^*}{m}}$ for some sufficiently small constant $c > 0$.
    \end{enumerate}
    Then, consider $\calM$ that samples uniformly from the following set of points $B$ defined $p_{0} = -1, p_{1} = \frac{c}{m} - 1$ and $p_{i} = p_{i} + c \sqrt{\frac{1 + p_{i}}{m}}$ for $i > 1$ where $c$ is a sufficiently small constant until $p_{k} > 0$ for some integer $k$.
    Observe that we have thus obtained $O(\sqrt{m})$ points $B$ that intersect $I^*(p^*)$ for any $p^* \in [-1, 0]$.
    To handle the case that $p^* > 0$, we additionally add all points $-p_{i}$ to $B$ (i.e. $1, 1 - \frac{c}{m}, \dots$).
    
    In particular, for any random string $r$, $\innerAlg(; r)$ is not $0.1$-replicable on the $\Rad(p)$ for $p \in I^*(p^*) \cap B$.
    Then, if we define $\calM$ to pick a bias uniformly from $B$, $\innerAlg(; r)$ is not $\frac{c}{\sqrt{m}}$-replicable with respect to $\calM$.
    Since we assumed $\innerAlg(; r)$ is $4\rho$-replicable, we obtain $m = \bigOm{\frac{1}{\rho^2}}$.
\end{proof}

\subsection{Domain Size Reduction}
\label{app:domain-red}

We show how to remove the assumption of finitely supported distributions for approximately replicable learners.


\begin{proof}[Proof of \Cref{prop:domain-reduction-apx-repl}]
    Suppose we have an arbitrary distribution $\distribution_0$ (possibly with uncountable support) over domain $\domain_0$.
    Let $N$ be some arbitrarily large integer (depending on $m, \alpha, \beta, \rho, \gamma$) where $m := m(\alpha, \beta, \rho, \gamma)$ is the sample complexity of our algorithm $\Acal$ that is a $(\rho, \gamma)$-pointwise replicable $(\alpha, \beta)$-learner over arbitrary finitely supported distributions over $\domain_{0}$.
    Consider the following algorithm $\tilde{\Acal}$:
    \begin{enumerate}
        \item Draw $N$ \iid samples from $\distribution_{0}$ denoted $\domain$.
        \item Run $\Acal$ on $m$ \iid samples from the uniform distribution over $\domain$ and return $h \gets \Acal$.
    \end{enumerate}
    As written, this algorithm has sample complexity $N$. However, we can reduce this to $m$ by drawing the samples in the first step `lazily', in the sense that we only draw samples from $\distribution_{0}$ when $\Acal$ requests them from $\domain$ in the second step. In other words, we can think of $\tilde A$ as sampling $m$ \textit{indices} $i_1, \dots, i_{m} \in [N]$, and then for each distinct $i_j$ sampling from $\mathcal D_0$ to fix the value of $\domain_{i_j}$. The resulting sample at these indices is then fed into $\mathcal A$ to produce the final hypothesis as above. Notably, the sample fed into $\mathcal A$ is equi-distributed with the algorithm that has fully sampled $\domain$ in the first step above, since our draws from $\distribution_0$ are $\iid$


    We first observe that as long as $N$ is sufficiently large, the produced sample is $\iid$ from $\distribution_{0}$ with high probability. In particular, this is true as long as $\tilde \Acal$ draws \emph{collision-free} indices, that is if $i_{j}, i_{k}$ are distinct for all $j \neq k$.

    \begin{claim}
        \label{clm:sample-simulation}
        Suppose $N > m^2/\min(\beta, \rho)$.
        With probability $1 - \min(\beta, \rho)$, $\Acal$ draws a collision free sample and obtains $m$ \iid samples from $\distribution_{0}$.
    \end{claim}

    \begin{proof}
        Note that for any $j \neq k$, we have $\Pr(i_j = i_k) = \frac{1}{N}$ so that linearity of expectation yields 
        \begin{equation*}
            \E\left[ \sum_{j \neq k} \ind[i_j = i_k] \right] \leq \frac{m^2}{N} \text{.}
        \end{equation*}
        Applying Markov's inequality, we have that the number of collisions exceeds $1$ with probability $\frac{m^2}{N} < \min(\beta, \rho)$ if $N > \frac{m^2}{\min(\beta, \rho)}$.
        In particular, $\Acal$ draws a collision-free sample, so all indices $i_1, \dots, i_m$ are independent.
        To conclude the proof, we note that every sample in $\domain$ is an \iid sample from $\distribution_{0}$.
    \end{proof}

    Now, we argue that $\tilde{\Acal}$ is a $(3 \rho, 3 \gamma)$-approximately replicable $(4 \alpha, 4 \beta)$-learner on $\distribution_{0}$.
    In the following, we condition on the event that $\Acal$ draws a collision-free sample and obtains $m$ \iid samples from $\distribution_{0}$.
    In fact, we will assume that two runs of $\Acal$ draw collision-free samples (this can be ensured by replacing $N > m^2/\min(\rho, \beta)$ with $4m^2/\min(\rho, \beta)$).
    In particular, for $N$ sufficiently large, we have that over two independent runs of $\Acal$ drawing indices $i_1, \dots, i_m$ and $i_{m + 1}, \dots, i_{2m}$, we have that $i_1, \dots, i_{2m}$ are all distinct.
    We begin with approximate replicability.

    \begin{claim}
        \label{clm:domain-red-apx-repl}
        For sufficiently large $N \gg \frac{m^2}{\min(\rho ,\beta)} + \frac{\log(1/\rho)}{\gamma^2}$, $\tilde{\Acal}$ is $(3 \rho, 3 \gamma)$-approximately replicable.
    \end{claim}

    \begin{proof}
        We use a hybrid argument to argue that $\tilde{\Acal}$ is $(3 \rho, 3 \gamma)$-approximately replicable.
        In particular, if $S^{(1)}, S^{(2)}$ are disjoint subsets of indices in $[N]$, then to each run of the algorithm $\tilde{\Acal}$, the following two random processes are indistinguishable:
        \begin{enumerate}
            \item Draw $N$ samples from $\distribution_{0}$ denoted $\domain^*$, give $\domain^*[S^{(1)}]$ and $\domain^*[S^{(2)}]$ to independent runs of $\Acal$.
            \item Draw two independent samples of size $N$ from $\distribution_{0}$, denoted $\domain^{(1)}, \domain^{(2)}$, give $\domain^{(1)}[S^{(1)}]$ and $\domain^{(2)}[S^{(2)}]$ to independent runs of $\Acal$.
        \end{enumerate}
        At a high level, our goal is to prove that the above two processes are in fact indistinguishable to $\Acal$.
        This will allow us to establish that the latter process, which is is exactly the process of running $\tilde{\Acal}$ on two independent samples, is indistinguishable to the former process, where we can conclude that $\Acal$ is approximately replicable on the uniform distribution over the shared $N$ sample domain $\domain^*$.
        We give a formal argument below. 
        
        Formally, let $\tilde{h}^{(1)}, \tilde{h}^{(2)}$ denote the hypotheses output over two runs of $\tilde{\Acal}$.
        Let $\domain^{(1)}, \domain^{(2)}$ denote the $N$ hypothetical samples drawn by two runs of $\tilde{\Acal}$.
        Let $\domain^*$ to be another independent $N$ \iid samples from $\distribution_{0}$.
        Consider the scenario that $\tilde{\Acal}$ over two runs draws a \emph{shared} $\domain^*$ and then runs $\Acal$ on independent collision-free sub-samples $\domain^*[S^{(1)}]$ and $\domain^*[S^{(2)}]$.
        We now claim that conditioned on $\tilde{\Acal}$ drawing collision-free samples over two runs, that for any two hypotheses $\tilde{h}^{(1)}, \tilde{h}^{(2)}$: 
        \begin{align}
            \label{eq:domain-red-equiv}
            &\Pr_{\substack{\domain^* \\ S^{(1)}, S^{(2)} \\ r}} \left( \tilde{\Acal}(\domain^*; S^{(1)}, r) = \tilde{h}^{(1)}\text{, } \tilde{\Acal}(\domain^*; S^{(2)}, r) = \tilde{h}^{(2)} \right) \\
            &\quad\quad\quad\quad\quad = \Pr_{\substack{\domain^{(1)}, \domain^{(2)} \\ S^{(1)}, S^{(2)} \\ r}} \left( \tilde{\Acal}(\domain^{(1)}; S^{(1)}, r) = \tilde{h}^{(1)}\text{, } \tilde{\Acal}(\domain^{(2)}; S^{(2)}, r) = \tilde{h}^{(2)} \right) \nonumber
        \end{align}
        where $\domain^*, \domain^{(1)}, \domain^{(2)}$ are independent draws of $N$ samples from $\distribution_{0}$, $S^{(1)}, S^{(2)}$ are independent (collision-free) samples of subsets of $[N]$, and $r$ is the shared internal randomness of $\Acal$.
        Before proving \eqref{eq:domain-red-equiv}, let us complete the proof of approximate replicability. 
        
        In particular, first observe that by \eqref{eq:domain-red-equiv}, it is enough to bound the probability the left-hand procedure that samples a shared domain $\domain^*$ produces hypotheses $\tilde h^{(1)},\tilde h^{(2)}$ that are $3\gamma$-close over $\distribution_0$ by $2\rho$ since this procedure is equi-distributed with the true right-hand procedure we'd like to analyze so long as $S^{(1)},S^{(2)}$ are collision-free, which occurs except with probability much less than $\rho$.

        To bound the approximate replicability of the procedure with shared $\domain^*$, we may now appeal to the fact that $\Acal$ is $(\rho,\gamma)$ approximately replicable. In particular, we have that the disagreement over $\domain^*$ is low with high probability:    
        \begin{equation*}
            \Pr_{\domain^*, S^{(1)}, S^{(2)}, r} \left( \Pr_{i \sim [N]}(\tilde{h}^{(1)}(x_i) \neq \tilde{h}^{(2)}(x_{i})) > \gamma \right) <  \rho,
        \end{equation*}
        where $x_i$ denotes the $i$-th example of $\domain^*$.
        
        To conclude the proof, we argue that disagreement over $\domain^*$ is close to disagreement over $\distribution_{0}$ with high probability.
        To see this, note that for the two output hypotheses $\tilde{h}^{(1)}, \tilde{h}^{(2)}$,
        \begin{align*}
            &Pr_{i \sim [N], \domain^*}(\tilde{h}^{(1)}(x_i) \neq \tilde{h}^{(2)}(x_{i})) = \frac{1}{N} \sum_{i = 1}^{N}  \sum_{i} \ind[\tilde{h}^{(1)}(x_i) \neq \tilde{h}^{(2)}(x_{i})] \\
            \\
            &\quad\quad\quad\quad= \frac{1}{N}  \sum_{i \in S^{(1)} \cup S^{(2)}} \ind[\tilde{h}^{(1)}(x_i) \neq \tilde{h}^{(2)}(x_{i})] + \frac{1}{N} \sum_{i \not\in S^{(1)} \cup S^{(2)}} \ind[\tilde{h}^{(1)}(x_i) \neq \tilde{h}^{(2)}(x_{i})] \\
            &\quad\quad\quad\quad= \frac{1}{N}  \sum_{i \in S^{(1)} \cup S^{(2)}} \ind[\tilde{h}^{(1)}(x_i) \neq \tilde{h}^{(2)}(x_{i})] + \frac{N - 2m}{N} \frac{1}{N - 2m} \sum_{i \not\in S^{(1)} \cup S^{(2)}} \ind[\tilde{h}^{(1)}(x_i) \neq \tilde{h}^{(2)}(x_{i})] 
        \end{align*}
        where the second summation is a sum of independent Bernoulli variables with mean $\dist_{\distribution_{0}}(\tilde{h}^{(1)}, \tilde{h}^{(2)})$ since $\tilde{h}^{(1)}, \tilde{h}^{(2)}$ depend only on $\domain^*[S^{(1)}], \domain^*[S^{(2)}]$ and $\domain^*[[N] \setminus (S^{(1)} \cup S^{(2)})]$ consists of $N - 2m$ \iid samples from $\distribution_{0}$.
        In particular, any standard concentration bound ensures that for $N$ sufficiently large, e.g. $N - 2m \geq \frac{C \log(1/\rho)}{\gamma^2}$ for some large constant $C$, ensures that with probability $1 - \rho$, the empirical disagreement over $\domain^*[[N] \setminus S^{(1)} \cup S^{(2)}]$ is at most $\gamma$ more than $\dist_{\distribution_{0}}(\tilde{h}^{(1)}, \tilde{h}^{(2)})$.
        Finally, a union bound allows us to conclude that with probability $1 - 3 \rho$,
        \begin{align*}
            \dist_{\distribution_{0}}(\tilde{h}^{(1)}, \tilde{h}^{(2)}) &\leq \frac{1}{N - 2m} \sum_{i \not\in S^{(1)} \cup S^{(2)}} \ind[\tilde{h}^{(1)}(x_i) \neq \tilde{h}^{(2)}(x_{i})] + \gamma \\
            &\leq \frac{\gamma}{1 - \frac{2m}{N}} + \gamma \leq 3 \gamma 
        \end{align*}
        if $N \geq 4m$, as desired.

        It thus remains to prove \eqref{eq:domain-red-equiv}.
        By definition of $\tilde{\Acal}$, we write
        \begin{align*}
             Z &:= \Pr_{\substack{\domain^* \\ S^{(1)}, S^{(2)} \\ r}} \left( \tilde{\Acal}(\domain^*; S^{(1)}, r) = \tilde{h}^{(1)}\text{, } \tilde{\Acal}(\domain^*; S^{(2)}, r) = \tilde{h}^{(2)} \right) \\
             &= \Pr_{\substack{\domain^* \\ S^{(1)}, S^{(2)} \\ r}} \left( \Acal(\domain^*[S^{(1)}]; r) = \tilde{h}^{(1)}\text{, } \Acal(\domain^*[S^{(2)}]; r) = \tilde{h}^{(2)} \right)
        \end{align*}
        since $\tilde{\Acal}$ returns the output of $\Acal$ on the subsample of $\domain^*$ indexed by $S^{(i)}$.
        Next, if we denote $\domain^* \uparrow_{i} S$ to denote replacing the indices of $\domain^*$ corresponding to $S \subset [n]$ with samples from $\domain^{(i)}$, then
        \begin{align*}
            Z &=  \underset{\substack{S^{(1)}, S^{(2)} \\ r}}{\E} \left[ \underset{\substack{ \domain^* \\ \domain^{(1)}, \domain^{(2)}}}{\E} \left[ \ind\left[ \Acal(\domain^*[S^{(1)}]; r) = \tilde{h}^{(1)}\text{, } \Acal(\domain^*[S^{(2)}]; r) = \tilde{h}^{(2)} \right] \right] \right] \\
            &= \underset{\substack{\domain^* \\ S^{(1)}, S^{(2)} \\ r}}{\E} \left[ \Pr_{\domain^{(1)}, \domain^{(2)}} \left( \Acal(\domain^*[S^{(1)}]; r) = \tilde{h}^{(1)}\text{, } \Acal(\domain^*[S^{(2)}]; r) = \tilde{h}^{(2)} \right) \right] \\
            &= \underset{\substack{\domain^* \\ S^{(1)}, S^{(2)} \\ r}}{\E} \left[ \Pr_{\domain^{(1)}, \domain^{(2)}} \left( \Acal((\domain^* \uparrow_1 [N] \setminus S^{(1)})[S^{(1)}]; r) = \tilde{h}^{(1)}\text{, } \Acal(\domain^*[S^{(2)}]; r) = \tilde{h}^{(2)} \right) \right] \\
            &= \underset{\substack{\domain^* \\ S^{(1)}, S^{(2)} \\ r}}{\E} \left[ \Pr_{\domain^{(1)}, \domain^{(2)}} \left( \Acal((\domain^* \uparrow_1 [N] \setminus S^{(1)})[S^{(1)}]; r) = \tilde{h}^{(1)}\text{, } \Acal((\domain^* \uparrow_2 [N] \setminus S^{(2)})[S^{(2)}]; r) = \tilde{h}^{(2)} \right) \right] \\
        \end{align*}
        where the first equality follows from the total probability rule as neither event depends on $\domain^{(1)}, \domain^{(2)}$ and the second from writing expectation of indicator as probability. 
        The third equality follows from the total probability rule as the same output $\tilde{h}^{(1)}$ is produced for every choice of $\domain^{(1)}$ (conditioned on a fixed choice of $\domain^*, S^{(1)}$).
        The last equality follows from a symmetric argument applied to $\tilde{h}^{(2)}$.

        Next, note that if we replace $(\domain^* \uparrow_{1} [N] \setminus S^{(1)})$ with $(\domain^* \uparrow_{1} [N] \setminus S^{(1)}) \uparrow_{1} S^{(1)} = \domain^{(1)}$, then the distribution of samples seen by $\Acal$ stays the same, as $\domain^*, \domain^{(1)}$ are identically distributed as \iid draws from $\distribution_{0}$.
        Furthermore, this replacement does not affect the output $\tilde{h}^{(2)}$.
        Thus, this replacement maintains the output distribution, or
        \begin{align*}
            Z &= \underset{\substack{\domain^* \\ S^{(1)}, S^{(2)} \\ r}}{\E} \left[ \Pr_{\domain^{(1)}, \domain^{(2)}} \left( \Acal(\domain^{(1)}[S^{(1)}]; r) = \tilde{h}^{(1)}\text{, } \Acal((\domain^* \uparrow_2 [N] \setminus S^{(2)})[S^{(2)}]; r) = \tilde{h}^{(2)} \right) \right] \\
            &= \underset{\substack{\domain^* \\ S^{(1)}, S^{(2)} \\ r}}{\E} \left[ \Pr_{\domain^{(1)}, \domain^{(2)}} \left( \Acal(\domain^{(1)}[S^{(1)}]; r) = \tilde{h}^{(1)}\text{, } \Acal(\domain^{(2)}[S^{(2)}]; r) = \tilde{h}^{(2)} \right) \right] 
        \end{align*}
        where the second equality follows from a similar argument for replacing $(\domain^* \uparrow_{2} [N] \setminus S^{(2)})$ with $\domain^{(2)}$.
        Finally, we again apply the total probability rule to remove the expectation over $\domain^*$ which neither event now depends on, thus establishing
        \begin{align*}
            Z &= \Pr_{\substack{\domain^{(1)}, \domain^{(2)} \\ S^{(1)}, S^{(2)} \\ r}} \left( \Acal(\domain^{(1)}[S^{(1)}]; r) = \tilde{h}^{(1)}\text{, } \Acal(\domain^{(2)}[S^{(2)}]; r) = \tilde{h}^{(2)} \right) \\
            &= \Pr_{\substack{\domain^{(1)}, \domain^{(2)} \\ S^{(1)}, S^{(2)} \\ r}} \left( \tilde{\Acal}(\domain^{(1)}; S^{(1)}, r) = \tilde{h}^{(1)}\text{, } \tilde{\Acal}(\domain^{(2)}; S^{(2)}, r) = \tilde{h}^{(2)} \right)
        \end{align*}
        proving \eqref{eq:domain-red-equiv}, where we use the definition of $\tilde{\Acal}$ in the final equality.
    \end{proof}

    Finally, we argue that $\tilde{\Acal}$ is correct.

    \begin{claim}
        \label{clm:domain-red-pac}
        For sufficiently large $N \gg \frac{m^2}{\min(\rho, \beta)} + \frac{m}{\alpha} + \frac{\log(1/\beta)}{\alpha^2}$, $\tilde{\Acal}$ is an $(4 \alpha, 4 \beta)$-learner.
    \end{claim}

    \begin{proof}
        By a union bound over \Cref{clm:sample-simulation} and the correctness of $\Acal$ over finitely supported distributions, with probability $1 - 2 \beta$, $\Acal$ produces hypothesis $\tilde{h}$ such that
        \begin{equation*}
            \Pr_{i \sim [N], \domain} \left( \tilde{h}(x_i) \neq y_i \right) < \min_{h \in \hypotheses} \err_{\domain}(h) + \alpha \text{.}
        \end{equation*}
        Note that for any hypothesis $h$,
        \begin{align*}
            \err_{\domain}(h) &= \Pr_{i \sim [N]} \left( h(x_i) \neq y_i \right) \\
            &= \frac{1}{N} \sum_{i = 1}^{N} \Pr(h(x_i) \neq y_i) \\
            &= \frac{1}{N} \sum_{i \in S} \Pr(h(x_i) \neq y_i) + \frac{1}{N} \sum_{i \not\in S} \Pr(h(x_i) \neq y_i) \\
            &= \frac{1}{N} \sum_{i \in S} \Pr(h(x_i) \neq y_i) + \frac{N - m}{N} \frac{1}{N - m} \sum_{i \not\in S} \Pr(h(x_i) \neq y_i) \text{.}
        \end{align*}
        Here, for any \emph{fixed} hypothesis, we observe that the empirical error is a sum of $N$ independent Bernoulli variables with mean $\err_{\distribution_{0}}(h)$, while for $\tilde{h}$, the second summand in the final line is a sum of independent Bernoulli variables since $\domain[[N] \setminus S]$ consists of \iid samples drawn from $\distribution_{0}$ independent of $\tilde{h}$.
        
        First, we observe that by uniform convergence (\Cref{thm:uniform-convergence}) there exists sufficiently large $N$ such that we have that for every $h \in \hypotheses$,
        \begin{equation*}
            |\err_{\domain}(h) - \err_{\distribution_{0}}(h)| \leq \alpha
        \end{equation*}
        with probability $1 - \beta$.

        Second, as in \Cref{clm:domain-red-apx-repl}, we observe that for the output hypothesis $\tilde{h}$, for sufficiently large $N \gg \frac{\log(1/\beta)}{\alpha^2}$, we have with probability $1 - \beta$,
        \begin{align*}
            \err_{\distribution_{0}}(\tilde{h}) &\leq \frac{1}{N - m} \sum_{i \not\in S} \Pr(\tilde{h}(x_i) \neq y_i) + \alpha  \\
            &\leq \frac{\err_{\domain}(\tilde{h})}{1 - \frac{m}{N}} + \alpha \\
            &\leq \err_{\domain}(\tilde{h}) \left(1 + \frac{2m}{N} \right) + \alpha \\
            &\leq \err_{\domain}(\tilde{h}) + 2 \alpha 
        \end{align*}
        where we use $\frac{1}{1 - x} \leq 1 + 2x$ for $x \leq 0.5$ and $N \geq 2m$ in the third inequality, and $\err_{\domain}(\tilde{h}) \frac{2m}{N} \leq \frac{2m}{N} \leq \alpha$ in the fourth inequality.
        Thus, by a union bound, we have with probability $1 - 4 \beta$, that
        \begin{align*}
            \err_{\distribution_{0}}(\tilde{h}) &\leq \err_{\domain}(\tilde{h}) + 2 \alpha \\
            &\leq \min_{h \in \hypotheses} \err_{\domain}(h) + 3 \alpha \\
            &\leq \min_{h \in \hypotheses} \err_{\distribution_{0}}(h) + 4 \alpha \\
            &\leq \opt + 4 \alpha \text{.}
        \end{align*}
    \end{proof}
    This concludes the proof of \Cref{prop:domain-reduction-apx-repl}.
\end{proof}

\subsection{Concentration Inequalities}
\label{app:concentration}

We use the following standard concentration inequalities.

\begin{theorem}[Hoeffding's Inequality]
    \label{thm:hoeffding}
    Let $X_1, \dots , X_{n}$ be independent random variables in $[0, 1]$.
    Let $S = X_1 + \dots + X_n$ and $\mu = \E[S]$.
    Then for any $t > 0$,
    \begin{equation*}
        \Pr(|X - \mu| > t) < \exp(-t^2/2n) \text{.}
    \end{equation*}
\end{theorem}

\begin{restatable}[Chernoff Inequality]{theorem}{ChernoffScaled}
    \label{thm:chernoff}
    Let $\gamma > 0$.
    Let $X_1, \dots , X_{n}$ be independent random variables in $[0, \gamma]$.
    Let $S = X_1 + \dots + X_n$ and $\mu = \E[S]$.
    Then for any $0 < t < 1$,
    \begin{equation*}
        \max(\Pr(X < (1 - t) \mu), \Pr(X > (1 + t)\mu)) < \exp\left( - \frac{t^2 \mu}{3 \gamma} \right) \text{.}
    \end{equation*}
    Furthermore, for $t \geq 1$,
    \begin{equation*}
        \Pr(X > (1 + t)\mu) 
        < \exp \left( - \frac{t^2 \mu}{\gamma (2 + t)} \right) <  \exp \left( - \frac{t \mu}{3 \gamma} \right) \text{.}
    \end{equation*}
\end{restatable}

We prove \Cref{thm:chernoff} with a standard modification of the standard Chernoff bound.

\begin{proof}[Proof of \Cref{thm:chernoff}]
    We restate the standard Chernoff inequality.
    \begin{theorem}[Chernoff Inequality]
        \label{thm:chernoff-standard}
        Let $\gamma > 0$.
        Let $X_1, \dots , X_{n}$ be independent random variables in $[0, 1]$.
        Let $X = X_1 + \dots + X_n$ and $\mu = \E[S]$.
        Then for any $0 < t < 1$,
        \begin{equation*}
            \max(\Pr(X < (1 - t) \mu), \Pr(X > (1 + t)\mu)) < \exp\left( - \frac{t^2 \mu}{3} \right) \text{.}
        \end{equation*}
        Furthermore, for all $t > 0$,
        \begin{equation*}
            \Pr(X > (1 + t)\mu)) < \left( \frac{e^{t}}{(1 + t)^{1 + t}} \right)^{\mu} <  \exp\left( - \frac{t^2 \mu}{2 + t} \right) \text{.}
        \end{equation*}
    \end{theorem}
    Now, if $X_{1}, \dots, X_{n}$ are in $[0, \gamma]$, consider $Y_{i} = X_i/\gamma$ for $i \in [n]$ so that $Y_i \in [0, 1]$ and 
    \begin{equation*}
        \E[Y] = \E[X]/\gamma = \mu/\gamma \text{.}
    \end{equation*}
    Note that $X < (1 - t) \mu$ if and only if $Y < (1 - t) \mu / \gamma$, as shown below:
    \begin{align*}
        X = X_1 + \dots + X_{n} &< (1 - t) \mu \\
        (X_{1} + \dots + X_{n}) / \gamma &< (1 - t) \mu / \gamma \\
        Y_{1} + \dots + Y_{n} &< (1 - t) \E[Y] \text{.}
    \end{align*}
    A similar argument shows $X > (1 + t) \mu$ if and only if $Y > (1 + t) \mu / \gamma$. 
    Thus, we apply \Cref{thm:chernoff-standard} with $\E[Y] = \mu / \gamma$ to conclude the desired bound.
\end{proof}

We use the following concentration bound on countable sets.

\CountableTail*

\begin{proof}[Proof of \Cref{thm:countable-tail-bound}]
    We follow the proof of the standard Hoeffding inequality.
    Fix a $\lambda > 0$ and observe
    \begin{equation*}
        \E\left[ e^{\lambda(S - \mu)} \right] = \E\left[ \exp \left( \sum_{i = 1}^{\infty} \lambda X_i - \lambda \E[X_i]\right) \right] = \E\left[ \prod_{i = 1}^{\infty} \exp \left( \lambda (X_i- \E[X_i]) \right) \right] \text{.}
    \end{equation*}
    Since $X_{i} \in [0, b_i]$, Hoeffding's Lemma implies that for each $i$,
    \begin{equation*}
        0 \leq \exp \left( \lambda (X_i - \E[X_i]) \right) \leq \exp(\lambda^2 b_i^2/8) \text{.}
    \end{equation*}
    In particular, since the partial products are bounded by the constant $\exp(\lambda \sum_{i} b_i) < \exp(\lambda^2/8)$, we can exchange the limit and the expectation to obtain
    \begin{equation*}
        \lim_{n \rightarrow \infty} \E\left[ \prod_{i = 1}^{n} \exp \left( \lambda (X_i- \E[X_i]) \right) \right] = \lim_{n \rightarrow \infty} \prod_{i = 1}^{n} \E\left[ \exp \left( \lambda (X_i- \E[X_i]) \right) \right] < \prod_{i = 1}^{n} \exp\left( \lambda^2 b_i^2/8\right) \text{.}
    \end{equation*}
    where we use the fact that each $X_i$ is independent.
    Applying Markov's inequality, we have
    \begin{equation*}
        \Pr(S - \mu > t) = \Pr \left( e^{\lambda (S - \mu)} > e^{\lambda t} \right) < \frac{\exp\left(\lambda^2 \sum_{i} b_i^2/8\right)}{\exp(\lambda t)} \text{.}
    \end{equation*}
    Set $\lambda = \frac{4t}{\sum_{i} b_i^2}$ so that
    \begin{equation*}
        \Pr(S - \mu > t) < \exp \left( - \frac{2 t^2}{\sum_{i = 1}^{\infty} b_i^2} \right) \text{.}
    \end{equation*}
    Finally, we upper bound $\sum_{i = 1}^{\infty} b_i^2 \leq \nu \sum_{i = 1}^{\infty} b_i < \nu$ to obtain the desired bound.
    A similar bound can be derived for $\Pr(S - \mu < -t)$.
\end{proof}

\end{document}